\newcommand{\noisetar}{\epsilon^{\tagtar}}
\newcommand{\noisesrc}{\epsilon^{\tagm{m}}}
\newcommand{\vm}[2]{{v^{\tagm{#1}}_{#2}}}
\newcommand{\vtar}[1]{{v^{\tagtar}_{#1}}}
\newcommand{\Pcip}{{P_y}}
\newcommand{\probsrc}[1]{{p^{\tagm{m}}_{#1}}}
\newcommand{\probtar}[1]{{p^{\tagtar}_{#1}}}
\newcommand{\dataset}{\mathcal{D}}
\newcommand{\distri}{\mathcal{P}}
\newcommand{\distriP}{\mathcal{P}}
\newcommand{\distriPhat}{\widehat{\distriP}}
\newcommand{\discrepancy}[2]{\mathfrak{D}\left({#1},{#2}\right)}
\newcommand {\probdiv}[2]{\mathfrak{D}\left({#1},{#2}\right)}
\newcommand{\horacle}{h_\textnormal{oracle}}
\newcommand{\tagm}[1]{{(#1)}}
\newcommand{\tagtar}{{({\mathfrak{T}})}}
\newcommand{\ntar}{n^\tagtar}
\newcommand{\nsrc}{n^\tagm{m}}
\newcommand{\nsrcm}[1]{n^\tagm{#1}}
\newcommand{\nsrcone}{n^{(1)}}
\newcommand{\nttar}{\ntar}
\newcommand{\ntsrc}{\nsrc}
\newcommand{\Dtar}{\dataset^\tagtar}
\newcommand{\Dsrc}{\dataset^\tagm{m}}
\newcommand{\Ptar}{\distri^{\tagtar}}
\newcommand{\Psrc}{\distri^{(m)}}
\newcommand{\Psrcone}{\distri^{(1)}}
\newcommand{\ones}[1]{\mathbf{1}_{{#1}}}
\newcommand{\real}{\mathbb{R}}
\newcommand*{\rom}[1]{\expandafter\@slowromancap\romannumeral #1@}
\newtheorem{assumption}{Assumption}
\newtheorem{subtheorem}{Theorem}
\newcommand{\assumpref}[1]{Assumption~\ref{assump:#1}}
\newcommand{\figref}[1]{Figure~\ref{fig:#1}}
\newcommand{\secref}[1]{Section~\ref{sec:#1}}
\newcommand{\appref}[1]{Appendix~\ref{app:#1}}
\newcommand{\defref}[1]{Definition~\ref{def:#1}}
\newcommand{\lemref}[1]{Lemma~\ref{lem:#1}}
\newcommand{\propref}[1]{Proposition~\ref{prop:#1}}
\newcommand{\thmref}[1]{Theorem~\ref{thm:#1}}
\newcommand{\corref}[1]{Corollary~\ref{cor:#1}}
\newcommand{\tabref}[1]{Table~\ref{tab:#1}}
\newcommand{\eqnref}[1]{\eqref{eqn:#1}}
\DeclareMathOperator{\rank}{rank}
\DeclareMathOperator{\trace}{trace}
\DeclareMathOperator{\var}{Var}
\DeclareMathOperator{\spn}{span}
\DeclareMathOperator*{\argmin}{arg\,min}
\newcommand{\defn}{\coloneqq}
\newcommand{\iid}[0]{i.i.d.\xspace}
\newcommand{\one}[1]{{\mathbbm{1}}_{{#1}}}
\newcommand{\norm}[1]{\lVert{#1}\rVert}
\newcommand{\EE}[1]{\mathbb{E}\left[{#1}\right]} 
\newcommand{\Ep}[2]{\mathbb{E}_{#1}\left[{#2}\right]}
\newcommand{\EEst}[2]{\mathbb{E}\left[{#1}\ \middle| \ {#2}\right]} 
\newcommand{\VVst}[2]{\var\left[{#1}\ \middle|\ {#2}\right]}
\renewcommand{\O}[1]{\mathcal{O}\left({#1}\right)}
\def\R{\mathbb{R}}
\newcommand{\ident}{\mathbf{I}}
\newcommand{\zeros}{\mathbf{0}}
\newcommand{\iidsim}{\stackrel{\mathrm{i.i.d.}}{\sim}}
\newcommand{\ignore}[1]{}
\newcommand{\eps}{\epsilon}
\newcommand{\betastar}{\beta^\star}
\newcommand{\mutar}[1]{\mu^\tagtar_{#1}}
\newcommand{\erf}[1]{{\rm erf}\left(#1\right)}
\newcommand{\phicip}{\phi_{\textnormal{CIP}}}
\newcommand{\phiinv}{\phi_{\textnormal{inv}}}
\newcommand{\hinv}{h_{\textnormal{inv}}}
\newcommand{\ginv}{g_{\textnormal{inv}}}
\newcommand{\gcip}{g_{\textnormal{CIP}}}
\newcommand{\hcip}{h_{\textnormal{CIP}}}
\newcommand{\phihcip}{\widehat{\phi}_{\textnormal{CIP}}}
\newcommand{\phiiwcip}{\phi_{\textnormal{IW-CIP}}}
\newcommand{\phihiwcip}{\widehat{\phi}_{\textnormal{IW-CIP}}}
\newcommand{\ghcip}{\widehat{g}_{\textnormal{CIP}}}
\newcommand{\hhcip}{\widehat{h}_{\textnormal{CIP}}}
\newcommand{\giwcip}{g_{\textnormal{IW-CIP}}}
\newcommand{\hiwcip}{h_{\textnormal{IW-CIP}}}
\newcommand{\ghiwcip}{\widehat{g}_{\textnormal{IW-CIP}}}
\newcommand{\hhiwcip}{\widehat{h}_{\textnormal{IW-CIP}}}
\newcommand{\gstar}{g^\star}
\newcommand{\hstar}{h^\star}
\newcommand{\phistar}{\phi^\star}
\newcommand{\penaltyiwcip}[1]{\widehat{\Lambda}_{#1}}
\newcommand{\Pssi}[1]{\Psi_{\Gset,#1}}
\newcommand{\hdip}{h_{\textnormal{DIP}}}
\newcommand{\gdip}{g_{\textnormal{DIP}}}
\newcommand{\phidip}{\phi_{\textnormal{DIP}}}
\newcommand{\hhdip}{\widehat{h}_{\textnormal{DIP}}}
\newcommand{\ghdip}{\widehat{g}_{\textnormal{DIP}}}
\newcommand{\phihdip}{\widehat{\phi}_{\textnormal{DIP}}}
\newcommand{\hjdip}{h_{\textnormal{j-DIP}}}
\newcommand{\gjdip}{g_{\textnormal{j-DIP}}}
\newcommand{\phijdip}{\phi_{\textnormal{j-DIP}}}
\newcommand{\hhjdip}{\widehat{h}_{\textnormal{j-DIP}}}
\newcommand{\ghjdip}{\widehat{g}_{\textnormal{j-DIP}}}
\newcommand{\phihjdip}{\widehat{\phi}_{\textnormal{j-DIP}}}
\newcommand{\fsrc}{f^{(m)}}
\newcommand{\fsrcm}[1]{f^{(#1)}}
\newcommand{\fsrcone}{f^{(1)}}
\newcommand{\ftar}{f^{\tagtar}}
\newcommand{\fc}{f_{\textnormal {inv}}}
\newcommand{\epssrc}{\epsilon^{(m)}}
\newcommand{\epssrcone}{\epsilon^{(1)}}
\newcommand{\epssrctwo}{\epsilon^{(2)}}
\newcommand{\epstar}{\epsilon^{\tagtar}}
\newcommand{\Peps}{\mathcal{P}_{\epsilon}}
\newcommand{\kk}{\zeta}
\newcommand{\lamcip}{\lambda_{\textnormal{CIP}}}
\newcommand{\lamiwcip}{\lambda_{\textnormal{IW-CIP}}}
\newcommand{\lamdip}{\lambda_{\textnormal{DIP}}}
\newcommand{\lamjdip}{\lambda_{\textnormal{j-DIP}}}
\newcommand{\lammax}{\lambda_{\rm max}}
\newcommand{\lammin}{\lambda_{\rm min}}
\newcommand{\Gdiv}[2]{D_{\mathcal{G}}\left({#1},{#2}\right)}
\newcommand{\Gdivsq}[2]{D_{\mathcal{G}}^2\left({#1},{#2}\right)}
\newcommand{\tvdistsq}[2]{{\rm TV}^2\left({#1},{#2}\right)}
\newcommand{\KLdiv}[2]{{\rm KL}\left({#1},{#2}\right)}
\newcommand{\prob}[1]{\mathbb{P}\left\{{#1}\right\}}
\newcommand{\probc}[2]{\mathbb{P}\left\{{#1}\ \middle| \ {#2}\right\}}
\newcommand{\probsubset}[2]{\mathbb{P}_{{#1}}\left\{{#2}\right\}}
\newcommand{\risksrc}[1]{\mathcal{R}^{(m)}({#1})}
\newcommand{\risksrcone}[1]{\mathcal{R}^{(1)}({#1})}
\newcommand{\risksrcw}[2]{\mathcal{R}^{(m)}({#1};{#2})}
\newcommand{\risktar}[1]{\mathcal{R}^{\tagtar}({#1})}
\newcommand{\riskhsrc}[1]{\widehat{\mathcal{R}}^{(m)}({#1})}
\newcommand{\riskhsrcone}[1]{\widehat{\mathcal{R}}^{(1)}({#1})}
\newcommand{\riskhsrcw}[2]{\widehat{\mathcal{R}}^{(m)}({#1};{#2})}
\newcommand{\riskhsrconew}[2]{\widehat{\mathcal{R}}^{(1)}({#1};{#2})}
\newcommand{\riskhtar}[1]{\widehat{\mathcal{R}}^{\tagtar}({#1})}
\newcommand{\riskbar}[2]{\overline{\mathcal{R}}({#1};{#2})}
\newcommand{\riskhbar}[2]{\widehat{\overline{\mathcal{R}}}({#1};{#2})}
\newcommand{\risktarsubset}[2]{\mathcal{R}_{{#1}}^{\tagtar}({#2})}
\newcommand{\risksrconesubset}[2]{\mathcal{R}_{{#1}}^{(1)}({#2})}
\newcommand{\penaltycipm}[2]{{{\Delta}^{(#1)}_{\phi}(#2)}}
\newcommand{\varcip}{{\Sigma_\phi^{(m)}}(y)}
\newcommand{\meanhat}[2]{\widehat{\mu^{(#1)}_\phi(#2)}}
\newcommand{\Pihatphi}[1]{{\Pi_\phi(#1)}}
\newcommand{\wm}{w^{(m)}}
\newcommand{\wmi}[1]{w^{(#1)}}
\newcommand{\wmy}{\coord{w}{y}^{(m)}}
\newcommand{\wmcoord}[1]{\coord{w}{#1}^{(m)}}
\newcommand{\whmcoord}[1]{\coord{\widehat{w}}{#1}^{(m)}}
\newcommand{\wone}{w^{(1)}}
\newcommand{\wM}{w^{(M)}}
\newcommand{\whm}{\widehat{w}^{(m)}}
\newcommand{\whmi}[1]{\widehat{w}^{(#1)}}
\newcommand{\whone}{\widehat{w}^{(1)}}
\newcommand{\w}{w}
\newcommand{\wh}{\widehat{w}}
\newcommand{\ww}{{\rm w}}
\newcommand{\wwm}{{\rm w}^{(m)}}
\newcommand{\wwmi}[1]{{\rm w}^{(#1)}}
\newcommand{\muh}{\widehat{\mu}}
\newcommand{\condi}{\kappa}
\newcommand{\Xtar}{X^{\tagtar}}
\newcommand{\Ytar}{Y^{\tagtar}}
\newcommand{\Xsrc}{X^{(m)}}
\newcommand{\Ysrc}{Y^{(m)}}
\newcommand{\Xsrcone}{X^{(1)}}
\newcommand{\Ysrcone}{Y^{(1)}}
\newcommand{\Xsrctwo}{X^{(2)}}
\newcommand{\Ysrctwo}{Y^{(2)}}
\newcommand{\Xt}{\widetilde{X}}
\newcommand{\Xsrcm}[1]{X^{(#1)}}
\newcommand{\Ysrcm}[1]{Y^{(#1)}}
\newcommand{\noisesrcm}[1]{\epsilon^{(#1)}}
\newcommand{\csrcone}{c^{(1)}}
\newcommand{\ctar}{c^{\tagtar}}
\newcommand{\Asrc}{A^{(m)}}
\newcommand{\Atar}{A^\tagtar}
\newcommand{\Asrcm}[1]{A^{(#1)}}
\newcommand{\Phsrc}{\widehat{\distri}^{(m)}}
\newcommand{\Phtar}{\widehat{\distri}^{\tagtar}}
\newcommand{\Pmartar}[1]{\distri^{\tagtar}_{{#1}}}
\newcommand{\Pcondtar}[2]{\distri^{\tagtar}_{{#1} \vert {#2}}}
\newcommand{\Pmarsrc}[1]{\distri^{(m)}_{{#1}}}
\newcommand{\Pmarsrcone}[1]{\distri^{(1)}_{{#1}}}
\newcommand{\Pcondsrc}[2]{\distri^{(m)}_{{#1} \vert {#2}}}
\newcommand{\Pcondsrcone}[2]{\distri^{(1)}_{{#1} \vert {#2}}}
\newcommand{\Pcondsrcmp}[2]{\distri^{(m')}_{{#1} \vert {#2}}}
\newcommand{\Phmartar}[1]{\widehat{\distri}^{\tagtar}_{{#1}}}
\newcommand{\Phcondtar}[2]{\widehat{\distri}^{\tagtar}_{{#1} \vert {#2}}}
\newcommand{\Phmarsrc}[1]{\widehat{\distri}^{(m)}_{{#1}}}
\newcommand{\Phmarsrcone}[1]{\widehat{\distri}^{(1)}_{{#1}}}
\newcommand{\Phcondsrc}[2]{\widehat{\distri}^{(m)}_{{#1} \vert {#2}}}
\newcommand{\Phcondsrcmp}[2]{\widehat{\distri}^{(m')}_{{#1} \vert {#2}}}
\newcommand{\coord}[2]{{#1}_{[#2]}}
\newcommand{\coordw}[2]{#1_{[#2]}}
\newcommand{\name}[1]{\textbf{\textup{#1}}}
\newcommand{\Aset}{\mathcal{A}}
\newcommand{\Hset}{\mathcal{H}}
\newcommand{\Hsetm}{\mathcal{H}^{(m)}}
\newcommand{\Gset}{\mathcal{G}}
\newcommand{\Gsety}{\mathcal{G}_y}
\newcommand{\Gsetyp}{\mathcal{G}_{y'}}
\newcommand{\Xset}{\mathcal{X}}
\newcommand{\Yset}{\mathcal{Y}}
\newcommand{\Zset}{\mathcal{Z}}
\newcommand{\Normal}{\mathcal{N}}
\newcommand{\Ind}{\ensuremath{\mathbb{I}}}
\newcommand{\rad}[2]{\mathfrak{R}_{#1}\parenth{{#2}}}
\newcommand{\row}[1]{\textnormal{row}({#1})}
\newcommand{\wt}{\widetilde{w}}
\newcommand{\ut}{\widetilde{u}}
\newcommand{\sampcomp}{\gamma}
\newcommand{\vecnorm}[2]{\left\| #1\right\|_{#2}}
\long\def\comment#1{}
\definecolor{battleshipgrey}{rgb}{0.52, 0.52, 0.51}
\definecolor{darkgray}{rgb}{0.66, 0.66, 0.66}
\definecolor{darkgreen}{rgb}{0.0, 0.2, 0.13}
\definecolor{darkspringgreen}{rgb}{0.09, 0.45, 0.27}
\definecolor{dukeblue}{rgb}{0.0, 0.0, 0.61}
\definecolor{olivedrab7}{rgb}{0.24, 0.2, 0.12}
\definecolor{darkblue}{rgb}{0.0, 0.0, 0.55}
\definecolor{darkscarlet}{rgb}{0.34, 0.01, 0.1}
\definecolor{candyapplered}{rgb}{1.0, 0.03, 0.0}
\definecolor{ao(english)}{rgb}{0.0, 0.5, 0.0}
\definecolor{applegreen}{rgb}{0.55, 0.71, 0.0}
\newcommand{\brackets}[1]{\left[ #1 \right]}
\newcommand{\parenth}[1]{\left( #1 \right)}
\newcommand{\braces}[1]{\left\{ #1 \right \}}
\newcommand{\abss}[1]{\left| #1 \right |}
\newcommand{\pmat}[1]{\begin{pmatrix} #1 \end{pmatrix}}
\begin{document}
	
\title{Prominent Roles of Conditionally Invariant Components\\ in Domain Adaptation: Theory and Algorithms}

\author{\name Keru~Wu\thanks{Equal contribution}\textsuperscript{ \hspace{0.1cm}} \email keru.wu@duke.edu \\ 
	\addr Department of Statistical Science \\ Duke University \\Durham, North Carolina, USA
	\AND
	\name Yuansi~Chen\footnotemark[1]\textsuperscript{ \hspace{0.1cm}} \email yuansi.chen@duke.edu \\ 
	\addr Department of Statistical Science \\ Duke University \\Durham, North Carolina, USA
	\AND	
	\name Wooseok~Ha\footnotemark[1]\textsuperscript{ \hspace{0.1cm}} \email haywse@kaist.ac.kr \\ 
	\addr Department of Mathematical Sciences \\ Korea Advanced Institute of Science \& Technology \\ Daejeon, Korea
	\AND
	\name Bin~Yu \email binyu@berkeley.edu \\ 
	\addr Department of Statistics and EECS \\ University of California, Berkeley \\ Berkeley, CA, USA
}

\editor{Mladen Kolar}

\maketitle

\begin{abstract}%
Domain adaptation (DA) is a statistical learning problem that arises when the distribution of the source data used to train a model differs from that of the target data used to evaluate the model. While many DA algorithms have demonstrated considerable empirical success, blindly applying these algorithms can often lead to worse performance on new datasets. To address this, it is crucial to clarify the assumptions under which a DA algorithm has good target performance. In this work, we focus on the assumption of the presence of conditionally invariant components (CICs), which are relevant for prediction and remain conditionally invariant across the source and target data. We demonstrate that CICs, which can be estimated through conditional invariant penalty (CIP), play three prominent roles in providing target risk guarantees in DA.  First, we propose a new algorithm based on CICs, importance-weighted conditional invariant penalty (IW-CIP), which has target risk guarantees beyond simple settings such as covariate shift and label shift. Second, we show that CICs help identify large discrepancies between source and target risks of other DA algorithms. Finally, we demonstrate that incorporating CICs into the domain invariant projection (DIP) algorithm can address its failure scenario caused by label-flipping features. We support our new algorithms and theoretical findings via numerical experiments on synthetic data, MNIST, CelebA, Camelyon17, and DomainNet datasets.
\end{abstract}

\begin{keywords}
	Domain adaptation, Distribution shifts, Conditionally invariant components, Domain invariant projection, Anticausal learning
\end{keywords}

\section{Introduction}

The classical statistical learning problem assumes that the data used for training and those used for testing are drawn from the same data distribution.
While this assumption is often valid, distribution shifts are prevalent in real-world {data problems}. Distribution shifts happen when the distribution of training (or source) data differs from that of test (or target) data~\citep{koh2021wilds}. For example, when a machine learning model is trained on labeled source data from a few hospitals and then deployed in a new hospital, often there are distributional shifts because the data collection and pre-processing in different hospitals can be different~\citep{veta2016mitosis, komura2018machine, zech2018variable}. The statistical learning problem that tackles distributional shifts with labeled source data and unlabeled target data is called a \textit{domain adaptation} (DA) problem. A solution to DA is desired especially in situations where obtaining labeled data from target domain is difficult and expensive while unlabeled data is easily available. In this case, without collecting new labeled target data, one may attempt to pre-train the model on related large labeled datasets such as ImageNet~\citep{deng2009imagenet} and adapt the model onto the unlabeled target data such as CT scan images~\citep{cadrin2022moving}. However, due to distributional shifts between the large labeled datasets and the target dataset, performance improvement is not always guaranteed~\citep{he2019rethinking}. Without careful consideration, the presence of distributional shifts can result in a decrease in performance of many classical statistical learning algorithms.


While DA is an important learning problem, a generic cure is hopeless if there is no useful relation between the source and target data that can aid in prediction. In particular, the DA problem is ill-posed in general because for any given algorithm and source data, there will always be some arbitrarily chosen target data such that the algorithm trained on the source data will not perform well. Establishing reasonable assumptions relating the source and target data is critical, and depending on these assumptions, many ways to formulate a DA problem exist.

One common way to formulate a DA problem is to assume that the conditional distribution of label given the covariate, $Y\mid X$, remains the same in both source and target data. When $Y\mid X$ is invariant, it is implied that the covariate distribution changes. This formulation is known as \textit{covariate shift}~\citep{shimodaira2000improving,quinonero2008dataset}. Successful approaches to tackle the covariate shift assumption include estimating the likelihood ratio between source and target covariates to correct {for} this shift~\citep{shimodaira2000improving,sugiyama2007covariate,sugiyama2012machine}. A related but different way of relating the source and target distributions is to assume that the conditional distribution of covariates, given the label, $X\mid Y$, is invariant. In this case, the marginal distributions of the label can differ. For this reason, this {formulation} is named \textit{label shift}~\citep{lipton2018detecting}. {DA} solutions typically involve correcting the likelihood ratio of labels using the conditional invariance of $X\mid Y$~\citep{azizzadenesheli2019regularized,tachet2020domain,garg2020unified}. Although covariate shift and label shift assumptions have been widely studied and successfully applied in some cases~\citep{sugiyama2007covariate,wu2021online}, their applicability is often limited in more practical scenarios.

Moving beyond the scope of covariate shift and label shift, another popular way of formulating DA is to assume the presence of invariant feature mappings {(i.e., transformations of covariates $X$)}. Then the DA problem is reduced to identifying features that are important for the underlying task and are invariant across the source and target domains. Domain Invariant Projection (DIP)~\citep{ baktashmotlagh2013unsupervised} was proposed as an attempt to identify these invariant features through projecting the source and target covariates into a common subspace. 
Subsequent works~\citep{ganin2016domain, tzeng2017adversarial, hoffman2018cycada} have advanced the common subspace approach by incorporating neural network implementation, demonstrating empirical success across many datasets.
Despite its empirical success, however, recent work by~\cite{johansson2019support, zhao2019learning} revealed that DIP may have a target risk much larger than its source risk, caused by the so-called label-flipping issue. Specifically, in the absence of target labels, in general there is no guarantee that DIP will find the true invariant representations. If DIP fails to do so, its target performance may deviate significantly from its source performance. 
What is worse is that currently there are no practical ways to check whether DIP has found the true invariant features, which presents a challenge for its practical use. 


In this work, we make the assumption on the existence of conditionally invariant components (CICs)~\citep{gong2016domain, heinze2021conditional}{---feature representations which are useful for prediction and whose distribution is invariant \textit{conditioned} on the labels across source and target domains (see~\defref{cic} for the formal definition of CICs)}. {With access to multiple source domain data that are related to the target domain, it becomes practically plausible to estimate CICs.} In this setting, the existence of CICs can be well-justified because any features that are invariant, conditioned on the labels across these heterogeneous source domains, are likely to remain conditionally invariant in the target domain. The idea that takes advantage of the heterogeneity in multiple datasets has its origins in causality, robust statistics~\citep{peters2016causal,buhlmann2020invariance} as well as stability-driven statistical analysis in the PCS framework~\citep{yu2020veridical}. Moreover, in anticausal learning scenarios~\citep{scholkopf2012causal} or when datasets are generated through structural causal models~\citep{pearl2009causality,chen2021domain}, CICs naturally emerge when unperturbed covariates are descendents of the labels.


Under the assumption on the existence of CICs, Conditional Invariant Penalty (CIP) is a widely used algorithm to identify CICs~\citep{gong2016domain, heinze2021conditional}, through enforcing the invariance of conditional feature distributions across multiple source domains. CIP is shown to achieve good target performance under several theoretical settings~\citep{heinze2021conditional, chen2021domain} and empirically~\citep{li2018domain,li2018deep,jiang2020implicit}.

Despite the rapid development on CIP to identify CICs, the understanding of DA algorithms based on CICs beyond simple structural equation models is still limited, and their ability to handle DA problems involving both covariate and label shifts is unclear in the previous literature. Additionally, while the generalized label shift has been proposed in~\cite{tachet2020domain}, it is not known how to reliably identify CICs via their proposed algorithms. Note that in the DA setting, target labels are unavailable, making it impossible to estimate target performance through validation or cross-validation. Therefore, it is crucial to rigorously quantify target risk guarantees of DA algorithms under assumptions made.

Additionally, while DA algorithms based on CICs exhibit target performance comparable to that on the source data, often they are not the top performers~\citep{heinze2021conditional, chen2021domain}. This is mainly due to the fact that these algorithms only use source data, and the requirement for CICs to maintain invariance across multiple source domains may discard many features useful for target prediction. 
In contrast, DIP leverages a single source data and target covariates, which can lead to better target performance when successful, making it a preferred choice for many practitioners. However, in some cases, DIP can perform significantly worse than the Empirical Risk Minimization (ERM)~\citep{wu2019domain,zhao2019learning,chen2021domain}. Given the potential advantanges and risks of DIP, it is natural to ask whether there is a way to detect potential failure of DIP algorithms and improve their target performance with theoretical guarantees.

\subsection{Our contributions}

In this work, we show that learning CICs enables significant progress in overcoming the aforementioned challenges in DA with theoretical guarantees. Specifically, under the assumption on the existence of CICs and the availability of multiple source datasets, we highlight three prominent roles that CICs can play in enhancing DA algorithms.


First, we show that CICs can enable label shift correction in the presence of both covariate and label distribution shifts. Based on this, we introduce the importance-weighted conditional invariant penalty (IW-CIP) algorithm and analyze its theoretical performance. Under structural equation models, we show that CICs can be {correctly} identified via both CIP and IW-CIP using labeled data from multiple source domains. Consequently, it is only the finite-sample error gap that accounts for the difference between the target risk of the IW-CIP classifier and that of the optimal conditionally invariant classifier.


Second, we demonstrate how CICs can be used to provide target risk lower bounds for other DA algorithms without requiring access to target labels. Provided that CICs are accurately identified, this lower bound allows for assessing the target performance of any other DA algorithms, including DIP. Consequently, this approach makes it possible to detect algorithm failures using only source data and unlabeled target data.


Lastly, we introduce JointDIP, a new DA algorithm that extends the domain invariant projection (DIP) by incorporating CICs. Under structural equation models, we prove that JointDIP reduces the possibility of label-flipping and enhances the target prediction performance of CICs-based algorithms. Therefore, this algorithm addresses both the conservative nature of CIP and the potential risks associated with DIP. Our findings are supported by numerical experiments on synthetic and real datasets, including MNIST, CelebA, Camelyon17, and DomainNet.

The rest of the paper is organized as follows. \secref{problem_setup} provides the necessary technical background and formally sets up the domain adaptation problem, along with a comprehensive review of related work in literature than in the introduction. In \secref{CIP}, we present the first role of CICs by introducing the IW-CIP algorithm and establish finite-sample target risk bounds to characterize its target risk performance (cf.~\thmref{IW-CIP_gen_bound}, \ref{thm:IW-CIP_gen_bound2}). \secref{DIP} describes the other two roles of CICs in DA, demonstrating how they can be used to detect the failure of other DA algorithms (cf.~\thmref{conditional_invariant_proxy_y}), and introducing the JointDIP algorithm to address the label-flipping issues of DIP (cf.~\thmref{SEM_guarantee}). In \secref{exp}, we complement our theoretical arguments with extensive numerical experiments on synthetic and real datasets, emphasizing the importance of learning CICs as an essential part of domain adaptation pipeline. 
Finally, we conclude in~\secref{discussion} with potential directions for future work.


\section{Background and problem setup}\label{sec:problem_setup}
In this section, we begin by defining the domain adaptation problem and introducing the concept of conditionally invariant components (CICs). We then outline two baseline DA algorithms: the conditional invariant penalty (CIP) algorithm, which finds conditionally invariant representation across multiple source domains, and the domain invariant projection (DIP) algorithm, which works with a single source domain and takes advantage of additional unlabeled target data. Subsequently, we review related work to better position our contributions within the existing body of literature.

\subsection{Domain adaptation problem setup}
We consider the domain adaptation problem with $M\ (M \geq 1)$ labeled source environments and one unlabeled target environment. By an \textit{environment} or a \textit{domain}, we mean a dataset $\dataset$ with \iid samples drawn from a common distribution $\distri$. Specifically, for $m \in \braces{1, \ldots, M}$, in the $m$-th source environment, we observe $\nsrc$ \iid samples
\[\Dsrc = \{(X_k^\tagm{m}, Y_k^\tagm{m})\}_{k=1}^{\nsrc},\]
drawn from the $m$-th source data distribution $\Psrc$. Independently of the source data, there are $\ntar$ \iid samples
\[\Dtar =  \{(X_k^\tagtar, Y_k^\tagtar)\}_{k=1}^{\ntar},\]
drawn from the target distribution $\Ptar$. We denote general random variables drawn from $\Psrc$ and $\Ptar$ as $(\Xsrc, \Ysrc)$ and $(\Xtar, \Ytar)$, respectively. In the domain adaptation setting, all the source data are observed, while only the target covariates $\Dtar_X = \{X_k^\tagtar\}_{k=1}^{\ntar}$ are observed in the target domain. For simplicity, throughout the paper, we assume that each covariate lies in a $p$-dimensional Euclidean space $\real^p$, and the labels belong to the set $\Yset=\braces{1, \ldots, L}$ where $L$ represents the total number of classes. 

To measure the performance of {a DA algorithm}, we define the \textit{target population risk} of a classifier $h: \real^p \to \braces{1, \ldots, L}$, mapping covariates to labels, via the $0$-$1$ loss as
\begin{align}
  \label{eqn:target_risk}
	\risktar{h} = \EE{\ones{h(\Xtar)\neq \Ytar}} = \prob{h(\Xtar)\neq \Ytar}.
\end{align}
Consequently, $1-\risktar{h}$ is the target population classification accuracy. Similarly, we define the $m$-th source population risk as
\begin{align}
  \label{eqn:source_risk}
	\risksrc{h} = \EE{\ones{h(\Xsrc)\neq \Ysrc}} = \prob{h(\Xsrc)\neq \Ysrc}.
\end{align}

The main goal of the domain adaptation problem is to use source and unlabeled target data to estimate a classifier $h: \real^p \to \braces{1, \ldots, L}$, from a set of functions called the hypothesis class $\Hset$, such that the target population risk is small. To quantify this discrepancy, we compare the target population risk with the \textit{oracle target population risk} $\risktar{\horacle}$ that we may aspire to achieve, where
\begin{align}\label{eqn:oracle_risk}
  \horacle = \argmin_{h \in \Hset} \risktar{h}.
\end{align}
Without specifying any relationship between the source distribution $\Psrc$ and the target distribution $\Ptar$, there is no hope that the target population risk of a classifier learned from the source and unlabeled target data is close to the oracle target population risk. Throughout the paper, we focus on DA problems where conditionally invariant components (CICs) across all source and target environments are present and they are correlated with the labels. The existence of CICs was first assumed in~\cite{gong2016domain} and~\cite{heinze2021conditional}. Under assumptions of arbitrary large interventions and infinite data, \cite{heinze2021conditional} established that their classifier built on CICs achieves distributional robustness. In this paper, instead of discussing distributional robustness of a classifier, we construct classifiers that have target population risks close to the oracle target risk. Before that, we introduce CICs and the best possible classifier built upon CICs.

\begin{definition}\label{def:cic}
	\sloppy \name{(Conditionally invariant components (CICs)~\cite{gong2016domain,heinze2021conditional})} Suppose that there exist $M$ source distributions $\{\Psrc\}_{1\leq m\leq M}$ and a target distribution $\Ptar$ on $\real^p\times\{1,2,\ldots,L\}$. We say that a function $\phi:\real^p\rightarrow\real^q$ is a \textit{conditionally invariant feature mapping}, if
	\begin{align}\label{eqn:cond_inv_defn}
		\begin{split}
			\Psrc_{\phi(X)\mid Y=y} = \Ptar_{\phi(X)\mid Y=y}, \ \ \forall m\in\{1,\ldots,M\},\ y\in\{1,\ldots,L\}.
		\end{split}
	\end{align}
	The corresponding feature representation $\phi(X)$ is called a \textit{conditionally invariant component (CIC)} if it has a single dimension ($q=1$), and \textit{CICs} if it is multidimensional ($q>1$). When $\phi$ maps $\real^p$ to $\{1,2,\ldots,L\}$ and satisfies~Eq.~\eqnref{cond_inv_defn}, we refer to it as a \textit{conditionally invariant classifier}.
\end{definition}
We will use the term ``CICs across source distributions'' instead if the feature representations is conditionally invariant on all $\Psrc$ but not necessarily on $\Ptar$. {With this definition, our assumption on the ``existence of CICs'' can be stated as: there exists a conditionally invariant mapping $\phi$ such that $\phi(X)$ is CIC(s) across $M$ source distributions $\{\Psrc\}_{1\leq m\leq M}$ and a target distribution $\Ptar$.} In~\defref{cic}, constant representations can be viewed as a trivial case of CICs. However, only CICs that are useful for prediction are beneficial for DA problems. Thus, our assumption on the existence of CICs refers to the existence of CICs useful for prediction. The best possible classifier built upon such CICs is the following optimal classifier.

\begin{definition}\label{def:optimal_cie}
\sloppy \name{(Optimal conditionally invariant classifier)} Let $\Phi$ and $\Gset$ are classes of functions where each function $\phi\in\Phi$ maps $\real^{p}$ to $\real^q$, and each function $g\in\Gset$ maps $\real^{q}$ to $\{1,2,\ldots,L\}$. Under the assumption on the existence of CICs, we define the \textit{optimal conditionally invariant classifier $\hstar$} as
\begin{align}\label{eqn:optimal_cond_inv_estimator}
    \begin{split}
        \hstar &= \gstar \circ \phistar, \\
        \gstar,\phistar &= \argmin_{g\in\Gset,\phi\in\Phi} \ \ \risktar{g\circ\phi}\\
        &\hspace{0.2in} \textnormal{subject to }\Psrc_{\phi(X)\mid Y=y} = \Ptar_{\phi(X)\mid Y=y}, \ \ \forall m\in\{1,\ldots,M\},\ y\in\{1,\ldots,L\}.
    \end{split}
\end{align}
\end{definition}

The conditionally invariant classifier above is optimal in the sense that it minimizes the target population risk while the learned representation is conditionally invariant across $\Psrc$ ($1\leq m\leq M$) and $\Ptar$. When evaluating the target performance of a CICs-based classifier $h=g\circ\phi$, instead of directly comparing it with $\horacle$, we consider comparing it with $\hstar$ first, and then relating $\hstar$ to $\horacle$. Intuitively, the target risk difference between $\hstar$ and $\horacle$ will not be significant when the dimension of CICs is sufficiently large (c.f. \propref{linear_sem_example2}). In this case, to build a CICs-based classifier with a guaranteed target risk bound compared to $\horacle$, it suffices to find a classifier which achieves a low target risk compared to $\hstar$. 

Another widely-used class of DA algorithms known as domain invariant projection (DIP) (cf.~\secref{DIP_defn}) seeks to find a feature mapping $\phi$ which matches the source and target marginal distribution of $\phi(X)$. Although it has been shown to be successful in some practical scenarios~\citep{ganin2016domain, mao2017least, hoffman2018cycada, peng2019moment}, in general there is no guarantee for the low target risk. Both \cite{johansson2019support} and \cite{zhao2019learning} provide simple examples where DIP can even perform worse than a random guess, as if features learned by DIP ``flip'' the labels. We formulate the rationale behind their examples by defining label-flipping features as follows. 


\begin{definition}\label{def:label-flipping}
	\sloppy \name{(Label-flipping feature)} Without loss of generality, consider the first source distribution $\Psrcone$ and the target distribution $\Ptar$ on $\real^p\times\{1,2,\ldots,L\}$. We say that a function $f:\real^p\rightarrow \real$ is a \textit{label-flipping} feature mapping, if there exists $y\in\{1,2,\ldots,L\}$ such that\footnote{ When $Y$ is binary, the definition is equivalent to  $\rho\parenth{f(\Xsrcone),\Ysrcone}\cdot \rho\parenth{f(\Xtar),\Ytar} < 0$.}
	\begin{align}\label{eqn:label_flipping_feat_defn}
		\rho\parenth{f(\Xsrcone),\ones{\Ysrcone=y}}\cdot \rho\parenth{f(\Xtar),\ones{\Ytar=y}} < 0,
	\end{align}
	where $\rho(\cdot,\cdot)$ denotes the correlation between random variables. The corresponding feature $f(X)$ is called a label-flipping feature.
\end{definition}

If the label-flipping features exist between a source distribution and the target distribution, they can be inadvertently learned by DIP as part of its learning algorithm for domain invariant representation. This can lead to degraded prediction performance on the target domain, as the sign of the correlation between these features and the labels changes under source and target distributions. We refer to it as the label-flipping issue {of DIP}. 

Next we define an anticausal data generation model {that} serves as a concrete working example for validating our assumptions and establishing new results. While not all of our theoretical results depend on this model, it nevertheless aids in illustrating how our methods work. 
\begin{definition}\label{def:model1}
\name{(General anticausal model)} We say that the data generation model is a general anticausal model, if the source and target distributions are specified as follows. Under the $m$-th source distribution $\Psrc$, source covariates and label are generated by
\begin{align*}
&\Ysrc \sim \textnormal{ Categorical}\parenth{\probsrc{1},\probsrc{2},\ldots,\probsrc{L}},\\
&\Xsrc = \fsrc(\Ysrc) +  \epssrc, \text{ $\epssrc\perp\Ysrc$,}
\end{align*}
where $\probsrc{y}\in(0,1), \sum_{y=1}^L\probsrc{y}=1$, and $\fsrc:\Yset=\{1,\ldots,L\}\to\R^p$ is a deterministic function defining the mechanism between the $m$-th source covariates $\Xsrc$ and label $\Ysrc$. Under the target distribution $\Ptar$, target covariates and label are generated independently of the source data by
\begin{align*}
&\Ytar \sim \textnormal{ Categorical}\parenth{\probtar{1},\probtar{2},\ldots,\probtar{L}},\\
&\Xtar = \ftar(\Ytar) +  \epstar, \text{ $\epstar\perp\Ytar$,}
\end{align*}
where $\probtar{y}\in(0,1), \sum_{y=1}^L\probtar{y}=1$, and $\ftar:\Yset=\{1,\ldots,L\}\to\R^p$ is a deterministic function defining the mechanism between the target covariates $\Xtar$ and label $\Ytar$. The noise terms $\epssrc, \epstar\in\real^p$, are generated \iid from a zero-mean distribution $\Peps$.
\end{definition}
Under the general anticausal model, conditioned on the labels, the mechanism functions $\fsrc$, $\ftar$, $m=1,\ldots,M,$ determine the difference between the source and target conditional distributions $X \mid Y$ because the noise terms share the same distribution $\Peps$. This generative model generalizes various perturbations that can occur in an anticausal model~\citep{pearl2009causality}. For example, label shift might occur if the marginal distributions of $Y$ differ. Covariate shift, conditioned on the labels, can occur when the deterministic functions $\fsrc,\ftar$ vary. We present an explicit example under this model, including the presence of both CICs and label-flipping features in~\appref{anticausal_example}.

\paragraph{Notation} To distinguish subscripts from coordinates, we represent the $j$-th coordinate of a constant vector $x$ as $\coord{x}{j}$, and similarly, for a random vector $X$, we use $\coord{X}{j}$. Next, we introduce several notations for the empirical equivalents of population quantities. For the $m$-th source and target datasets $\Dsrc$, $\Dtar$, $m=1,\ldots,M$, we use $\Phsrc$ and $\Phtar$ to denote the empirical data distributions, respectively. We define the $m$-th source and target empirical risk of a classifier $h\in\Hset$ as 
\begin{align*}
	\riskhsrc{h} &=  \Ep{(X, Y)\sim\Phsrc}{\ones{h(X)\neq Y}} = \frac{1}{\nsrc}\sum_{k=1}^{\nsrc}\ones{h(\Xsrc_k)\neq \Ysrc_k}, \;\;\text{ and }\\
	\riskhtar{h} &=   \Ep{(X, Y)\sim\Phtar}{\ones{h(X)\neq Y}}=  \frac{1}{\ntar}\sum_{k=1}^{\ntar}\ones{h(\Xtar_k)\neq \Ytar_k}.
\end{align*}
For any mapping $\phi$ defined on $\R^p$, we use $\Pmarsrc{\phi(X)}$ and $\Pcondsrc{\phi(X)}{Y=y}$ to denote the $m$-th source marginal distribution of $\phi(X)$ and the $m$-th source conditional distribution of $\phi(X)$ given its label $Y=y$. Similarly, we use $\Pmartar{\phi(X)}$ and $\Pcondtar{\phi(X)}{Y=y}$ to denote the target marginal distribution of $\phi(X)$ and the target conditional distribution of $\phi(X)$ given its label $Y=y$. The corresponding empirical quantities are denoted by $\Phmarsrc{\phi(X)}$, $\Phcondsrc{\phi(X)}{Y=y}$, $\Phmartar{\phi(X)}$, and $\Phcondtar{\phi(X)}{Y=y}$, respectively.

Letting $\distriP$ be a distribution on $\real^q$ and $\Gset$ be a function class where each function $g\in\Gset$ maps $\real^q$ to $\real$, we recall the Rademacher complexity as
\begin{align}
    \rad{n,\distriP}{\Gset}&\defn\Ep{Z_k\iidsim\distriP, \sigma_k\iidsim\sigma}{\sup_{g\in\Gset}\abss{\frac 1 n\sum_{k=1}^n\sigma_kg(Z_k)}},
\end{align}
\sloppy where $\sigma_k$'s are random variables drawn independently from the Rademacher distribution, i.e., $\prob{\sigma_k=1}=\prob{\sigma_k=-1}=1/2$. Additionally, for any two distributions $\mathcal{P}$ and $\mathcal{Q}$ on $\real^q$ and a class of classifiers $\Gset$ where each function $g\in\Gset$ maps $\real^q$ to $\Yset=\braces{1,2,\cdots,L}$, we define the $\Gset$-divergence between these two distributions as
\begin{align}\label{eqn:g_div}
    \Gdiv{\mathcal{P}}{\mathcal{Q}}&\defn\sup_{g\in\mathcal{G}}\max_{y=1,2,\ldots,L}\abss{\Ep{Z\sim \mathcal{P}}{\ones{g(Z)= y}}-\Ep{Z\sim \mathcal{Q}}{\ones{g(Z)= y}}}.
\end{align}
Note that the $\Gset$-divergence defined in Eq.~\eqnref{g_div} can be seen as an extension of the $\Hset$-divergence introduced in~\cite{ben2010theory} to multiclass classification. It is also an instantiation of the Integral Probability Metrics~\citep{muller1997integral} with a specific choice of function class. 


\subsection{Two baseline DA algorithms}
With the background of domain adaptation in place, we now proceed to introduce two DA algorithms in this subsection. The first algorithm is the conditional invariant penalty (CIP) that finds conditionally invariant representation across multiple source domains. The second algorithm is domain invariant projection (DIP), which works with a single source domain but also requires target covariates. These two DA algorithms serve as baselines for evaluating the methods we introduce in the subsequent sections.

\subsubsection{Conditional invariant penalty (CIP)}\label{sec:CIP_defn}
The \textit{conditional invariant penalty} (CIP) algorithm uses the multiple labeled source environments to learn a feature representation that is conditionally invariant across all source domains~\citep{gong2016domain, heinze2021conditional}. More precisely, the CIP algorithm is a two-stage algorithm which minimizes the average source risk across domains while enforcing the first-stage features to be conditionally invariant.

\paragraph{Population CIP \citep{gong2016domain, heinze2021conditional}:} The population CIP classifier is formulated as a constrained optimization problem with a matching penalty on the conditional:
    \begin{align}
      \label{eqn:pop_CIP_naive}
      \begin{split}
      \hcip &= \gcip \circ \phicip,  \\
      \gcip,\phicip &= \argmin_{g\in\Gset,\phi\in\Phi} \ \ \frac{1}{M}\sum_{m=1}^{M}\risksrc{g\circ\phi} \\
      &\hspace{0.2in} \text{subject to } \ \ \discrepancy{\Pcondsrc{\phi(X)}{Y}}{\Pcondsrcmp{\phi(X)}{Y}}  = 0 \text{ for all $m\neq m'\in\{1,\ldots,M\}$},
        \end{split}
      \end{align}
    where $\risksrc{\cdot}$ is the $m$-th population source risk, and $\discrepancy{\cdot}{\cdot}$ is a distributional distance between two distributions such as the maximum mean discrepancy (MMD)~\cite{gretton2012kernel} or generative adversarial networks (GAN) based distance~\citep{ganin2016domain}. The optimization is over the set of all two-stage {functions} where the first stage function belongs to $\Phi$ and the second stage to $\Gset$. The constraint on the conditional $\phi(X) \mid Y$ enforces CIP to use CICs across all $\Psrc$ ($1\leq m\leq M$) to build the final classifier. Here the hope is that if feature mappings are conditionally invariant across the heterogeneous source distributions $\Psrc$, they are likely to be also conditionally invariant under the target distribution $\Ptar$. As a result, a classifier built on these features would generalize to the target distribution.
    
    While CIP was originally introduced in the context of domain adaptation~\citep{gong2016domain} and distributional robustness~\citep{heinze2021conditional}, it is closely related to the problem of \textit{domain generalization}, or \textit{out-of-distribution} (OOD) generalization~\citep{ben2009robust}. The main distinction between domain generalization and domain adaptation is the absence of any unlabeled target data during the training phase in domain generalization~\cite{wang2022generalizing}. Since CIP does not require unlabeled target data for training, it has been effectively applied to domain generalization, demonstrating significant empirical success~\citep{li2018domain,li2018deep,jiang2020implicit}. In this paper, we use the terms “domain generalizaton” and “domain adaptation” interchangeably since our focus is on evaluating and comparing the prediction performance of models on new target domains.

\paragraph{Finite-sample CIP:} In finite-sample case, instead of putting a strict constraint in the optimization, CIP adds the conditional invariant penalty on the empirical distributions, using a pre-specified parameter $\lamcip>0$ to control the strength of regularization as follows:
    \begin{align}
        \label{eqn:finite_CIP_naive}
        \begin{split}
        \hhcip &= \ghcip \circ \phihcip,  \\
        \ghcip,\phihcip &= \argmin_{g\in\Gset,\phi\in\Phi} \ \ \frac{1}{M}\sum_{m=1}^{M}\riskhsrc{g\circ\phi} + \frac{\lamcip}{LM^2}\cdot\sum_{y=1}^L\sum_{m\neq m'}  \probdiv{\Phcondsrc{\phi(X)}{Y=y}}{\Phcondsrcmp{\phi(X)}{Y=y}}.
        \end{split}
    \end{align}

While CIP makes use of CICs across multiple source distributions to construct the classifier, it does not exploit the target covariates that are also available in our DA setting. Indeed, CIP has often been employed as a method for domain generalization, where the goal is to generalize to the target domain without access to unlabeled target data. Because CIP finds invariant features through multiple source domains without considering target covariates during training, the learned features may be overly conservative for generalizing to the target domain. Later, we show how these CICs can be used to improve target prediction performace in our DA setting. Next, we introduce another class of DA algorithm which takes advantage of target covariates.

\subsubsection{Domain invariant projection (DIP)}\label{sec:DIP_defn}

In contrast to CIP which utilizes multiple source data, \textit{domain invariant projection} (DIP)~\citep{baktashmotlagh2013unsupervised, ganin2016domain} uses labeled data from a single source as well as unlabeled data from the target domain to seek a common representation that is discriminative about the source labels. The idea of finding a common representation is realized via matching feature representations across source and target domains. Without loss of generality, we formulate DIP using the first source distribution $\Psrcone$, but in principle it can be formulated with any source distribution $\Psrc$.

\paragraph{Population DIP \citep{baktashmotlagh2013unsupervised, ganin2016domain}:} \label{def:DIP}
We define the population DIP as a minimizer of the source risk under the constraint of marginal distribution matching in feature representations:
\begin{align}\label{eqn:pop_DIP}
    \begin{split}
        \hdip &= \gdip \circ \phidip, \\
        \gdip, \phidip &= \argmin_{g\in\Gset,\phi\in\Phi} \ \ \risksrcone{g\circ\phi}  \\
        &\hspace{0.2in} \text{subject to } \ \ \probdiv{\Pmarsrcone{\phi(X)}}{\Pmartar{\phi(X)}}  = 0,
    \end{split}
\end{align}
where $\discrepancy{\cdot}{\cdot}$ is a distributional distance between two distributions as in Eq.~\eqnref{pop_CIP_naive}. The constraint ensures that the source marginal distribution in the {feature representation} space is well aligned with the target marginal distribution. While our formulation only utilizes the single source, DIP also has a multi-source pooled version~\citep{peng2019moment} where marginal distributions in the representation space are matched across all $\Psrc$ ($1\leq m \leq M$) and $\Ptar$.

\paragraph{Finite-sample DIP:} In the finite sample setting, the hard constraint used in population DIP is relaxed to take a regularization form, and therefore
\begin{align}\label{eqn:finite_DIP}
    \begin{split}
        \hhdip &= \ghdip \circ \phihdip, \\
        \ghdip, \phihdip &= \argmin_{g\in\Gset,\phi\in\Phi} \ \ \riskhsrcone{g\circ\phi} + \lamdip\cdot\probdiv{\Phmarsrcone{\phi(X)}}{\Phmartar{\phi(X)}},
    \end{split}
\end{align}
where $\lamdip>0$ is a regularization parameter that balances between the source risk and the matching penalty across the empirical source and target marginal distributions for the feature representation.

While DIP makes use of the target covariates to learn domain-invariant representations, in general it does not have target risk guarantees. In particular, the matching penalty can force DIP to learn label-flipping features (see~\defref{label-flipping}) because unlike CIP, it aligns features in the marginal representation space. DIP leverages information about target covariates to learn invariant features, but this comes at the cost of potential label-flipping issue (c.f. see~\figref{dip_vs_jdip} and~\thmref{SEM_guarantee}).
Furthermore, DIP can fail when the marginal distribution of $Y$ is perturbed under an anticausal data generation model. For example, \cite{chen2021domain} demonstrates that DIP can be perform worse than CIP in the presence of label shift.

\subsection{Related work}\label{sec:related_work}
As a subfield of transfer learning, \textit{domain adaptation} (DA), also known as transductive transfer learning~\citep{redko2020survey}, aims to develop effective algorithms when distribution shifts exist across training data (source domains) and test data (target domain). In DA, it is usually assumed that we have access to some unlabeled target data, and we aim to build a model specifically for the target domain. Another related terminology \textit{domain generalization} (DG), or out-of-distribution (OOD) generalization~\citep{ben2009robust}, instead assumes that unlabeled target data is unobtainable, and seeks a model to generalize appropriately for all possible test domains. In this paper, we do not distinguish these two terms particularly, and simply consider the DA setting, i.e., we have access to an unlabeled target dataset, and our goal is to find a classifier that performs well on the target domain.

\paragraph{Domain adaptation with common feature representations}

One line of work in DA focuses on relating source and target domains by learning a common feature representation across the domains. \cite{pan2008transfer, pan2010domain} first proposed to find transferable components across domains in a reproducing kernel Hilbert space, such that data distributions in different domains are close. Similar designs of creating intermediate representations across domains were investigated in \citep{gopalan2011domain, gong2012geodesic}, and later the formal idea of matching probability distributions and extracting invariant information was introduced in~\citep{baktashmotlagh2013unsupervised}. They named their approach domain invariant projection (DIP), which projects data to a low-dimensional latent space where source and target covariates distributions are matched. This DIP type of approach was then widely employed in research on neural networks for DA \citep{sun2016deep, ganin2016domain, hoffman2018cycada}.
Specifically, \cite{ganin2016domain} introduced the Domain-Adversarial Neural Network (DANN), which leverages Generative Adversarial Network (GAN) based distributional distances and neural networks to learn feature representations. \cite{sun2016deep} proposed CORAL, a method that aligns the first and second moments of last layer activations in neural networks, {whereas \cite{courty2016optimal} proposed to match source and target domains via regularized optimal transport.} Since then, subsequent research has emerged that applies different neural-network-based distances~\citep{long2017deep, courty2017joint, long2018conditional, hoffman2018cycada, peng2019moment}. While these studies have effectively demonstrated the empirical success of their methodologies across various image and text datasets, there is still a lack of understanding about the specific conditions under which these methods can achieve good target performance.

From a theoretical point of view, the rationale for DIP has been rigorously established by~\cite{ben2006analysis, ben2010theory}, where they prove a target risk bound via Vapnik-Chervonenkis (VC) theory. Further studies have analyzed the source and target risk difference using different divergence measures~\citep{mansour2009domain, cortes2010learning, cortes2011domain, cortes2014domain, hoffman2018algorithms}; see the survey by \cite{redko2020survey} for a complete review. In all these works, the target risk bound typically includes three components: the source risk of the classifier, a divergence term measuring invariance of the representation, and an optimal joint error term. Then DIP objective can be viewed as minimizing the sum of the first two terms in order to achieve a low target risk. However, \cite{johansson2019support, zhao2019learning} argued that DIP can completely fail in certain scenarios because the joint error term is not observable and cannot be controlled by DIP. For instance, there can be label-flipping features that achieve perfect source accuracy and invariance of representation, but result in poor performance on the target domain. Addressing how to avoid such cases in DIP has not been adequately explored.

\paragraph{DA methods under conditional invariance and label shift}

The second class of DA methods arises from exploring invariance solely from multiple labeled source domains, without using target covariates. One such category of invariance is known as conditional invariance, which aims to discover feature representations $\phi(X)$ that are invariant conditional on the label $Y$ across source distributions. It was first introduced in \cite{gong2016domain}, where the authors proposed to find conditional transferable components after proper location-scale transformations. Later \cite{heinze2021conditional} applied a related approach by classifying features into "core" ones, which are conditionally invariant across domains, and "style" ones, whose distribution may shift significantly. They sought to construct a classifier built upon only the core features by imposing the conditional invariant penalty (CIP). \cite{chen2021domain} further developed a theoretical framework under structural equation models to analyze the effect of CIP when the data generation process is anticausal. The emergence of conditionally invariant features, or CICs, is a natural consequence under anticausal data generation where the covariates, which are descendants of the labels, remain unperturbed. Consequently, most tasks tackled by DA methods utilizing conditional invariance typically involve anticausal problems.

It is also worth noting that under conditional invariance, it is convenient to study the label shift problem, where the marginal distributions of the label $Y$ are shifted across domains. This shift in label distribution is common in many scenarios, for example, the distribution of ArXiv paper categories can be influenced by changes in research topic trends over time~\citep{wu2021online}. \cite{lipton2018detecting} first proposed the label shift correction algorithm, which estimates the amount of label shift by using the conditional invariance of the covariates and the moment matching equation. To ensure numerically stability, several variants of the algorithms have been further introduced. For instance, \cite{azizzadenesheli2019regularized} formulated an $\ell_2$-norm regularized least squares optimization problem, while~\cite{tachet2020domain} considered a constrained least squares problem and the generalized label shift assumption. \cite{garg2020unified} introduced a maximum likelihood estimation approach and provided a unified view of previous label shift correction algorithms. More recently,~\cite{chen2022estimating} has considered a distributional shift named Sparse Joint Shift (SJS), which allows for both labels and a few covariates to shift, however, the generalized label shift considered in~\cite{tachet2020domain} and this paper permits distribution shifts of more general latent feature representations.

\paragraph{Other DA methods}

Besides conditional invariance, other types of invariance have also attracted increasing attention. For example, \cite{arjovsky2019invariant} came up with the Invariant Risk Minimization (IRM) method to identify causal features via enforcing the invariance of label $Y$ given the features $\phi(X)$. The idea is that the optimal linear predictor on top of these features $\phi(X)$ will remain the same across all source domains---this is in contrast to the assumption on CICs where there exist conditionally invariant features conditioned on the labels $Y$. 
Following IRM, analogous invariance has been explored. For example, \citep{krueger2021out, xie2020risk} proposed V-REx, which maintains risk invariance by reducing the variance of risks across source domains. Meanwhile, \citep{koyama2020invariance,shi2021gradient} maximized the gradient inner product between different domains to leverage invariant gradient direction, with the latter referring to their method as Fish.
While these approaches were initially motivated to identify invariant features in causal data models, they have also been applied to anticausal problems. However, as pointed out by~\cite{rosenfeld2020risks,kamath2021does}, IRM may fail to capture the correct invariance, especially when the model is non-linear. 

Counterfactual invariance is another type of invariance introduced in~\citep{veitch2021counterfactual,jiang2022invariant}, where the aim is to seek representations that are counterfactually invariant to a spurious factor of variation. \citep{wang2022unified} show that the formulation of learning counterfactual invariant representations is closely related to different formulations of invariant representation learning algorithms, depending on the underlying causal structure of the data. Aside from the robustness achieved by enforcing specific invariance, a more general framework is distributionally robust optimization (DRO)~\citep{ben2013robust, duchi2021statistics}. DRO aims to minimize the worst-case loss over an uncertainty set of distributions, such as a ball around the training distribution. However, this approach may result in models that are overly conservative. GroupDRO~\citep{hu2018does,oren2019distributionally} intends to address this issue by defining the uncertainty set as mixtures of source distributions (or groups). Furthermore, \citep{sagawa2019distributionally} improve the worst-group performance of groupDRO using an online algorithm to update group weights. 
It should be noted that related study of invariance and robustness has also appeared in works from a causal inference point of view~\citep{peters2016causal, meinshausen2018causality, rothenhausler2018anchor, magliacane2018domain}.

Other DA algorithms have also been introduced based on different assumptions relating source and target data, which do not fall into the above classes of methods relying on invariance. For instance, self training, which originated from the semi-supervised learning literature~\citep{chapelle2009semi}, recursively adapts a classifier to fit pseudolabels predicted by a previous model using unlabeled data in a new domain~\citep{amini2003semi}. Theoretical properties of self training has been studied recently in~\citep{kumar2020understanding, wei2020theoretical}. Data augmentation serves as another beneficial way of obtaining better performance~\citep{simard1998transformation, zhang2017mixup, yao2022improving}, especially when possible perturbations across domains are well understood. In addition, a different perspective of DA stems from meta-learning~\citep{thrun2012learning, finn2017model}. This ``learning-to-learn'' approach aims to distill knowledge of multiple learning episodes to improve future learning performance.


%
%

\section{Importance-weighted CIP with  target risk guarantees}
\label{sec:CIP}

In the previous section, we introduced two baseline DA algorithms, namely CIP and DIP. However, it is crucial to acknowledge the limitations of these algorithms, as they rely on specific assumptions that may not hold in more complex DA scenarios. For instance, while CIP identifies CICs to build the classifier, its ability to generalize to the target domain is limited when the marginal label distributions shift across source and target. DIP faces similar limitations and is also subject to the additional uncertainty of learning label-flipping features.

In this section, we present the first role of CICs in addressing situations where neither covariate shift nor label shift assumptions hold, and we introduce the importance-weighted conditional invariant penalty (IW-CIP) algorithm. The intuition behind label shift correction with CICs is as follows. It is known that one can correct for label distribution shift if the conditional $X\mid Y$ is invariant and only the label $Y$ distribution changes across source and target~\citep{lipton2018detecting}. However, the assumption on invariance of $X\mid Y$ can be too rigid. Here we assume {the existence of datasets from multiple source domains} and a conditionally invariant feature mapping $\phiinv$ such that $\phiinv(X) \mid Y$ remains invariant across source and target distributions. By identifying a conditionally invariant $\phiinv(X)$ via CIP, we can apply the label shift correction algorithm to correct the label shift. Once the label shift is corrected, the joint distribution of $(\phiinv(X), Y)$ becomes invariant under source and target distributions, allowing us to control the target risk of any algorithms built upon the source $(\phiinv(X), Y)$. IW-CIP is an extension of CIP by incorporating a label shift correction step before building the classifier based on CICs.


The rest of the section is structured as follows. In~\secref{iw_estimation}, we offer a review on label shift correction and its application in our context. Then, we introduce our new algorithm, IW-CIP, in~\secref{iwcip_defn}. In~\secref{target_risk_bound_iwcip}, we establish target risk guarantees for IW-CIP.

\subsection{Importance weights estimation}\label{sec:iw_estimation}
When the $m$-th source distribution and the target distribution share the same conditional $X\mid Y$ but have different label distributions, the main idea of label shift correction in~\cite{lipton2018detecting} lies in that the true importance weights vector $\wm \in \real_+^L$, defined as
\begin{align}\label{eqn:true_w}
    \coordw{\wm}{j} \defn \frac{\prob{\Ytar=j}}{\prob{\Ysrc=j}},
\end{align}
can be estimated by exploiting the invariance of the conditional $X \mid Y$. In our setting, while the source and target distribution does not share the same conditional $X\mid Y$, according to our assumption, we have a feature {mapping} $\phiinv$ whose corresponding feature representation is a CIC by \defref{cic}, i.e., $\phiinv(X) \mid Y$ is invariant under source and target distributions. Then by treating $\phiinv(X)$ as the new features, we can still correct for the label shifts.

More precisely, for $\Yset=\braces{1, \ldots, L}$, we have the following distribution matching equation between the $m$-th source domain and the target domain: for any $g \in \Gset$ mapping to $\Yset$, and for any $i, j \in\Yset$,
\begin{align}
	\label{eqn:label_shift_equation}
  \prob{g\circ\phiinv(\Xtar) = i} &= \sum_{j=1}^L \prob{g\circ\phiinv(\Xtar)=i \mid \Ytar = j} \prob{\Ytar= j} \nonumber\\
  &\overset{(*)}{=} \sum_{j=1}^L \prob{g\circ\phiinv(\Xsrc)=i \mid \Ysrc = j} \prob{\Ytar= j} \nonumber\\
  &= \sum_{j=1}^L \prob{g\circ\phiinv(\Xsrc)=i, \Ysrc = j} \coordw{\wm}{j},
\end{align}
where {step ($*$)} follows from the invariance of $\phiinv(X) \mid Y$. In the matrix-vector form, we can write
\begin{align}
    \label{eqn:label_shift_equation_matrix}
    \mu_{g\circ\phiinv} = C_{g\circ\phiinv}^{\tagm{m}} \wm,
\end{align}
where $\mu_{h}$ denotes the predicted probability of $h$ under the target covariate distribution, and $C_{h}^{\tagm{m}}$ is the confusion matrix on the $\Psrc$ given by
\begin{align}\label{eqn:confusion_matrix}
    C_{h}^{\tagm{m}}[i,j] = \prob{h(\Xsrc)=i, \Ysrc = j}.
\end{align}
It is then sufficient to solve the linear system~\eqnref{label_shift_equation_matrix} to obtain $\wm$. In practice, with finite-sample source data, we use $\hhcip=\ghcip\circ\phihcip$ in place of $g\circ\phiinv$. To obtain the estimated importance weights $\whm$, we replace $\mu_{\hhcip}$ and $ C_{\hhcip}^{\tagm{m}}$ with their empirical estimates. \sloppy In our multiple source environments scenario, we write $\w = (\wmi{1},\wmi{2},\ldots, \wmi{M})\in\R^{L\times M}$ and $\wh = (\whmi{1},\whmi{2},\ldots, \whmi{M})\in\R^{L\times M}$ to denote the true and estimated importance weights for all source distributions, respectively.

\subsection{Our proposed IW-CIP algorithm}\label{sec:iwcip_defn}
When the target label distribution remains unchanged, {the population CIP in Eq.~\eqnref{pop_CIP_naive}} is capable of generalizing to target data because of the invariance of the joint distribution $(\phicip(X), Y)$. However, although CIP ensures invariance in the conditional distribution $\phicip(X) \mid Y$, the joint distribution $(\phicip(X), Y)$ may not be invariant when label shift is present. Indeed, CIP may perform poorly on the target data if the target label distribution substantially deviates from the source label distribution (see experiments on synthetic data and rotated MNIST in \secref{exp}). To address such distributional shift, we propose the IW-CIP algorithm, which combines importance weighting for label shift correction with CIP.

We define the $m$-th weighted source risk for a hypothesis $h \in \Hset$ and a weight vector $\wwm \in \real^L$ as follows:
\begin{align*}
  \risksrcw{h}{\wwm} = \EE{\coordw{\wwm}{\Ysrc}\cdot\ones{h(\Xsrc)\neq\Ysrc}}.
\end{align*}
\sloppy In particular, if $\wwm = \wm$, i.e., $\coordw{\wwm}{j}=\frac{\prob{\Ytar=j}}{\prob{\Ysrc=j}}$ for all $j=1,\ldots,L$, then it is easy to see that $\risksrcw{h}{\wwm} = \risktar{h}$ as long as the conditional distributions $h(X) \mid Y=y$ are invariant across $\Psrc$ and $\Ptar$. Hence, with an appropriate choice of the weight vector $\wwm$, the weighted source risk can serve as a proxy for the target risk. We are ready to introduce the importance-weighted CIP (IW-CIP) algorithm.

\paragraph{Population IW-CIP:} The population IW-CIP is obtained in three steps.
\begin{enumerate}[Step 1:]
    \item Obtain a conditionally invariant feature mapping $\phicip$ and the corresponding CIP classifier $\hcip$ via the CIP algorithm in Eq.~\eqnref{pop_CIP_naive}.
    \item Use $\hcip$ in place of $g\circ\phiinv$ in Eq.~\eqnref{label_shift_equation_matrix} to obtain importance weights $\wm$.
    \item Compute a function $g\in\Gset$ as well as a new conditionally invariant feature mapping $\phi\in\Phi$ to minimize the importance-weighted source risks as follows.
    \begin{align}
        \label{eqn:pop_CIP_label_shift_corrected}
            \hiwcip &= \giwcip \circ \phiiwcip,  \notag\\
            \giwcip, \phiiwcip &= \argmin_{g\in\Gset,\phi\in\Phi} \ \ \frac{1}{M}\sum_{m=1}^{M}\risksrcw{g\circ\phi}{\wm}  \\
            &\hspace{0.2in} \text{subject to } \ \ \discrepancy{\Pcondsrc{\phi(X)}{Y}}{\Pcondsrcmp{\phi(X)}{Y}}  = 0, \ \forall m\neq m'\in\{1,\ldots,M\}.\notag
    \end{align}
\end{enumerate}
IW-CIP enforces the same constraint on the data representation $\phi$ as in Eq.~\eqnref{pop_CIP_naive}. However, unlike CIP, the objective of IW-CIP is  the importance-weighted source risk, which can serve as a proxy for the target risk under the label distribution shifts. Therefore, IW-CIP can generalize better on the target environment in the presence of label shifts.

Our formulation of IW-CIP in Eq.~\eqnref{pop_CIP_label_shift_corrected} involves retraining the feature mapping $\phiiwcip$, instead of using $\phicip$ obtained in Step 1, despite that both feature mappings are conditionally invariant. From a theoretical perspective, retraining $\phi$ enables a more simplified derivation of the generalization bound for IW-CIP where we compare the objective of the retrained classifier with that of the optimal conditionally invariant classifier (c.f. see Eq.~\eqnref{termD} in the proof of~\thmref{IW-CIP_gen_bound2}). Empirically, we observe that retraining often leads to better empirical performance, providing both theoretical and practical advantages of our formulation.



\paragraph{Finite-sample IW-CIP:} The finite-sample IW-CIP is obtained from the population IW-CIP after replacing all the population quantities by the corresponding empirical estimates {and turning constraints into a penalty form}. We first solve finite-sample CIP from Eq.~\eqnref{finite_CIP_naive}, then estimate importance weights and solve
\begin{align}
\label{eqn:finite_CIP_LabelCorr_CIP}
        \hhiwcip &= \ghiwcip \circ \phihiwcip,\notag\\
        \ghiwcip, \phihiwcip &= \argmin_{g\in\Gset, \phi\in\Phi} \ \ \frac{1}{M}\sum_{m=1}^{M}\riskhsrcw{g\circ\phi}{\whm}\\
        &\hspace{8em} +\frac{\lamiwcip}{LM^2}\cdot\sum_{y=1}^L\sum_{m\neq m'}  \probdiv{\Phcondsrc{\phi(X)}{Y=y}}{\Phcondsrcmp{\phi(X)}{Y=y}},\notag
\end{align}
where $\whm$ is an estimate of $\wm$, obtained by solving $\muh_{\hhcip} = \widehat{C}_{\hhcip}^{\tagm{m}}\whm$. Here $\muh_{\hhcip}$ and $\widehat{C}_{\hhcip}^{\tagm{m}}$ are empirical estimates of $\prob{\hhcip(\Xtar)=i}$ and $\prob{\hhcip(\Xsrc)=i, \Ysrc=j}$, respectively. For any $\phi\in\Phi$, we define a shorthand for the empirical conditional invariant penalty used in the finite-sample IW-CIP by
\begin{align}\label{eqn:iwcip_penalty_defn}
    \penaltyiwcip{\phi} &\defn \frac{\lamiwcip}{LM^2}\sum_{y=1}^L\sum_{m\neq m'}  \probdiv{\Phcondsrc{\phi(X)}{Y=y}}{\Phcondsrcmp{\phi(X)}{Y=y}}.
\end{align}

{The multi-step procedure of IW-CIP raises a natural question of whether we can iterate the estimation of CIP and reweighting multiple times, similar to the EM algorithm~\citep{dempster1977maximum}. However, without target labels, it is unclear whether this IW-CIP can be improved via reiteration. The primary purpose of IW-CIP is to maintain the same level of conditional invariance as CIP while adjusting the classifier to accommodate target label shifts. At the end of \secref{control1}, we explain why this approach based on iteration does not work theoretically. }

\subsection{Target risk upper bounds of IW-CIP}\label{sec:target_risk_bound_iwcip}
In this subsection, we state our main theorems on the target risk upper bounds for IW-CIP. In a nutshell, the target risk bound can be decomposed into multiple terms involving the source risk or the optimal target risk, error in importance weights estimation, and the deviation from conditional invariance of the finite-sample CIP or IW-CIP feature mapping. {Consequently, when the importance weights are accurately estimated and the identified features are near conditional invariance, IW-CIP achieves high target accuracy.}

To simplify the theoretical analysis that follows, our finite-sample results are stated by considering the case where the whole source dataset is split into three parts of equal size. That is, the $\ell$-th ($\ell=1,2,3$) source dataset $\dataset_\ell=\{\dataset_\ell^{(1)},\dataset_\ell^{(2)},\ldots,\dataset_\ell^{(M)}\}$ is denoted by
\begin{align}\label{eqn:dataset_split}
	\dataset^\tagm{m}_{\ell} = \{(\Xsrc_{\ell,k}, \Ysrc_{\ell,k}) \}_{k=1}^{\nsrc} \hspace{.5em} \text{ for $m\in\{1,\ldots,M\}$}.
\end{align}
The number of samples for each source dataset is given by $\nsrc$. When it is clear which dataset we are referring to, we simply omit the dataset subscript $\ell$ in covariates and labels by writing $\Xsrc_k,\Ysrc_k$. In our finite-sample theory, the first $\dataset_1$ is used for solving the finite-sample CIP, the second $\dataset_2$ is used for estimating importance weights and correcting the label shift, and the last $\dataset_3$ is used for solving the finite-sample IW-CIP. We do not apply a similar split to the unlabeled target dataset, because it is only used ``once'' for estimating the importance weights through solving the linear system Eq.~\eqnref{label_shift_equation_matrix}.

Given importance weights $\wwmi{m}\in\real^L$ $(1\leq m \leq M)$ for each source distribution, we write $\ww=(\wwmi{1},\ldots,\wwmi{M})\in\R^{L\times M}$ and introduce the following shorthand for the average weighted source risk across $M$ environments,
\begin{align*}
      \riskbar{h }{\ww } \defn \frac{1}{M}\sum_{m=1}^M \risksrcw{h }{\wwm }.
\end{align*}
Before we introduce our main theorems, we define a key quantity called \textit{deviation from conditional invariance} of a feature mapping, which measures how conditionally invariant it is across source and target distributions.
\begin{definition}
    \name{(Deviation from Conditional Invariance)} Recall the $\Gset$-divergence given in Eq.~\eqnref{g_div}. For any feature mapping $\phi:\real^p\rightarrow\real^q$, we define its deviation from conditional invariance as
\begin{align}\label{eqn:Psi_phi_defn}
\Pssi{\phi} \defn \max_{\substack{m=1,\ldots,M, \\ y=1,\ldots,L}} \Gdiv{\Pcondtar{\phi(X)}{Y=y}}{\Pcondsrc{\phi(X)}{Y=y} }.
\end{align}
\end{definition}
The deviation from conditional invariance is defined via the maximal $\Gset$-divergence between any pair of conditionals $\phi(X) \mid Y = y$ in the source and target environments. When the feature representation is exactly conditionally invariant, this quantity attains its minimum value of zero. Our first theorem bounds the difference between target population risk and the average source population risk of a classifier, where $\wm$, $m=1,\ldots,M$, are the true importance weights introduced in Eq.~\eqnref{true_w}.
\begin{subtheorem}
  \label{thm:IW-CIP_gen_bound}
  For any classifier $h=g\circ\phi$ where $g\in\Gset, \phi\in\Phi$ and any estimated importance weights $\wh=(\whmi{1}, \ldots, \whmi{M})\in\real^{L\times M}$, the following target risk bound holds:
  \begin{align*}
    \risktar{h} \leq  \riskbar{h}{\wh} + \frac{1}{M}\sum_{m=1}^{M}\vecnorm{\wm - \whm}{\infty} + \Pssi{\phi}.
  \end{align*}
\end{subtheorem}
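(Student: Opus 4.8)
The plan is to introduce the true-importance-weighted average source risk $\riskbar{h}{\w}$, with $\w=(\wmi{1},\ldots,\wmi{M})$ the true weights of Eq.~\eqnref{true_w}, as an intermediate quantity, and to bound the two resulting gaps separately: first $\risktar{h}-\riskbar{h}{\w}$, controlled by the deviation from conditional invariance $\Pssi{\phi}$, and then $\riskbar{h}{\w}-\riskbar{h}{\wh}$, controlled by the weight-estimation error $\frac1M\sum_{m=1}^M\vecnorm{\wm-\whm}{\infty}$.

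For the first gap I would work one source domain at a time. Conditioning on the label and using $\coordw{\wm}{y}=\prob{\Ytar=y}/\prob{\Ysrc=y}$, the $m$-th true-weighted source risk rewrites as $\risksrcw{h}{\wm}=\sum_{y=1}^L\prob{\Ytar=y}\,\probc{h(\Xsrc)\neq y}{\Ysrc=y}$, the source label marginal cancelling against the denominator of the weight. The target risk admits the parallel expansion $\risktar{h}=\sum_{y=1}^L\prob{\Ytar=y}\,\probc{h(\Xtar)\neq y}{\Ytar=y}$. Subtracting, each conditional misclassification probability equals one minus the expectation of $\ones{g(Z)=y}$ under the corresponding conditional law of $\phi(X)\mid Y=y$ (namely $\Pcondsrc{\phi(X)}{Y=y}$ or $\Pcondtar{\phi(X)}{Y=y}$, since $h=g\circ\phi$), so the per-label difference collapses to a difference of expectations of $\ones{g(\cdot)=y}$ under these two conditional laws. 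Because $g\in\Gset$ and $\ones{g(\cdot)=y}$ is precisely an admissible witness function in the $\Gset$-divergence of Eq.~\eqnref{g_div}, each such difference is at most $\Gdiv{\Pcondtar{\phi(X)}{Y=y}}{\Pcondsrc{\phi(X)}{Y=y}}\leq\Pssi{\phi}$. Since the $\prob{\Ytar=y}$ sum to one, this gives $\risktar{h}\leq\risksrcw{h}{\wm}+\Pssi{\phi}$ for every $m$, and averaging over $m$ yields $\risktar{h}\leq\riskbar{h}{\w}+\Pssi{\phi}$.

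For the second gap I would use that the $0$-$1$ loss indicator is bounded by one, so that $\abss{\risksrcw{h}{\wm}-\risksrcw{h}{\whm}}\leq\EE{\abss{\coordw{\wm}{\Ysrc}-\coordw{\whm}{\Ysrc}}}=\sum_{y=1}^L\abss{\coordw{\wm}{y}-\coordw{\whm}{y}}\prob{\Ysrc=y}\leq\vecnorm{\wm-\whm}{\infty}$, the last step using that the $\prob{\Ysrc=y}$ sum to one. Averaging over $m$ and combining with the first gap gives the claimed inequality. The argument is short; the only delicate point is the translation in the first step---recognizing that conditioning the $0$-$1$ loss on $Y=y$ turns the misclassification event into $1-\ones{g(\phi(X))=y}$ and that this indicator is exactly an admissible test function for the $\Gset$-divergence, so that conditional invariance of $\phi$ is what governs the source-to-target transfer. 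I expect no substantive obstacle beyond careful bookkeeping of the conditioning and of the label marginals.
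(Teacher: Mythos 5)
Your proposal is correct and takes essentially the same route as the paper's own proof: the same decomposition through the true-weighted average source risk $\riskbar{h}{\w}$, the same per-label reduction (via $\wmy=\prob{\Ytar=y}/\prob{\Ysrc=y}$ and total expectation) to differences of conditional expectations of $\ones{g(\cdot)=y}$ bounded by the $\Gset$-divergence and hence by $\Pssi{\phi}$, and the same bound $\abss{\risksrcw{h}{\wm}-\risksrcw{h}{\whm}}\leq\vecnorm{\wm-\whm}{\infty}$ for the reweighting gap. No gap to report.
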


The proof of this theorem is given in \appref{IW-CIP_gen_hound}. {We observe that IW-CIP is designed to minimize the upper bounds in~\thmref{IW-CIP_gen_bound}}. Specifically, it is expected that for the case of IW-CIP, the estimated importance weights are close to the true importance weights and $\phi$ is close to the conditionally invariant feature mapping. Then the average weighted source risk can closely approximate the target risk, and \thmref{IW-CIP_gen_bound} allows us to establish an upper bound for the target risk of IW-CIP via the average weighted source risk.


To provide more specific risk guarantees for the finite-sample IW-CIP, we establish the following bound on the target risk of the finite-sample IW-CIP via the target risk of the optimal conditionally invariant classifier $\hstar$, as defined in~\defref{optimal_cie}. Here, $\penaltyiwcip{\phi}$ denotes the empirical IW-CIP penalty given in Eq.~\eqnref{iwcip_penalty_defn} and $\wm$'s are the true importance weights given in Eq.~\eqnref{true_w}.

\begin{subtheorem}\label{thm:IW-CIP_gen_bound2}
	Let $\wh=(\whmi{1},\ldots,\whmi{M})\in\real^{L\times M}$ be the estimated importance weights. Then, for any $\delta\in(0,1)$, with probability at least $1-\delta$, the following target risk bound holds for the finite-sample IW-CIP,\footnote{The probability is with respect to the randomness of source samples $(\Xsrc_k, \Ysrc_k)\iidsim\Psrc$ in $\dataset_3$ ($1\leq m\leq M, 1\leq k\leq\nsrc$); see Eq.~\eqnref{dataset_split}.} 
    \begin{align*}
        \risktar{\hhiwcip} &\leq \risktar{\hstar} + \penaltyiwcip{\phistar} + \frac{2}{M}\sum_{m=1}^{M}\vecnorm{\wm - \whm}{\infty}+ \Pssi{\phihiwcip} + \sampcomp,
    \end{align*}
    where\footnote{The sample complexity term $\sampcomp$ depends on $\delta, \Gset,\Phi, \wm, \nsrc$, and $\Psrc$ for $m=1,\ldots,M$.}
    \begin{align*}
        \sampcomp&= \max_{m=1,\ldots,M} \brackets{4\rad{\nsrc,\Psrc}{\Hsetm} + 2\vecnorm{\wm}{\infty}\sqrt{\frac{2\log(M/\delta)}{\nsrc}} },
    \end{align*}
    and $\Hsetm\defn\braces{f(x,y)=\wmy\ones{g(\phi(x))\neq y}: g\in\Gset,\phi\in\Phi}$.
\end{subtheorem}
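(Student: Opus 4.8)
The plan is to control $\risktar{\hhiwcip}$ by passing through the finite-sample IW-CIP objective that $\hhiwcip$ minimizes and comparing it against the competitor $\hstar$. The chain has four ingredients: \thmref{IW-CIP_gen_bound} to move between the target risk and the \emph{true}-weighted average source risk; a uniform-convergence (Rademacher) bound on $\dataset_3$ to move between population and empirical true-weighted risks; the optimality of $(\ghiwcip,\phihiwcip)$ for the empirical objective $\riskhbar{\cdot}{\wh}+\penaltyiwcip{\cdot}$; and the conditional invariance of $\hstar$ to identify $\riskbar{\hstar}{\w}$ with $\risktar{\hstar}$. The crucial bookkeeping device is to apply \thmref{IW-CIP_gen_bound} with the \emph{true} weights $\w$ (legitimate, since that theorem holds for any weight matrix); this kills its weight-error term, so the two copies of $\frac{1}{M}\sum_{m}\vecnorm{\wm-\whm}{\infty}$ in the target bound arise solely from the two unavoidable switches between true and estimated weights inside the empirical risks.

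First I would fix the high-probability event. For each $m$, the functions $f(x,y)=\coord{\wm}{y}\,\ones{g(\phi(x))\neq y}$ comprising $\Hsetm$ take values in $[0,\vecnorm{\wm}{\infty}]$, so symmetrization together with a bounded-differences inequality applied to $\dataset_3$, combined with a union bound over $m=1,\ldots,M$, yield that with probability at least $1-\delta$,
\begin{align*}
G &\defn \sup_{g\in\Gset,\phi\in\Phi}\abss{\riskbar{g\circ\phi}{\w}-\riskhbar{g\circ\phi}{\w}} \\
&\leq \max_{m=1,\ldots,M}\brackets{2\rad{\nsrc,\Psrc}{\Hsetm}+\vecnorm{\wm}{\infty}\sqrt{\frac{2\log(M/\delta)}{\nsrc}}}.
\end{align*}
Here the true weights $\wm$ are deterministic, so this event concerns only the randomness of $\dataset_3$; the estimated weights $\whm$ (built from $\dataset_1,\dataset_2$) enter later only through deterministic inequalities. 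Note that $2G=\sampcomp$.

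On this event I would chain, reading each step as an upper bound. Applying \thmref{IW-CIP_gen_bound} to $h=\hhiwcip$ with true weights $\w$ gives $\risktar{\hhiwcip}\leq\riskbar{\hhiwcip}{\w}+\Pssi{\phihiwcip}$. Uniform convergence then gives $\riskbar{\hhiwcip}{\w}\leq\riskhbar{\hhiwcip}{\w}+G$. Switching weights (replacing $\wm$ by $\whm$ changes each empirical weighted risk by at most $\vecnorm{\wm-\whm}{\infty}$ pointwise) gives $\riskhbar{\hhiwcip}{\w}\leq\riskhbar{\hhiwcip}{\wh}+\frac1M\sum_{m}\vecnorm{\wm-\whm}{\infty}$. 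Optimality of $(\ghiwcip,\phihiwcip)$, with $(\gstar,\phistar)$ feasible, gives $\riskhbar{\hhiwcip}{\wh}\leq\riskhbar{\hstar}{\wh}+\penaltyiwcip{\phistar}$ after dropping the nonnegative $\penaltyiwcip{\phihiwcip}$. A second weight switch gives $\riskhbar{\hstar}{\wh}\leq\riskhbar{\hstar}{\w}+\frac1M\sum_{m}\vecnorm{\wm-\whm}{\infty}$, and uniform convergence once more gives $\riskhbar{\hstar}{\w}\leq\riskbar{\hstar}{\w}+G$.

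It remains to identify $\riskbar{\hstar}{\w}$ with $\risktar{\hstar}$. Since the constraint in \defref{optimal_cie} forces $\Pcondsrc{\phistar(X)}{Y=y}=\Pcondtar{\phistar(X)}{Y=y}$ for all $m,y$, the conditional $\hstar(X)\mid Y$ is invariant across source and target; with $\coord{\wm}{y}=\prob{\Ytar=y}/\prob{\Ysrc=y}$ this yields $\risksrcw{\hstar}{\wm}=\sum_{y}\prob{\Ytar=y}\,\prob{\hstar(\Xtar)\neq y\mid\Ytar=y}=\risktar{\hstar}$ for every $m$, hence $\riskbar{\hstar}{\w}=\risktar{\hstar}$. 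Collecting the chain gives $\risktar{\hhiwcip}\leq\risktar{\hstar}+\penaltyiwcip{\phistar}+\frac2M\sum_{m}\vecnorm{\wm-\whm}{\infty}+\Pssi{\phihiwcip}+2G$, and substituting $2G=\sampcomp$ proves the claim. I expect the main obstacle to be precisely this weight bookkeeping---ensuring that exactly two copies of $\frac1M\sum_m\vecnorm{\wm-\whm}{\infty}$ survive, which hinges on feeding true weights into \thmref{IW-CIP_gen_bound} and routing the entire generalization argument through true-weighted risks so that the relevant complexity is exactly $\rad{\nsrc,\Psrc}{\Hsetm}$; the remaining effort is the routine constant- and $\log(M/\delta)$-chasing in the uniform-convergence step.
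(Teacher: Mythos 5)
Your proposal is correct and follows essentially the same argument as the paper: the chain of inequalities you build (target-to-true-weighted risk via \thmref{IW-CIP_gen_bound} with $\Pssi{\phihiwcip}$, uniform convergence over $\Hsetm$ with a union bound over $m$, two pointwise weight switches each costing $\frac1M\sum_m\vecnorm{\wm-\whm}{\infty}$, empirical optimality against $\hstar$ yielding $\penaltyiwcip{\phistar}$, and $\riskbar{\hstar}{\w}=\risktar{\hstar}$ from conditional invariance) is exactly the paper's seven-term decomposition $A_1,B_1,C_1,D,C_2,B_2,A_2$, merely written as a chained bound rather than a telescoping sum. Your constant bookkeeping also matches, since $2G=\sampcomp$.
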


{The proof proceeds by decomposing the target risk of IW-CIP into multiple components and bounding each term individually, which is given in~\appref{IW-CIP_gen_hound2}.} According to~\thmref{IW-CIP_gen_bound2}, the target risk of $\hhiwcip$ is bounded by that of the optimal conditionally invariant target classifier $\hstar$ with additional error terms: the empirical IW-CIP penalty of $\phistar$, the importance weight estimation error, the deviation from conditional invariance of $\phihiwcip$, and the sample complexity term $\sampcomp$. The empirical IW-CIP penalty term $\penaltyiwcip{\phistar}$ measures the conditional invariance of $\phistar(X)$ across empirical source environments. Because $\phistar$ is a conditionally invariant feature mapping, this term is expected to decrease in large sample scenarios. Similarly, the sample complexity term $\sampcomp$, which is based on Rademacher complexity, also diminishes as the sample size $\nsrc$ increases in the source environments. In this case, the theorem shows that accurate estimation of importance weights and minimal deviation from conditional invariance of feature representations of IW-CIP can guarantee IW-CIP to achieve a target risk similar to that of $\hstar$.

\thmref{IW-CIP_gen_bound2} provides the target risk bound of the finite sample IW-CIP in the most generic settings. In the following subsections, we demonstrate how the remaining terms can be controlled with additional assumptions. Specifically, in~\secref{control1}, we establish refined bounds for the empirical IW-CIP penalty by constraining the choice of IW-CIP penalty (see~\propref{iwcip_phistar}). Then we bound the weight estimation error using the deviation from conditional invariance of $\phihcip$ (see~\propref{label_shift_estimation_error}). In~\secref{control2}, under the general anticausal model~\ref{def:model1}, we bound the deviation from conditional invariance of feature mappings via a form of conditional invariant penalty (see~\propref{linear_sem_example}), and bound the target risk of $\hstar$ relative to the oracle classifier $\horacle$ (see~\propref{linear_sem_example2}).

\subsubsection{Upper bounds on the empirical IW-CIP penalty and weight estimation error}\label{sec:control1}

To refine the target risk upper bounds established in~\thmref{IW-CIP_gen_bound2}, we present two propositions. The first proposition shows that the empirical IW-CIP penalty diminishes to zero as the source sample size grows to infinity. The second proposition shows that the weight estimation error can be controlled through the deviation from conditional invariance $\Psi$---{this result is intuitive because Eq.~\eqnref{label_shift_equation_matrix} for weight estimation relies on the conditional invariance of CICs.}
Recalling that $\phistar$ is the feature mapping for the optimal conditionally invariant target classifier given in~\defref{optimal_cie}, we present the following proposition regarding the empirical IW-CIP penalty term $\penaltyiwcip{\phistar}$ in Eq.~\eqnref{iwcip_penalty_defn}.

\begin{proposition}\label{prop:iwcip_phistar}
Assume that the $\Gset$-divergence in Eq.~\eqnref{g_div} is used as the distributional distance in the empirical IW-CIP penalty. Then for any $\delta\in (0,1)$, with probability at least $1-\delta$, the following bound holds, \footnote{The probability is with respect to the randomness of source samples $(\Xsrc_k, \Ysrc_k)\iidsim\Psrc$ in $\dataset_3$ ($1\leq m\leq M, 1\leq k\leq\nsrc$); see Eq.~\eqnref{dataset_split}.}
\begin{align*}
    \penaltyiwcip{\phistar}\leq 2\lamiwcip \parenth{2\max_{\substack{m\in\{1,\ldots,M\}\\y, y'\in\{1,\ldots,L\}}}\rad{\nsrc, \Pcondsrc{\phistar(X)}{Y=y}}{\Gsetyp} + \max_{m=1,2,\ldots,M}\sqrt{\frac{\log \parenth{2LM/\delta}}{2\nsrc}}},
\end{align*}
where $\Gsety \defn \braces{f(z)=\ones{g(z)=y}, g\in\Gset}$.
\end{proposition}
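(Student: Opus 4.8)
The plan is to collapse the double sum defining $\penaltyiwcip{\phistar}$ into a single empirical-versus-population deviation, using the fact that $\phistar$ is \emph{exactly} conditionally invariant at the population level, and then to control that deviation by a symmetrization-plus-concentration argument. This splits the work into a deterministic reduction (which produces the prefactor $2\lamiwcip$) and a standard uniform-convergence bound (which produces the Rademacher and $\sqrt{\log/\nsrc}$ terms).

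\textbf{Step 1 (reduction via triangle inequality and conditional invariance).} Since the $\Gset$-divergence in Eq.~\eqnref{g_div} is an integral probability metric it obeys the triangle inequality, so for each $y$ and each ordered pair $m\neq m'$ I would write
\begin{align*}
\Gdiv{\Phcondsrc{\phistar(X)}{Y=y}}{\Phcondsrcmp{\phistar(X)}{Y=y}}
&\leq \Gdiv{\Phcondsrc{\phistar(X)}{Y=y}}{\Pcondsrc{\phistar(X)}{Y=y}} \\
&\quad + \Gdiv{\Pcondsrc{\phistar(X)}{Y=y}}{\Pcondsrcmp{\phistar(X)}{Y=y}} \\
&\quad + \Gdiv{\Pcondsrcmp{\phistar(X)}{Y=y}}{\Phcondsrcmp{\phistar(X)}{Y=y}}.
\end{align*}
By \defref{optimal_cie}, $\phistar$ satisfies $\Pcondsrc{\phistar(X)}{Y=y}=\Pcondtar{\phistar(X)}{Y=y}$ for every $m,y$, hence all source conditionals of $\phistar(X)$ coincide and the middle term vanishes. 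Writing $\Delta_{m,y}\defn\Gdiv{\Phcondsrc{\phistar(X)}{Y=y}}{\Pcondsrc{\phistar(X)}{Y=y}}$ and summing over $y$ and over ordered pairs $m\neq m'$, each index appears $M-1$ times, so
\begin{align*}
\penaltyiwcip{\phistar}\leq \frac{\lamiwcip}{LM^2}\cdot 2(M-1)\sum_{y=1}^L\sum_{m=1}^M\Delta_{m,y}\leq 2\lamiwcip\max_{m,y}\Delta_{m,y},
\end{align*}
using $\sum_{y,m}\Delta_{m,y}\leq LM\max_{m,y}\Delta_{m,y}$ and $(M-1)/M\leq 1$. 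This already extracts the prefactor $2\lamiwcip$.

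\textbf{Step 2--3 (uniform deviation and union bound).} It remains to bound $\Delta_{m,y}$ uniformly. Unfolding the definition and commuting the finite maximum over $y'$ with the supremum over $g$,
\begin{align*}
\Delta_{m,y}=\max_{y'=1,\ldots,L}\ \sup_{f\in\Gsetyp}\abss{\Ep{Z\sim\Phcondsrc{\phistar(X)}{Y=y}}{f(Z)}-\Ep{Z\sim\Pcondsrc{\phistar(X)}{Y=y}}{f(Z)}},
\end{align*}
where $\Gsetyp=\{f(z)=\ones{g(z)=y'}:g\in\Gset\}$ consists of $[0,1]$-valued functions. Conditioning on the source labels, the covariates carrying label $y$ are i.i.d.\ draws from $\Pcondsrc{X}{Y=y}$, so this is an ordinary empirical-process deviation: McDiarmid's bounded-difference inequality (each sample perturbs every empirical mean, hence $\Delta_{m,y}$, by a $1/\nsrc$-order amount) concentrates $\Delta_{m,y}$ around its mean at rate $\sqrt{\log(\cdot)/(2\nsrc)}$, while symmetrization bounds the mean by $2\rad{\nsrc,\Pcondsrc{\phistar(X)}{Y=y}}{\Gsetyp}$ (the factor $2$ and the absolute value matching the Rademacher definition in the paper). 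Finally I would take a union bound over the $LM$ pairs $(m,y)$, fold the inner maximum over the $L$ labels $y'$ into the concentration, and allocate the budget $\delta$ so that the deviation term becomes $\sqrt{\log(2LM/\delta)/(2\nsrc)}$; taking the maximum over $m,y,y'$ in the Rademacher term and combining with Step 1 yields the stated inequality.

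\textbf{Main obstacle.} The delicate part is the uniform-convergence step: the $\Gset$-divergence is a supremum over the rich class $\Gset$ \emph{together with} a maximum over the $L$ label indicators, and the empirical conditionals $\Phcondsrc{\phistar(X)}{Y=y}$ are built from a \emph{random} number of label-$y$ samples, so the concentration must be carried out after conditioning on the labels and then reconciled with the clean count $\nsrc$ appearing in the statement. A further subtlety is budgeting the union bound so that only $\log(2LM/\delta)$ appears: handling the inner maximum over $y'$ by a naive union bound would instead cost $\log(L^2M/\delta)$, so one must absorb the $y'$-maximum through a maximal inequality (contributing only a $\log L$ factor that is dominated by $\log(2LM/\delta)$) rather than a full union over $(m,y,y')$.
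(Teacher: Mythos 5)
Your proposal is correct and follows essentially the same route as the paper's proof: the triangle inequality for the $\Gset$-divergence with the exact conditional invariance of $\phistar$ cancelling the population-to-population term, the same counting over ordered pairs (each domain appearing $2(M-1)$ times, with $(M-1)/M\leq 1$) producing the prefactor $2\lamiwcip$, and the standard symmetrization-plus-McDiarmid Rademacher bound with a union bound over the indices, which the paper simply imports as a cited theorem rather than re-deriving. Your two flagged ``obstacles'' are non-issues at the paper's level of rigor: the paper applies the uniform bound to each conditional distribution with the clean count $\nsrc$ (glossing the random number of label-$y$ samples exactly as you would have to), and it reaches the stated $\log(2LM/\delta)$ by a plain union bound within each domain's budget, so no maximal-inequality trick is needed --- a full union over $(m,y,y')$ would only inflate the logarithm to $\log(2L^2M/\delta)\leq 2\log(2LM/\delta)$, a benign constant-factor difference.
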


This proposition is proved in~\appref{iwcip_phistar}. 
The bound on $\penaltyiwcip{\phistar}$ is given by the sum of a Rademacher complexity term and a finite-sample error term, both of which diminish to zero as the sample size grows to infinity{---this result is expected given that $\phistar$ is conditionally invariant across the population source distributions.} While calculating the exact $\Gset$-divergence may be challenging in practice, this result offers a vanishing bound without additional assumptions on the underlying data generation model or on the class of feature mappings $\Phi$. A more practical and simpler choice of the distributional distance is the squared mean distance, which penalizes the squared difference of conditional means of $\phi(X)\mid Y$ between source distributions. In this case, a refined bound of $\penaltyiwcip{\phistar}$ can be obtained with additional assumptions about the data generation model and the class of feature mappings $\Phi$. For more details on the calculation of this bound, see~\appref{iwcip_phistar2}.

Next, we show that when the true importance weights $\wm$ in Eq.~\eqnref{true_w} are estimated using the finite-sample CIP, the weight estimation error can be upper bounded via the deviation from conditional invariance of the feature mapping  $\phihcip$ learned through CIP. Let $\whm$ denote the estimated importance weight obtained by solving the linear system in Eq.~\eqnref{label_shift_equation_matrix}, where the population quantitites are replaced by their empirical estimates and the finite-sample CIP is used. Then the following proposition provides the upper bound on the estimation error $\vecnorm{\wm-\whm}{2}$.
\begin{proposition}
    \label{prop:label_shift_estimation_error}
    Assume that the confusion matrix in the $m$-th source distribution $C_{\hhcip}^\tagm{m}$, given in Eq.~\eqnref{confusion_matrix}, is invertible with conditional number $\condi_m$. Then, for any $\delta\in (0,1)$, with probability at least $1-\delta$, the error of importance weights estimation is bounded by\footnote{The probability is with respect to the randomness of $(\Xsrc_k, \Ysrc_k)\iidsim\Psrc$ and $(\Xtar_k, \Ytar_k)\iidsim\Ptar$ in $\dataset_2$; see Eq.~\eqnref{dataset_split}.} 
    \begin{align*}
        \vecnorm{\wm-\whm}{2}&\leq 2\condi_m\parenth{\sqrt{L}\Pssi{\phihcip} + \sqrt{\frac{L\log (4L/\delta)}{2\nttar }} + \sqrt{\frac{3\log (4L/\delta) }{\ntsrc}}\vecnorm{\wm}{2}},
    \end{align*}
	as long as $\ntsrc\geq 12\condi_m^2\log (4L/\delta)$.
\end{proposition}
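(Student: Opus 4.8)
The plan is to view importance-weight estimation as the solution of a perturbed linear system and to split the total error $\vecnorm{\wm-\whm}{2}$ into a population bias caused by the deviation from conditional invariance of $\phihcip$ and a finite-sample sampling error in the two empirical inputs. Throughout I condition on the first split $\dataset_1$, so that $\hhcip=\ghcip\circ\phihcip$ is a \emph{fixed} classifier; the source samples in $\dataset_2$ and the target covariates are then fresh i.i.d. draws and no uniform convergence over $\Gset,\Phi$ is needed. The first step is to show that the population label-shift system is not exact but carries a bias controlled by $\Pssi{\phihcip}$. Let $\mu_{\hhcip}$ and $C_{\hhcip}^{\tagm{m}}$ be the population target prediction vector and source confusion matrix of the fixed $\hhcip$. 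Expanding $C_{\hhcip}^{\tagm{m}}\wm$ by the law of total probability and substituting $\coord{\wm}{j}=\prob{\Ytar=j}/\prob{\Ysrc=j}$ gives $\bigl(C_{\hhcip}^{\tagm{m}}\wm\bigr)_{[i]}=\sum_{j}\prob{\hhcip(\Xsrc)=i\mid\Ysrc=j}\,\prob{\Ytar=j}$, whereas $\coord{\mu_{\hhcip}}{i}=\sum_{j}\prob{\hhcip(\Xtar)=i\mid\Ytar=j}\,\prob{\Ytar=j}$. Hence each coordinate of the residual $e\defn C_{\hhcip}^{\tagm{m}}\wm-\mu_{\hhcip}$ is a $\prob{\Ytar=\cdot}$-weighted average of the per-class gaps $\prob{\hhcip(\Xsrc)=i\mid\Ysrc=j}-\prob{\hhcip(\Xtar)=i\mid\Ytar=j}$, and each such gap is bounded by $\Gdiv{\Pcondtar{\phihcip(X)}{Y=j}}{\Pcondsrc{\phihcip(X)}{Y=j}}\le\Pssi{\phihcip}$ by the definition of the $\Gset$-divergence applied to the fixed $\ghcip\in\Gset$. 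Summing over $j$ with $\sum_j\prob{\Ytar=j}=1$ yields $\vecnorm{e}{\infty}\le\Pssi{\phihcip}$, and therefore $\vecnorm{e}{2}\le\sqrt{L}\,\Pssi{\phihcip}$, the first term of the bound.

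Second, I would bound the sampling error of the two empirical inputs $\muh_{\hhcip}$ and $\widehat{C}_{\hhcip}^{\tagm{m}}$ by elementary concentration. Each coordinate of $\muh_{\hhcip}$ is an average of $\nttar$ i.i.d. Bernoulli indicators, so Hoeffding with a union bound over the $L$ classes gives $\vecnorm{\muh_{\hhcip}-\mu_{\hhcip}}{2}\le\sqrt{L\log(4L/\delta)/(2\nttar)}$ with probability at least $1-\delta/2$. The matrix $\widehat{C}_{\hhcip}^{\tagm{m}}$ collects the $L^2$ cell frequencies of a single empirical multinomial on $\dataset_2$, so a bounded-differences argument applied to the Frobenius norm $\fronorm{\widehat{C}_{\hhcip}^{\tagm{m}}-C_{\hhcip}^{\tagm{m}}}$, together with the mean bound $\EE{\fronorm{\widehat{C}_{\hhcip}^{\tagm{m}}-C_{\hhcip}^{\tagm{m}}}}\le 1/\sqrt{\ntsrc}$, yields $\vecnorm{\widehat{C}_{\hhcip}^{\tagm{m}}-C_{\hhcip}^{\tagm{m}}}{2}\le\fronorm{\widehat{C}_{\hhcip}^{\tagm{m}}-C_{\hhcip}^{\tagm{m}}}\le\sqrt{3\log(4L/\delta)/\ntsrc}$ with probability at least $1-\delta/2$.

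Finally, I would combine these through a deterministic perturbation bound. Writing $\Delta_C=\widehat{C}_{\hhcip}^{\tagm{m}}-C_{\hhcip}^{\tagm{m}}$ and using $C_{\hhcip}^{\tagm{m}}\wm=\mu_{\hhcip}+e$ together with the defining equation $\widehat{C}_{\hhcip}^{\tagm{m}}\whm=\muh_{\hhcip}$, a short algebraic manipulation gives
\[
\whm-\wm=\bigl(\widehat{C}_{\hhcip}^{\tagm{m}}\bigr)^{-1}\Bigl[(\muh_{\hhcip}-\mu_{\hhcip})-e-\Delta_C\,\wm\Bigr],
\]
so that $\vecnorm{\whm-\wm}{2}\le\vecnorm{\bigl(\widehat{C}_{\hhcip}^{\tagm{m}}\bigr)^{-1}}{2}\bigl(\vecnorm{\muh_{\hhcip}-\mu_{\hhcip}}{2}+\vecnorm{e}{2}+\vecnorm{\Delta_C}{2}\,\vecnorm{\wm}{2}\bigr)$. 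The sample-size assumption $\ntsrc\ge 12\condi_m^2\log(4L/\delta)$ forces $\vecnorm{\Delta_C}{2}\le 1/(2\condi_m)$, and a Neumann-series estimate then controls the inverse of the perturbed matrix through the condition number, giving $\vecnorm{\bigl(\widehat{C}_{\hhcip}^{\tagm{m}}\bigr)^{-1}}{2}\le 2\condi_m$. Substituting the three bounds from the previous steps and taking a union over the two failure events produces exactly the claimed inequality, with total failure probability at most $\delta$.

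I expect the conceptual crux to be the bias analysis: recognizing that the population label-shift linear system is \emph{not} exact but carries a residual controlled precisely by the deviation from conditional invariance $\Pssi{\phihcip}$ through the $\Gset$-divergence is what ties weight-estimation accuracy to conditional invariance, and it is the step that must be set up carefully so that the per-class conditional gaps for the specific fixed classifier $\hhcip$ are subsumed by $\Pssi{\phihcip}$. The remaining delicate point is the control of $\vecnorm{(\widehat{C}_{\hhcip}^{\tagm{m}})^{-1}}{2}$, which is exactly where the sample-size condition enters to keep the empirical confusion matrix invertible and close to its population counterpart; the concentration estimates and the linear-algebra identity are otherwise routine.
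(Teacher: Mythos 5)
Your proposal is correct and follows essentially the same route as the paper's proof: the same three-way decomposition of the residual into the conditional-invariance bias (bounded coordinatewise by $\Pssi{\phihcip}$, giving the $\sqrt{L}\Pssi{\phihcip}$ term), a Hoeffding-plus-union-bound estimate for the empirical target prediction vector, and a concentration bound for the empirical confusion matrix, combined under the same thresholds and the same use of $\ntsrc\geq 12\condi_m^2\log(4L/\delta)$ to produce the factor $2\condi_m$. Your two substitutions are interchangeable with the paper's choices: you control the confusion-matrix deviation by bounded differences on the Frobenius norm where the paper applies matrix Bernstein, and you invert the empirical matrix $\widehat{C}_{\hhcip}^{\tagm{m}}$ via a Neumann-series estimate where the paper keeps the population inverse and absorbs the $\vecnorm{\whm}{2}$ term through the self-bounding inequality $\vecnorm{\whm}{2}\leq\vecnorm{\wm-\whm}{2}+\vecnorm{\wm}{2}$ --- both yield the identical final bound.
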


The proof of this proposition is given in \appref{label_shift_estimation_error}. \propref{label_shift_estimation_error} reveals that the finite-sample estimation error of importance weights is determined by the condition number of the confusion matrix of $\hhcip$, the deviation from conditional invariance of  $\phihcip$, and sample error terms decaying roughly on the order of $1/\sqrt{\ntsrc}$ or $1/\sqrt{\nttar}$. The condition number $\condi_m$ reflects the performance of the CIP classifier---if $\hhcip$ achieves perfect classification accuracy on the $m$-th source distribution, the condition number $\condi_m$ takes the value of $1/\min_{y=1,2,\ldots,L}\prob{\Ysrc=y}$; however, if $\hhcip$ performs poorly on the $m$-th source distribution, the condition number $\condi_m$ can be large, which can lead to inaccurate estimation of importance weights. The bound also introduces additional deviation from invariance term $\Pssi{\phihcip}$, similar to $\Pssi{\phihiwcip}$ that appeared in~\thmref{IW-CIP_gen_bound2}.

{\propref{label_shift_estimation_error} also implies why iterating the two-step procedure of IW-CIP may not be beneficial. The estimation error of importance weights depends on the condition number $\kappa_m$ of the matrix $C^{(m)}_{\widehat{h}_{\text{CIP}}}$ and the deviation from conditional invariance $\Psi_{\mathcal{G},\widehat{\phi}_{\text{CIP}}}$. If we obtain IW-CIP in the first iteration and use it to update the importance weights, then it is uncertain whether we would see further gains in both $\kappa_m$ and $\Psi_{\mathcal{G},\widehat{\phi}_{\text{CIP}}}$. For $\kappa_m$, the accuracy of IW-CIP would be lower than that of CIP on the “source” domain, which can lead to a deterioration in $\kappa_m$; for $\Psi_{\mathcal{G},\widehat{\phi}_{\text{CIP}}}$, it is not feasible to assume that IW-CIP improves conditional invariance compared to CIP, given that both are subject to the same penalty during training.}

\subsubsection{Deviation from conditional invariance and target risk bound on the optimal conditionally invariant classifier}\label{sec:control2}

With~\propref{iwcip_phistar} and~\ref{prop:label_shift_estimation_error} now established, it remains to control the deviation from conditional invariance for both $\phihcip$ and $\phihiwcip$ and the target risk of the optimal conditionally invariant target classifier $\hstar$ in~\thmref{IW-CIP_gen_bound2}. In this subsection, we quantify both of these terms under additional assumptions about the data generation model. By quantifying these terms, we gain a comprehensive understanding of the upper bounds presented in Theorem~\ref{thm:IW-CIP_gen_bound2}.

To facilitate our analysis, we focus on the general anticausal model as defined in \defref{model1} and introduce the following assumption regarding the type of perturbations on the mechanism functions.

\begin{assumption}\label{assump:linear_cic}
Suppose that source and target data are generated under the general anticausal model in \defref{model1}. Assume that $\fsrc, \ftar$ are perturbed linearly as follows. For each $y\in\braces{1,\ldots, L}$, there exist an orthogonal matrix $P_y \in \real^{p\times d_y}$, $0\leq d_y\leq p$, and vectors $\vm{m}{y}, \vtar{y}\in\real^{d_y}$, $m=1,\ldots,M$ with $\vm{1}{y}=0$, such that
\begin{align}\label{eqn:model1_interventions}
\begin{split}
    \fsrc(y)&=\fsrcone(y) + P_y \vm{m}{y}, \text{ for all } m\in\braces{1,2,\ldots,M}, \text{ and }\\
    \ftar(y)&=\fsrcone(y) + P_y \vtar{y}.
\end{split}
\end{align}
In addition, the noise terms follow a normal distribution $\noisesrc,\noisetar\iidsim\Peps=\Normal(0, \Sigma)$.
\end{assumption}

Note that the perturbation matrix $P_y$ is dependent only on the label $y$, and remains fixed across source and target data generation processes, whereas the vectors $\vm{m}{y}, \vtar{y}\in\real^{d_y}$ are allowed to vary with both the source index $m$ and the labels $y$. The assumption states that, conditional on the labels, the perturbation in each environment lies within the low-dimensional space spanned by the columns of $P_y$. Consequently, for a classifier to generalize to the target data, it is important that the classifier only utilizes the covariates that remain invariant, i.e., those which are orthogonal to the columns of $P_y$.

Before we state our result, we introduce several population terms related to a feature mapping $\phi$. Let $\penaltycipm{m}{y}$ denote the difference of expected mean of $\phi(X)$ conditional on $Y=y$ between the first and the $m$-th source distributions, and let $\varcip$ denote the conditional covariance matrix of $\phi$ under the $m$-th source distribution, i.e.,
\begin{align}
    \begin{split}\label{eqn:varcip_defn}
        \penaltycipm{m}{y} &\defn \EEst{\phi(\Xsrc)}{\Ysrc=y}-\EEst{\phi(\Xsrcone)}{\Ysrcone=y}, \text{ and }\\
        \varcip &\defn \VVst{\phi(\Xsrc)}{\Ysrc=y}. 
    \end{split}
\end{align}
Assuming that $\varcip$ is invertible, we further define 
\begin{align}
    \begin{split}\label{eqn:pihat_defn}
        \Pihatphi{y} &\defn \frac{1}{M-1}\sum_{m=2}^M {\penaltycipm{m}{y}}^{\top}\varcip^{-1}\penaltycipm{m}{y}.
    \end{split}
\end{align}
$\Pihatphi{y}$ evaluates the differences of conditional means in the direction of the eigenvector of the covariance matrix. With these notions in hand, we can now establish the following deterministic bound for $\Pssi{\phi}$.
\begin{proposition}\label{prop:linear_sem_example}
    Suppose that source and target data are generated under~\assumpref{linear_cic}. Let $\Phi=\{\phi(x)=Ax+b, A\in\real^{q\times p}, b\in\real^{q}\}$ be the linear class of feature mappings from $\real^p$ to $\real^q$ ($q\leq p$), and let $\Gset$ be any hypothesis class mapping $\real^q$ to $\Yset$.
    If for each $y\in\braces{1,2,\ldots,L}$, there exists $\kk_y>0$ such that for all $ m\in\braces{1,2,\cdots, M}$,
    \begin{align}\label{eqn:linear_sem_prop_finite_assump}
    \parenth{\vtar{y}-\vm{m}{y}}\parenth{\vtar{y}-\vm{m}{y}}^{\top}\preceq\frac {\kk_y} {M-1}\sum_{m=2}^M \vm{m}{y}(\vm{m}{y})^{\top},
    \end{align}
    then for any $\phi\in\Phi$ such that $A\Sigma A^\top$ is non-singular, its deviation from conditional invariance~\eqnref{Psi_phi_defn} satisfies
    \begin{align*}
        \Pssi{\phi}^2&\leq2\max_{y=1,2,\ldots,L}\bigg\{\kk_y\Pihatphi{y}\bigg\}.
    \end{align*}
\end{proposition}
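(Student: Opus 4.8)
The plan is to exploit the fact that, under \assumpref{linear_cic} and for a linear feature map $\phi(x)=Ax+b$, every conditional law $\Pcondsrc{\phi(X)}{Y=y}$ and $\Pcondtar{\phi(X)}{Y=y}$ is Gaussian with a \emph{common} covariance $A\Sigma A\tp$, differing only in its mean. Indeed, since $\Xsrc\mid\{\Ysrc=y\}\sim\Normal(\fsrc(y),\Sigma)$, pushing forward through $\phi$ gives $\phi(\Xsrc)\mid\{Y=y\}\sim\Normal(A\fsrc(y)+b,\,A\Sigma A\tp)$, and analogously for the target. Substituting $\fsrc(y)=\fsrcone(y)+P_y\vm{m}{y}$ and $\ftar(y)=\fsrcone(y)+P_y\vtar{y}$, the common intercept $A\fsrcone(y)+b$ cancels, so the mean gap between the target and the $m$-th source conditional is exactly $AP_y(\vtar{y}-\vm{m}{y})$. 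This reduces $\Pssi{\phi}$ to a question about Gaussians that share a covariance and differ by a known mean shift.

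With this structure I would bound $\Pssi{\phi}$ through two distributional reductions. First, for any $g\in\Gset$ and any label the level set $\{z:g(z)=y\}$ is measurable, so each summand in the $\Gset$-divergence is at most the total variation distance; hence $\Gdiv{\Pcondtar{\phi(X)}{Y=y}}{\Pcondsrc{\phi(X)}{Y=y}}\le\tvdist{\Pcondtar{\phi(X)}{Y=y}}{\Pcondsrc{\phi(X)}{Y=y}}$, and this step uses no property of $\Gset$, consistent with the generality of the statement. Second, the total variation distance between two Gaussians sharing the covariance $A\Sigma A\tp$ (nonsingular by hypothesis, so the densities are well defined) depends only on the Mahalanobis length $d_{m,y}$ of their mean gap, equalling $\erf{d_{m,y}/(2\sqrt2)}$ with $d_{m,y}^2=(\vtar{y}-\vm{m}{y})\tp P_y\tp A\tp(A\Sigma A\tp)^{-1}AP_y(\vtar{y}-\vm{m}{y})$; bounding the error function by its linear majorant then gives a squared total variation of at most $d_{m,y}^2/(2\pi)$.

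The crux is relating $d_{m,y}^2$ to $\Pihatphi{y}$. I would write $d_{m,y}^2=\trace\!\big((A\Sigma A\tp)^{-1}AP_y(\vtar{y}-\vm{m}{y})(\vtar{y}-\vm{m}{y})\tp P_y\tp A\tp\big)$ and invoke the Loewner-order hypothesis~\eqnref{linear_sem_prop_finite_assump}. Conjugating that inequality by $AP_y$ preserves the order, and since $(A\Sigma A\tp)^{-1}\succeq0$, trace monotonicity yields
\begin{align*}
d_{m,y}^2\le\frac{\kk_y}{M-1}\sum_{m'=2}^M\trace\!\big((A\Sigma A\tp)^{-1}AP_y\vm{m'}{y}(\vm{m'}{y})\tp P_y\tp A\tp\big).
\end{align*}
It then remains to identify the summands with the quantities defining $\Pihatphi{y}$: under the model $\penaltycipm{m'}{y}=\EEst{\phi(\Xsrcm{m'})}{Y=y}-\EEst{\phi(\Xsrcone)}{Y=y}=AP_y\vm{m'}{y}$ and the conditional covariance equals $A\Sigma A\tp$, so each trace term is $\penaltycipm{m'}{y}\tp(A\Sigma A\tp)^{-1}\penaltycipm{m'}{y}$ and the average is precisely $\kk_y\Pihatphi{y}$. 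Because this bound on $d_{m,y}^2$ is independent of $m$, taking the maximum over $m$ and $y$ and combining with the two earlier reductions gives $\Pssi{\phi}^2\le\frac1{2\pi}\max_y\kk_y\Pihatphi{y}\le2\max_y\kk_y\Pihatphi{y}$.

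The main obstacle is the third step: one must set up the trace/PSD manipulation carefully, confirming that conjugation by $AP_y$ together with weighting by the PSD matrix $(A\Sigma A\tp)^{-1}$ preserves the Loewner inequality, and then recognize the resulting quadratic forms as exactly the terms $\penaltycipm{m'}{y}\tp(A\Sigma A\tp)^{-1}\penaltycipm{m'}{y}$ summed in $\Pihatphi{y}$. The two distributional reductions are routine. I expect the stated constant $2$ to be deliberately loose, since the Gaussian total variation computation in fact affords the much smaller factor $1/(2\pi)$; stating $2$ simply yields a clean bound.
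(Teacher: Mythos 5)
Your proof is correct, and its skeleton coincides with the paper's: both reduce each conditional pair to Gaussians with common covariance $A\Sigma A\tp$ and mean gap $AP_y\parenth{\vtar{y}-\vm{m}{y}}$, bound the divergence by the squared Mahalanobis length of that gap, and then invoke the Loewner hypothesis~\eqnref{linear_sem_prop_finite_assump} through exactly the trace manipulation the paper isolates --- its Eq.~\eqnref{prop_assumption_result} with $S=P_y\tp A\tp(A\Sigma A\tp)^{-1}AP_y$ is your conjugation-by-$AP_y$ plus trace-monotonicity step carried out in a different order --- with the same identifications $\penaltycipm{m}{y}=AP_y\vm{m}{y}$ and $\varcip=A\Sigma A\tp$ (both using $\vm{1}{y}=0$). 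Where you genuinely diverge is the middle reduction. The paper bounds the $\Gset$-divergence by twice the total variation of the pushforwards $g(\phi(X))$, then applies Pinsker's inequality and the data processing inequality for KL, landing on the exact equal-covariance Gaussian KL; tracked carefully this yields the constant $1$ in front of $\kk_y\Pihatphi{y}$ (the paper's displayed ``$=2\kk_y\Pihatphi{y}$'' in its step $(v)$ is loose by a factor of two, which the stated constant $2$ absorbs). You instead bound the $\Gset$-divergence by the total variation between the feature-level conditionals directly --- valid, since each summand is a difference of probabilities of a single measurable event, with no appeal to the structure of $\Gset$ --- and then use the closed form $\erf{d/(2\sqrt{2})}$ for equal-covariance Gaussians together with $\erf{x}\le 2x/\sqrt{\pi}$, giving the sharper constant $1/(2\pi)$. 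Your route buys a better constant and dispenses with Pinsker and data processing entirely, but it leans on the exact Gaussian TV formula, which requires the shared covariance; the paper's KL route is looser yet more portable, surviving unequal conditional covariances or other noise families where the KL remains computable, with data processing absorbing the pushforward through $g$ for free. Both arguments are complete, and your observation that the stated constant $2$ is deliberately slack is accurate.
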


The proof of this proposition is given in \appref{linear_sem_example}. {The proof proceeds by connecting the $\Gset$-divergence with the total variation distance and applying Pinsker's inequality and data processing inequality}. \propref{linear_sem_example} shows that under the general anticausal model with a specific form of linear perturbations, the deviation from conditional invariance $\Pssi{\phi}$ is governed by $\kk_y$ and $\Pihatphi{y}$. The term $\kk_y$ captures perturbations in the underlying target data generation model, which are beyond our control. Therefore, to obtain good conditionally invariant representations, it is important to make the term $\Pihatphi{y}$ small. Because $\Pihatphi{y}$ measures the discrepancy in conditional means across source distributions, under~\assumpref{linear_cic}, the squared mean distance can be utilized as a penalty in both CIP and IW-CIP.

Concretely, in~\propref{linear_sem_example}, we consider a setting where $q=1$, i.e., $A$ is a row vector, and assume that the covariance matrix of the noise terms is $\Sigma=\sigma^2\ident_p$ for some $\sigma^2>0$, in the SCM under~\assumpref{linear_cic}. Then each term in $\Pihatphi{y}$ corresponds to $$\vecnorm{\penaltycipm{m}{y}}{2}^2/(\sigma^2\cdot \vecnorm{A}{2}^2)=\left(\penaltycipm{m}{y}\right)^2/(\sigma^2\cdot \vecnorm{A}{2}^2).$$ If we assume $\vecnorm{A}{2}=1$, a reasonable assumption given that the scale of classifier $g\in\Gset$ can be adjusted accordingly, then $\Pihatphi{y}$ is the squared difference in conditional means between source environments, which is precisely what CIP  penalizes when it uses mean distance. In cases where the CIP algorithm uses a different distributional distance, such as the MMD, $\Pihatphi{y}$ will implicitly decreases as the differences in conditional distributions between source environments reduce.



Condition~\eqnref{linear_sem_prop_finite_assump} is necessary for the validity of \propref{linear_sem_example}. In practice, verifying the correctness of this condition may be challenging. However, we can show that the condition is satisfied with high probability when the perturbations follow a Gaussian distribution and a sufficient number of source distributions are present. Specifically, in~\lemref{linear_sem_example}, we establish that approximately $M=\O{\max_y\{d_y\}}$ many source domains are required to ensure the validity of condition~\eqnref{linear_sem_prop_finite_assump} with high probability (See~\appref{linear_sem_example_cor} for more details on the precise statement of~\lemref{linear_sem_example} and its proof). This indicates that the number of source domains needs to be at least linear with respect to the dimension of perturbations.

Applying \propref{iwcip_phistar},~\ref{prop:label_shift_estimation_error} and~\ref{prop:linear_sem_example} to \thmref{IW-CIP_gen_bound2}, we know that IW-CIP has a guaranteed target risk compared to the optimal conditionally invariant classifier $\hstar$. The last goal of this subsection is to connect $\hstar$ with the oracle target classifier $\horacle$ defined in Eq.~\eqnref{oracle_risk}. Again, we consider the general anticausal model under \assumpref{linear_cic}, but in a simpler binary classification setting given as follows.

\begin{assumption}\label{assump:linear_cic_special}
Suppose that source and target data are generated under \assumpref{linear_cic} with binary labels $\Yset=\{1,2\}$ and $\probtar{1}=\probtar{2}=1/2$. Assume that $\Sigma = \sigma^2\Ind_p, P_1=P_2=\pmat{\Ind_{d},\zeros_{d\times(p-d)}}^\top$, and $\fsrcm{1}(1)=-\fsrcm{1}(2)=\xi\cdot\sigma\one{p}$ for some $\xi>0$, where $d_1=d_2=d$. Assume further that the source perturbations $\{\vm{m}{\ell}\}_{1\leq \ell\leq 2, 1\leq m\leq M}$ span the whole space of $\real^d$ and the target perturbations follow $\vtar{1},\vtar{2}\iidsim\Normal(0,\tau^2\cdot \sigma^2\Ind_d)$ for some $\tau>0$.
\end{assumption}

The assumption illustrates a specific binary classification problem in domain adaptation, where only the first $d$ coordinates of the covariates are perturbed, with the remaining $p-d$ coordinates remaining invariant conditioned on the label. It also assumes that we have observed a sufficient number of perturbations in source domains, which span the entire space of the first $d$ dimensions, while perturbations in the target domains are allowed to vary according to a normal distribution. Under this data generation model, we prove that the risk of $\hstar$ is close to that of $\horacle$ when the dimension of CICs is substantial.

\begin{proposition}\label{prop:linear_sem_example2}
Consider the domain adaptation problem under \assumpref{linear_cic_special}. Let the hypothesis class of classifiers $\Hset$ be $\Gset\circ\Phi$, where $\Gset = \{g: \real\rightarrow \{1,2\}, g(x) = 1\cdot\ones{x \leq 0} + 2\cdot \ones{x > 0}\}$ consists of one fixed function, and $\Phi = \{\phi:\real^{p}\rightarrow \real, \phi(x) = \beta^\top x + \beta_0, \vecnorm{\beta}{2}=1, \beta  \in\real^p,\beta_0\in\real\}$ consists of linear feature mappings. There exists a constant $c_{\xi,\tau}>0$ such that for any $\delta \in(0,1)$, the risk difference between $\horacle$ and $\hstar$ is bounded by
    \begin{align}\label{eqn:special_oracle_risk_diff}
        \risktar{\hstar} - \risktar{\horacle} \leq c_{\xi,\tau}\parenth{\sqrt{d}+\sqrt{\log\parenth{ 1/\delta}}}\exp\parenth{-\frac{\xi^2(p-d)}8},
    \end{align}
with probability at least $1-\delta$.\footnote{The probability is with respect to the randomness of target perturbations $\vtar{1},\vtar{2}\iidsim\Normal(0,\tau^2\sigma^2\Ind_d)$.}
\end{proposition}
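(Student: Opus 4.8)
The plan is to identify $\hstar$ and $\horacle$ explicitly and then reduce the risk gap to a difference of two standard Gaussian tail probabilities. Throughout, let $Z\sim\Normal(0,1)$ and split the coordinates of $\real^p$ into the $d$ \emph{perturbed} coordinates (the range of $P_y$) and the $p-d$ \emph{invariant} ones.

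First I would pin down the two classifiers. Since $\Gset$ contains only the fixed threshold and $\Phi$ ranges over all unit-norm affine maps $x\mapsto\beta^\top x+\beta_0$, the class $\Hset=\Gset\circ\Phi$ is exactly the family of halfspace classifiers. Under \assumpref{linear_cic_special} the target class-conditionals are $\Normal(\ftar(1),\sigma^2\Ind_p)$ and $\Normal(\ftar(2),\sigma^2\Ind_p)$ with equal priors, so the Bayes rule is a halfspace and hence lies in $\Hset$; therefore $\horacle$ is this Bayes-optimal linear classifier and $\risktar{\horacle}=\PP{Z>a}$ with $a=\vecnorm{\ftar(1)-\ftar(2)}{2}/(2\sigma)$. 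For $\hstar$ the conditional-invariance constraint in \defref{optimal_cie} requires $\beta^\top\fsrc(y)+\beta_0$ to be constant in $m$ for each $y$; since $\beta^\top\fsrc(y)=\beta^\top\fsrcm{1}(y)+\beta_{[1:d]}^\top\vm{m}{y}$ and $\vm{1}{y}=0$, this means $\beta_{[1:d]}^\top\vm{m}{y}=0$ for all $m,y$, and because the source perturbations span $\real^d$ it forces $\beta$ to vanish on the perturbed block. Restricted to the invariant block the two target means differ by $2\xi\sigma\ones{p-d}$, so the optimal invariant halfspace yields $\risktar{\hstar}=\PP{Z>b}$ with $b=\xi\sqrt{p-d}$, a deterministic quantity.

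Second, since the oracle uses all coordinates while $\hstar$ uses only the invariant ones, $a\ge b\ge0$ and the gap is $\risktar{\hstar}-\risktar{\horacle}=\PP{b<Z\le a}$. Bounding the Gaussian density on $[b,a]$ by its value at the endpoint $b$ (monotonicity on $[0,\infty)$) gives
\begin{align*}
\risktar{\hstar}-\risktar{\horacle}\le\frac{1}{\sqrt{2\pi}}\exp\parenth{-\tfrac{\xi^2(p-d)}{2}}\,(a-b).
\end{align*}
The exponent $\xi^2(p-d)/2$ exceeds the stated $\xi^2(p-d)/8$, so the weaker stated exponential follows at once once $a-b$ is controlled. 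For the random factor, write $\ftar(1)-\ftar(2)=2\xi\sigma\ones{p}+P_1(\vtar{1}-\vtar{2})$ and split the squared norm into the perturbed and invariant blocks to get $a^2=\vecnorm{\xi\ones{d}+\eta}{2}^2+b^2$, where $\eta=(\vtar{1}-\vtar{2})/(2\sigma)\sim\Normal(0,(\tau^2/2)\Ind_d)$. Applying $\sqrt{u+v}\le\sqrt u+\sqrt v$ then gives $a-b\le\vecnorm{\xi\ones{d}+\eta}{2}$; performing this triangle-inequality split \emph{before} concentrating is the crucial step, since bounding $a^2-b^2$ directly would leave a factor of order $d$ rather than $\sqrt d$. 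Finally $\vecnorm{\xi\ones{d}+\eta}{2}$ is a $1$-Lipschitz function of the Gaussian $\eta$, so Gaussian concentration gives, with probability at least $1-\delta$,
\begin{align*}
\vecnorm{\xi\ones{d}+\eta}{2}\le\sqrt{d(\xi^2+\tfrac{\tau^2}{2})}+\tau\sqrt{\log(1/\delta)}\le c'_{\xi,\tau}\parenth{\sqrt d+\sqrt{\log(1/\delta)}},
\end{align*}
and substituting into the display above yields \eqnref{special_oracle_risk_diff} with $c_{\xi,\tau}=c'_{\xi,\tau}/\sqrt{2\pi}$.

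The main obstacle is the first step: pinning down $\hstar$ and $\horacle$ exactly. This requires translating the conditional-invariance constraint into the linear condition $\beta_{[1:d]}=0$ via the spanning assumption, and then invoking the equal-prior, equal-covariance Gaussian structure to conclude that both classifiers are Bayes-optimal within their respective (unconstrained and invariance-constrained) halfspace families, with closed-form risks $\PP{Z>a}$ and $\PP{Z>b}$. After that, the only delicate quantitative point is carrying out the triangle-inequality split before applying Gaussian concentration, which is what produces the $\sqrt d$ dependence rather than $d$.
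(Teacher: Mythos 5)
Your proof is correct, and it follows the same overall skeleton as the paper's: reduce both risks to closed-form Gaussian tail probabilities governed by the norms of the full and invariant mean differences, use the spanning assumption to show the invariance constraint forces $\beta_{[1:d]}=0$, bound the resulting tail difference by the density at the smaller endpoint times the endpoint gap, and finish with Gaussian concentration over $\vtar{1},\vtar{2}$. You differ in execution at two points, and in both your route is tidier. First, the paper identifies $\horacle$ by writing the target risk through the error function and solving a Lagrange-multiplier problem for the optimal $(\beta,\beta_0)$, which yields the risk formula $r_{\probtar{1},\sigma}$ for general target label probabilities before specializing to $\probtar{1}=1/2$; you shortcut this by noting that with equal priors and equal covariances the Bayes rule is itself a unit-norm halfspace lying in $\Hset$ (and, restricted to the invariant block, in the constrained family), so $\risktar{\horacle}=\PP{Z>a}$ and $\risktar{\hstar}=\PP{Z>b}$ follow without calculus---legitimate here since the proposition assumes balanced target labels, though the paper's computation additionally delivers the general-$\probtar{1}$ formula and its monotonicity. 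Second, to control the prefactor the paper applies the triangle inequality $\vecnorm{\ftar(1)-\ftar(2)}{2}\le\vecnorm{\fsrcm{1}(1)-\fsrcm{1}(2)}{2}+\vecnorm{\vtar{1}-\vtar{2}}{2}$, concentrates $\vecnorm{\vtar{1}-\vtar{2}}{2}$ with a norm-squared (chi-square-type) bound, and then needs the identity $\sqrt{p}-\sqrt{p-d}=d/(\sqrt{p}+\sqrt{p-d})\le\sqrt{d}$ to extract the $\sqrt{d}$ dependence; your orthogonal-block identity $a^2=\vecnorm{\xi\ones{d}+\eta}{2}^2+b^2$ gives $a-b\le\vecnorm{\xi\ones{d}+\eta}{2}$ in one line, after which Lipschitz concentration of the norm immediately produces $\sqrt{d(\xi^2+\tau^2/2)}+\tau\sqrt{\log(1/\delta)}$. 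A side benefit is that your endpoint is $b=\xi\sqrt{p-d}$, giving the sharper exponent $\xi^2(p-d)/2$, from which the stated bound with exponent $\xi^2(p-d)/8$ follows trivially; the paper's weaker exponent is an artifact of its erf bookkeeping. Both arguments prove the proposition; yours is shorter with slightly better constants, while the paper's buys the explicit oracle classifier and risk function in the unbalanced-label case.
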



The proof of this proposition is given in \appref{linear_sem_example2}. As the dimension of covariates $p$ goes to infinity, we can see that the risk difference between $\hstar$ and $\horacle$ converges to zero at an exponential rate, provided that the dimension of perturbations $d$ is smaller than $\alpha p$ for some constant $\alpha<1$. This result shares similarities with those obtained in a regression setting in~\cite[Corollary 6]{chen2021domain}, where the authors demonstrate a polynomial decay rate for the risk difference. 


\begin{remark}
    We have established Propositions~\ref{prop:iwcip_phistar}, \ref{prop:label_shift_estimation_error}, \ref{prop:linear_sem_example}, \ref{prop:linear_sem_example2} to control terms appeared in \thmref{IW-CIP_gen_bound},~\ref{thm:IW-CIP_gen_bound2}. These propositions respectively bound the empirical IW-CIP penalty term, error of importance weights estimation, the deviation from conditional invariance, and the risk of the optimal conditionally invariant classifier. As a result, \thmref{IW-CIP_gen_bound2} now guarantees that finite-sample IW-CIP has a target risk close to that of the oracle classifier $\horacle$.
\end{remark}

\section{Enhancing DA with CICs: risk detection and improved DIP}\label{sec:DIP}

While the previous section demonstrates that IW-CIP achieves target predictive performance close to that of the optimal conditionally invariant classifier, classifiers based on CICs can often be conservative, and their performance may not always be optimal. This conservativeness arises because finding invariant features from multiple source domains requires discarding some features that could be useful for target prediction. This can be seen in~\propref{linear_sem_example2} where if the dimension of perturbations $d$ is large, and thus $M=\O{d}$ is large, there is a gap between the oracle target risk and the risk of conditionally optimal invariant classifier. Practitioners may seek alternative DA algorithms, such as DIP, which directly find invariant features between a single (or a few) source domains and the target domain. However, in the absence of target labels, these algorithms can fail completely without any indication of failure. Furthermore, it is crucial to improve these algorithms when only unlabeled target data is available.

In this section, we present two additional roles of CICs to effectively address these challenges. First, we investigate how CICs can be used to detect large target risks for other DA algorithms (second role). Second, we examine how CICs enhance the reliability of DIP by addressing its label-flipping issue (third role).
{The empirical study for the second role is presented in~\secref{SEM_label_flipping_exp} and~\secref{mnist_label_flipping_exp}, while the third rold is demonstrated throughout~\secref{exp}.}
For the purpose of this section, we assume the absence of label shift, meaning that the label distributions are invariant across both the source and target domains. If label shift is present, the procedure outlined in~\secref{CIP} can be employed to correct it by applying CIP and adjusting for the importance weights.

In practice, assessing the success or failure of a DA algorithm is challenging due to the unavailability of target labels. While some research has been devoted in establishing target risk guarantees of DIP in the limited settings, it is still considered a ``risky'' algorithm. DIP seeks invariant feature representations across source and target domains, enforcing only the marginal distribution of these feature representations to be invariant across these domains. However, having the marginal distribution be invariant is not sufficient to ensure conditional invariance. Consequently, DIP may only learn representations that maintain marginal invariance across source and target domains but fail to be conditionally invariant given the labels. Because DIP merely minimizes the source risk, those representations may entirely flip the prediction of labels when applied to the target data~\citep{wu2019domain,zhao2019learning,wu2020representation}; see~\figref{dip_vs_jdip} (a)(b) for the illustrative examples. Furthermore, without target labels, it is difficult to detect this potential label flipping of DIP.

\begin{figure}[htb]
    \centering
    \includegraphics[width=0.99\textwidth]{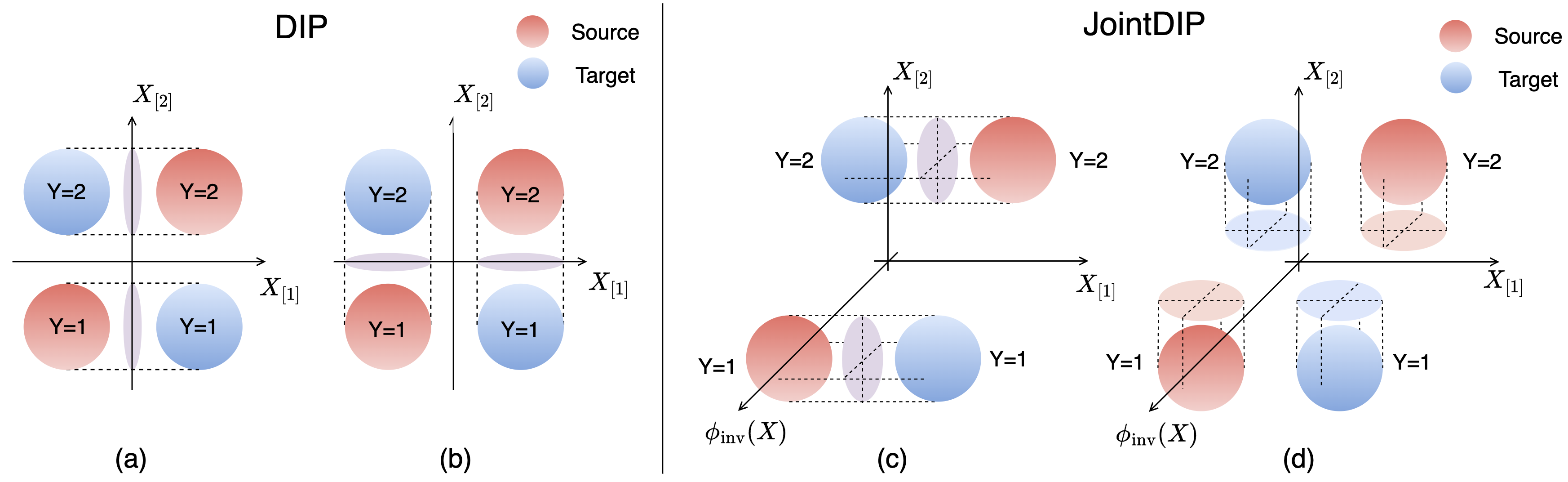}
    \caption{A binary classification example illustrating the difference between DIP and JointDIP. (a) DIP correctly matches the source and target covariates by projecting onto the feature $\coord{X}{2}$, which can generalize to the target distribution. (b) DIP matches the source and target covariates by projecting onto the label-flipping feature $\coord{X}{1}$, which cannot generalize to the target distribution. (c) JointDIP finds the correct feature $\coord{X}{2}$ by jointly matching the covariates with a conditional invariant component (CIC) $\phiinv(X)$. (d) JointDIP discards the label-flipping feature $\coord{X}{1}$ because the joint distribution of $(\coord{X}{1}, \phiinv(X))$ cannot be matched across source and target distributions.}
    \label{fig:dip_vs_jdip}
\end{figure}
The label-flipping issue in DIP raises concerns about its reliability when applied blindly without additional validation. In the absence of target labels, previous works often rely on assumptions about the data generation process in order for DIP-type of algorithms to achieve low target risk, e.g.~linear structural equation models~\citep{chen2021domain}. In this paper, given the availability of multiple source domains, we propose the use of CICs to enhance the reliability of DIP algorithms.

Suppose that we have obtained the conditionally invariant feature mapping $\phiinv$, and the classifier $\hinv$ built upon CICs denoted by
\begin{align}\label{eqn:gref_defn}
    \hinv = \ginv\circ \phiinv.
\end{align} 
As seen in the previous sections, both $\phiinv$ and $\hinv$ can be approximately obtained by solving CIP---in this case, while finite-sample feature mapping $\phihcip$ may not be perfectly conditionally invariant, we expect that they exhibit a small deviation from conditional invariance {under the appropriate assumptions, as implied by \propref{linear_sem_example}}.\footnote{{Because labeled data is available for multiple source domains, one can verify the conditional invariance of finite-sample CIP by comparing the conditional distributions across these source distributions. If one is willing to assume that CICs exist across source and target, then conditional invariance also holds for the target domain.}} Therefore, we assume the exact conditional invariance of $\phiinv$ for simplicity throughout this section. By leveraging $\phiinv$ and $\hinv$, we demonstrate that CICs can guide other DA algorithms in two significant ways:
\begin{enumerate}
    \item Large risk detection: when the classifier $\hinv$ built on CICs has low target risk, it can be used to lower-bound the risk of other DA algorithms, even  in the absence of target labels. This enables the identification of label-filpping issues in algorithms like DIP.
    \item Joint matching: we propose the JointDIP algorithm which uses CICs $\phiinv(X)$ to learn invariant features between source and target covariates. JointDIP, as a new DA algorithm, enhances the reliability of DIP by addressing the label-flipping issue often encountered in DIP when the CICs are generic enough.
\end{enumerate}
For the remainder of the section, when a single source is considered, we assume without loss of generality that the first source domain is used.

\subsection{Detect failed DA algorithms using CICs}\label{sec:detect_failure}

We present the second role of CICs in DA, specifically, in detecting whether another DA classifier has large target risk. Given the conditionally invarinat classifier $\hinv$ as defined in Eq.~\eqnref{gref_defn}, we prove the following theorem which controls the difference in source and target risks for any classifier.
\begin{theorem}
	\label{thm:conditional_invariant_proxy_y}
		Let $\hinv$ be a conditionally invariant classifier and assume that there is no label shift between source and target distributions. For any classifier $h$, its risk difference in source and target is controlled as follows, \footnote{In practice, it is difficult to find $\hinv$ that is conditionally invariant across source and target distributions, for instance, if we approximate it via CIP, i.e., $\hinv=\hhcip$. In this case, the upper bound has an additional term $L\Pssi{\phiinv}$; see the proof for a generalized version of the theorem.}
    \begin{multline}\label{eqn:conditional_invariant_proxy_y}
        \abss{\brackets{\risktar{h} - \risksrcone{h}} - \brackets{\prob{h(\Xtar) \neq \hinv(\Xtar)} - \prob{h(\Xsrcone) \neq \hinv(\Xsrcone)}  }} \\ \hspace{8em}\leq   2 \risksrcone{\hinv} .
    \end{multline}
\end{theorem}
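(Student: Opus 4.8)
The plan is to replace the two unobservable label-based risks of $h$ by the observable ``agreement'' quantities $\prob{h(X)\neq\hinv(X)}$, which can be evaluated from covariates alone, and to control the replacement error by the risk of $\hinv$. The engine is a purely set-theoretic fact: for any classifiers $h,\hinv$ into $\{1,\ldots,L\}$ and any joint law of $(X,Y)$, the symmetric difference of the events $\{h(X)\neq Y\}$ and $\{h(X)\neq\hinv(X)\}$ is contained in $\{\hinv(X)\neq Y\}$. Indeed, if $h(X)\neq Y$ but $h(X)=\hinv(X)$ then $\hinv(X)\neq Y$, and if $h(X)=Y$ but $h(X)\neq\hinv(X)$ then again $\hinv(X)\neq Y$. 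Hence $\abss{\prob{h(X)\neq Y}-\prob{h(X)\neq\hinv(X)}}\leq\prob{\hinv(X)\neq Y}$ for any fixed distribution, since the left side is at most the probability of the symmetric difference.

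First I would apply this inequality separately under the target distribution and under the first source distribution, obtaining
\[
\abss{\risktar{h}-\prob{h(\Xtar)\neq\hinv(\Xtar)}}\leq\risktar{\hinv},\qquad \abss{\risksrcone{h}-\prob{h(\Xsrcone)\neq\hinv(\Xsrcone)}}\leq\risksrcone{\hinv}.
\]
Reading the left-hand side of~\eqnref{conditional_invariant_proxy_y} as the difference of the two signed quantities $\big(\risktar{h}-\prob{h(\Xtar)\neq\hinv(\Xtar)}\big)$ and $\big(\risksrcone{h}-\prob{h(\Xsrcone)\neq\hinv(\Xsrcone)}\big)$, the triangle inequality immediately yields the intermediate bound $\risktar{\hinv}+\risksrcone{\hinv}$.

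It then remains to show $\risktar{\hinv}=\risksrcone{\hinv}$, which is exactly where the two hypotheses are consumed. Decomposing over the label, $\risktar{\hinv}=\sum_{y}\prob{\hinv(\Xtar)\neq y\mid\Ytar=y}\,\prob{\Ytar=y}$; the conditional invariance of $\hinv$ makes each conditional factor equal to its source counterpart $\prob{\hinv(\Xsrcone)\neq y\mid\Ysrcone=y}$, and the no-label-shift assumption makes the weights $\prob{\Ytar=y}$ equal to $\prob{\Ysrcone=y}$, so the two sums coincide. Substituting $\risktar{\hinv}=\risksrcone{\hinv}$ into the intermediate bound produces the claimed $2\risksrcone{\hinv}$.

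The argument is short, and the only genuinely delicate point is recognizing the symmetric-difference containment, which is precisely what lets the unobservable target risk of $h$ be traded for the observable $\prob{h(\Xtar)\neq\hinv(\Xtar)}$ at a cost of only $\risktar{\hinv}$; the rest is bookkeeping. For the generalized statement in the footnote, where $\hinv$ (e.g.\ $\hhcip$) is only approximately conditionally invariant, the identity $\risktar{\hinv}=\risksrcone{\hinv}$ degrades to an inequality: bounding each of the $L$ per-label discrepancies $\abss{\prob{\hinv(\Xtar)\neq y\mid\Ytar=y}-\prob{\hinv(\Xsrcone)\neq y\mid\Ysrcone=y}}$ via the $\Gset$-divergence between the conditionals of $\phiinv(X)$, hence by $\Pssi{\phiinv}$, and summing over the $L$ labels yields the additional term $L\,\Pssi{\phiinv}$.
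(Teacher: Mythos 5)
Your proof is correct, and it takes a genuinely different — and more elementary — route than the paper's. Your engine is the triangle inequality for the disagreement pseudometric $d(f,g)=\prob{f(X)\neq g(X)}$ (your symmetric-difference containment is exactly this fact, with $Y$ playing the role of a third classifier), applied once under the target and once under the source, after which both hypotheses are consumed in the single identity $\risktar{\hinv}=\risksrcone{\hinv}$. The paper instead runs a double-conditioning computation: it expands $\prob{h(X)=j,\hinv(X)=j}$ and $\prob{h(X)=j,Y=j}$ by conditioning on the pair $(Y,\hinv(X))$, subtracts, sums over $j$, and identifies per-label error masses $B_j=\prob{\Ysrcone\neq j,\hinv(\Xsrcone)=j}$ and $B_j'=\prob{\Ysrcone=j,\hinv(\Xsrcone)\neq j}$ (each summing to $\risksrcone{\hinv}$) together with deviation terms $\delta_{ij}$ bounded by $\Pssi{\phiinv}$. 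Both routes give the identical main bound, but yours is far shorter, and for the footnote's generalized version it is in fact \emph{sharper}: since your per-label discrepancies are weighted by $\prob{\Ytar=y}$, which sum to one, the additive term is $\Pssi{\phiinv}$ rather than the paper's $L\Pssi{\phiinv}$ — your closing bookkeeping (``summing over the $L$ labels'') is unnecessarily loose, though it still recovers the paper's stated bound. The only hypothesis worth making explicit in that generalized step is that bounding the per-label discrepancy $\abss{\probc{\hinv(\Xtar)\neq y}{\Ytar=y}-\probc{\hinv(\Xsrcone)\neq y}{\Ysrcone=y}}$ by the $\Gset$-divergence requires $\ginv\in\Gset$, which matches the paper's generalized formulation $\hinv=\ginv\circ\phiinv$ with $\ginv\in\Gset$; what the paper's finer decomposition buys, by contrast, is explicit per-label and per-pair $(i,j)$ error terms, at the cost of length and a weaker constant.
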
 
This result is proved in~\appref{conditional_invariant_proxy_y}. It shows that a large discrepancy between source and target risks of any classifier $h$ can be detected by examining the source risk of $\hinv$ and the alignments between the predictions of $h$ and $\hinv$ across source and target, which are always available. While~\thmref{conditional_invariant_proxy_y} is stated in terms of the first source domain, the result is applicable to any source domain and extends to multiple source domains via averaging. In practice, we may take $\hhcip$ as a proxy for $\hinv$ and choose the source domain which gives the tightest bound---specifically, the one that exhibits small discrepancy in prediction alignments between $h$ and $\hinv$ across source and target domains, and that shows high accuracy for CIP or IW-CIP algorithms.

By rearranging terms, an empirical version of Eq.~\eqnref{conditional_invariant_proxy_y} can be derived to establish a lower bound for the target risk 
\begin{multline}\label{eqn:conditional_invariant_proxy_y_lower_bound}
	\risktar{h}\geq \riskhsrcone{h} - 2\riskhsrcone{\hhcip} \\ + \parenth{\frac{1}{\ntar}\sum_{k=1}^{\ntar}\ones{h(\Xtar_k)\neq \hhcip(\Xtar_k)} - \frac{1}{\nsrcone}\sum_{k=1}^{\nsrcone}\ones{h(\Xsrcone_k)\neq \hhcip(\Xsrcone_k)}}.
\end{multline}

This lower bound on the target risk can be directly translated to an upper bound on accuracy, which can serve as a certificate to detect the failure of any DA classifier $h$. In~\secref{SEM_label_flipping_exp} and~\secref{mnist_label_flipping_exp}, we demonstrate this with numerical examples using CIP to detect the failure of DIP, where we test on both synthetic data and the MNIST dataset. We observe that the accuracy upper bound derived from Eq.~\eqnref{conditional_invariant_proxy_y_lower_bound} to detect the failures of DIP becomes more accurate when CIP has a higher predictive  accuracy on the source domain. If CIP does not perform well across the entire input space, by restricting to a subset of the space where CIP performs well (e.g. region far from decision boundary of a CIP classifier), we can obtain similar results which we will discuss in the subsequent subsection.

\subsubsection{Detect large target risk by restricting to a subset}

While~\thmref{conditional_invariant_proxy_y} provides a method for identifying when DA algorithms fail, the effectiveness of this approach may decrease if the invariant classifier $\hinv$, such as CIP, performs poorly in the source domain. In this case, it becomes difficult to detect the failure of DA algorithms across the entire region. However, if a subset of the region can be found where $\hinv$ achieves higher accuracy, we can still establish a more precise lower bound on the performance of DA algorithms within this restricted subset.

In particular, the result of~\thmref{conditional_invariant_proxy_y} can be extended in a straightforward manner to the case where we condition on the event $\Xsrcone\in \Aset$ for any subset $\Aset\subseteq \Xset$. We write $\risksrconesubset{\Aset}{h}$ to denote the source risk conditioned on $\Aset$, i.e., 
\[\risksrconesubset{\Aset}{h} \defn \probc{h(\Xsrcone)\neq \Ysrcone}{\Xsrcone\in \Aset}, \]
and similarly for the target risk $\risktarsubset{\Aset}{h}$. Analogous to~\thmref{conditional_invariant_proxy_y}, we can obtain the following bound on the risk difference conditioned on set $\Aset$.
\begin{corollary}
	\label{cor:conditional_invariant_proxy_y_A}
	Let $\hinv$ be a conditionally invariant classifier and assume that there is no label shift between source and target distributions. For any classifier $h$ and any subset $\Aset\subseteq\Xset$, we have
	\begin{multline}\label{eqn:conditional_invariant_proxy_y_A}
		 \abss{\risktarsubset{\Aset}{h} - \risksrconesubset{\Aset}{h}} \leq 2\risksrconesubset{\Aset}{\hinv}  \\+  \abss{ \probsubset{\Aset}{h(\Xsrcone) \neq \hinv(\Xsrcone)} - \probsubset{\Aset}{h(\Xtar) \neq \hinv(\Xtar)}},
	\end{multline}
	where for a random vector $X$, we write
	\[  \probsubset{\Aset}{h(X) \neq \hinv(X)} \defn \probc{h(X) \neq \hinv(X)}{X\in \Aset} .\]
\end{corollary}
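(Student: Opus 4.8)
The plan is to replay the argument behind~\thmref{conditional_invariant_proxy_y} verbatim, but with every probability and risk replaced by its counterpart conditioned on the event $\{X\in\Aset\}$. Concretely, I would work throughout inside the conditional law $\probsubset{\Aset}{\cdot}=\probc{\cdot}{X\in\Aset}$, under which $\risktarsubset{\Aset}{h}=\probsubset{\Aset}{h(\Xtar)\neq\Ytar}$, $\risksrconesubset{\Aset}{h}=\probsubset{\Aset}{h(\Xsrcone)\neq\Ysrcone}$, and likewise for $\hinv$. The only tools are the elementary triangle inequality for the $0$--$1$ loss and the triangle inequality for real numbers; no structural property of $\Aset$ enters until the final step.

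First I would establish two ``alignment'' bounds. For the target covariate, the set inclusions $\{h(\Xtar)\neq\Ytar\}\subseteq\{h(\Xtar)\neq\hinv(\Xtar)\}\cup\{\hinv(\Xtar)\neq\Ytar\}$ and $\{h(\Xtar)\neq\hinv(\Xtar)\}\subseteq\{h(\Xtar)\neq\Ytar\}\cup\{\hinv(\Xtar)\neq\Ytar\}$, read under $\probsubset{\Aset}{\cdot}$, give
\[
	\abss{\risktarsubset{\Aset}{h}-\probsubset{\Aset}{h(\Xtar)\neq\hinv(\Xtar)}}\leq\risktarsubset{\Aset}{\hinv},
\]
and the identical argument applied to the source covariate $\Xsrcone$ gives
\[
	\abss{\risksrconesubset{\Aset}{h}-\probsubset{\Aset}{h(\Xsrcone)\neq\hinv(\Xsrcone)}}\leq\risksrconesubset{\Aset}{\hinv}.
\]

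Next I would combine these two estimates. Writing $u=\probsubset{\Aset}{h(\Xtar)\neq\hinv(\Xtar)}$ and $v=\probsubset{\Aset}{h(\Xsrcone)\neq\hinv(\Xsrcone)}$ for the two disagreement probabilities, the triangle inequality yields
\[
	\abss{\brackets{\risktarsubset{\Aset}{h}-u}-\brackets{\risksrconesubset{\Aset}{h}-v}}\leq\risktarsubset{\Aset}{\hinv}+\risksrconesubset{\Aset}{\hinv},
\]
and then, adding and subtracting $u-v$ and applying the triangle inequality once more,
\[
	\abss{\risktarsubset{\Aset}{h}-\risksrconesubset{\Aset}{h}}\leq\risktarsubset{\Aset}{\hinv}+\risksrconesubset{\Aset}{\hinv}+\abss{u-v}.
\]

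The main obstacle is the final reconciliation with the stated right-hand side $2\risksrconesubset{\Aset}{\hinv}$. In the unrestricted~\thmref{conditional_invariant_proxy_y} the two $\hinv$-risk terms collapse into one because exact conditional invariance of $\hinv(X)\mid Y$ together with the no-label-shift assumption forces $\risktar{\hinv}=\risksrcone{\hinv}$. Once we condition on $\{X\in\Aset\}$ this identity is in general lost: the event depends on the full covariate $X$ rather than on $\hinv(X)$ alone, so the source and target marginals assigned to $\Aset$ differ and $\risktarsubset{\Aset}{\hinv}\neq\risksrconesubset{\Aset}{\hinv}$. Hence I would close the gap by controlling the restricted \emph{target} $\hinv$-risk $\risktarsubset{\Aset}{\hinv}$. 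In the intended use, $\Aset$ is precisely a region on which $\hinv$ (e.g.\ CIP) is highly accurate, so $\risktarsubset{\Aset}{\hinv}$ is no larger than $\risksrconesubset{\Aset}{\hinv}$; substituting this into the display above produces the $2\risksrconesubset{\Aset}{\hinv}$ term and reports both restricted $\hinv$-risks through the single estimable source quantity.
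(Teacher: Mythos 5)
Your two alignment bounds are correct, and they are in fact a cleaner route than the paper's own argument for \thmref{conditional_invariant_proxy_y}: the elementary inequality $\abss{\prob{A\neq B}-\prob{A\neq C}}\leq \prob{B\neq C}$ holds under \emph{any} probability measure, so it transfers verbatim to the conditional law $\probc{\cdot}{X\in\Aset}$ (the paper instead expands the joint events $\{h(X)=j,\hinv(X)=j\}$ and $\{h(X)=j,Y=j\}$ over the label values). Your two-sided conclusion
\begin{equation*}
	\abss{\risktarsubset{\Aset}{h}-\risksrconesubset{\Aset}{h}}\;\leq\; \risktarsubset{\Aset}{\hinv}+\risksrconesubset{\Aset}{\hinv}+\abss{u-v}
\end{equation*}
is valid for every $\Aset$, and you have correctly identified the crux: the collapse to $2\risksrconesubset{\Aset}{\hinv}$ needs the identity $\risktarsubset{\Aset}{\hinv}=\risksrconesubset{\Aset}{\hinv}$, which in the unrestricted theorem follows from invariance of the joint law of $(Y,\hinv(X))$ and is generally destroyed by conditioning on a covariate-dependent event.

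The gap is your final step: asserting that ``in the intended use'' $\risktarsubset{\Aset}{\hinv}\leq\risksrconesubset{\Aset}{\hinv}$ is an unproven hypothesis, not a derivation from the corollary's stated assumptions, and it can fail completely. Take $Y$ uniform on $\{1,2\}$ in both domains, $\hinv(x)=\coord{x}{1}$ with $\probc{\coord{X}{1}\neq Y}{Y=y}=1/2$ in both domains (so $\hinv$ is exactly conditionally invariant and there is no label shift), but $\coord{X}{2}=\One{\coord{X}{1}=Y}$ under the source and $\coord{X}{2}=\One{\coord{X}{1}\neq Y}$ under the target; with $\Aset=\{x:\coord{x}{2}=1\}$ one gets $\risksrconesubset{\Aset}{\hinv}=0$ and $\risktarsubset{\Aset}{\hinv}=1$, and choosing $h=\hinv$ makes the left side of Eq.~\eqnref{conditional_invariant_proxy_y_A} equal to $1$ while the claimed right side is $0$. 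The missing idea---which is what the paper's ``straightforward extension'' of \thmref{conditional_invariant_proxy_y} implicitly uses---is that the conditioning event must itself be measurable with respect to the conditionally invariant representation: if $\One{X\in\Aset}$ is a function of $\phiinv(X)$ (as for the super-level sets $Q_\alpha$ of the CIP predicted probability used in \secref{SEM_label_flipping_exp} and \secref{mnist_label_flipping_exp}), then $(\hinv(X),\One{X\in\Aset})\mid Y$ is invariant across source and target, the no-label-shift identity transfers to the conditional law by Bayes' rule, hence the joint law of $(Y,\hinv(X))$ given $\{X\in\Aset\}$ is invariant; at that point $\risktarsubset{\Aset}{\hinv}=\risksrconesubset{\Aset}{\hinv}$ holds as an identity and your own display collapses to the stated bound (equivalently, the paper's proof replays word for word under $\probc{\cdot}{X\in\Aset}$). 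So your route is salvageable and even simpler than the paper's, but the final inequality must be replaced by this structural condition on $\Aset$ rather than a hope about where $\hinv$ is accurate; for truly arbitrary $\Aset\subseteq\Xset$ only your two-sided bound survives.
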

From Eq.~\eqnref{conditional_invariant_proxy_y_A}, we can obtain a target risk lower bound for $h$, conditioned on set $\Aset$, analgous to Eq.~\eqnref{conditional_invariant_proxy_y_lower_bound}. If one can identify a region $\Aset\subseteq\Xset$ where the classifier $\hinv$ has near perfect source accuracy, due to its conditional invariance, the output of $\hinv$ can serve as a good proxy of the unobserved target labels. One example of such a choice is to choose $\Aset$ as the region where $\hinv$ gives a high predicted probability. Indeed, our experiments in~\secref{SEM_label_flipping_exp} and~\secref{mnist_label_flipping_exp} compare the tightness of the bounds across the entire space and on a subset $\Aset$ where $\hinv$ exhibits a high predicted probability. We find that \corref{conditional_invariant_proxy_y_A} provides a more accurate estimation of the target risk because $\hinv$ has higher accuracy when constrained to $\Aset$. It is important to note that while this approach does not provide insights on performance outside $\Aset$, it indicates that if certain DA algorithms fail within this subset, they may not be reliable to use overall. Conversely, if they perform well in $\Aset$, further investigation into these algorithms guided by domain knowledge is justified to evaluate their applicability to the broader region.

\corref{conditional_invariant_proxy_y_A} shows that a desired property for $h$ to have a low target risk is to align its output with the output of $\hinv$ on $\Aset$. We formalize this idea which is a direct consequence of~\corref{conditional_invariant_proxy_y_A}.

\begin{corollary}
	\label{cor:match_with_CIP_A}
    Let $\hinv$ be a conditionally invariant classifier and assume that there is no label shift between source and target distributions. Suppose that $h\in\Hset$ satisfies
	\begin{equation}\label{eqn:match_with_CIP_A}
		h(x)=\hinv(x) \text{ for all $x\in \Aset$}.
	\end{equation}
	Then the target risk of $h$ on $\Aset$ is bounded by
	\[\risktarsubset{\Aset}{h} \leq 3\risksrconesubset{\Aset}{\hinv}.\]
	In particular, if $\hinv$ has a near perfect accuracy on $\Aset$ under the source distribution, i.e., $\risksrconesubset{\Aset}{\hinv}\leq C$ for a small $C\geq 0$, then $\risktarsubset{\Aset}{h} \leq 3C$.
\end{corollary}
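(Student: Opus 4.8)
The plan is to derive this as an immediate specialization of \corref{conditional_invariant_proxy_y_A}, using the matching hypothesis \eqnref{match_with_CIP_A} to annihilate the prediction-disagreement terms. Since $\hinv$ is conditionally invariant and there is no label shift, the hypotheses of \corref{conditional_invariant_proxy_y_A} carry over verbatim. First I would instantiate its bound for the given $h$ and the fixed subset $\Aset$, namely
\begin{equation*}
\abss{\risktarsubset{\Aset}{h} - \risksrconesubset{\Aset}{h}} \leq 2\risksrconesubset{\Aset}{\hinv} +  \abss{ \probsubset{\Aset}{h(\Xsrcone) \neq \hinv(\Xsrcone)} - \probsubset{\Aset}{h(\Xtar) \neq \hinv(\Xtar)}}.
\end{equation*}

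The key observation is that, because $h(x)=\hinv(x)$ for every $x\in\Aset$, the event $\{h(X)\neq\hinv(X)\}$ is disjoint from $\{X\in\Aset\}$ for both $X=\Xsrcone$ and $X=\Xtar$. Hence both conditional probabilities $\probsubset{\Aset}{h(\Xsrcone)\neq\hinv(\Xsrcone)}$ and $\probsubset{\Aset}{h(\Xtar)\neq\hinv(\Xtar)}$ are exactly zero, so the second term on the right-hand side vanishes. This leaves the one-sided bound $\risktarsubset{\Aset}{h}\leq\risksrconesubset{\Aset}{h}+2\risksrconesubset{\Aset}{\hinv}$.

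It then remains to control the source risk $\risksrconesubset{\Aset}{h}$, and here I would invoke the matching condition once more: conditioning on $\Xsrcone\in\Aset$ forces $h(\Xsrcone)=\hinv(\Xsrcone)$, so on the conditioning event the misclassification events $\{h(\Xsrcone)\neq\Ysrcone\}$ and $\{\hinv(\Xsrcone)\neq\Ysrcone\}$ coincide. This yields the exact identity $\risksrconesubset{\Aset}{h}=\risksrconesubset{\Aset}{\hinv}$, and substituting gives $\risktarsubset{\Aset}{h}\leq3\risksrconesubset{\Aset}{\hinv}$. The concluding ``in particular'' assertion follows by plugging in $\risksrconesubset{\Aset}{\hinv}\leq C$.

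Because every step is a direct specialization of \corref{conditional_invariant_proxy_y_A}, I do not anticipate any substantive obstacle; the only point requiring care is the bookkeeping of the conditional probabilities—specifically, verifying that restricting to $\Aset$ makes both the disagreement terms \emph{and} the source-risk gap between $h$ and $\hinv$ vanish \emph{identically} rather than merely approximately, which is what lets the constant collapse cleanly to $3$.
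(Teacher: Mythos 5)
Your proof is correct and is exactly the argument the paper intends: the paper presents \corref{match_with_CIP_A} as a direct consequence of \corref{conditional_invariant_proxy_y_A}, with the matching condition \eqnref{match_with_CIP_A} making the disagreement terms vanish identically and forcing $\risksrconesubset{\Aset}{h}=\risksrconesubset{\Aset}{\hinv}$, which yields the constant $3$. No gaps; your bookkeeping of the conditional probabilities is sound.
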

\corref{match_with_CIP_A} guarantees the low target population risk of the classifier on the region where the invariant classifier $\hinv$ is confident about its prediction and can serve as a good proxy for the target labels $Y$. This does \emph{not} necessarily imply that the representation learned via Eq.~\eqnref{match_with_CIP_A} can generalize to target data outside the region $\Aset$. Nevertheless, if domain experts all recognize the importance of the region $\Aset$, Eq.~\eqnref{match_with_CIP_A} becomes a natural requirement for assessing the quality of any DA classifier.

\subsection{JointDIP by matching DIP features jointly with CICs}

In this subsection, we demonstrate the third role of CICs in enhancing DIP. According to \thmref{conditional_invariant_proxy_y}, if the predictions of $h$ and $\hinv$ align well across source and target, the target risk of $h$ won't be too far from its source risk. It turns out that we can enforce this alignment by incorporating the CICs into the DIP matching penalty. This leads to our new algorithm, joint domain invariant projection (JointDIP).

\paragraph{Population JointDIP}\label{def:JointDIP}
	The population JointDIP minimizes the source risk while matching the joint distributions of $\phi$ and $\phiinv$ in the representation space across source and target:\footnote{$(\phi, \phiinv)(X)$ denotes the vector concatenated by $\phi(X)$ and $\phiinv(X)$.}
	\begin{align}\label{eqn:pop_JointDIP}
        \begin{split}
            \hjdip &= \gjdip \circ \phijdip, \\
            \gjdip, \phijdip &= \argmin_{g\in\Gset,\phi\in\Phi} \ \ \risksrcone{g\circ\phi}  \\
            &\hspace{0.2in} \text{subject to } \ \ \probdiv{\Pmarsrcone{(\phi,\phiinv)(X)}}{\Pmartar{(\phi,\phiinv)(X)}}  = 0.
	    \end{split}
    \end{align}
	Unlike DIP which only matches the marginal distribution of $\phi(X)$, JointDIP takes advantage of CICs to extract invariant feature representations. Intuitively, if $\phiinv$ is highly correlated with the labels $Y$, it can effectively serve as a surrogate for the unobserved target $Y$. Thus, matching the joint distribution of feature representations $(\phi(X),\phiinv(X))$ closely approximates matching the joint distribution of $(\phi(X), Y)$. This helps ensure that $\phi$ does not inadvertently learn features that flip the labels. Note that if label shift is present, it is not hard to extend the current formulation to an importance-weighted JointDIP (IW-JointDIP) by applying CIP to correct label shift prior to applying JointDIP.
\paragraph{Finite-sample JointDIP} The finite sample formulation of JointDIP enforces the joint invariance of the representations via a regularization term,
	\begin{align}\label{eqn:finite_JointDIP}
        \begin{split}
            \hhjdip &= \ghjdip \circ \phihjdip, \\
            \ghjdip, \phihjdip &= \argmin_{g\in\Gset,\phi\in\Phi} \ \ \riskhsrcone{g\circ\phi} + \lamjdip\cdot\probdiv{\Phmarsrcone{(\phi,\phiinv)(X)}}{\Phmartar{(\phi,\phiinv)(X)}},
	    \end{split}
    \end{align}
	where $\lamjdip>0$ is a regularization parameter that controls the strength of the joint matching penalty.


To illustrate the advantage of JointDIP over the ordinary DIP, consider a binary classification example under the general anticausal model~\defref{model1} where the data is generated as in~\figref{dip_vs_jdip} (a)(b), similar to the example in~\cite{johansson2019support}. Suppose that $\Phi=\{\phi_1(x)=\coord{x}{1}, \phi_2(x) = \coord{x}{2}\}$ and $\Gset= \{g_a(z)=\ones{z>a}+1, a\in\real\}$. Since DIP only matches the marginal distributions in the representation space, it could choose either $\phi_1$ or $\phi_2$ as the feature mappings to perfectly match the marginal  distributions of the features and achieve zero loss under the source distribution. However, choosing $\phi_1$ leads to zero accuracy under the target distribution, as $\phi_1(X)=\coord{X}{1}$ is the label-flipping feature. Only $\phi_2(X)=\coord{X}{2}$ is the conditionally invariant feature that can generalize to the target distribution. Now if we have access to CICs through the conditionally invariant feature mapping $\phiinv$, and we match these jointly with DIP, we would only get the correct feature $\phi_2(X)=\coord{X}{2}$, as illustrated in~\figref{dip_vs_jdip} (c)(d). JointDIP would never select $\phi_1(X)=\coord{X}{1}$ because the joint distribution of $(\coord{X}{1},\phiinv(X))$ is different between the source and target distributions.

\subsubsection{Theoretical comparison of CIP, DIP, and JointDIP under general  anticausal model}

To quantitatively compare the target risk of DA classifiers, we focus on data generated from the general anticausal model defined in~\defref{model1}. Additionally, we introduce~\assumpref{uniform_y} where marginal distribution of $Y$ is uniform under source and target distributions, as follows.

\begin{assumption}\label{assump:uniform_y}
Suppose that data is generated according to the general anticausal model defined in~\defref{model1}. Further, assume that the label distribution is uniform under source and target distributions, i.e.,$\forall m\in\{1,2,\ldots,M\}$ and $\forall y\in\{1,2,\ldots,L\}$, we have $\probsrc{y}=\probtar{y}=1/L$.
\end{assumption}

With the above assumptions on the data generation process in place, the following theorem compares the target risk of population CIP, DIP, and JointDIP.
\begin{theorem}\label{thm:SEM_guarantee}
	Suppose that source and target data are generated under~\assumpref{uniform_y}.
	Let $\Phi=\{\phi(x)=Ax+b, A\in\real^{q\times p}, b\in\real^q\}$ be the class of linear feature mapping from $\R^p$ to $\R^q$, which is used in the optimization of population DIP~\eqnref{pop_DIP} and population JointDIP~\eqnref{pop_JointDIP}. Assume that both algorithms match feature distributions exactly, i.e., \[\Pmarsrcone{\phidip(X)}=\Pmartar{\phidip(X)} \text{ and } \Pmarsrcone{(\phijdip,\phiinv)(X)}=\Pmartar{(\phijdip,\phiinv)(X)}.\]
	Then the following statements hold:
\begin{enumerate}
		\item[(a)] There exist distributions $\Psrcone,\Ptar$ such that the feature mapping of population DIP, $\phidip$, flips the labels after matching the marginal distributions in the representation space, i.e.,
		\begin{equation}\label{eqn:DIP_label_flip}
			\Pcondsrcone{\phidip(X)}{Y=y} = \Pcondtar{\phidip(X)}{Y=\pi(y)},  \ \ \forall y\in\{1,\ldots,L\},
		\end{equation}
		for some permutation $\pi\neq \ident$ over $\Yset$.
        \item[(b)] Suppose $\phiinv: \real^p\rightarrow \real^r$ is a linear conditionally invariant feature mapping such that
        \begin{align}\label{eqn:linear_cic_condition}
            \EEst{\phiinv(\Xsrcone)}{\Ysrcone=i}\neq \EEst{\phiinv(\Xsrcone)}{\Ysrcone=j}, \ \ \forall i\neq j\in\Yset.
        \end{align}
        Then the feature mapping of population JointDIP, $\phijdip$, is conditionally invariant across $\Psrcone$ and $\Ptar$. 
        If additionally $r\leq q$, then the target risk of JointDIP is no greater than that of the optimal classifier built on $\phiinv^0\defn\pmat{\phiinv,  \zeros_{q-r}}$, i.e.,
        \begin{align*}
            \risktar{\hjdip}\leq \min_{g\in\Gset}\risktar{g\circ \phiinv^0},
        \end{align*}
        and when $\phiinv=\phistar$, we have $\risktar{\hjdip}\leq \risktar{\hstar}$.
        \item[(c)] Suppose $\phiinv: \real^p\rightarrow \real^r$ is a conditionally invariant feature mapping such that the matrix
        \begin{align}\label{eqn:nonlinear_cic_condition}
            C_{\phiinv}(a) = \begin{pmatrix}
                1 & 1 & \cdots & 1\\
                m_{1}^{1}(a) & m_{2}^{1}(a) & \cdots & m_L^1(a)\\
                \vdots & \vdots & \ddots & \vdots\\
                m_1^{L-1}(a) & m_2^{L-1}(a) & \cdots & m_L^{L-1}(a)
            \end{pmatrix}\in\real^{L\times L},
        \end{align}
		is full rank for some vector $a \in\real^r$, where $m_j^{\ell}(a)=\EEst{\parenth{a^\top\phiinv(\Xsrcone)}^\ell}{\Ysrcone=j}$. 
        Then the feature mapping of population JointDIP, $\phijdip$, is conditionally invariant across $\Psrcone$ and $\Ptar$.
	\end{enumerate}
\end{theorem}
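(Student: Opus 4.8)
Throughout I would exploit the structure of the general anticausal model: for a \emph{linear} feature $\phi(x)=Ax+b$, the class-conditional law $\Pcondsrcone{\phi(X)}{Y=y}$ equals a fixed ``noise'' law (that of $A\epssrcone$) translated by the conditional mean $\mu_y^{(1)}:=A\fsrcone(y)+b$, and likewise in the target with $\mu_y^{\tagtar}:=A\ftar(y)+b$. Since the noise part is identical in source and target, $\phi$ is conditionally invariant \emph{iff} $\mu_y^{(1)}=\mu_y^{\tagtar}$ for every $y$. Part (a) is then a construction: under \assumpref{uniform_y} the feature marginal is the uniform mixture $\frac1L\sum_y(A\epssrcone+\mu_y^{(1)})$, so I would choose $\fsrcone,\ftar$ whose conditional means are related by a nontrivial permutation $\pi$ (e.g.\ two symmetric components whose class labels are swapped between source and target). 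The two mixtures then coincide, the DIP constraint $\Pmarsrcone{\phidip(X)}=\Pmartar{\phidip(X)}$ is met by a zero-source-risk (hence optimal) $\phidip$ that matches the source label structure, yet $\Pcondsrcone{\phidip(X)}{Y=y}=\Pcondtar{\phidip(X)}{Y=\pi(y)}$, which is exactly \eqref{eqn:DIP_label_flip} with $\pi\neq\ident$.

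For part (b), both $\phijdip$ and $\phiinv=A_{\mathrm{inv}}x+b_{\mathrm{inv}}$ are linear, so conditioned on $Y=y$ the pair $(\phijdip(X),\phiinv(X))$ is the fixed joint noise vector $(A\epssrcone,A_{\mathrm{inv}}\epssrcone)$ shifted by $(\mu_y,\nu_y)$, where $\nu_y:=\EEst{\phiinv(\Xsrcone)}{\Ysrcone=y}$ is the same in source and target by conditional invariance of $\phiinv$. The plan is to write the joint-matching constraint in the Fourier domain. The characteristic functions of $\Pmarsrcone{(\phijdip,\phiinv)(X)}$ and $\Pmartar{(\phijdip,\phiinv)(X)}$ share the common factor $\widehat{\Peps}(A^\top s+A_{\mathrm{inv}}^\top t)$, which cancels on a neighborhood of the origin where it is nonzero, leaving $\sum_y e^{it^\top\nu_y}\big(e^{is^\top\mu_y^{(1)}}-e^{is^\top\mu_y^{\tagtar}}\big)=0$. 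Condition \eqref{eqn:linear_cic_condition} makes the $\nu_y$ distinct, so the maps $t\mapsto e^{it^\top\nu_y}$ are linearly independent on an open set; this forces $e^{is^\top\mu_y^{(1)}}=e^{is^\top\mu_y^{\tagtar}}$ near $s=0$, hence $\mu_y^{(1)}=\mu_y^{\tagtar}$ for every $y$, i.e.\ $\phijdip$ is conditionally invariant.

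The risk comparison in (b) follows from two observations. First, $\phiinv^0=(\phiinv,\zeros_{q-r})$ is feasible for population JointDIP, since its joint distribution with $\phiinv$ is a function of $\phiinv(X)$ alone and is therefore matched across source and target; hence $\risksrcone{\hjdip}\le\min_{g}\risksrcone{g\circ\phiinv^0}$. Second, under \assumpref{uniform_y} there is no label shift, so for any conditionally invariant feature map the joint law of (feature, label) is identical in source and target and the two risks coincide. Applying this to the conditionally invariant $\phijdip$ and to $\phiinv^0$ converts the source-risk inequality into $\risktar{\hjdip}\le\min_g\risktar{g\circ\phiinv^0}$, and specializing to $\phiinv=\phistar$ gives $\risktar{\hjdip}\le\risktar{\hstar}$ by the definition of $\hstar$.

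Part (c) keeps $\phijdip$ linear but allows $\phiinv$ to be nonlinear, so the ``common translate'' structure is lost: conditioned on $Y=y$, $\phiinv(X)=\phiinv(\fsrcone(y)+\epssrcone)$ is a nonlinear function of the same noise that drives $\phijdip(X)=\mu_y+A\epssrcone$. My plan is to replace the ``distinct means $\nu_y$'' of part (b) by moment-based class selectors: full rank of $C_{\phiinv}(a)$ says the conditional moment profiles $(1,m_y^1(a),\ldots,m_y^{L-1}(a))$ are linearly independent across $y$, so for each $y_0$ there is a polynomial $q_{y_0}(z)=p_{y_0}(a^\top z)$ of degree $\le L-1$ with $\EEst{q_{y_0}(\phiinv(\Xsrcone))}{\Ysrcone=y}=\delta_{y,y_0}$ (the same in the target by conditional invariance). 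These make the $L$ conditional laws $\Pcondsrcone{\phiinv(X)}{Y=y}$ pairwise distinct and moment-identifiable, which I would use to show joint matching forces the correct class correspondence, ruling out any label-flipping permutation and giving $\mu_y^{(1)}=\mu_y^{\tagtar}$. The main obstacle is exactly this coupling: because $\phijdip(X)$ and $\phiinv(X)$ are conditionally dependent given $Y$, a selector slice $\EE[\psi(\phijdip(X))\,q_{y_0}(\phiinv(X))]$ does not factor into a single class, and in fact matching the \emph{first} moment of $\phijdip(X)$ against a selector is automatically satisfied and carries no information. The argument must therefore combine the selectors with the \emph{full} characteristic function in the $\phijdip(X)$ direction to disentangle the mean shifts from the nuisance covariance terms, and I expect this disentanglement to be the technical heart of the proof.
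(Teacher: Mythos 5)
Your parts (a) and (b) are correct. For (a), your explicit permuted-means construction delivers exactly what the paper needs; the paper instead first proves a general mixture-matching lemma (deconvolve the common noise via characteristic functions, then equate the mixing measures through a Wasserstein/Hausdorff argument) showing that any DIP-feasible linear $\phi(x)=Ax+b$ must satisfy $Au_j=A\ut_{\pi(j)}$ for some permutation $\pi$, and then notes that the risk-minimizing permutation can be non-identity. For (b) you take a genuinely different route: the paper reapplies the same mixture lemma to the concatenated map $C=(A^\top,B^\top)^\top$ and excludes $\pi\neq\ident$ using the distinct conditional means of $\phiinv$, whereas you cancel the common noise factor $\widehat{\Peps}$ near the origin and invoke linear independence of the characters $t\mapsto e^{it^\top\nu_y}$ (after analytic continuation from the neighborhood, which is legitimate since the identity involves entire functions of $t$). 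Your argument is self-contained and reaches $\mu_y^{(1)}=\mu_y^{\tagtar}$ directly, without the permutation detour; the paper's lemma buys reusability across parts (a) and (b). Your feasibility-plus-no-label-shift risk comparison coincides with the paper's.

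Part (c) is where you have a genuine gap, and your diagnosis of the obstacle is mistaken: the first-moment/selector equation is \emph{not} ``automatically satisfied,'' and no extra disentanglement of mean shifts from covariance terms is needed. The key observation, which the paper uses, is that the conditional joint law of $\parenth{A\epsilon,\ \phiinv(X)}$ given $Y=j$ is common to source and target (this is exactly how the paper writes both characteristic functions with the same factor $b_j(t_1,t_2)=\EEst{e^{it_1^\top A\epssrcone+it_2^\top\phiinv(\Xsrcone)}}{\Ysrcone=j}$). Consequently the cross terms you fear, of the form $\EEst{A\epsilon\, q_{y_0}(\phiinv(X))}{Y=j}$, appear identically on both sides of the joint-matching identity and cancel. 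Concretely, the matching constraint reads $\sum_{j=1}^{L}e^{it_1^\top Au_j}b_j(t_1,t_2)=\sum_{j=1}^{L}e^{it_1^\top A\ut_j}b_j(t_1,t_2)$; differentiating in $t_1$ and setting $t_1=0$ makes the $\partial b_j/\partial t_1$ contributions equal on the two sides (the exponentials are all $1$ there), leaving
\begin{align*}
\sum_{j=1}^{L}\parenth{Au_j-A\ut_j}\,\EEst{e^{it_2^\top\phiinv(\Xsrcone)}}{\Ysrcone=j}=0 \quad \text{for all } t_2\in\real^r .
\end{align*}
Setting $t_2=ka$ and differentiating up to $L-1$ times at $k=0$---equivalently, applying your selector polynomials $q_{y_0}$, which are linear combinations of the moments $m_j^{\ell}(a)$, $\ell=0,\ldots,L-1$---converts this into the linear system with coefficient matrix $C_{\phiinv}(a)$; full rank forces $Au_j=A\ut_j$ for every $j$, i.e., conditional invariance of $\phijdip$. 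So your selector idea was the right ingredient; the missing step was recognizing that the shared factors $b_j$ cancel, which dissolves the ``technical heart'' you anticipated. (Both your corrected argument and the paper's proof rely on this joint conditional invariance of $(\epsilon,\phiinv(X))$, which is the intended reading of the CIC assumption within the anticausal model, e.g., when $\phiinv$ depends only on unperturbed coordinates.)
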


The proof of this result is given in~\appref{SEM_guarantee}. {The proofs for~\thmref{SEM_guarantee}(a) and~\thmref{SEM_guarantee}(b) rely on the matching property of the two mixing distributions, whereas the proof for~\thmref{SEM_guarantee}(c) analyzes distribution matching in the space of characteristic functions}. \thmref{SEM_guarantee}(a) shows that while DIP uses the additional target covariates information, the corresponding representations can fail to satisfy conditional invariance across source and target distributions; and in fact, if the features correspond to the label-flipping features, as illustrated in~\figref{dip_vs_jdip}, it can potentially hurt the target prediction performance.


By contrast, \thmref{SEM_guarantee}(b) shows that when $\phiinv$ is a linear conditionally invariant feature mapping, the features obtained by JointDIP are conditionally invariant across $\Psrcone$ and $\Ptar$, therefore avoiding the label-flipping issue of DIP. The condition~\eqnref{linear_cic_condition} requires that conditional means of $\phiinv$ are different for any pair of labels $i$ and $j$. This condition is a reasonable expectation for any $\phiinv$ exhibiting a good prediction performance as otherwise there is no way for classifiers built on $\phiinv$ to distinguish between labels $i$ and $j$. See~\appref{a_condition_b} for an example of the general anticausal model that satisfies the condition~\eqnref{linear_cic_condition}. Additionally, \thmref{SEM_guarantee}(b) assures that JointDIP cannot be worse than the optimal classifier built upon $\phiinv$. This result is expected because CICs tend to be conservative. For instance, CICs identified via CIP are forced to be conditionally invariant across $M$ many source distributions (and ideally the target distribution), which can potentially eliminate features that are useful for predicting target labels. JointDIP addresses this issue by seeking conditional invariant representations across a single source and target domains to construct a more effective classifier than one solely based on CICs.

Finally, for any conditionally invariant feature mapping $\phiinv$ which may not necessarily be linear, \thmref{SEM_guarantee}(c) guarantees that the features obtained by JointDIP are conditionally invariant across $\Psrcone$ and $\Ptar$, as long as the matrix $C_{\phiinv}(a)$ given in Eq.~\eqnref{nonlinear_cic_condition} is full rank. The matrix $C_{\phiinv}(a)$ is full rank if the CICs under different labels are in a generic position. Comparing with the condition~\eqnref{linear_cic_condition} in the linear case, this condition may be more difficult to verify generally as it requires computation of higher order moments. We provide concrete examples in~\appref{a_condition_c} where this condition is satisfied. In practice, we can either use $\phihcip$ as $\phiinv$, or refer to domain experts for suggestions on reasonable CICs.

\section{Numerical experiments}\label{sec:exp}


In this section, we investigate the target performance of our proposed DA algorithms and compare them with existing methods.\footnote{All code to reproduce experiments is available at \url{https://github.com/KeruWu/Roles_CICs}.} In particular, we demonstrate through our experiments the effectiveness of the importance-weights correction in the presence of both covariate and label distribution shifts, the capability of detecting DIP's failure using estimated CICs, and the superior performance of JointDIP over DIP when label-flipping features are present. We consider DA classification tasks across five datasets: synthetic data generated from linear Structural Causal Models (SCMs), the MNIST data~\citep{lecun1998mnist} under rotation intervention, the CelebA data~\citep{liu2015deep} under color intervention, the Camelyon17 data from WILDS~\citep{koh2021wilds, sagawa2021extending}, and the DomainNet data~\citep{peng2019moment} transformed into a binary classification problem. Except for the Camelyon17 and DomainNet data, the domain shifts in other datasets are synthetically created.

We conduct a thorough comparative analysis where our methods are benchmarked against various DA algorithms. In addition to DIP, CIP, IW-CIP, and JointDIP which have been introduced in previous sections, we additionally explore several variants of these algorithms, including \textit{IW-DIP} and \textit{IW-JointDIP}. IW-DIP is the algorithm that applies CIP for importance weighting to correct label shift prior to DIP, while IW-JointDIP applies this importance weighting step before JointDIP. Moreover, for ERM, DIP, and JointDIP, we consider both their single-source versions (e.g. DIP) and their multi-source versions (e.g. DIP-Pool). Contrary to the approach in Section~\ref{sec:target_risk_bound_iwcip}, where source datasets are split for theoretical analysis, our empirical findings indicate superior performance when using the entire dataset, and thus we do not apply such splits in our experiments. In terms of distributional distances, both squared mean distance and Maximum Mean Discrepancy (MMD) are considered. Lastly, we compare these methods with existing well-known DA algorithms such as Invariant Risk Minimization (IRM)~\citep{arjovsky2019invariant}, V-REx~\citep{krueger2021out}, and groupDRO~\citep{sagawa2019distributionally}.
A detailed description of each DA algorithm, models architectures, and the training setup are presented in Appendix~\ref{app:exp}.

Certain DA algorithms, such as DIP or IW-DIP, rely on a single source domain to learn invariant feature representations and construct the final classifiers (see Appendix~\ref{app:exp}). {In this work, we do not focus on how to choose the best single source domain for these algorithms, but instead simply select the last source domain.} By design, this single source domain is usually similar to the target domain in terms of the considered covariate shifts, such as mean shift (linear SCMs), rotation angle (MNIST), and color balance (CelebA). However, this domain may have label-flipping features compared to the target domain. {We deliberately design these settings to demonstrate that}, under such settings, DIP might induce label flipping, while JointDIP avoids this issue.

\subsection{Linear structural causal models (SCMs)} 
\label{sub:linear_structural_models}

We perform experiments on synthetic datasets generated according to linear SCMs under the general anticausal model defined in \defref{model1}. We first compare the performance of various DA methods, and then show how to use CICs to detect the failure of DIP without access to target labels. 

\subsubsection{Linear SCMs under different interventions}
We consider three different types of domain shifts: mean shift, label shift, and a shift to introduce label-flipping features. In all SCMs, we introduce the mean shift across domains. Depending on the presence or absence of label shift and the shift that introduces label-flipping features, we obtain four combinations and their corresponding SCMs. We use $M'=M+1$ to denote the total number of source and target domains. The last source domain (the $M$-th domain) is always set as the single source domain for DA algorithms which utilize only one source domain (e.g. DIP). The last domain (the $M'$-th domain) serves as the target domain, and we generate 1000 samples per domain. Instead of specifying each $\fsrc(\cdot)$ and $\ftar(\cdot)$ in \defref{model1}, we provide explicit representations of the data generation model for each SCM. For simplicity, we only present the data generation model for source data, and the target data generation model can be obtained by replacing the superscript $(m)$ with $\tagtar$.

\begin{itemize}
    \item SCM \rom{1}: mean shift exists; no CICs; no label shift; no label-flipping features; $M'=4$ and $p =  10$. The data generation model is
    \begin{align*}
        \Ysrc &\sim \text{Bernoulli}(0.5), \\
        \Xsrc &= 0.2Y \cdot \mathds{1}_{10} + 0.25 \cdot\Normal(0, \Ind_{10}) + \Asrc,
    \end{align*}
    where $\mathds{1}_{10}\in\R^{10}$ denotes a vector consisting of ones and $\Ind_{10}\in\R^{10\times 10}$ denotes the identity matrix. For all source domains, the mean shift $\Asrc$, $1\leq m \leq M$, is generated by $0.2\cdot\Normal(0, \Ind_{10})$, while the target domain suffers a large intervention $\Atar\sim 2\cdot\text{sign}\parenth{\Normal(0, \Ind_{10})}$. Note that we do not introduce any CICs in SCM \rom{1}. This allows us to examine the most extreme case of mean shift where all coordinates of $X$ are perturbed. However, in the following three SCMs (SCM \rom{2}, \rom{3}, and \rom{4}), we ensure the presence of CICs and expect that certain DA algorithms can exploit these CICs.

    \item SCM \rom{2}: mean shift exists; CICs exist; label shift exists; no label-flipping features; $M'=12$ and $p =  9$. The data generation model is
    \begin{align*}
        \Ysrc &\sim \text{Bernoulli}(p^{(m)}), \\
        \coordw{\Xsrc}{1:6} &= 0.2(\Ysrc-0.5) \cdot \mathds{1}_6 + 0.25 \cdot\Normal(0, \Ind_6) + \Asrc,\\
        \coordw{\Xsrc}{7:9} &= 0.2(\Ysrc-0.5) \cdot \mathds{1}_3 + 0.25 \cdot\Normal(0, \Ind_3),
    \end{align*}
    where the label distribution $p^{(m)}=0.5$, for $1\leq m\leq M,$ is balanced in source domains but perturbed in target domain with $p^{\tagtar}=0.1$. The mean shift only exists in the first six coordinates of $X$, where $\Asrc$, $1\leq m\leq M$ is generated by $\Normal(0, \Ind_6)$ in the source domains, while the target domain suffers a large intervention $\Atar\sim 2\cdot\text{sign}\parenth{\Normal(0, \Ind_6)}$. The last three coordinates of $X$ remain unperturbed, and they serve as CICs. 

    \item SCM \rom{3}: mean shift exists; CICs exist; no label shift; label-flipping features exist; $M'=12$ and $p =  18$. The data generation model is
    \begin{align*}
        \Ysrc &\sim \text{Bernoulli}(0.5), \\
        \coordw{\Xsrc}{1:6} &= 0.3(\Ysrc-0.5)\cdot \mathds{1}_6 +0.4\cdot\Normal(0, \Ind_6) + \Asrc,\\
        \coordw{\Xsrc}{7:12} &=\begin{cases}
            0.3(0.5-\Ysrc)\cdot \mathds{1}_6 + 0.1 \cdot \Normal(0, \Ind_6), &  $m$ \text{ is odd},\\
            0.3(\Ysrc-0.5)\cdot \mathds{1}_6 + 0.1 \cdot \Normal(0, \Ind_6), &  $m$ \text{ is even},
        \end{cases}\\
        \coordw{\Xsrc}{13:18} &= 0.3(\Ysrc-0.5)\cdot \mathds{1}_6 +0.4 \cdot \Normal(0, \Ind_6),
    \end{align*}
	The first six coordinates of $X$ suffer mean shift, with $\Asrc$, $1\leq m \leq M$ and $\Atar \sim \Normal(0, \Ind_d)$ across all domains. In particular, we make the last source domain and target domain share similar mean shift interventions: $\Asrcm{11}=a_0+a_1$ and $\Asrcm{12}=a_0+a_2$, where $a_0\sim 0.8\cdot\Normal(0,\Ind_6)$ and $a_1,a_2\sim 0.6\cdot\Normal(0,\Ind_6)$. This setting potentially enables DIP-based methods to better rely on the last source domain because of the similarity of $\coord{X}{1:6}$ between this domain and the target domain. However, the $7^{\text{th}}$ to $12^{\text{th}}$ coordinates of $X$ are label-flipping features according to \defref{label-flipping}: in the first six domains $\coord{X}{7:12}$ has positive correlation with $Y$, while the correlation is negative in the remaining six domains, including the target domain. There are no interventions on the last six coordinates of $X$, and they serve as CICs. 

    \item SCM \rom{4}: mean shift exists; CICs exist; label shift exists; label-flipping features exist, $M'=12$ and $p =  18$. The data generation model of $X\mid Y$ is the same as SCM \rom{3},
    but we additionally perturb the marginal distribution of $Y$ across source and target distributions. Specifically, the label distribution $\Ysrc\sim\text{Bernoulli}(0.5)$ is balanced in source domains but perturbed in target domain with $\Ytar\sim\text{Bernoulli}(0.3)$. 
\end{itemize}

\begin{table}[t]
	\centering
	\small
	\begin{tabular}{l | ll | ll}
		\toprule
		{} & \multicolumn{2}{c|}{SCM \rom{1}} & \multicolumn{2}{c}{SCM \rom{2}} \\\hline
		Mean shift & \multicolumn{2}{c|}{Y} & \multicolumn{2}{c}{Y} \\
		CICs & \multicolumn{2}{c|}{N} & \multicolumn{2}{c}{Y} \\
		Label shift & \multicolumn{2}{c|}{N} & \multicolumn{2}{c}{Y} \\
		Label flip & \multicolumn{2}{c|}{N} & \multicolumn{2}{c}{N} \\
		\hline
		DA Algorithm & src\_acc & tar\_acc & src\_acc & tar\_acc \\
		\hline
		Tar & 70.6$\pm$13.8 & \textbf{89.0$\pm$0.7} & 69.4$\pm$3.9 & \textbf{92.9$\pm$1.0} \\
		ERM & 89.6$\pm$1.0 & 56.1$\pm$12.0 & 87.5$\pm$1.0 & 57.4$\pm$33.3 \\
		ERM-Pool & 88.3$\pm$2.3 & 54.4$\pm$10.2 & 78.5$\pm$1.3 & 58.6$\pm$29.3 \\
		DIP & 88.3$\pm$1.0 & \textbf{\textcolor{red}{87.6$\pm$1.5}} & 84.5$\pm$3.1 & 62.0$\pm$2.9 \\
		DIP-Pool & 86.7$\pm$2.8 & \textbf{86.4$\pm$2.2} & 75.8$\pm$0.7 & 60.1$\pm$3.1 \\
		CIP & 87.4$\pm$3.3 & 55.9$\pm$12.0 & 75.2$\pm$0.7 & 75.7$\pm$6.5 \\
		IW-ERM & 54.8$\pm$11.6 & 52.4$\pm$10.4 & 59.3$\pm$9.6 & 54.1$\pm$37.7 \\
		IW-CIP & 53.7$\pm$9.9 & 54.0$\pm$11.8 & 50.3$\pm$0.7 & \textbf{90.4$\pm$0.8} \\
		IW-DIP & 56.2$\pm$12.8 & 54.3$\pm$11.7 & 71.1$\pm$10.8 & \textbf{\textcolor{red}{92.1$\pm$2.7}} \\
		JointDIP & 87.1$\pm$1.3 & \textbf{86.8$\pm$1.9} & 82.8$\pm$2.7 & 70.6$\pm$6.2 \\
		IW-JointDIP & 68.4$\pm$19.1 & 68.0$\pm$19.4 & 51.7$\pm$3.9 & \textbf{90.0$\pm$1.3} \\
		IRM & 87.6$\pm$2.2 & 56.7$\pm$10.4 & 70.9$\pm$2.5 & 71.9$\pm$17.9 \\
		V-REx & 87.3$\pm$2.5 & 55.6$\pm$11.5 & 77.9$\pm$1.2 & 62.9$\pm$25.7 \\
		groupDRO & 88.3$\pm$2.3 & 54.4$\pm$10.0 & 77.8$\pm$1.1 & 64.2$\pm$25.7 \\
		\bottomrule
	\end{tabular}
	\caption{Source and target accuracy in SCM \rom{1} and SCM \rom{2}. For each setting of SCMs, DA algorithms are run on 10 different datasets, each generated using 10 different random seeds. Tar represents the oracle method where the model is trained on the labeled target data. The top four methods are highlighted in bold. The best method (excluding Tar) is colored in red. }
	\label{tab:SEM_results_1_2}
\end{table}

\begin{table}[t]
	\centering
	\small
	\begin{tabular}{l | ll | ll}
		\toprule
		{} & \multicolumn{2}{c|}{SCM \rom{3}} & \multicolumn{2}{c}{SCM \rom{4}} \\\hline
		Mean shift & \multicolumn{2}{c|}{Y} & \multicolumn{2}{c}{Y} \\
		CICs & \multicolumn{2}{c|}{Y} & \multicolumn{2}{c}{Y} \\
		Label shift & \multicolumn{2}{c|}{N} & \multicolumn{2}{c}{Y} \\
		Label flip & \multicolumn{2}{c|}{Y} & \multicolumn{2}{c}{Y} \\
		\hline
		DA Algorithm & src\_acc & tar\_acc & src\_acc & tar\_acc \\
		\hline
		Tar & 70.1$\pm$2.9 & \textbf{97.9$\pm$0.4} & 69.8$\pm$3.0 & \textbf{96.9$\pm$0.7} \\
		ERM & 97.9$\pm$0.3 & 58.9$\pm$5.0 & 98.0$\pm$0.3 & 58.7$\pm$10.7 \\
		ERM-Pool & 83.3$\pm$0.8 & 75.3$\pm$7.7 & 83.3$\pm$0.7 & 78.9$\pm$6.8 \\
		DIP & 93.5$\pm$3.4 & 34.5$\pm$14.9 & 94.4$\pm$2.7 & 35.3$\pm$14.6 \\
		DIP-Pool & 84.1$\pm$0.5 & 82.0$\pm$1.1 & 84.4$\pm$0.5 & 82.3$\pm$3.8 \\
		CIP & 82.2$\pm$0.4 & \textbf{81.8$\pm$1.3} & 82.2$\pm$0.4 & 82.1$\pm$1.2 \\
		IW-ERM & 80.3$\pm$9.1 & 75.1$\pm$8.9 & 77.2$\pm$9.0 & 79.2$\pm$4.8 \\
		IW-CIP & 82.5$\pm$0.4 & 81.2$\pm$4.1 & 81.0$\pm$0.9 & \textbf{83.8$\pm$2.2} \\
		IW-DIP & 88.4$\pm$13.0 & 37.2$\pm$14.9 & 83.7$\pm$8.6 & 64.2$\pm$7.3 \\
		JointDIP & 88.8$\pm$1.3 & \textbf{\textcolor{red}{85.4$\pm$2.1}} & 88.6$\pm$1.9 & 82.8$\pm$1.9 \\
		IW-JointDIP & 87.8$\pm$1.5 & \textbf{82.9$\pm$8.1} & 84.1$\pm$3.4 & \textbf{\textcolor{red}{85.1$\pm$3.7}} \\
		IRM & 80.1$\pm$1.8 & 80.2$\pm$1.8 & 84.2$\pm$0.6 & 83.7$\pm$3.3 \\
		V-REx & 83.8$\pm$0.8 & 80.4$\pm$7.6 & 84.3$\pm$0.6 & 83.8$\pm$3.7 \\
		groupDRO & 84.0$\pm$0.7 & 81.3$\pm$7.4 & 83.9$\pm$0.6 & \textbf{84.3$\pm$3.2} \\
		\bottomrule
	\end{tabular}
	\caption{Source and target accuracy in SCM \rom{3} and SCM \rom{4}. For each setting of SCMs, DA algorithms are run on 10 different datasets, each generated using 10 different random seeds. Tar represents the oracle method where the model is trained on the labeled target data. The top four methods are highlighted in bold. The best method (excluding Tar) is colored in red.}
	\label{tab:SEM_results_3_4}
\end{table}

We use a linear model in all SCM experiments to predict the labels; see Appendix~\ref{app:architec} for details. Table~\ref{tab:SEM_results_1_2} and~\ref{tab:SEM_results_3_4} compare the target performance of different DA algorithms. In SCM~\rom{1} where only mean shift exists and no CICs exist, DIP gives the best performance, with JointDIP showing comparable accuracy. ERM and other DA algorithms which aim to find invariance across all source domains (e.g. CIP, IRM, V-REx) fail to generalize to the target domain due to the lack of CICs and the substantial mean shift in the target domain. In SCM~\rom{2} where mean shift exists and label shift is added as another intervention, IW-DIP achieves the highest accuracy. Both IW-CIP and IW-JointDIP also achieve over 90\% correct predictions. However, IW-ERM which directly applies importance weighting without using CICs completely fails, indicating that identifying conditionally invariant features before applying label correction is necessary in this scenario.

\begin{figure}[t]
    \centering
    \includegraphics[width=0.95\textwidth]{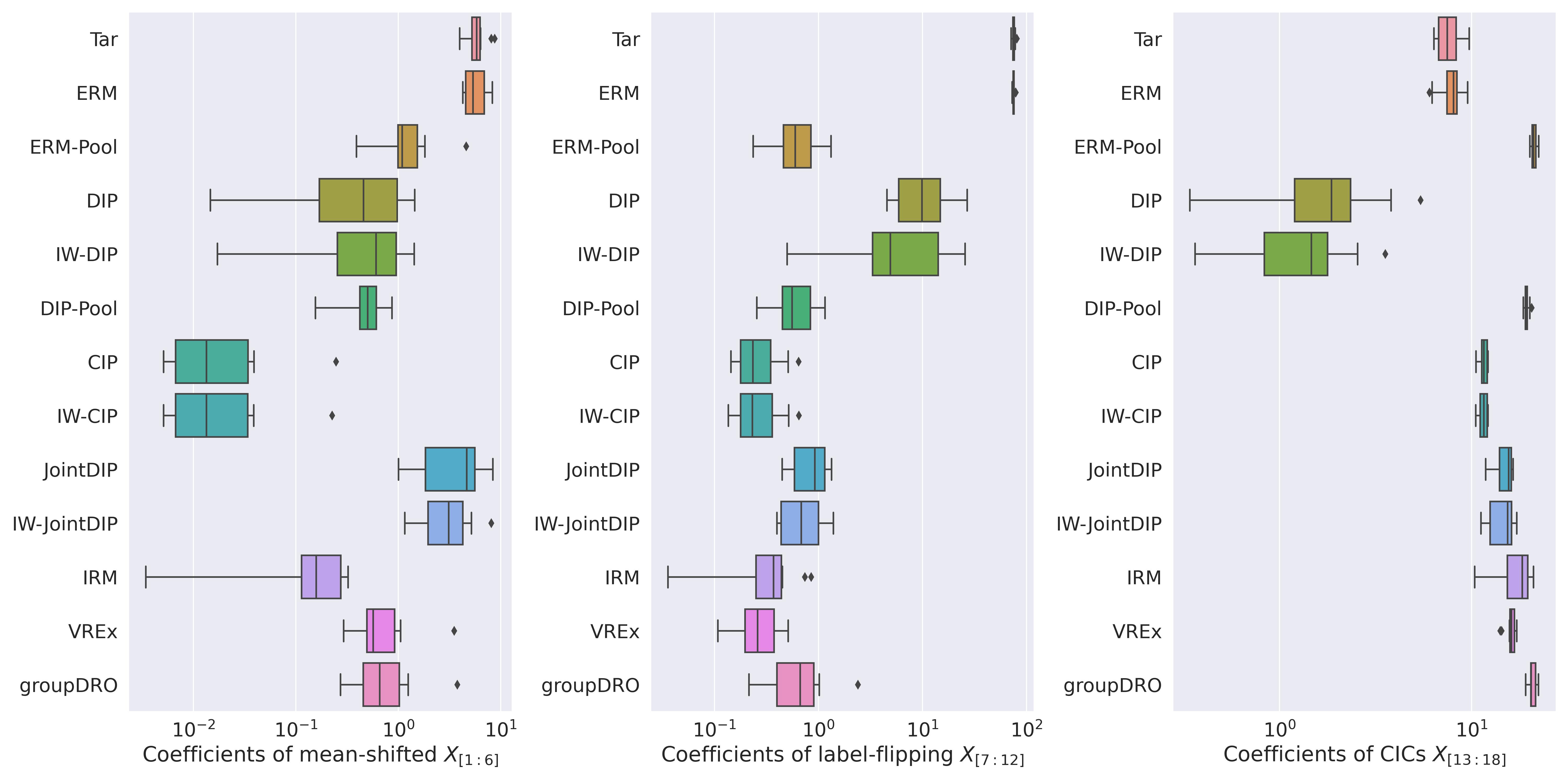}
    \caption{$L^1$ norm of coefficients in SCM \rom{3}. Coefficients are grouped into three categories, depending on how corresponding coordinates are perturbed. Contrary to DIP, JointDIP does not have large coefficients on label-flipping features, demonstrating that JointDIP can avoid label-flipping issue observed in DIP.}
    \label{fig:SCM_coef}
\end{figure}

In SCM \rom{3} where label-flipping features exists, we observe that DIP results in an accuracy lower than a random guess. On the contrary, JointDIP achieves the highest accuracy. To gain a deeper understanding of the classifiers obtained by each algorithm, in Figure~\ref{fig:SCM_coef} we illustrate the $L^1$ norm of the coefficients of three categories of coordinates: mean-shifted $\coord{X}{1:6}$, label-flipping $\coord{X}{7:12}$, and CICs $\coord{X}{13:18}$. Three key observations can be made from the figure: First, DIP shows large coefficients on the label-flipping features $\coord{X}{7:12}$, providing insight for its suboptimal performance. Second, DA methods that seek invariant features across all source domains, such as CIP, IRM, V-REx, show small coefficients on both the mean-shifted features and label-flipping features, but large coefficients on the CICs. This result aligns with their fundamental objective of identifying invariant representations across source domains. Lastly, JointDIP demonstrates small coefficients on label-flipping features, but relatively larger coefficients on mean-shifted coordinates, confirming that JointDIP effectively discards label-flipping features $\coord{X}{7:12}$, retains the invariant features $\coord{X}{13:18}$ identified by CIP while exploiting the useful mean-shifted features $\coord{X}{1:6}$.

Lastly in SCM \rom{4} where mean shift, label shift, and label flipping features exist simultaneously, we find that IW-JointDIP outperforms other methods in the target accuracy. This can be attributed to its three-step approach---first finding CICs, then correcting label shift via CICs, and finally aligning potential features via JointDIP---which effectively addresses the combination of these three shifts. 

{Overall, our experiments across various SCM settings reveal that while traditional DA methods like CIP and DIP may struggle with certain types of shifts, JointDIP excels in handling label-flipping features, and importance-weighted variants of the DA methods perform well in scenarios with label shift.}

\subsubsection{Detecting failure of DIP in linear SCMs}\label{sec:SEM_label_flipping_exp}

Previously in \secref{detect_failure}, we discussed the second role of CICs to detect the failure of DA algorithms without requiring access to target labels. {To demonstrate this second role in practice}, here we apply \thmref{conditional_invariant_proxy_y} and \corref{conditional_invariant_proxy_y_A} to DIP and JointDIP within the context of SCM \rom{3}. We take the finite-sample CIP as $\phiinv$, and compare the accuracy upper bound derived from~\thmref{conditional_invariant_proxy_y} against the actual accuracy of DIP and JointDIP as shown in Figure~\ref{fig:coro1} (a)(b). We vary the DIP penalty parameter in DIP and the JointDIP penalty parameter in JointDIP (both represented by $\lambda$). The CIP penalty parameter utilized in CIP and JointDIP is set to the optimal value found via hyperparameter search. The figure illustrates that while the accuracy upper bound from~\thmref{conditional_invariant_proxy_y} is valid, it exceeds the true accuracy by a wide margin, which is undesirable.\footnote{This issue might result from the relatively low accuracy of CIP (around $80\%$) in SCM \rom{3}. In our subsequent experiments with MNIST, such issue does not appear because CIP has much higher accuracy (over 90\%). See Figure~\ref{fig:coro1} (c)(d) for the comparison.} Consequently, it is difficult to directly apply~\thmref{conditional_invariant_proxy_y} to test the failure of DIP in SCM \rom{3}. 

\begin{figure}[t]
    \centering
    \includegraphics[width=\linewidth]{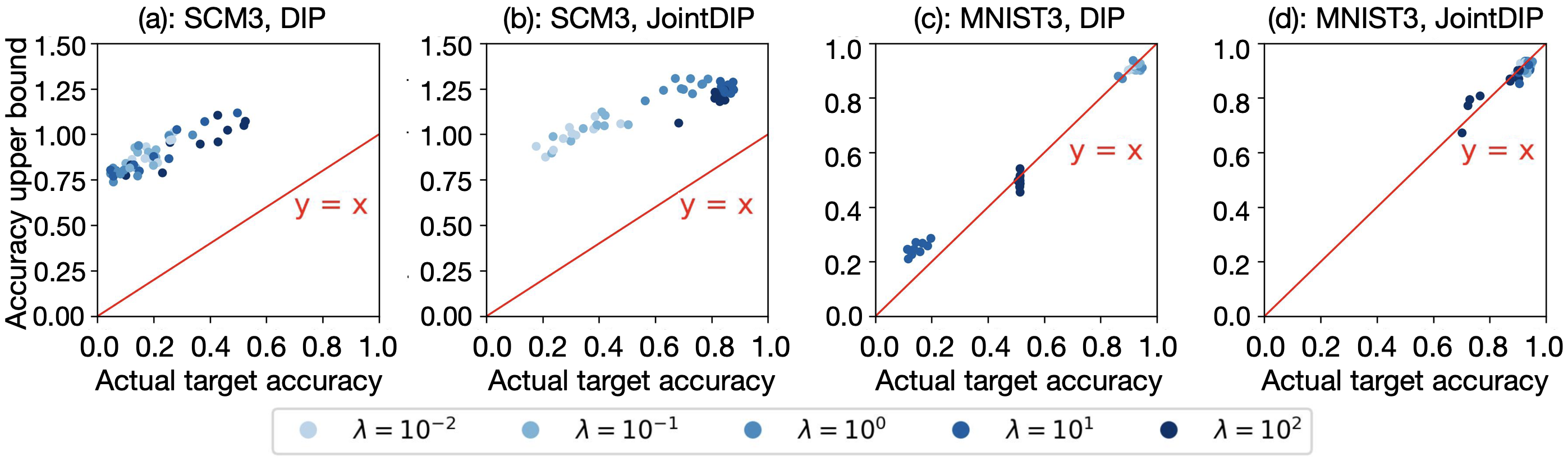}
    \caption{SCM \rom{3} and MNIST \rom{3}: target accuracy upper bound obtained via~\thmref{conditional_invariant_proxy_y} vs. actual target accuracy. For SCM \rom{3}, the accuracy of CIP is low, resulting in an upper bound that does not provide a tight estimation of the true target accuracy for both DIP and JointDIP. Conversely, in MNIST \rom{3}, CIP achieves high accuracy and the accuracy upper bound precisely reflects the true target accuracy. The red line indicates $y=x$. }
    \label{fig:coro1}
\end{figure}

\begin{figure}[t]
    \centering
    \includegraphics[width=\textwidth]{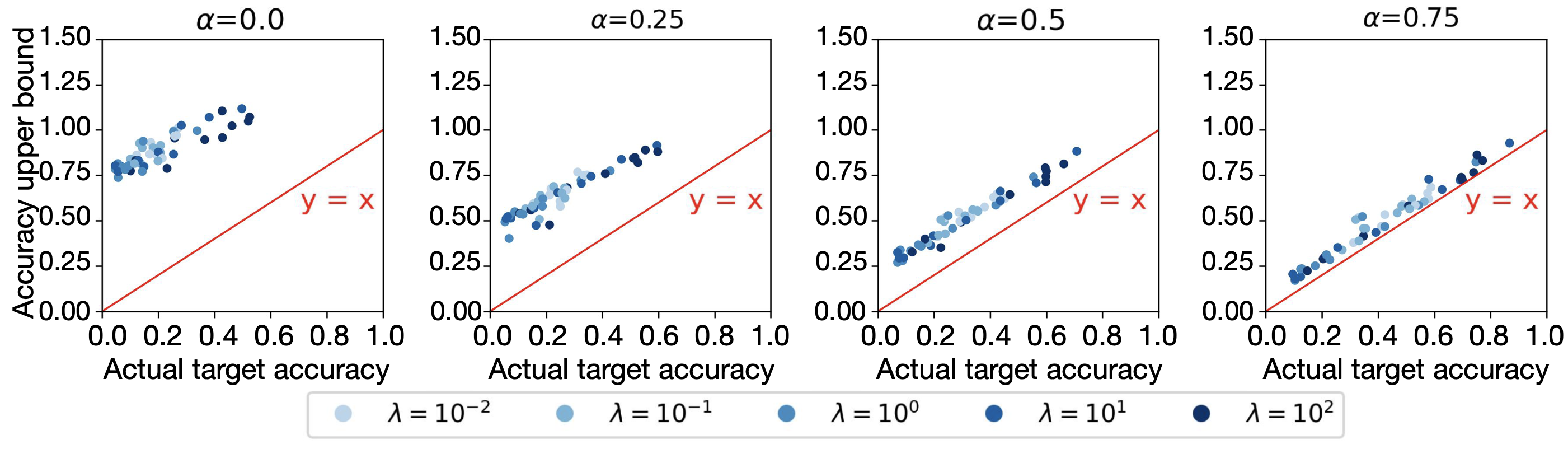}
    \caption{DIP in SCM \rom{3}: target accuracy upper bound obtained via~\corref{conditional_invariant_proxy_y_A} vs. actual target accuracy in region $\Aset$. We define region $\Aset$ as $\Aset=Q_\alpha\defn\{X\in\mathcal{X}\mid \text{CIP predicted probability} \geq q_{\alpha} \}$, where $q_\alpha$ is the threshold such that $(1-\alpha)\times 100\%$ of target covariates have a CIP predicted probability greater than $q_\alpha$. The larger $\alpha$ is, the more confident CIP is about the prediction on the covariate samples in $Q_\alpha$. The red line indicates $y=x$. }
    \label{fig:SCM3_coro2}
\end{figure}

We then turn to apply \corref{conditional_invariant_proxy_y_A} to compare the accuracy upper bound with the actual accuracy of DIP within region $\Aset$, where we define $\Aset$ as the region such that the CIP predicted probability exceeds a threshold $q_\alpha$. This threshold $q_\alpha$ is defined as the $\alpha\times 100$-th percentile of CIP predicted probability for the target covariates, i.e., $(1-\alpha)\times 100\%$ of target covariates have a CIP predicted probability greater than $q_\alpha$. As shown in Figure~\ref{fig:SCM3_coro2}, by increasing $\alpha$, we observe that our upper bound becomes increasingly precise within region $\Aset$. For example, DIP with certain values of $\lambda$ yields an upper bound lower than 0.5, suggesting that DIP flips the label. Although a low accuracy within region $\Aset$ does not necessarily translate to low accuracy across the entire target domain, in practice we can refer to domain experts and ask them whether such suboptimal performance of DIP within region $\Aset$ is reasonable or not, allowing us to avoid the need to acquire and validate all target labels.  

\subsection{MNIST under rotation intervention} 
\label{sec:mnist_rotation_intervention}
In this section, we consider binary classification of the MNIST dataset~\citep{lecun1998mnist} under rotation interventions, where digits 0-4 are categorized as label 0 and digits 5-9 are categorized as label 1. Similar to our experiments in SCMs, we first evaluate the prediction performance of various DA methods, then discuss how to detect potential failure of DIP without requiring access to target labels.

\subsubsection{Rotated MNIST under different interventions}

\begin{figure}[t]
	\centering
	\includegraphics[width=\textwidth]{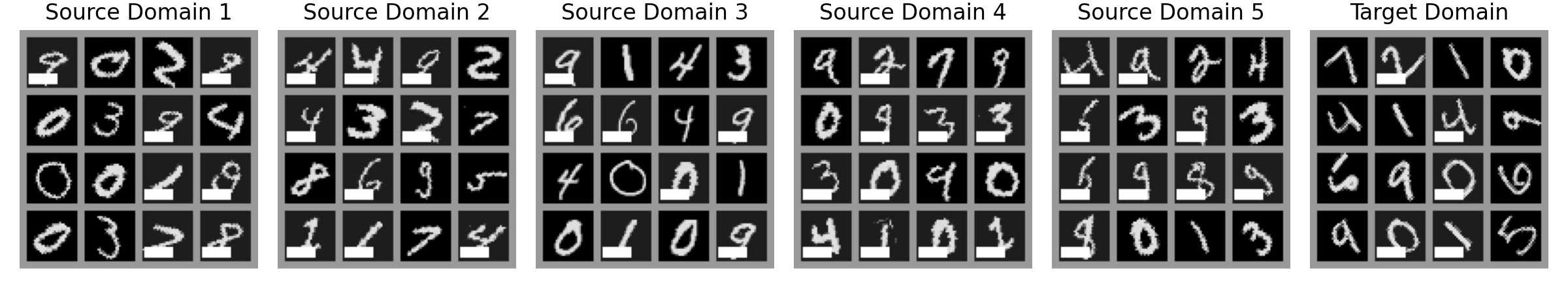}
	\caption{Sample images from MNIST \rom{3}. In this dataset, two different types of interventions have been applied: a rotation shift and the inclusion of label-flipping features. For rotation shift, each image in the $m$-th domain is rotated clockwise by $(m\times 15 - 30)^\circ$. As for the label-flipping features, a white bar of $6\times 16$ pixels is added in the bottom-left corner of the images. The correlation between the label and the white bar patch changes across the domains, with positive correlation in the $1^{\text{st}}, 3^{\text{rd}}$, and $5^{\text{th}}$ domains, and negative correlation in the $2^{\text{nd}}, 4^{\text{th}}$, and $6^{\text{th}}$ (target) domains.}
	\label{fig:MNIST3_example}
\end{figure}

We create five source domains and one target domain (the $6^{\text{th}}$ domain). The $5^{\text{th}}$ domain is fixed as the single source domain for DA algorithms that rely on only one source domain. Each source domain consists of 20\% of the images from the MNIST training set, and the target domain includes all images from the MNIST test set. We introduce three different types of interventions as follows:
\begin{itemize}
    \item Rotation shift: Each image in the $m$-th domain is rotated clockwise by $(m\times 15 - 30)^\circ$.
    \item Label shift: In the target domain, we remove $50\%$ of the images labeled as 0.
    \item Label-flipping features: In the $1^{\text{st}}, 3^{\text{rd}}$, and $5^{\text{th}}$ domains, $90\%$ of the images with label 1 and $10\%$ of the images with label 0 are patched by a white bar of $6\times16$ pixels in the bottom-left corner. This creates a correlation of $0.8$ between the label and the patch. Conversely, in the $2^{\text{nd}}, 4^{\text{th}}$, and $6^{\text{th}}$ domains, we add this white bar to $90\%$ of the images with label 0 and $10\%$ of the images with label 1. This creates a correlation of $-0.8$ between the label and the patch.
\end{itemize}
Similar to the settings of SCMs, we consider four DA problems under different combinations of these interventions. The rotation shift intervention is applied to all MNIST experiments, and four different cases are constructed based on whether the label shift or label-flipping features exist: MNIST \rom{1} includes rotation shift, but does not have label shift and label-flipping features; MNIST \rom{2} includes rotation shift and label shift, but does not have label-flipping features; MNIST \rom{3} includes rotation shift and label-flipping features, but does not have label shift; and MNIST \rom{4} includes all three interventions---rotation shift, label shift, and the presence of label-flipping feature. Image samples under MNIST \rom{3} are displayed in Figure~\ref{fig:MNIST3_example} for illustration.


We train a Convolutional Neural Network (CNN) model similar to LeNet5~\citep{lecun1998gradient} for all MNIST experiments; see Appendix~\ref{app:architec} for more details. Tables~\ref{tab:MNIST_results_1_2} and~\ref{tab:MNIST_results_3_4} summarize the performance of various DA methods on MNIST. Similar to our findings in linear SCM experiments, we conclude that DIP effectively addresses rotation shift in the absence of label-flipping features (MNIST~\rom{1}), while label correction addresses label shift (MNIST~\rom{2} and~\rom{4}), and JointDIP mitigates the issue of learning label-flipping features for DIP (MNIST~\rom{3} and~\rom{4}). Consequently, DA methods which incorporate steps specifically designed to address each shift have the best performance or are among the top-performing methods. 

\begin{table}[t]
	\centering
	\small
	\begin{tabular}{l | ll | ll}
		\toprule
		{} & \multicolumn{2}{c|}{MNIST \rom{1}} & \multicolumn{2}{c}{MNIST \rom{2}} \\\hline
		Rotation shift & \multicolumn{2}{c|}{Y} & \multicolumn{2}{c}{Y} \\
		Label shift & \multicolumn{2}{c|}{N} & \multicolumn{2}{c}{Y} \\
		Label flip & \multicolumn{2}{c|}{N} & \multicolumn{2}{c}{N} \\
		\hline
		DA Algorithm & src\_acc & tar\_acc & src\_acc & tar\_acc \\
		\hline
		Tar & 71.2$\pm$0.9 & \textbf{100.0$\pm$0.0} & 69.3$\pm$1.5 & \textbf{99.7$\pm$0.3} \\
		ERM & 99.9$\pm$0.1 & 96.9$\pm$0.4 & 99.9$\pm$0.2 & \textbf{96.9$\pm$0.5} \\
		ERM-Pool & 99.5$\pm$0.4 & 94.6$\pm$0.8 & 99.5$\pm$0.3 & 94.6$\pm$1.3 \\
		DIP & 100.0$\pm$0.1 & \textbf{\textcolor{red}{97.5$\pm$0.3}} & 99.9$\pm$0.2 & 96.7$\pm$0.5 \\
		DIP-Pool & 99.6$\pm$0.2 & 95.7$\pm$0.4 & 99.6$\pm$0.2 & 94.6$\pm$1.5 \\
		CIP & 99.6$\pm$0.3 & 94.9$\pm$0.9 & 99.6$\pm$0.3 & 94.9$\pm$2.1 \\
		IW-ERM & 99.4$\pm$0.4 & 94.8$\pm$0.8 & 99.4$\pm$0.2 & 94.8$\pm$0.5 \\
		IW-CIP & 99.7$\pm$0.2 & 94.8$\pm$0.8 & 99.2$\pm$0.4 & 94.9$\pm$1.2 \\
		IW-DIP & 99.9$\pm$0.3 & \textbf{97.3$\pm$0.4} & 99.8$\pm$0.2 & \textbf{\textcolor{red}{97.4$\pm$0.2}} \\
		JointDIP & 99.9$\pm$0.1 & \textbf{97.3$\pm$0.3} & 99.9$\pm$0.2 & \textbf{96.9$\pm$0.6} \\
		IW-JointDIP & 99.6$\pm$0.6 & 97.2$\pm$0.5 & 99.5$\pm$0.2 & 96.5$\pm$1.3 \\
		IRM & 99.5$\pm$0.5 & 94.3$\pm$1.4 & 99.6$\pm$0.1 & 94.3$\pm$1.4 \\
		V-REx & 99.4$\pm$0.2 & 94.6$\pm$0.8 & 99.6$\pm$0.1 & 94.5$\pm$0.9 \\
		groupDRO & 99.6$\pm$0.3 & 94.9$\pm$1.1 & 99.5$\pm$0.3 & 95.1$\pm$1.2 \\
		\bottomrule
	\end{tabular}
	\caption{Source and target accuracy in rotated MNIST \rom{1} and \rom{2}. For each setting of the rotated MNIST, DA algorithms are run on 10 different datasets, each generated using 10 different random seeds. Tar represents the oracle method where the model is trained on the labeled target data. The top four methods are highlighted in bold. The best method (excluding Tar) is colored in red.}
	\label{tab:MNIST_results_1_2}
\end{table}

\begin{table}[t]
	\centering
	\small
	\begin{tabular}{l | ll | ll}
		\toprule
		{} & \multicolumn{2}{c|}{MNIST \rom{3}} & \multicolumn{2}{c}{MNIST \rom{4}} \\\hline
		Rotation shift & \multicolumn{2}{c|}{Y} & \multicolumn{2}{c}{Y} \\
		Label shift & \multicolumn{2}{c|}{N} & \multicolumn{2}{c}{Y} \\
		Label flip & \multicolumn{2}{c|}{Y} & \multicolumn{2}{c}{Y} \\
		\hline
		DA Algorithm & src\_acc & tar\_acc & src\_acc & tar\_acc \\
		\hline
		Tar & 67.4$\pm$1.1 & \textbf{100.0$\pm$0.1} & 66.2$\pm$1.9 & \textbf{99.9$\pm$0.1} \\
		ERM & 100.0$\pm$0.1 & 87.8$\pm$1.9 & 100.0$\pm$0.1 & 86.8$\pm$1.8 \\
		ERM-Pool & 99.4$\pm$0.5 & 88.7$\pm$2.0 & 99.6$\pm$0.2 & 89.1$\pm$2.5 \\
		DIP & 100.0$\pm$0.0 & 91.0$\pm$1.9 & 100.0$\pm$0.0 & 90.1$\pm$1.6 \\
		DIP-Pool & 99.4$\pm$0.5 & \textbf{92.8$\pm$1.2} & 99.4$\pm$0.4 & 91.2$\pm$2.1 \\
		CIP & 99.4$\pm$0.4 & 89.1$\pm$1.7 & 99.1$\pm$0.9 & 90.0$\pm$1.3 \\
		IW-ERM & 99.4$\pm$0.5 & 88.0$\pm$1.7 & 98.8$\pm$0.8 & 88.3$\pm$3.3 \\
		IW-CIP & 99.6$\pm$0.2 & 89.8$\pm$1.7 & 99.4$\pm$0.4 & 90.1$\pm$1.0 \\
		IW-DIP & 100.0$\pm$0.0 & \textbf{92.6$\pm$1.1} & 100.0$\pm$0.0 & \textbf{90.9$\pm$1.5} \\
		JointDIP & 99.9$\pm$0.2 & \textbf{\textcolor{red}{93.5$\pm$1.2}} & 99.7$\pm$0.2 & \textbf{91.5$\pm$1.1} \\
		IW-JointDIP & 99.9$\pm$0.2 & \textbf{93.1$\pm$1.1} & 99.6$\pm$0.5 & \textbf{\textcolor{red}{93.4$\pm$0.9}} \\
		IRM & 99.3$\pm$0.7 & 88.4$\pm$1.9 & 99.1$\pm$0.3 & 89.3$\pm$1.4 \\
		V-REx & 99.1$\pm$0.3 & 89.5$\pm$1.4 & 99.4$\pm$0.6 & 89.8$\pm$1.3 \\
		groupDRO & 99.0$\pm$0.2 & 90.7$\pm$1.1 & 99.0$\pm$0.2 & 90.8$\pm$1.9 \\
		\bottomrule
	\end{tabular}
	\caption{Source and target accuracy in rotated MNIST \rom{3} and \rom{4}. For each setting of the rotated MNIST, DA algorithms are run on 10 different datasets, each generated using 10 different random seeds. Tar represents the oracle method where the model is trained on the labeled target data. The top four methods are highlighted in bold. The best method (excluding Tar) is colored in red.}
	\label{tab:MNIST_results_3_4}
\end{table}

\begin{figure}[t]
    \centering
    \includegraphics[width=\linewidth]{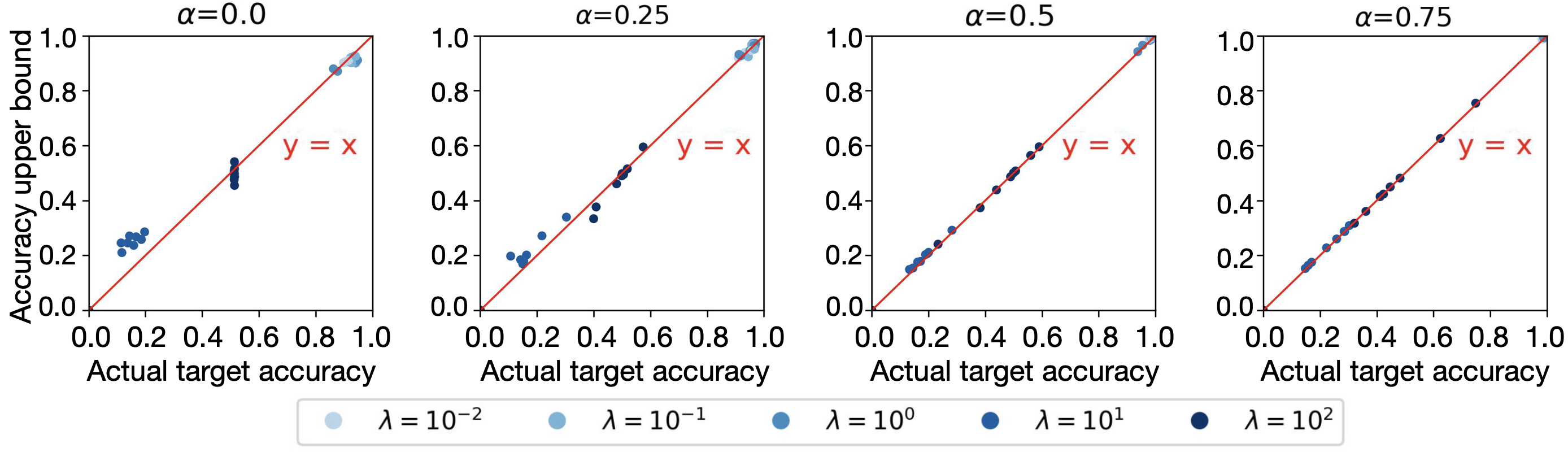}
    \caption{DIP in MNIST \rom{3}: target accuracy upper bound obtained via~\corref{conditional_invariant_proxy_y_A} vs. actual target accuracy in region $\Aset$. We define region $\Aset$ as $\Aset=Q_\alpha\defn\{X\in\mathcal{X}\mid \text{CIP predicted probability} \geq q_{\alpha} \}$, where $q_\alpha$ is the threshold such that $(1-\alpha)\times 100\%$ of target covariates have a CIP predicted probability greater than $q_\alpha$. The larger $\alpha$ is, the more confident CIP is about the prediction on the covariate samples in $Q_\alpha$. The red line indicates $y=x$.}
    \label{fig:MNIST3_coro2}
\end{figure}

\subsubsection{Detecting failure of DIP in rotated MNIST}\label{sec:mnist_label_flipping_exp}

{Analogous to \secref{SEM_label_flipping_exp}, we apply~\thmref{conditional_invariant_proxy_y} and~\corref{conditional_invariant_proxy_y_A} to MNIST~\rom{3} to detect potential failures of DIP, demonstrating the second role of CICs.} Figure~\ref{fig:coro1} (c)(d) illustrate that the theoretical upper bound on target accuracy obtained via \thmref{conditional_invariant_proxy_y} matches closely with the actual target accuracy, and that large values of $\lambda$ such as 10 and 100 may lead DIP to learn label-flipping features, while no such issues appear in JointDIP. Figure~\ref{fig:MNIST3_coro2} presents the result of applying~\corref{conditional_invariant_proxy_y_A} with different choices of region $\Aset=Q_\alpha$ for $\alpha\in\{0, 0.25, 0.5, 0.75\}$. Similar to our findings in Figure~\ref{fig:SCM3_coro2}, we observe that as CIP becomes more confident in region $\Aset$ (i.e., as $\alpha$ grows), the upper bound becomes more accurate. In Figure~\ref{fig:MNIST3_alpha_visualize}, we visualize images from the target domain sampled within the region $Q_\alpha$ for different values of $\alpha$. As $\alpha$ increases, the handwritten digits become more clear and distinguishable. In practice where we suspect label flipping by DIP but cannot access or afford to obtain many target labels, we can investigate target samples where CIP and DIP disagree in the region $Q_\alpha$ with a large value of $\alpha$, and refer to domain experts to evaluate whether these distinguishable images should be correctly classified by DIP. If DIP fails to do so, our procedure based on~\corref{conditional_invariant_proxy_y_A} can serve as a diagnostic tool to accurately estimate DIP's target performance and evaluate its validity in region $\Aset$, especially when~\thmref{conditional_invariant_proxy_y} provides an uninformative upper bound.

\begin{figure}[t]
    \centering
    \includegraphics[width=\textwidth]{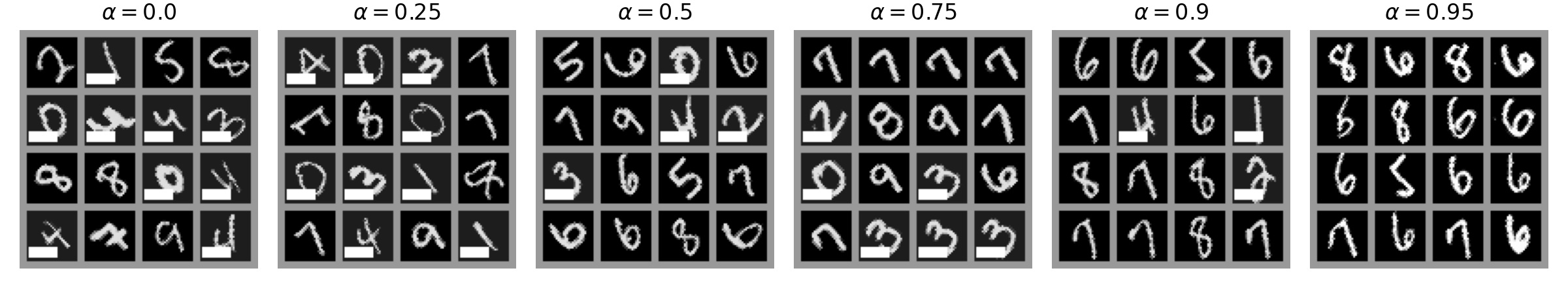}
    \caption{Sample images from MNIST \rom{3} target domain within region $Q_\alpha$ where CIP disagrees with DIP. The images are shown for varying $\alpha\in\{0, 0.25, 0.5, 0.75,0.9,0.95\}$. As $\alpha$ increases, the selected handwritten digits become more distinguishable.}
    \label{fig:MNIST3_alpha_visualize}
\end{figure}

\subsection{CelebA under color shift}

The CelebA dataset introduced by \citep{liu2015deep} is a large-scale face image dataset with multiple face attributes. It includes over 200K celebrity images, each annotated with 40 binary attributes such as Gender, Eyeglasses, and Smiling. In our experiments, we take Smiling as the label and create six different settings for DA problems. The first three CelebA settings are designed to study the color shift together with label-flipping features, similar to the approach taken with SCM \rom{3} and MNIST \rom{3}. The remaining three CelebA settings are developed to examine color shift in conjunction with label shift, similar to the approach taken with SCM \rom{2} and MNIST \rom{2}. For each setting, we construct three source domains and one target domain, each consisting of 20K images randomly sampled from CelebA. The last source domain is used as the single source domain for DA algorithms that rely on a single source domain. To predict the labels, we utilize a CNN model architectures across all problem settings (see Appendix~\ref{app:architec} for more details).

\begin{figure}[t]
    \centering
    \includegraphics[width=\textwidth]{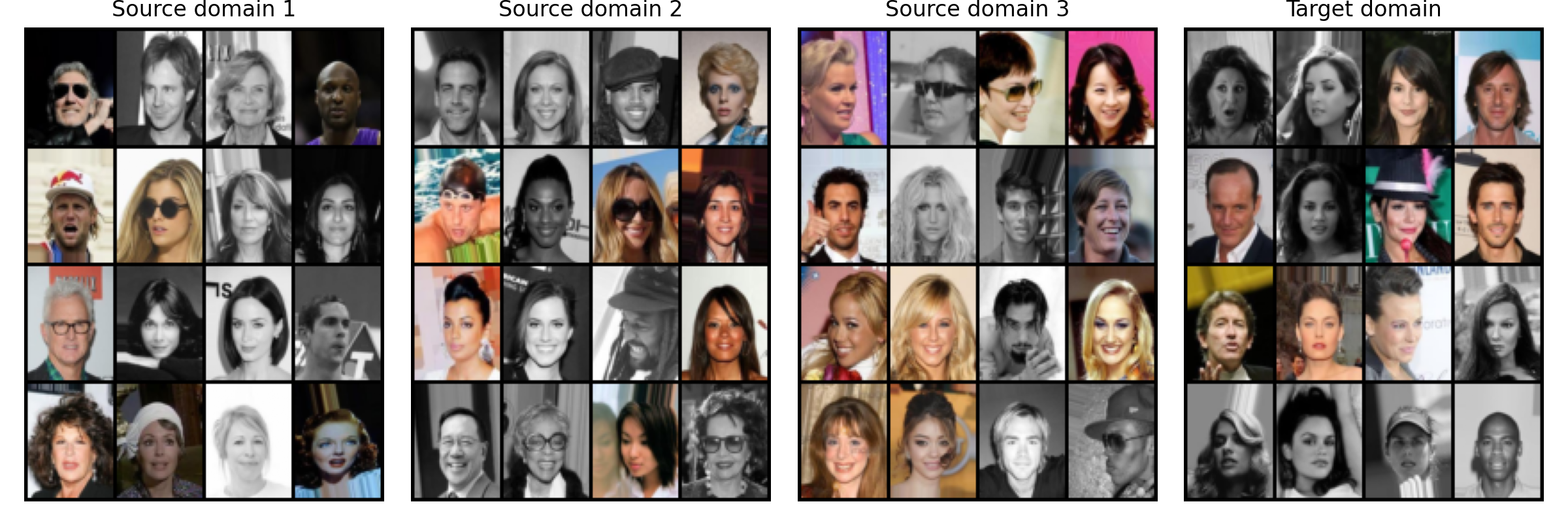}
    \caption{Sample images from CelebA \rom{3}. In this dataset, we synthetically create a feature Color\_Balance, which takes 1 for a full-color image and 0 for a black-and-white image. We also consider the original attribute Mouth\_Slightly\_Open, which takes 1 if the mouth in the image is slightly open, and 0 otherwise. The correlations between these two features and the label Smiling are varied across domains, as shown in Table~\ref{tab:CelebA_settings_mechanism}.}
    \label{fig:CelebA1_example}
\end{figure}

\subsubsection{Color shift with label-flipping features}
The settings of CelebA \rom{1}, CelebA \rom{2}, and CelebA \rom{3} are presented in Table~\ref{tab:CelebA_settings_mechanism}. In these settings, we manipulate two features, Color\_Balance and Mouth\_Slightly\_Open, to generate distribution shift across domains. The Color\_Balance feature is a synthetic attribute, which takes a value of 1 for a full-color image and 0 for a black-and-white image. The Mouth\_Slightly\_Open feature is an original binary attribute annotated in the CelebA dataset, with the value 1 if the mouth in the image is slightly open, and 0 otherwise. We vary these two features so that they are conditionally independent given the label Smiling, and their correlations with this label match the values shown in Table~\ref{tab:CelebA_settings_mechanism}. In all three settings, we align the Color\_Balance feature of the last source domain and the target domain most closely, while making the correlation between Smiling and Mouth\_Slightly\_Open reversed. As a result, the Color\_Balance feature can somewhat generalize to the target domain for label prediction, whereas the Mouth\_Slightly\_Open feature serves as a label-flipping feature between the last source domain and the target domain as defined in~\defref{label-flipping}. Figure~\ref{fig:CelebA1_example} illustrates image examples from each domain in CelebA \rom{3}. An ideal DA algorithm that utilizes the last source domain should be able to partially make use of the Color\_Balance feature, but should avoid the Mouth\_Slightly\_Open feature to predict the labels.

\begin{table}[t]
	\centering
	\small
	\begin{tabular}{l | c c c | c c c}
		\toprule
		& \multicolumn{3}{c|}{$\rho({\text{Smiling}}, {\text{Color\_Balance}})$} & \multicolumn{3}{c}{$\rho({\text{Smiling}}, {\text{Mouth\_Slightly\_Open}})$} \\\hline
		Domain & CelebA \rom{1} & CelebA \rom{2} & CelebA \rom{3} & CelebA \rom{1} & CelebA \rom{2} & CelebA \rom{3} \\\hline
		Source Domain 1 & -0.8 & -0.8 & 0.2 & -0.9 & -0.5 & -0.9 \\
		Source Domain 2 & -0.6 & -0.6 & 0.4 & 0.9 & 0.5 & 0.9 \\
		Source Domain 3 & 0.6 & 0.6 & 0.6 & 0.9 & 0.5 & 0.9 \\
		Target Domain   & 0.8 & 0.8 & 0.8 & -0.9 & -0.5 & -0.9 \\
		\bottomrule
	\end{tabular}
	\caption{Settings of CelebA \rom{1}, \rom{2}, \rom{3} under color shift with label-flipping features. Smiling is the label attribute for prediction. We vary the Color\_Balance and the Mouth\_Slightly\_Open features across domains. The correlation between Color\_Balance and Smiling remains relatively stable between the last source domain and the target domain, while Mouth\_Slightly\_Open serves as the label-flipping feature.}
	\label{tab:CelebA_settings_mechanism}
\end{table}

\begin{table}[t]
	\centering
	\small
	\begin{tabular}{l | c c | c c | c c }
		\toprule
		&  \multicolumn{2}{c|}{CelebA \rom{1}} &  \multicolumn{2}{c|}{CelebA \rom{2}}  &  \multicolumn{2}{c}{CelebA \rom{3}} \\\hline
		DA Algorithm & src\_acc & tar\_acc & src\_acc & tar\_acc & src\_acc & tar\_acc \\\hline
		ERM & 97.0$\pm$0.2 & 77.7$\pm$1.2 & 94.4$\pm$0.4 & \textbf{90.3$\pm$0.9} & 97.0$\pm$0.2 & 77.7$\pm$1.2 \\
		ERM-Pool & 93.3$\pm$0.3 & 77.2$\pm$1.4 & 92.8$\pm$0.4 & 83.6$\pm$1.0 & 93.1$\pm$0.2 & 89.6$\pm$0.4 \\
		DIP & 96.2$\pm$0.4 & \textbf{82.0$\pm$2.2} & 93.1$\pm$0.3 & \textbf{\textcolor{red}{90.6$\pm$1.0}} & 96.2$\pm$0.4 & 82.0$\pm$2.2 \\
		DIP-Pool & 92.3$\pm$0.5 & 80.4$\pm$1.3 & 90.9$\pm$0.4 & 85.5$\pm$0.6 & 91.6$\pm$0.3 & \textbf{90.4$\pm$0.4} \\
		CIP & 89.1$\pm$0.3 & 76.6$\pm$1.4 & 92.7$\pm$0.6 & 83.7$\pm$1.4 & 93.0$\pm$0.4 & \textbf{89.7$\pm$0.7} \\
		JointDIP & 93.0$\pm$0.8 & \textbf{\textcolor{red}{87.3$\pm$0.6}} & 93.1$\pm$0.3 & \textbf{90.4$\pm$1.2} & 91.4$\pm$1.1 & 88.7$\pm$1.0 \\
		IRM & 90.5$\pm$0.7 & 76.3$\pm$3.2 & 92.6$\pm$0.5 & 83.6$\pm$1.4 & 90.3$\pm$0.9 & 88.8$\pm$1.3 \\
		V-REx & 86.9$\pm$1.2 & \textbf{83.3$\pm$1.2} & 81.8$\pm$0.2 & 85.1$\pm$0.7 & 91.8$\pm$0.3 & \textbf{91.1$\pm$0.8} \\
		groupDRO & 91.7$\pm$1.0 & \textbf{84.3$\pm$1.0} & 91.5$\pm$0.5 & \textbf{86.4$\pm$1.5} & 92.4$\pm$0.4 & \textbf{\textcolor{red}{91.6$\pm$0.5}} \\
		\bottomrule
	\end{tabular}
	\caption{Source and target accuracy in CelebA \rom{1}, \rom{2}, \rom{3} with color shift and label-flipping features. For each setting of the CelebA, DA algorithms are run on 10 different datasets, each generated using 10 different random seeds. The top four methods are highlighted in bold. The best method is colored in red.}
	\label{tab:CelebA_mechan}
\end{table}

Given that there is no label shift in these settings, we evaluate DA methods without the importance-weighted label shift correction (IW) step, and present the results in Table~\ref{tab:CelebA_mechan}. We observe that in CelebA \rom{1}, where the correlations between Smiling and both the Color\_Balance and Mouth\_Slightly\_Open features vary significantly across domains, JointDIP achieves the best performance. In CelebA \rom{2} and CelebA \rom{3} where we reduce the variation in correlations between Smiling and one of the features, JointDIP continues to demonstrate comparable accuracy. GroupDRO also shows a good performance, likely due to the substantial shift across domains. {Overall, the results show that JointDIP maintains strong and robust performance in the presence of label-flipping features.}

\subsubsection{Color shift with label shift}

The settings of CelebA \rom{4}, CelebA \rom{5}, and CelebA \rom{6} are shown in Table~\ref{tab:CelebA_settings_label}. In these settings, we manipulate the Color\_Balance feature and additionally introduce interventions on the distribution of Smiling to create label shift between the source and target domains. Specifically, the correlation between Smiling and the Color\_Balance feature remains consistent with Table~\ref{tab:CelebA_settings_mechanism}, but we no longer vary the Mouth\_Slightly\_Open feature, thereby eliminating label-flipping features between the source and target domains. Instead, we change the label distribution across domains: the distribution of Smiling is balanced (50\% each) in all three source domains, but becomes unbalanced (either 75\% and 25\%, or 25\% and 75\%) in the target domain. We evaluate various DA methods, including those that incorporate the importance-weighted label shift correction (IW) step, as shown in Table~\ref{tab:CelebA_label}. Given the absence of label-flipping features in these settings, we do not evaluate JointDIP. We observe that IW-DIP outperforms all other methods. Note that ERM also achieves a high accuracy and outperforms ERM-Pool, indicating that training on all source domains may not be beneficial when substantial distribution shifts, in particular the label shift, exist across domains. {Overall, our experimental results highlight the importance of the label shift correction (IW) step in the presence of substantial label shift.}


\begin{table}[t]
	\centering
	\small
	\begin{tabular}{l | c c c | c c c}
		\toprule
		& \multicolumn{3}{c|}{$\rho({\text{Smiling}}, {\text{Color\_Balance}})$} & \multicolumn{3}{c}{$P(\text{Smiling}=1)$} \\\hline
		Domain & CelebA \rom{4} & CelebA \rom{5} & CelebA \rom{6}   & CelebA \rom{4} & CelebA \rom{5} & CelebA \rom{6}\\\hline
		Src Domain 1 & -0.8 & -0.8 & 0.2    & 0.5 & 0.5 & 0.5\\
		Src Domain 2 & -0.6 & -0.6 & 0.4    & 0.5 & 0.5 & 0.5 \\
		Src Domain 3 & 0.6 & 0.6 & 0.6      & 0.5 & 0.5 & 0.5\\
		Tar Domain  & 0.8 & 0.8 &  0.8     & 0.75 & 0.25 & 0.75\\
		\bottomrule
	\end{tabular}
	\caption{Settings of CelebA \rom{4}, \rom{5}, \rom{6} under color shift and label shift. Smiling is the label attribute for prediction. We vary the Color\_Balance feature and the  Smiling label distribution across domains. The correlation between Smiling and the Color\_Balance feature remains consistent with Table~\ref{tab:CelebA_settings_mechanism}. The distribution of Smiling is balanced (50\% each) in all three source domains, but becomes unbalanced (either 75\% and 25\%, or 25\% and 75\%) in the target domain.}
	\label{tab:CelebA_settings_label}
\end{table}

\begin{table}[t]
	\centering
	\small
	\begin{tabular}{l | c c | c c | c c }
		\toprule
		&  \multicolumn{2}{c|}{CelebA \rom{4}} &  \multicolumn{2}{c|}{CelebA \rom{5}}  &  \multicolumn{2}{c}{CelebA \rom{6}} \\\hline
		DA Algorithm &  src\_acc & tar\_acc &  src\_acc & tar\_acc &  src\_acc & tar\_acc\\\hline
		ERM &  94.4$\pm$0.6 & \textbf{94.1$\pm$0.5}        & 94.4$\pm$0.6 & \textbf{93.9$\pm$0.9}       & 94.5$\pm$0.6 & \textbf{94.2$\pm$0.5}\\
		ERM-Pool & 93.6$\pm$0.2 & 88.7$\pm$0.9              & 93.6$\pm$0.2 & 89.6$\pm$0.9                & 93.8$\pm$0.3 & 93.1$\pm$0.5\\
		DIP & 94.4$\pm$0.4 & \textbf{93.5$\pm$1.2}    & 94.4$\pm$0.5 & \textbf{94.4$\pm$0.9}       & 94.4$\pm$0.4 & 93.4$\pm$1.2\\
		DIP-Pool &  93.7$\pm$0.6 & 88.0$\pm$0.6      & 93.6$\pm$0.5 & 89.5$\pm$1.2                & 93.7$\pm$0.5 & 92.9$\pm$0.9\\
		CIP &  93.4$\pm$0.4 & 88.3$\pm$1.0            & 93.4$\pm$0.4 & 89.7$\pm$1.1                & 93.6$\pm$0.2 & 93.3$\pm$0.5\\
		IW-ERM & 92.2$\pm$0.9 & 90.1$\pm$1.0      & 93.2$\pm$0.4 & 90.9$\pm$0.8       & 92.1$\pm$0.7 & \textbf{94.4$\pm$0.2}\\
		IW-CIP &  91.8$\pm$0.9 & \textbf{90.3$\pm$0.5}                 & 92.7$\pm$1.1 & 90.6$\pm$1.1              & 91.7$\pm$1.3 & \textbf{94.3$\pm$0.5}\\
		IW-DIP &  93.0$\pm$0.7& \textbf{\textcolor{red}{94.8$\pm$0.3}} & 93.5$\pm$0.8 & \textbf{\textcolor{red}{95.1$\pm$0.5}} & 92.4$\pm$1.4 & \textbf{\textcolor{red}{94.7$\pm$0.4}}\\
		IRM & 93.8$\pm$0.3 & 88.7$\pm$0.9                  & 93.5$\pm$0.4 & 89.6$\pm$0.9    & 93.7$\pm$0.2 & 93.1$\pm$0.6\\
		V-REx &  92.5$\pm$0.2 & 89.5$\pm$0.9                & 92.6$\pm$0.3 & 90.2$\pm$1.2    & 93.8$\pm$0.2 & 93.4$\pm$0.6\\
		groupDRO &  93.3$\pm$0.3 & 90.0$\pm$0.8            & 93.3$\pm$0.3 & \textbf{91.1$\pm$1.0}    & 93.7$\pm$0.3 & 93.0$\pm$0.6\\
		\bottomrule
	\end{tabular}
	\caption{Source and target accuracy in CelebA \rom{4}, \rom{5}, \rom{6} under color shift and label shift. For each setting of the CelebA, DA algorithms are run on 10 different datasets, each generated using 10 different random seeds. The top four methods are highlighted in bold. The best method is colored in red.}
	\label{tab:CelebA_label}
\end{table}

\subsection{Camelyon17 from WILDS}\label{sec:camelyon}

{In this subsection, we conduct numerical experiments on Camelyon17 to demonstrate the effectiveness of JointDIP on the real data.} The Camelyon17 image dataset is one of the WILDS benchmarks~\citep{koh2021wilds, sagawa2021extending} for domain adaptation, which comprises whole-slide images (WSIs) of breast cancer metastases in lymph node sections from five hospitals. The task involves predicting whether a $96\times 96$ image patch extracted from a WSI contains any tumor pixels within its central $32\times 32$ area. The labels are obtained through manual pixel-level annotations of each WSI. Among the five different hospitals (domains) from which the WSIs are collected, the first three hospitals are treated as source domains (with 302.4K patches) and the remaining two are used as validation domain (with 34.9K patches) and target domain (with 85K patches), respectively. In addition to these labeled examples, WILDS also provides additional unlabeled WSIs from the same five hospitals and patches extracted from them, including 600K unlabeled patches from the target domain. See~\citep{koh2021wilds, sagawa2021extending} for a detailed description of the Camelyon17 dataset.

Although the 600K unlabeled target patches are collected from the same hospital as the 85K labeled target patches, there is no overlap in the WSIs where these patches are extracted from~\citep{sagawa2021extending}. Furthermore, the label distribution of unlabeled target patches is heavily skewed towards negative, while labeled target patches are sampled in a class-balanced manner~\citep{sagawa2021extending}. Consequently, this could create distribution shifts between the ``unlabeled target data" and the ``labeled target data" in the Camelyon17 dataset. In particular, this can lead DA algorithms, which utilize target covariate information (e.g. DIP or its variants), to identify different latent features depending on the source of target image patches used by these algorithms. In the WILDS leaderboard, methods that leverage data from the target domain utilize these 600K unlabeled patches, rather than treating the 85K labeled target patches as if they were unlabeled. However, if a distribution shift exists between the ``unlabeled target data" and the ``labeled target data", there is no guarantee that our proposed DA algorithms will improve target performance. Therefore, our experiments examine two approaches to utilize target data for DA methods: (1) using the 600K unlabeled target patches, and (2) leveraging the 85K labeled target patches without label access.

We use the standard DenseNet-121 model architecture~\citep{huang2017densely} and follow the training protocol described in~\cite{koh2021wilds, sagawa2021extending}. CIP, DIP-Pool, and JointDIP-Pool are implemented, where CIP only uses labeled source data, and DIP-Pool and JointDIP-Pool leverage target covariates (image patches) from the target domain. We find that the pooled versions of DIP and JointDIP increase reliability of the models by utilizing more samples from multiple source domains, compared to their single source versions. Therefore, we choose to implement the pooled version of DIP and JointDIP in our experiments. Similarly, DANN~\citep{ganin2016domain} and CORAL~\citep{sun2016deep}, which were initially developed for the single source version, have been extended in~\citep{sagawa2021extending} to take advantage of multiple source domains. Following the standard submission guidelines in WILDS~\citep{koh2021wilds, sagawa2021extending}, we do not use any data augmentation in CIP, but allow DIP-Pool and JointDIP-Pool to utilize the same color augmentation as described in \cite{koh2021wilds, sagawa2021extending}. Hyperparameters are chosen via a grid search based on the validation accuracy, where we allow a larger range of grids compared to previous experiments (see \appref{hyper} for further details).

\begin{table}[t]
	\centering
	\small  
	\begin{tabular}{c | c | c c}
		\toprule
		DA Algorithm & Target covariates & val accuracy & test accuracy\tablefootnote{The test accuracy is evaluated on 85K labeled target image patches.} \\\hline
		IRM$^*$ & \multirow{5}{*}{-} & 86.2$\pm$1.4 & 64.2$\pm$8.1 \\
		groupDRO$^*$ & & 85.5$\pm$2.2 & 68.4$\pm$7.3\\
		ERM-Pool$^*$ & & 84.9$\pm$3.1 & 70.3$\pm$6.4 \\
		CIP & & 87.0$\pm$1.2 & 71.1$\pm$12.5 \\
		Fish$^*$ & & 83.9$\pm$1.2 & \textbf{74.7$\pm$7.1}\\\hline
		DANN$^*$ & \multirow{4}{*}{600K unlabeled target patches} & 86.9$\pm$2.2 & 68.4$\pm$9.2\\
		CORAL$^*$ & & 90.4$\pm$0.9 & 77.9$\pm$6.6\\
		DIP-Pool &  & 91.6$\pm$0.8 & 81.2$\pm$8.2 \\
		JointDIP-Pool &  & 91.5$\pm$0.6 & \textbf{82.7$\pm$5.2}\\\hline
		DIP-Pool & \multirow{2}{*}{\shortstack{85K labeled target patches \\(target labels are not used)}} & 91.0$\pm$0.5 & 88.7$\pm$5.7 \\
		JointDIP-Pool &  & 91.7$\pm$0.6 & \textbf{91.9$\pm$3.1}\\
		\bottomrule
	\end{tabular}
	\caption{Accuracy of DA algorithms on Camelyon17 dataset. The first five DA methods---IRM, GroupDRO, ERM-Pool, CIP, and Fish---only use labeled data from multiple source domains, whereas DANN, CORAL, DIP-Pool, and JointDIP-Pool additionally use target image patches from the target domain, but they do not have access to the target labels. The algorithms are run over 10 trials. Results for algorithms marked with $^*$ are obtained from the WILDS benchmark.}
	\label{tab:Camelyon}
\end{table}

The results are presented in~\tabref{Camelyon}, where we compare CIP with other DA algorithms that enforce different invariance across all source domains, such as IRM~\citep{arjovsky2019invariant} and Fish~\citep{shi2021gradient}; and we also compare DIP-Pool and JointDIP-Pool with other domain-invariant algorithms using different distributional distances, like DANN~\citep{ganin2016domain} and CORAL~\citep{sun2016deep}. We observe that CIP has a similar performance compared to ERM-Pool, albeit with a slightly larger variance in test accuracy. When covariates of 600K unlabeled target patches are utilized in the DIP matching penalty, DIP-Pool results in an accuracy exceeding $80\%$. Remarkably, JointDIP-Pool further improves this accuracy by $1.5\%$ while substantially reducing the variance at the same time. In scenarios where covariates of 85K labeled target patches are used in the DIP matching penalty, the accuracy of both methods increases significantly, with JointDIP-Pool improving DIP-Pool by over $3\%$. This result indicates that jointly matching the covariates with CICs between source and target can help DIP-based algorithms to identify the appropriate features to match across domains. In addition, when comparing the accuracy of two different approaches of using target covariates in the DIP matching penalty, our results demonstrate that distribution shifts exist between the unlabeled and labeled target data in Camelyon17, and leveraging the true target covariates from the 85K labeled target patches proves more effective than using the target covariates from the 600K unlabeled target patches.


\subsection{DomainNet with binary classification}

In this subsection, we conduct numerical experiments on the DomainNet dataset. The DomainNet dataset, introduced by \citep{peng2019moment}, is a large-scale, multi-domain dataset which contains approximately 600k images collected from 6 different domains: Clipart (illustrative images, such as cartoons), Infograph (infographics containing both graphical and textual information), Painting (artistic depictions in the form of paintings), Quickdraw (simple, hand-drawn images from the Quick Draw game dataset), Real (photographic, real-world images), and Sketch (hand-drawn sketches). Each image is labeled with one of 345 object categories, which are further grouped into 24 divisions (see Tables 10, 11, and 12 of \cite{peng2019moment}). In our experiment, to make it easy to interpret the target performance, we transformed the original multi-class prediction problem into a binary classification task. Specifically, for each domain, we selected images with categories corresponding to the ``Furniture" and ``Mammal" groups, consisting of 35 and 25 categories, respectively, and labeled them as 0 and 1. Merging these categories makes the classification problem simpler and increases the number of images available for each label, so that target performance variation due to small sample size can be reduced. To ensure equal label proportions within each domain, we randomly sub-sampled images to achieve a balanced dataset with 50\% labeled as 0 and 50\% labeled as 1. This eliminates label shift across all domains and the only source of distribution shift arises from variations in visual styles, contexts, and object representations in the images. As a result, each domain contains 6,844, 6,952, 9,522, 25,000, 30,780, and 10,212 images, respectively. Each domain is used as a target domain in turn, with the remaining domains serving as source domains. This setup results in $6$ different cases for evaluating the performance of DA methods.

Our DA algorithms are built on top of a ResNet-101 model architecture~\citep{he2016deep}, pre-trained on the ImageNet dataset. Since there is no label shift in this task, we implement DA methods without importance weight estimation-based label shift correction. Similar to our approach in~\secref{camelyon}, we find that the pooled versions of DIP and JointDIP provide more reliable results by leveraging more samples from multiple source domains compared to their single-source versions. Therefore, we implement CIP, DIP-Pool, and JointDIP-Pool using MMD as the distributional distance, along with other DA methods such as IRM, V-REx, and groupDRO. Hyperparameters are chosen via a grid search, as detailed in~\appref{hyper}.

\begin{table}[t]
	\centering
	\small
	\begin{tabular}{l | c c c c c c}
		\toprule
		&  Clipart & Infograph & Painting & Quickdraw & Real & Sketch \\\hline
		DA Algorithm & tar\_acc & tar\_acc & tar\_acc & tar\_acc & tar\_acc & tar\_acc \\\hline
		ERM-Pool & \textbf{89.9$\pm$0.7} & 64.7$\pm$1.8 & \textbf{81.1$\pm$1.3} & 69.6$\pm$4.2 & 90.4$\pm$2.8 & \textbf{87.3$\pm$0.5} \\
		DIP-Pool & 89.7$\pm$0.7 & \textbf{\textcolor{red}{75.9$\pm$9.6}} & \textbf{80.4$\pm$2.4} & \textbf{71.1$\pm$4.5} & 90.9$\pm$1.2 & \textbf{\textcolor{red}{87.4$\pm$0.6}} \\
		CIP & 88.9$\pm$1.7 & \textbf{64.9$\pm$1.0} & 79.5$\pm$1.3 & \textbf{70.5$\pm$2.7} & 91.0$\pm$1.0 & 86.7$\pm$1.2 \\
		JointDIP-Pool & \textbf{90.0$\pm$0.9} & \textbf{66.0$\pm$1.6} & \textbf{\textcolor{red}{86.5$\pm$5.2}} & \textbf{\textcolor{red}{74.1$\pm$2.5}} & \textbf{\textcolor{red}{91.8$\pm$0.9}} & \textbf{87.2$\pm$0.9} \\
		IRM & \textbf{90.0$\pm$0.8} & 64.1$\pm$2.4 & \textbf{81.0$\pm$1.7} & 69.0$\pm$4.9 & \textbf{91.8$\pm$1.0} & \textbf{87.2$\pm$0.7} \\
		V-REx & \textbf{\textcolor{red}{90.0$\pm$0.5}} & \textbf{65.7$\pm$1.3} & 79.8$\pm$2.3 & \textbf{69.9$\pm$4.0} & \textbf{91.5$\pm$0.5} & 86.8$\pm$0.9 \\
		groupDRO & 86.8$\pm$0.4 & 64.2$\pm$2.2 & 77.9$\pm$0.8 & 68.0$\pm$6.1 & \textbf{91.6$\pm$0.9} & 84.1$\pm$0.8 \\
		\bottomrule
	\end{tabular}
	\caption{Target accuracies in DomainNet for predicting the labels ``Furniture'' and ``Mammal.'' Each column corresponds to a specific target domain: Clipart, Infograph, Painting, Quickdraw, Real, and Sketch, where the remaining domains are used as source domains. The algorithms are run over 10 trials with different random seeds. The top four methods are highlighted in bold. The best method is colored in red.}
	\label{tab:DomainNet}
\end{table}

The results are presented in~\tabref{DomainNet}. Overall, we observe that JointDIP-Pool consistently achieves the best or comparable performance to the top method across all $6$ target domain settings. For certain target domains such as Painting and Quickdraw, it significantly outperforms all other DA methods. This suggests that JointDIP-Pool is more effective at learning invariant features across source and target domains, enabling it to handle diverse visual characteristics when classifying “Furniture” and “Mammal.” DIP-Pool also achieves high accuracy, with notably better target performance compared to other DA methods when Infograph is used as the target domain. However, it exhibits high standard deviation in this setting, potentially indicating that DIP-Pool may have learned label-flipping features during some trials. This aligns with observations in~\cite{peng2019moment}, where generalization to Infograph was found to be more challenging, resulting in lower accuracies compared to other settings. V-REx also performs well in certain cases, likely due to its conservative approach by minimizing worst-case performance in the presence of substantial domain shifts. Finally, for challenging target domains such as Infograph and Quickdraw, where average target accuracies are generally lower, DIP-Pool, CIP, JointDIP-Pool, and V-REx demonstrate consistently strong performance, highlighting that these methods can better transfer knowledge between source and target domains in these difficult scenarios.





\section{Discussion}\label{sec:discussion}

In this paper, we highlight three prominent roles of CICs in DA. First, we propose the IW-CIP algorithm based on CICs, which can solve DA problems beyond simple covariate shift and label shift with theoretical guarantees. Specifically, we establish an explicit target risk bound for IW-CIP and further provide fine-grained analysis for each term in the bound. Second, we demonstrate the advantage of using CICs to measure the performance of other DA algorithms. For example, a conditionally invariant classifier built upon CICs can be applied as an {approximate} proxy for target labels to refute DA algorithms with poor target performance. Finally, to solve the label-flipping issue of DIP, we introduce the JointDIP algorithm which jointly matches the distribution of DIP features and CICs. Both our theoretical results and numerical experiments support the benefits of using CICs in DA.

While CICs across multiple source domains can be beneficial in DA, in practice it may not always be advantageous to use all available source domains for identifying CICs. Particularly, if not all source domains are related to the target domain, or if there aren't any features invariant across all source domains, it would be more efficient to select a subset of source domains for learning CICs. However, choosing the source domains that are related to the target domain or share common features is difficult without access to target labels. In such settings, one feasible approach is to learn invariant features across possible combinations of source domains and refer to domain experts to investigate those features, which would help determine the selection of source domains. We leave the development of a framework for best source domain selection for future research.


There are other promising directions to pursue following our work. While we provide bounds in~\secref{detect_failure} to detect failure of DA algorithms, these bounds may not be tight and potentially can be improved based on other structural assumptions on the data generation model. Moreover, although in this paper we mainly focus on combing CICs with other DA algorithms, it is possible that other forms of invariance can also be applied to improve DA algorithms. For instance, can we use features obtained by IRM~\citep{arjovsky2019invariant}, V-REx~\citep{krueger2021out}, or other domain generalization algorithms to aid and enhance existing DA methods such as DIP with target risk guarantees? Finally, this paper mainly considers the unsupervised domain adaptation scenario where target labels are unavailable, however, datasets such as Camelyon17 from WILDS~\citep{koh2021wilds} may have a small portion of target labels available. Developing DA algorithms with target risk guarantees in such settings is therefore of practical interest.

\acks{Keru Wu and Yuansi Chen are partly supported by the NSF CAREER Award DMS-2237322 and a Sloan Research Fellowship. Wooseok Ha was supported from NSF TRIPODS Grant 1740855, DMS-1613002, and 2015341. We also gratefully acknowledge partial support from NSF grants DMS-2209975 and DMS-2015341,  and  NSF grant 2023505 on Collaborative Research: Foundations of Data Science Institute (FODSI).}

\clearpage
\newpage
\appendix

\section{Illustrative examples of general anticausal model}\label{app:anticausal_example}

In this section, we provide several examples for the general anticausal model and demonstrate how the conditions in~\thmref{SEM_guarantee}(b) and~\thmref{SEM_guarantee}(c) are satisfied under these models.

\subsection{Example with CICs and label-flipping features}\label{app:a_simple_example}
We present an example of the general anticausal model, as defined in~\defref{model1}, which incorporates both CICs and label-flipping features.

\paragraph{Example 1} Let $Y\in\Yset=\{1,2\}$ be a binary random variable and balanced in both domains, i.e., $\probsrc{1}=\probsrc{2}=\probtar{1}=\probtar{2}=1/2$. Let $X\in\real^3$ be a three-dimensional covariate vector with mechanism function $\fsrcone$ and $\ftar$ as given by
\begin{align*}
    \fsrcone(1)  &= \pmat{-1\\-1\\-1}, \; \fsrcone(2)  = \pmat{1\\1\\1};\\
    \ftar(1)  &= \pmat{-1\\0\\1}, \; \ftar(2)  = \pmat{1\\2\\-1},
\end{align*}
and where the noise terms $\epssrcone$ and $\epstar$ are drawn from a Gaussian distribution $\Peps = \Normal(0, \Ind_3)$. Examining each coordinate, we can see that the first coordinate $\coord{X}{1}$ maintains its conditional invariance across source and target, and thus it is a CIC. The second coordinate $\coord{X}{2}$ is perturbed, but the correlations $\rho(\coordw{\Xsrcone}{2}, \Ysrcone)=\rho(\coordw{\Xtar}{2}, \Ytar)=\sqrt{2}/2$ remian unchanged. The third coordinate $\coord{X}{3}$ is also perturbed, and by~\defref{label-flipping}, it is a label-flipping feature, as $\rho(\coordw{\Xsrcone}{3}, \Ysrcone)= -\rho(\coordw{\Xtar}{3}, \Ytar)=\sqrt{2}/2$.

\subsection{Demonstration of condition in~\thmref{SEM_guarantee}(b)}\label{app:a_condition_b}
The condition~\eqnref{linear_cic_condition} in~\thmref{SEM_guarantee}(b) requires that for the linear conditionally invariant feature mapping $\phiinv$, CICs must have different conditional means for different labels. In other words, the first-moment of the CICs should suffice to distinguish between labels. We show that under the anticausal model given in the example above, this condition is indeed satisfied.
\paragraph{Example 1 Cont.}
In Example 1, one possible linear conditionally invariant feature mapping is $\phiinv(x) = \coord{x}{1}$, i.e., it projects to the first coordinate of the covariates. Simple algebra shows that
\begin{align*}
    \EEst{\phiinv(\Xsrcone)}{\Ysrcone=1} = -1, \text{ and } \EEst{\phiinv(\Xsrcone)}{\Ysrcone=2=1},
\end{align*}
which confirms the condition~\eqnref{linear_cic_condition}.

\subsection{Demonstration of condition in~\thmref{SEM_guarantee}(c)}\label{app:a_condition_c}
While the condition~\eqnref{linear_cic_condition} for linear $\phiinv$ is relatively straightforward, verifying the full-rank condition of matrix $C_{\phiinv}(a)$ for general $\phiinv$ in~\thmref{SEM_guarantee}(c) can be more challenging due to the computation of higher-moments. Here we provide several examples where a nonlinear conditionally invariant feature mapping $\phiinv$ meets the full rank condition.

\paragraph{Example 1 Cont.}
Consider the general anticausal model introduced in Example 1 ($L=2$) and let
\begin{align}
    \begin{split}\label{eqn:nonlinear_phiinv}
        \phiinv^k(x) &= \coord{x}{1}^k, \ \ k = 0,1,2,\ldots, \text{ and }\\
        \phiinv^e(x) &= \exp(\coord{x}{1}).
    \end{split}
\end{align}
Since $\coord{X}{1}$ is a CIC, we know that $\phiinv^k(\coord{X}{1})$ and $\phiinv^e(\coord{X}{1})$ are also CICs. Take $a=1$ in the definition of the matrix $C_{\phiinv}(a)$. From the standard calculation of moments for Gaussian random variables, we have
\begin{align*}
    C_{\phiinv^k}(a)&=\pmat{1 & 1\\
                            \Ep{u\sim\Normal(0, 1)}{(u-1)^k} & \Ep{u\sim\Normal(0, 1)}{(u+1)^k}}, \;\; C_{\phiinv^e}(a)=\pmat{1 & 1\\
    e^{-\frac12}& e^{\frac32}}.
\end{align*}
We conclude that $C_{\phiinv^k}(a)$ is full rank if and only if $k$ is odd, and $C_{\phiinv^e}(a)$ is a full rank matrix. Therefore, $\phiinv^{2k+1}(x)= \coord{x}{1}^{2k+1} (k=0,1,2,\ldots)$ and $\phiinv^e(x) =  \exp(\coord{x}{1})$ satisfy the full rank condition.

\paragraph{Example 2}
Consider the general anticausal model from~\assumpref{uniform_y} where $Y\in\Yset=\{1,2,3\}$ with $L=3$. Let $X\in\real^2$ be a two-dimensional covariate vector with mechanism functions $\fsrcone$ and $\ftar$ defined as follows:
\begin{align*}
    \fsrcone(1)  &= \pmat{1\\-1}, \; \fsrcone(2)  = \pmat{2\\0}, \ \fsrcone(3) = \pmat{3\\1}; \\
    \ftar(1)  &= \pmat{1\\1}, \; \ftar(2)  = \pmat{2\\0}, \ \ftar(3) = \pmat{3\\-1}.
\end{align*}
That is, the first coordinate $\coord{X}{1}$ is a CIC, while the second coordinate $\coord{X}{2}$ is perturbed. Assume that the noise distribution $\Peps=\Normal(0, \Ind_2)$ follows a Gaussian distribution. Consider the feature mappings $\phiinv^2$ and $\phiinv^e$ from Eq.~\eqnref{nonlinear_phiinv} as the conditionally invariant feature mappings, and take $a=1$. Basic algebra shows that
\begin{align*}
    C_{\phiinv^2}(a)&=\pmat{1 & 1 & 1\\
    \Ep{u\sim\Normal(0, 1)}{(u+1)^2} & \Ep{u\sim\Normal(0, 1)}{(u+2)^2} & \Ep{u\sim\Normal(0, 1)}{(u+3)^2}\\
    \Ep{u\sim\Normal(0, 1)}{(u+1)^4} & \Ep{u\sim\Normal(0, 1)}{(u+2)^4} & \Ep{u\sim\Normal(0, 1)}{(u+3)^4}
    }\\&=\pmat{1 & 1 & 1\\
    2 & 5 & 10\\
    10 & 43 & 138},
\end{align*}
and
\begin{align*}
    C_{\phiinv^e}(a)&=\pmat{1 & 1 & 1\\
    \Ep{u\sim\Normal(0, 1)}{e^{u+1}} & \Ep{u\sim\Normal(0, 1)}{e^{u+2}} & \Ep{u\sim\Normal(0, 1)}{e^{u+3}}\\
    \Ep{u\sim\Normal(0, 1)}{e^{2(u+1)}} & \Ep{u\sim\Normal(0, 1)}{e^{2(u+2)}} & \Ep{u\sim\Normal(0, 1)}{e^{2(u+3)}}
    }\\&=\pmat{1 & 1 & 1\\
    e^{\frac32} & e^{\frac52} & e^{\frac72}\\
    e^{4} & e^6 & e^8
    },
\end{align*}
both of which are full-rank, and therefore satisfying the full rank condition in~\thmref{SEM_guarantee}(c).

\paragraph{Example 3}
Consider the general anticausal model from~\assumpref{uniform_y} with arbitrary number of classes $L\geq 2$. Let
\begin{align*}
    \fsrcm{m}(y) = \begin{pmatrix}
                    e_y\\
                    a_y^{(m)}
                \end{pmatrix},
    \ftar(y) = \begin{pmatrix}
                    e_y\\
                    a_y^{\tagtar}
    \end{pmatrix},
\end{align*}
where $e_i\in\real^L$ denotes the zero vector with only the $i$-th element being $1$, and $a_y^{(m)}, a_y^{\tagtar}\in\real^{p-L}$ $(p>L)$ are arbitrary vectors. That is, the first $L$ covariates are CICs while the last $p-L$ coordinates can be arbitrarily perturbed. Let
\begin{align*}
    \phiinv^k(X) = \begin{pmatrix}\coord{X}{1}^k,
                           \coord{X}{2}^k,
                           \cdots,
                           \coord{X}{L}^k
    \end{pmatrix}^\top, \ \ k=1,2,\ldots.
\end{align*}
For a fixed vector $a\in\real^L$, verifying $C_{\phiinv^k}(a)$ is full rank can be technical depending on the distribution $\Peps$. Instead, we show \textit{an illustrative explanation} rather than a rigorous proof. First, let's consider $\Peps=\delta_0(\cdot)$, the Dirac distribution centered at 0. Then we can compute
\begin{align*}
    C_{\phiinv^k}(a) = \begin{pmatrix}
               1 & 1 & \cdots & 1\\
               \coord{a}{1} & \coord{a}{2} & \cdots & \coord{a}{L}\\
               \vdots & \vdots & \ddots & \vdots\\
               \coord{a}{1}^{L-1} & \coord{a}{2}^{L-1} & \cdots & \coord{a}{L}^{L-1}
    \end{pmatrix},
\end{align*}
which is a Vandermonde matrix and is full rank as long as we take $\coord{a}{i}\neq \coord{a}{j}, i\neq j$. Next we consider a case where $\Peps$ is a distribution concentrated around 0 (e.g. a Gaussian distribution with very small variance). In this case, the matrix will be slightly perturbed but is still full rank by the continuity of high-order moments and matrix determinants.

\section{Proof of~\secref{CIP} and additional results on IW-CIP}\label{app:CIP_proof}
In this section we prove our main theorems and propositions given in~\secref{CIP}, as well as additional results given in~\propref{iwcip_phistar_squared_distance} and~\lemref{linear_sem_example}. \propref{iwcip_phistar_squared_distance} provides the finite-sample bound for the IW-CIP penalty when it is defined by the squared mean distance, and \lemref{linear_sem_example} proves the number of required source domains to validate the conditions in~\propref{linear_sem_example} when the perturbations are randomly generated.

\subsection{Proof of \thmref{IW-CIP_gen_bound}}\label{app:IW-CIP_gen_hound}

    Let $\w = (\wone,\ldots,\wM)$ where $\wm$ is the true importance weight for the $m$-th source domain. Writing $p_y=\prob{\Ytar=y}$, for any classifier $h=g\circ \phi$, we have
    \begin{align*}
        &\risktar{h} - \riskbar{h}{\w} \\
        &= {\frac 1M} \sum_{m=1}^{M}\left(\EE{\ones{h(\Xtar)\neq\Ytar}} - \EE{\wmcoord{\Ysrc}\cdot\ones{h(\Xsrc)\neq\Ysrc}} \right) \\
        &= {\frac 1M} \sum_{m=1}^{M}\sum_{y=1}^{L}\left(\probc{h(\Xtar)\neq y}{\Ytar=y} - \probc{h(\Xsrc)\neq y}{\Ysrc=y} \right)p_y\\
        &= {\frac 1M} \sum_{m=1}^{M}\sum_{y=1}^{L}\left( \probc{h(\Xsrc)= y}{\Ysrc=y} - \probc{h(\Xtar)= y}{\Ytar=y}\right)p_y,
    \end{align*}
    where the second step applies $\wmy=\frac{\prob{\Ytar=y}}{\prob{\Ysrc=y}}$ and the law of total expectation. By definition of $\Gdiv{\cdot}{\cdot}$, it is then upper bounded by
    \begin{align}\label{eqn:R_minus_Rhw}
    \abss{ \risktar{h} - \riskbar{h}{\w} }
    &\leq {\frac 1M} \sum_{m=1}^{M}\sum_{y=1}^{L} \Gdiv{\Pcondtar{\phi(X)}{Y=y}}{\Pcondsrc{\phi(X)}{Y=y}}\prob{\Ytar=y}\notag\\
    &\leq \max_{\substack{m=1,\ldots,M, \\ y=1,\ldots,L}}\Gdiv{\Pcondtar{\phi(X)}{Y=y}}{\Pcondsrc{\phi(X)}{Y=y}}\sum_{m=1}^M{\frac 1M}\sum_{y=1}^L\prob{\Ytar=y}\notag\\
    &= \max_{\substack{m=1,\ldots,M, \\ y=1,\ldots,L}}\Gdiv{\Pcondtar{\phi(X)}{Y=y}}{\Pcondsrc{\phi(X)}{Y=y}} \notag\\
    &=\Pssi{\phi},
    \end{align}
    where we use $\sum_{y=1}^L\prob{\Ytar=y}=1$ and $\sum_{m=1}^M1/M=1$ in the second step. Furthermore, we can bound
    \begin{align*}
        \abss{\risksrcw{h}{\wm} - \risksrcw{h}{\whm}} &= \abss{\EE{\parenth{\wmcoord{\Ysrc} - \whmcoord{\Ysrc}}\cdot\ones{h(\Xsrc)\neq\Ysrc}} } \\
        &\leq \norm{\wm - \whm}_\infty.
    \end{align*}
    It follows that
    \begin{align}\label{eqn:Rhw_minus_Rhwhat}
    \abss{\riskbar{h}{\w} - \riskbar{h}{\wh}} &\leq {\frac 1M} \sum_{m=1}^{M}\abss{ \risksrcw{h}{\wm} - \risksrcw{h}{\whm} }\notag\\
    &\leq \frac{1}{M}\sum_{m=1}^{M}\norm{\wm - \whm}_\infty.
    \end{align}
    The result now follows from combining Eq.~\eqnref{R_minus_Rhw} and Eq.~\eqnref{Rhw_minus_Rhwhat}.

\subsection{Proof of \thmref{IW-CIP_gen_bound2}}\label{app:IW-CIP_gen_hound2}
    Denote the empirical average source risk across $M$ environments by
    \begin{align*}
        \riskhbar{h}{\wwm}=\frac 1M \sum_{m=1}^M\riskhsrcw{h}{\wwm}.
    \end{align*}
    Then we can write
    \begin{align*}
        &\quad \risktar{\hhiwcip} - \risktar{\hstar} \\
        &= \underbrace{\risktar{\hhiwcip}- \riskbar{\hhiwcip}{w}}_{(A_1)} + \underbrace{\riskbar{\hhiwcip}{w} - \riskhbar{\hhiwcip}{w}}_{(B_1)} +\\
        &\quad + \underbrace{\riskhbar{\hhiwcip}{w}-\riskhbar{\hhiwcip}{\wh}}_{(C_1)} + \underbrace{\riskhbar{\hhiwcip}{\wh}-\riskhbar{\hstar}{\wh}}_{(D)} + \underbrace{\riskhbar{\hstar}{\wh}-\riskhbar{\hstar}{w}}_{(C_2)}\\
        &\quad + \underbrace{\riskhbar{\hstar}{w} - \riskbar{\hstar}{w}}_{(B_2)} + \underbrace{\riskbar{\hstar}{w} - \risktar{\hstar}}_{(A_2)}.
    \end{align*}
    \paragraph{Term $A_1$ and $A_2$} From Eq.~\eqnref{R_minus_Rhw} in the proof of \thmref{IW-CIP_gen_bound}, we have
    \begin{align}\label{eqn:termA}
    \begin{split}
        A_1&=\risktar{\hhiwcip}- \riskbar{\hhiwcip}{w} \leq \Pssi{\phihiwcip}, \\
        A_2&=\riskbar{\hstar}{w} - \risktar{\hstar} =0.
    \end{split}
    \end{align}
    The second equation follows from the definition of $\hstar=\gstar\circ\phistar$, i.e., $\phistar$ is a conditionally invariant feature mapping and satisfies $\Pssi{\phistar}=0$.


    \paragraph{Term  $B_1$ and $B_2$} For any $h=g\circ \phi$, we have $\abss{\wmcoord{\Ysrc}\ones{h(\Xsrc)\neq \Ysrc}}\leq \vecnorm{\wm}{\infty}$. Then the generalization bound based on Rademacher complexity (see e.g.~\cite[Theorem 4.10]{wainwright2019high}) shows that, for the $m$-th source domain, the bound
    \begin{align*}
        \sup_{\substack{h=g\circ \phi\\g\in\Gset,\phi\in\Phi}}\abss{\risksrcw{h}{\wm} - \riskhsrcw{h}{\wm} } &\leq 2\rad{\nsrc,\Psrc}{\Hset(\wm, \Gset,\Phi)}\\
        &\hspace{5em}+ \vecnorm{\wm}{\infty}\sqrt{\frac{2\log(M/\delta)}{\nsrc}},
    \end{align*}
    holds with probability at least $1-\delta/M$. Using the union bound over each $m\in\{1,\ldots,M\}$, we obtain that with probability at least $1-\delta$,
    \begin{align}\label{eqn:termB12}
    \abss{B_1}+\abss{B_2} &\leq 2\sup_{\substack{h=g\circ\phi,\\g\in\Gset,\phi\in\Phi}} {\frac 1M} \sum_{m=1}^{M}\abss{ \risksrcw{h}{\wm} - \riskhsrcw{h}{\wm} } \nonumber\notag\\
    &\leq \max_{m=1,\ldots,M} \brackets{4\rad{\nsrc,\Psrc}{\Hset(\wm, \Gset,\Phi)} + 2\vecnorm{\wm}{\infty}\sqrt{\frac{2\log(M/\delta)}{\nsrc}} }.
    \end{align}

    \paragraph{Term $C_1$ and $C_2$} 
    For the $m$-th source domain, let us define
    \begin{align*}
        \widehat{\ell}^{(m)}_j = \sum_{k=1}^{n}\ones{\Ysrc_k=j}\ones{h(\Xsrc_k)\neq \Ysrc_k}.
    \end{align*}
    Then $\norm{\widehat{\ell}^{(m)}}_1 \leq n$ and so for any $h=g\circ\phi$,
    \begin{align*}
        \abss{ \riskhsrcw{h}{\wm} - \riskhsrcw{h}{\whm} }&= \abss{ \frac{1}{n}\sum_{k=1}^{n}\left(\wmcoord{\Ysrc_k} - \whmcoord{\Ysrc_k}\right)\ones{h(\Xsrc_k)\neq\Ysrc_k} }\\
        &= \abss{ \frac{1}{n}\sum_{j=1}^{L} \left(\wmcoord{j} - \whmcoord{j} \right)\widehat{\ell}^{(m)}_j } \\
        &\leq \frac{1}{n}\vecnorm{\wm - \whm}{\infty} \vecnorm{\widehat{\ell}^{(m)}}{1} \leq \vecnorm{\wm - \whm}{\infty}.
    \end{align*}
    It follows that
    \begin{align}\label{eqn:termC12}
    \abss{C_1}+\abss{C_2}&\leq 2\sup_{\substack{h=g\circ\phi,\\g\in\Gset,\phi\in\Phi}}{\frac 1M} \sum_{m=1}^{M}\abss{ \riskhsrcw{h}{\wm} - \riskhsrcw{h}{\whm} } \notag\\
    &\leq {\frac 2M} \sum_{m=1}^{M}\vecnorm{\wm - \whm}{\infty}.
    \end{align}

    \paragraph{Term $D$}
    By definition of $\hhiwcip$ from Eq.~\eqnref{finite_CIP_LabelCorr_CIP}, we know that
    \[\riskhbar{\hhiwcip}{\wh} + \penaltyiwcip{\phihiwcip} \leq  \riskhbar{\hstar}{\wh} + \penaltyiwcip{\phistar},\]
    and so rearranging terms,
    \begin{align}\label{eqn:termD}
    D = \riskhbar{\hhiwcip}{\wh} - \riskhbar{\hstar}{\wh} \leq \penaltyiwcip{\phistar} - \penaltyiwcip{\phihiwcip}\leq \penaltyiwcip{\phistar}.
    \end{align}
    The theorem then follows from combining Eq.~\eqnref{termA}, \eqnref{termB12}, \eqnref{termC12}, and \eqnref{termD}.

\subsection{Proof of \propref{iwcip_phistar}}\label{app:iwcip_phistar}

    For a family of functions $\Gset$ mapping $\Zset$ to $[0,1]$ and a distribution $\distriP$ on $\Zset$, let $\distriPhat_n$ denote the empirical distribution of $n$ $\iid$ samples $(Z_1,Z_2,\ldots,Z_n)$ from $\distriP$. The standard Rademacher complexity theory (see e.g.~\cite[ Theorem 4.10]{wainwright2019high}) states that for any $\delta>0$, with probability at least $1-\delta$, we have
    \begin{align*}
        \sup_{g\in\Gset}\abss{\Ep{Z\sim\distriP}{g(Z)}-\Ep{Z\sim\distriPhat_n}{g(Z)}}\leq 2\rad{n,\distriP}{\Gset} + \sqrt{\frac{\log (2/\delta)}{2n}}.
    \end{align*}
    By definition of $\Gset$-divergence, with probability $1-\delta$ we have
    \begin{align}\label{eqn:divG_rademacher}
    \Gdiv{\distriP}{\distriPhat_n}&=\sup_{g\in\Gset}\max_{y=1,2,\ldots,L}\abss{\Ep{Z\sim\distriP}{\ones{g(Z)=y}} - \Ep{Z\sim\distriPhat_n}{\ones{g(Z)=y}}}\notag\\
    &= \max_{y=1,2,\ldots,L} \sup_{g\in\Gset}\abss{\Ep{Z\sim\distriP}{\ones{g(Z)=y}} - \Ep{Z\sim\distriPhat_n}{\ones{g(Z)=y}}}\notag\\
    &\leq \max_{y=1,2,\ldots,L} \parenth{2\rad{n,\distriP}{\Gsety} + \sqrt{\frac{\log(2L/\delta)}{2n}}},
    \end{align}
    where $\Gsety \defn \braces{\ones{g(x)=y}, g\in\Gset}$. Then we can bound
    \begin{align}\label{eqn:divG_triangular}
    &\quad \Gdiv{\Phcondsrc{\phistar(X)}{Y=y}}{\Phcondsrcmp{\phistar(X)}{Y=y}}\notag\\
    &\overset{(i)}{\leq} \Gdiv{\Phcondsrc{\phistar(X)}{Y=y}}{\Pcondsrc{\phistar(X)}{Y=y}}+ \Gdiv{\Pcondsrc{\phistar(X)}{Y=y}}{\Pcondsrcmp{\phistar(X)}{Y=y}} \notag\\
    &\hspace{3em}+ \Gdiv{\Pcondsrcmp{\phistar(X)}{Y=y}}{\Phcondsrcmp{\phistar(X)}{Y=y}}\notag\\
    &\overset{(ii)}{=} \Gdiv{\Phcondsrc{\phistar(X)}{Y=y}}{\Pcondsrc{\phistar(X)}{Y=y}} +  \Gdiv{\Phcondsrcmp{\phistar(X)}{Y=y}}{\Pcondsrcmp{\phistar(X)}{Y=y}},
    \end{align}
    where in step $(i)$ we apply triangular inequality, and in step $(ii)$ we use the fact that $\phistar$ is a conditionally invariant feature mapping across source domains to cancel the middle term. Applying union bound, we get
    \begin{align*}
        &\penaltyiwcip{\phistar} = \frac{\lamiwcip}{LM^2}\sum_{y=1}^L\sum_{m\neq m'}  \Gdiv{\Phcondsrc{\phistar(X)}{Y=y}}{\Phcondsrcmp{\phistar(X)}{Y=y}}\\
        &\overset{(i)}{\leq} \frac{2\lamiwcip}{LM}\sum_{y=1}^L\sum_{m=1}^M  \Gdiv{\Pcondsrc{\phistar(X)}{Y=y}}{\Phcondsrc{\phistar(X)}{Y=y}}\\
        &\overset{(ii)}{\leq} \frac{4\lamiwcip}{LM}\cdot  \sum_{y=1}^L\sum_{m=1}^M \max_{y'=1,2,\ldots,L}\rad{\nsrc, \Pcondsrc{\phistar(X)}{Y=y}}{\Gsetyp} + \frac{2\lamiwcip}{M}\sum_{m=1}^M\sqrt{\frac{\log (2LM/\delta)}{2\nsrc}}\\
        &\leq 2\lamiwcip \parenth{2\max_{\substack{m=1,2,\ldots,M\\y=1,2,\ldots,L\\y'=1,2,\ldots,L}}\rad{\nsrc, \Pcondsrc{\phistar(X)}{Y=y}}{\Gsetyp} + \max_{m=1,2,\ldots,M}\sqrt{\frac{\log (2LM/\delta)}{2\nsrc}}},
    \end{align*}
    with probability at least $1-\delta$. Here step $(i)$ uses Eq.~\eqnref{divG_triangular} and step $(ii)$ uses Eq.~\eqnref{divG_rademacher}. This completes the proof of the proposition.

\subsection{Another bound on $\penaltyiwcip{\phistar}$ under mean squared distance}\label{app:iwcip_phistar2}
In~\propref{iwcip_phistar} we established a finite-sample bound for $\penaltyiwcip{\phistar}$ when $\Gset$-divergence is used. If the IW-CIP penalty is defined by the squared mean distance instead, we provide another explicit bound for $\penaltyiwcip{\phistar}$ that converges to zero as sample size goes to infinity. To show this, we require additional assumptions on the data generation model and the class of feature mappings $\Phi$.
\begin{proposition}\label{prop:iwcip_phistar_squared_distance}
    Suppose that source and target data are generated under \assumpref{linear_cic}. Let $\Phi=\{\phi(x)=Ax+b, A\in\real^{q\times p}, b\in\real^{q}\}$ be the linear class of feature mappings from $\real^p$ to $\real^q$ $(q\leq p)$, and let the squared mean distance be the distributional distance in IW-CIP penalty Eq.~\eqnref{iwcip_penalty_defn}. Suppose $\phi\in\Phi$ is a conditionally invariant feature mapping such that $A\Sigma A^\top$ is non-singular. For any $\delta > 0$ satisfying $\min_{m,y}\nsrc\probsrc{y} \geq 2\log(2ML/\delta)$, we have
    \begin{align}\label{eqn:penaltyiwcip_mean_dist_bound}
    \penaltyiwcip{\phi}&\leq  \frac{12\lamiwcip \cdot \lammax(A\Sigma{A}^\top)\cdot \parenth{p+8\log(2ML/\delta)}}{\underset{\substack{m=1,2,\ldots, M\\y=1,2,\ldots,L}}{\min} \ \nsrc\probsrc{y} - 2\log(2ML/\delta)},
    \end{align}
    with probability at least $1-\delta$.\footnote{The probability is with respect to the randomness of $(\Xsrc_k, \Ysrc_k)\iidsim\Psrc$, $m=1,\ldots,M, k=1,\ldots,\nsrc$; see Eq.~ \eqnref{dataset_split}.} Specifically, the statement holds for the optimal conditionally invariant mapping $\phistar$ (see Eq.~\eqnref{optimal_cond_inv_estimator}).
\end{proposition}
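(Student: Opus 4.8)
The plan is to exploit the Gaussian structure under \assumpref{linear_cic} to reduce the penalty to a sum of squared fluctuations of empirical conditional means around their \emph{common} population center, and then control these fluctuations with a Gaussian quadratic-form bound together with a binomial concentration for the per-label sample counts. Since $\phi(x)=Ax+b$ and, under the $m$-th source distribution, $\Xsrc\mid\Ysrc=y\sim\Normal(\fsrc(y),\Sigma)$, the feature $\phi(\Xsrc)\mid\Ysrc=y$ is Gaussian with mean $A\fsrc(y)+b$ and covariance $A\Sigma A^\top$. Because $\phi$ is conditionally invariant and $A\Sigma A^\top$ is non-singular, the conditional means coincide across source domains; I denote this common value $\mu_\phi(y)\defn A\fsrc(y)+b$. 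With the squared mean distance, each summand of the penalty is $\probdiv{\Phcondsrc{\phi(X)}{Y=y}}{\Phcondsrcmp{\phi(X)}{Y=y}}=\vecnorm{\meanhat{m}{y}-\meanhat{m'}{y}}{2}^2$, where $\meanhat{m}{y}$ is the empirical conditional mean. Applying $\vecnorm{a-b}{2}^2\le 2\vecnorm{a-\mu_\phi(y)}{2}^2+2\vecnorm{b-\mu_\phi(y)}{2}^2$, I would reduce $\penaltyiwcip{\phi}$ to a constant multiple of $\frac{\lamiwcip}{LM}\sum_{y,m}\vecnorm{\meanhat{m}{y}-\mu_\phi(y)}{2}^2$, so that it remains only to bound each centered deviation.

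Next I would condition on the label counts $n^{(m)}_y$. Given $n^{(m)}_y$, the deviation $\meanhat{m}{y}-\mu_\phi(y)$ is a mean-zero Gaussian vector with covariance $\frac{1}{n^{(m)}_y}A\Sigma A^\top$; equivalently it has the distribution of $\frac{1}{\sqrt{n^{(m)}_y}}A\Sigma^{1/2}g$ with $g\sim\Normal(0,\ident_p)$. Hence its squared norm is a Gaussian quadratic form whose matrix $\Sigma^{1/2}A^\top A\Sigma^{1/2}$ shares its nonzero eigenvalues with $A\Sigma A^\top$. A Laurent--Massart (chi-square type) tail bound then gives, with probability at least $1-\delta/(2ML)$,
\[
\vecnorm{\meanhat{m}{y}-\mu_\phi(y)}{2}^2\;\lesssim\;\frac{\lammax(A\Sigma A^\top)\bigl(p+\log(2ML/\delta)\bigr)}{n^{(m)}_y},
\]
after crudely bounding $\trace(A\Sigma A^\top)\le p\,\lammax(A\Sigma A^\top)$ so that the rank is replaced by $p$ in the numerator.

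To make the denominator deterministic I would then lower-bound the random counts. Since $n^{(m)}_y$ is a sum of $\nsrc$ independent Bernoulli$(\probsrc{y})$ variables, a Chernoff-type lower-tail bound lower-bounds $n^{(m)}_y$ by $\nsrc\probsrc{y}$ minus a logarithmic correction, matching the denominator $\nsrc\probsrc{y}-2\log(2ML/\delta)$, which is positive precisely under the stated condition $\min_{m,y}\nsrc\probsrc{y}\ge 2\log(2ML/\delta)$. A union bound over the $ML$ pairs $(m,y)$ and over the two families of events---the quadratic forms and the counts---keeps the total failure probability below $\delta$ and produces the $2ML$ inside the logarithm. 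Substituting the count lower bound into the quadratic-form bound and summing yields Eq.~\eqnref{penaltyiwcip_mean_dist_bound}; the specialization to $\phistar$ is immediate since $\phistar$ is conditionally invariant.

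The main obstacle I anticipate is the bookkeeping across the two nested layers of randomness: the Gaussian concentration is stated \emph{conditionally} on the counts $n^{(m)}_y$, while those counts are themselves random and enter the conditional variance, so I must splice a conditional high-probability bound with the unconditional count bound without inflating the constants or the logarithmic factor. Extracting the precise numerator $p+8\log(2ML/\delta)$ and the overall prefactor $12$ requires tracking the Laurent--Massart constants, the factor of $2$ from the triangle inequality, and the slack incurred in replacing the rank of $A\Sigma A^\top$ by $p$; the conceptual content, however, resides entirely in the reduction and concentration steps above.
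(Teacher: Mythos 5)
Your proposal follows essentially the same route as the paper's proof in Appendix~\ref{app:iwcip_phistar2}: since $\phi$ is conditionally invariant, the centered deviation $\meanhat{m}{y}-\mu_\phi(y)$ equals the mapped noise average $Ae^{(m)}_y$, so your centering step via $\vecnorm{a-b}{2}^2\le 2\vecnorm{a-\mu_\phi(y)}{2}^2+2\vecnorm{b-\mu_\phi(y)}{2}^2$ is exactly the paper's bound $\vecnorm{Ae^{(m)}_y-Ae^{(m')}_y}{2}^2\le 2\vecnorm{Ae^{(m)}_y}{2}^2+2\vecnorm{Ae^{(m')}_y}{2}^2$, and the remaining ingredients---the chi-square-type bound for the Gaussian quadratic form conditional on $n^{(m)}_y$ (with $\trace$ crudely bounded by $p\,\lammax(A\Sigma A^\top)$), the binomial lower tail for the label counts, the splicing of the conditional bound with the count bound by summing over $k\ge \nsrc\probsrc{y}(1-t)$, and the union bound over the $ML$ pairs yielding the $2ML$ in the logarithm---all coincide with the paper's argument. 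The conditioning subtlety you flag as the main obstacle is resolved in the paper exactly as you sketch, so your plan is correct and complete in approach.
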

\begin{proof}
    The exact from of IW-CIP penalty under squared mean distance is
    \begin{align*}
        \penaltyiwcip{\phi}&\defn\frac{\lamiwcip}{LM^2}\sum_{y=1}^L\sum_{m\neq m'} \Bigg\|\meanhat{m}{y}-\meanhat{m'}{y}\Bigg\|_2^2,
    \end{align*}
    where
    \begin{align*}
        \meanhat{m}{y}=\frac{1}{\nsrc_y}\sum_{\substack{k=1\\ \Ysrc_k=y}}^{\nsrc}\phi(\Xsrc_k),
    \end{align*}
    is the empirical mean of $\phi(X)$ on the $m$-th source domain. Since $\phi$ is conditionally invariant, from Eq.~\eqnref{prop_a_conditional_dist} we know that for any $y\in\braces{1,2,\ldots,L}$ and for any $m\neq m'$,
    \begin{align*}
        A\Pcip\vm{m}{y} = A\Pcip\vm{m'}{y}.
    \end{align*}
    In the finite-sample case, when $\Ysrc_k=\Ytar_k=y$, we can write
    \begin{align*}
        \phi(\Xsrc_k)=A \Xsrc_k+b&=A \parenth{\fsrcone(y)+\Pcip\vm{m}{y}+\noisesrc_k} + b,\\
        \phi(\Xtar_k)=A \Xtar_k+b&=A \parenth{\fsrcone(y)+\Pcip\vtar{y}+\noisetar_k} + b,
    \end{align*}
    and
    \begin{align}
        \meanhat{m}{y}-\meanhat{m'}{y}&= \frac{1}{\nsrc_y}\sum_{\substack{k=1\\ \Ysrc_k=y}}^{\nsrc}\phi(\Xsrc_k)-\frac{1}{\nsrcm{m'}_y}\sum_{\substack{k=1\\ \Ysrcm{m'}_k=y}}^{\nsrcm{m'}}\phi(\Xsrcm{m'}_k)\notag\\
        &= A \left(\underbrace{\frac{1}{\nsrc_y}\sum_{\substack{k=1\\ \Ysrc_k=y}}^{\nsrc}\noisesrc_{k}}_{e^{(m)}_y} - \underbrace{\frac{1}{\nsrcm{m'}_y}\sum_{\substack{k=1\\ \Ysrcm{m'}_k=y}}^{\nsrcm{m'}}\noisesrcm{m'}_k}_{e_y^{(m')}}\right), \label{eqn:prop_sem_c_emy_defn}
    \end{align}
    where $\nsrcm{m}_y$ is the number of samples with label $y$ in the $m$-th domain. Since $n_y^{(m)}$ follows a multinomial distribution, we conclude from binomial (multinomial) tail probability bound (e.g.~\cite[Theorem 3.2]{chung2006concentration}) that for all ${t_y^{(m)}} >0$,
    \begin{align*}
        \prob{\nsrc_{y}\geq \nsrc \probsrc{y}(1-{t_y^{(m)}})}&\geq 1-\exp\parenth{-\frac{\nsrc \probsrc{y} {t_y^{(m)}}^2}{2}}.
    \end{align*}
    When $\nsrc_{y}=k>0$ is fixed, we have $Ae^{(m)}_y\sim\Normal(0,\frac 1 {k}A\Sigma A^\top)$, and therefore the standard concentration bound for Gaussian variables (e.g.~\cite[Example 2.11]{wainwright2019high}) implies that for all $t>0$,
    \begin{align*}
        \probc{\vecnorm{Ae^{(m)}_y}{2}^2\leq\frac 1 {k}\lammax\parenth{A\Sigma A^\top}\cdot p\parenth{1+t}}{\nsrc_y=k}\geq 1-e^{-p\min\{t,t^2\}/8}.
    \end{align*}
    Next we combine two bounds above into a single bound for $\vecnorm{Ae^{(m)}_y}{2}^2$. Given any $t_y^{(m)}>0$ and $t>0$, we have
    \begin{align*}
        &\quad\prob{\vecnorm{Ae^{(m)}_y}{2}^2\leq\frac{\lammax\parenth{A\Sigma A^\top}\cdot p\parenth{1+t}}{\nsrc \probsrc{y}(1-{t_y^{(m)}})}}\\
        &=\sum_{k=0}^{\nsrc}\probc{\vecnorm{Ae^{(m)}_y}{2}^2\leq\frac{\lammax\parenth{A\Sigma A^\top}\cdot p\parenth{1+t}}{\nsrc \probsrc{y}(1-{t_y^{(m)}})}}{\nsrc_y=k}\prob{\nsrc_y=k}\\
        &\geq \sum_{k=\nsrc \probsrc{y}(1-{t_y^{(m)}})}^{\nsrc} \probc{\vecnorm{Ae^{(m)}_y}{2}^2\leq\frac{\lammax\parenth{A\Sigma A^\top}\cdot p\parenth{1+t}}{\nsrc \probsrc{y}(1-{t_y^{(m)}})}}{\nsrc_y=k}\prob{\nsrc_y=k}\\
        &\geq \sum_{k=\nsrc \probsrc{y}(1-{t_y^{(m)}})}^{\nsrc} \probc{\vecnorm{Ae^{(m)}_y}{2}^2\leq \frac{1}{k}\lammax\parenth{A\Sigma A^\top}\cdot p\parenth{1+t}}{\nsrc_y=k}\prob{\nsrc_y=k}\\
        &\geq \sum_{k=\nsrc \probsrc{y}(1-{t_y^{(m)}})}^{\nsrc} \parenth{1-e^{-p\min\{t,t^2\}/8}}\prob{\nsrc_y=k}\\
        &\geq \parenth{1-e^{-p\min\{t,t^2\}/8}}\parenth{1-e^{-{\nsrc \probsrc{y} {t_y^{(m)}}^2}/{2}}}\\
        &\geq 1-e^{-p\min\{t,t^2\}/8} - e^{-{\nsrc \probsrc{y} {t_y^{(m)}}^2}/{2}}.
    \end{align*}

    Take ${t_y^{(m)}} = \sqrt{2\log(2ML/\delta)/(\nsrc\probsrc{y})}$ and $t=8\log(2ML/\delta)/p+\sqrt{8\log(2ML/\delta)/p}$. Then with probability at least $1-\delta/ML$, we have
    \begin{align*}
        \vecnorm{Ae^{(m)}_y}{2}^2 &\leq \frac{\lammax\parenth{A\Sigma A^\top}\parenth{p+8\log(2ML/\delta) +\sqrt{8p\log(2ML/\delta)}}}{\nsrc\probsrc{y}-\sqrt{\nsrc\probsrc{y}\cdot 2\log\parenth{2ML/\delta}}}\\
        &\leq  \frac{3\lammax\parenth{A\Sigma A^\top}\parenth{p+8\log(2ML/\delta) }}{\nsrc\probsrc{y} - 2\log(2ML/\delta)},
    \end{align*}
    where we apply the inequality $\sqrt{ab}\leq (a+b)/2$ for $a,b>0$ in the last step. Applying union bound, we get
    \begin{align*}
        \penaltyiwcip{\phi} &=\frac{\lamiwcip}{LM^2}\sum_{y=1}^L\sum_{m\neq m'}\Bigg\|\meanhat{m}{y}-\meanhat{m'}{y}\Bigg\|_2^2\\
        &=\frac{\lamiwcip}{LM^2}\sum_{y=1}^L\sum_{m\neq m'}\vecnorm{Ae^{(m)}_y-Ae^{(m')}_y}{2}^2\\
        &\leq \frac{2\lamiwcip}{LM^2}\sum_{y=1}^L\sum_{m\neq m'}\parenth{\vecnorm{Ae^{(m)}_y}{2}^2+\vecnorm{Ae^{(m')}_y}{2}^2}\\
        &\leq 4\lamiwcip\cdot \underset{\substack{m=1,2,\ldots, M\\y=1,2,\ldots,L}}{\max}\vecnorm{Ae^{(m)}_y}{2}^2\\
        &\leq \frac{12\lamiwcip \cdot \lammax(A\Sigma{A}^\top)\cdot \parenth{p+8\log(2ML/\delta)}}{\underset{\substack{m=1,2,\ldots, M\\y=1,2,\ldots,L}}{\min} \ \nsrc\probsrc{y} - 2\log(2ML/\delta)},
    \end{align*}
    with probability at least $1-\delta$, proving the result.
\end{proof}

\subsection{Proof of \propref{label_shift_estimation_error}}\label{app:label_shift_estimation_error}

    We have that for any $i\in\braces{1,2,...,L}$,
    \begin{align*}
        \prob{\hhcip(\Xtar)=i} &= \sum_{j=1}^L\probc{\hhcip(\Xtar)=i}{\Ytar=j}\cdot \prob{\Ytar=j}\\
        &= \sum_{j=1}^L\parenth{\probc{\hhcip(\Xsrc)=i}{\Ysrc=j}+\delta_{i,j}} \cdot \prob{\Ytar=j},
    \end{align*}
    where $\delta_{i,j} = \probc{\hhcip(\Xtar)=i}{\Ytar=j}-\probc{\hhcip(\Xsrcone)=i}{\Ysrcone=j}$.
    Writing the above equation in matrix form, we have
    \begin{align}\label{eqn:label_shift_lin_system_finite}
    \mu_{\hhcip} = C_{\hhcip}^{\tagm{m}} \wm + \Delta \mu^\tagtar,
    \end{align}
    where $\Delta$ is a matrix with $\delta_{i,j}$ being its $(i,j)$-th element, and $\mu^\tagtar$ denotes the target label distribution.
    Comparing with the linear system in Eq.~\eqnref{label_shift_equation_matrix}, Eq.~\eqnref{label_shift_lin_system_finite} has an additional term on the right-hand side because the finite-sample CIP $\hhcip$ does not guarantee exact conditional invariance.
    Since the importance weight $\whm$ is obtained by solving the finite-sample version
    \begin{align}\label{eqn:label_shift_lin_system_finite2}
    \muh_{\hhcip} = \widehat{C}_{\hhcip}^{\tagm{m}}\whm,
    \end{align}
    combining Eq.~\eqnref{label_shift_lin_system_finite} and Eq.~\eqnref{label_shift_lin_system_finite2}, we have
    \begin{align}\label{eqn:w_error}
    \begin{split}
        &\underbrace{\quad \mu_{\hhcip} - \muh_{\hhcip}}_{(A_1)}\\
        &= C_{\hhcip}^{\tagm{m}}\parenth{\wm-\whm} + \underbrace{\parenth{C_{\hhcip}^{\tagm{m}} - \widehat{C}_{\hhcip}^{\tagm{m}}}}_{(A_2)}\whm + \underbrace{\Delta \mu^\Diamond}_{(A_3)}.
    \end{split}
    \end{align}
    Now we bound $(A_1), (A_2)$, and $(A_3)$ separately. For bouding $(A_1)$ and $(A_2)$, we closely follow the proof of Theorem 3 in~\cite{lipton2018detecting}.

    \paragraph{Term ($A_1$)} By Hoeffding's inequality, we have
    \begin{align*}
        \prob{\abss{A_{1,i}}>t}&=\prob{\abss{\frac 1 {\nttar}\sum_{k=1}^{\nttar} \ones{\hhcip(\widetilde{X}^\tagtar_k)=i}-\EE{\ones{\hhcip({X}^\tagtar)=i}}}>t}\leq 2\exp\braces{-2\nttar t^2},
    \end{align*}
    for all $t > 0$. Therefore, for any $t_1>0$, we have
    \begin{align*}
        \prob{\norm{A_1}_2>t_1}&\leq \prob{\exists i\in\braces{1,...,L}, \abss{A_{1,i}}>\frac{t_1}{\sqrt{L}}}\\
        &\leq \sum_{i=1}^L\prob{\abss{A_{1,i}}>\frac{t_1}{\sqrt{L}}}\\
        &\leq 2L\cdot \exp\braces{-\frac{2\nttar t_1^2}{L}}.
    \end{align*}

    \paragraph{Term ($A_2$)} Note that
    \begin{align*}
        C_{\hhcip}^{\tagm{m}} - \widehat{C}_{\hhcip}^{\tagm{m}}
        &= \frac{1}{\ntsrc} \sum_{k=1}^{\ntsrc} e_{\hhcip(\widetilde{X}^\tagm{m}_k)}e_{\widetilde{Y}^\tagm{m}_k}^{\top} - \EE{e_{\hhcip(\Xsrc)}{e}_{\Ysrc}^{\top}},
    \end{align*}
    where $e_i$ denotes the zero vector with only the $i$-th element being 1. Let
    \begin{align*}
        Z_k = e_{\hhcip(\widetilde{X}^\tagm{m}_k)}e_{\widetilde{Y}^\tagm{m}_k}^{\top}  - \EE{e_{\hhcip(\Xsrc)}{e}_{\Ysrc}^{\top}}.
    \end{align*}
    Then it is straightforward to see that
    \begin{align*}
        \begin{cases}
            \EE{Z_k}=0;\\
            \norm{Z_k}_{2}\leq 2;\\
            \max\braces{\norm{\EE{Z_kZ_k^\top}}_{2}, \norm{\EE{Z_k^\top Z_k}}_{2}}\leq 1.
        \end{cases}
    \end{align*}
    By  the matrix Bernstein inequality (see e.g.~\cite[Theorem 6.1.1]{tropp2015introduction}), we have
    \begin{align*}
        \prob{\norm{A_2}_{2}\geq t_2} \leq 2L\exp\braces{-\frac{\ntsrc t_2^2/2}{1+2t_2/3}},
    \end{align*}
    for all $t_2 > 0$.

    \paragraph{Term ($A_3$)}
    By definition of $\Gdiv{\cdot}{\cdot}$ and $\Pssi{\phihcip}$, each element of $A_3$ satisfies
    \begin{align*}
        \abss{A_{3,i}} &\leq \sum_{j=1}^L\abss{\delta_{i,j}}\cdot \prob{\Ytar=j}\\
        &\leq \sum_{j=1}^L \Gdiv{\Pcondtar{\phihcip(X)}{Y=j}}{\Pcondsrc{\phihcip(X)}{Y=j}}\cdot  \prob{\Ytar=j}\leq \Pssi{\phihcip}.
    \end{align*}
    Hence we have
    \begin{align*}
        \norm{A_3}_{2}=\sqrt{\sum_{i=1}^L\abss{A_{3,i}}^2}\leq \sqrt{L}\Pssi{\phihcip}.
    \end{align*}
    Applying these bounds to Eq.~\eqnref{w_error} and using the assumption that $C_{\hhcip}^{\tagm{m}}$ has condition number $\condi^{\tagm{m}}$ (i.e., the smallest eigenvalue $\geq 1/\condi^{\tagm{m}}$ because $1$ is an eigenvalue of the matrix), we get
    \begin{align*}
        \vecnorm{\wm-\whm}{2}&\leq \condi_m\parenth{t_1 + t_2\cdot\vecnorm{\whm}{2} + \sqrt{L}\Pssi{\phihcip}}\\
        &\leq \condi_m\parenth{t_1 + t_2\parenth{\vecnorm{\wm-\whm}{2}+\vecnorm{\wm}{2}} + \sqrt{L}\Pssi{\phihcip}},
    \end{align*}
    with probability at least $1-2L\exp\parenth{-2\nttar t_1^2/L}-2L\exp\parenth{-3\ntsrc t_2^2/(6+4 t_2)}$. Finally, let $t_1=\sqrt{L\log (4L/\delta)/2\nttar }$ and $t_2=\sqrt{3\log (4L/\delta) / \ntsrc}$. Using the assumption $\ntsrc\geq 12\condi_m^2\log (4L/\delta)$, we get
    \begin{align*}
        \vecnorm{\wm-\whm}{2}&\leq 2\condi_m\parenth{\sqrt{L}\Pssi{\phihcip} + \sqrt{\frac{L\log (4L/\delta)}{2\nttar }} + \sqrt{\frac{3\log (4L/\delta) }{\ntsrc}}\vecnorm{\wm}{2}}
    \end{align*}
    with probability at least $1-\delta$. Combining the results then proves the proposition.


\subsection{Proof of \propref{linear_sem_example}}\label{app:linear_sem_example}

        Using assumption~\eqnref{linear_sem_prop_finite_assump}, for any positive semidefinite matrix $S\in\real^{d_y\times d_y}$, we have
        \begin{align}\label{eqn:prop_assumption_result}
         \parenth{\vtar{y}-\vm{m}{y}}^{\top} S\parenth{\vtar{y}-\vm{m}{y}}&=\trace\parenth{\parenth{\vtar{y}-\vm{m}{y}}^{\top}S^{\frac12}S^{\frac12}\parenth{\vtar{y}-\vm{m}{y}}}\notag\\
        &=\trace\parenth{S^{\frac12}\parenth{\vtar{y}-\vm{m}{y}}\parenth{\vtar{y}-\vm{m}{y}}^\top S^{\frac12}}\notag\\
        &\leq \trace\parenth{S^{\frac12} \parenth{\frac{\kk_y}{M-1}\sum_{m=2}^M\vm{m}{y}(\vm{m}{y})^\top}  S^{\frac12}}\notag\\
        &=\frac{\kk_y}{M-1}\sum_{m=2}^M\trace\parenth{S^{\frac12}\vm{m}{y}(\vm{m}{y})^\top S^{\frac12}}\notag\\
        &=\frac{\kk_y}{M-1}\sum_{m=2}^M {\vm{m}{y}}^{\top}S\vm{m}{y}.
        \end{align}
        Inequality above uses the fact that $\trace(A)\leq\trace(B)$ for positive semi-definite matrix $A,B$ satisfying $A\preceq B$.
        Given that $\noisesrc, \noisetar \iidsim \Normal(0,\Sigma)$, for any $y\in\braces{1,2,\ldots,L}$ and $m\in\braces{1,2,\ldots, M}$ we have
        \begin{align}
            \begin{split}\label{eqn:prop_a_conditional_dist}
                \phi(\Xsrc) \mid \Ysrc=y &\sim \Normal\parenth{A \parenth{\fsrcone(y)+\Pcip\vm{m}{y}}+b, A  \Sigma  A^\top},\\
                \phi(\Xtar) \mid \Ytar=y &\sim \Normal\parenth{A \parenth{\fsrcone(y)+\Pcip\vtar{y}}+b, A  \Sigma  A^\top}.
            \end{split}
        \end{align}
        Note that $\vm{1}{y}=0$ by \assumpref{linear_cic}, so we can further simplify terms in Eq.~\eqnref{varcip_defn} as
        \begin{align*}
            \penaltycipm{m}{y} = AP_y\parenth{\vm{m}{y}-\vm{1}{y}}=AP_y\vm{m}{y}\ \text{ and } \  \varcip = A\Sigma A^{\top}.
        \end{align*}
        With results above in hand, we can derive
        \begin{align}\label{eqn:G_div_bound}
            &\quad \Gdivsq{\Pcondtar{\phi(X)}{Y=y}}{\Pcondsrc{\phi(X)}{Y=y} }\notag\\
            &\leq \sup_{g\in\Gset}\max_{y'=1,2,\ldots,L}\abss{\Ep{Z\sim\Pcondtar{\phi(X)}{Y=y}}{\ones{g(Z)=y'}} - \Ep{Z\sim\Pcondsrc{\phi(X)}{Y=y} }{\ones{g(Z)=y'}}}\notag\\
            &\leq \sup_{g\in\Gset}\sum_{y'=1}^L\abss{\Ep{Z\sim\Pcondtar{\phi(X)}{Y=y}}{\ones{g(Z)=y'}} - \Ep{Z\sim\Pcondsrc{\phi(X)}{Y=y} }{\ones{g(Z)=y'}}}\notag\\
            &=4\sup_{g\in\Gset}\tvdistsq{\Pcondtar{g(\phi(X))}{Y=y}}{\Pcondsrc{g(\phi(X))}{Y=y} }\notag\\
            &\overset{(i)}{\leq} 2\sup_{g\in\Gset}\KLdiv{\Pcondtar{g(\phi(X))}{Y=y}}{\Pcondsrc{g(\phi(X))}{Y=y}}\notag\\
            &\overset{(ii)}{\leq} 2\KLdiv{\Pcondtar{\phi(X)}{Y=y}}{\Pcondsrc{\phi(X)}{Y=y}}\notag\\
            &\overset{(iii)}{=}\parenth{A \Pcip\parenth{\vtar{y}-\vm{m}{y}}}^\top\parenth{A\Sigma A^\top}^{-1}\parenth{A \Pcip\parenth{\vtar{y}-\vm{m}{y}}}\notag\\
            &\overset{(iv)}{\leq}\frac{\kk_y}{M-1}\sum_{m=2}^M\parenth{A \Pcip\vm{m}{y}}^\top\parenth{A\Sigma A^\top}^{-1}\parenth{A \Pcip\vm{m}{y}}\notag\\
            &\overset{(v)}{=}2\kk_y\Pihatphi{y}.
        \end{align}
        Here step~$(i)$ applies Pinsker's inequality and step~$(ii)$ applies data processing inequality. Step $(iii)$ applies the KL divergence formula between two Gaussians, i.e.,
        \begin{align*}
            \KLdiv{\Normal(\mu_1,\Sigma_1)}{\Normal(\mu_1, \Sigma_2)}&= \frac12 \parenth{\log\frac{\abss{\Sigma_2}}{\Sigma_1}}-p + \text{Tr}\parenth{\Sigma_2^{-1}\Sigma_1}+(\mu_1-\mu_2)^{\top}\Sigma_2^{-1}(\mu_2-\mu_1).
        \end{align*}
        Step~$(iv)$ uses Eq.~\eqnref{prop_assumption_result} with $S=P_y^\top A^\top (A\Sigma A^\top)^{-1} A P_y$ and the last step $(vi)$ follows from definition~\eqnref{pihat_defn}.
        Now the proposition follows by applying Eq.~\eqnref{G_div_bound} to the definition of $\Psi_{\Gset, \phi}$ in Eq.~\eqnref{Psi_phi_defn}.

\subsection{Number of source domains needed for controlling the deviation from conditional invariance}\label{app:linear_sem_example_cor}

To validate condition~\eqnref{linear_sem_prop_finite_assump} in~\propref{linear_sem_example}, we prove the following lemma which states that when the perturbations are generated randomly, the required number of source domains is approximately linear in the dimension of perturbations.

\begin{lemma}\label{lem:linear_sem_example}
    Under \assumpref{linear_cic}, further assume that perturbations are generated according to $\vm{m}{y},\vtar{y}\iidsim\Normal(0, \tau_y^2 \Ind_{d_y})$. For any $\delta\in(0,1)$, if the number of source domains $M$ satisfies
    \begin{align*}
        M\geq 1+r\parenth{\max_y\sqrt{d_y}+\sqrt{2\log(2L/\delta)}}^2,
    \end{align*}
    for some constant $r>1$, then with probability at least $1-\delta$,\footnote{The probability is with respect to the randomness of perturbations $\vm{m}{y},\vtar{y}\iidsim\Normal(0, \tau_y^2 \Ind_{d_y})$.} condition~\eqnref{linear_sem_prop_finite_assump} holds with $\kk_y=c_r (M+\log M)$ for some constant $c_r>0$. 
\end{lemma}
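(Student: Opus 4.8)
The plan is to reduce the matrix domination required by condition~\eqnref{linear_sem_prop_finite_assump} to a scalar quadratic-form inequality, and then to control the two competing quantities that appear — the smallest eigenvalue of the sample second-moment matrix of the source perturbations, and the largest squared norm among the residuals $\vtar{y}-\vm{m}{y}$ — through standard Gaussian concentration. Fix a label $y$ and abbreviate $d=d_y$, $\tau=\tau_y$, and $S\defn\frac{1}{M-1}\sum_{m=2}^M\vm{m}{y}(\vm{m}{y})^\top$. Because the left-hand side of~\eqnref{linear_sem_prop_finite_assump} is the rank-one matrix $uu^\top$ with $u=\vtar{y}-\vm{m}{y}$, and because $S\succ0$ on the high-probability event established below, the domination $uu^\top\preceq\kk_y S$ is equivalent to $u^\top S^{-1}u\le\kk_y$, which is itself bounded by $\|u\|_2^2/\lambda_{\min}(S)$. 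It therefore suffices to lower bound $\lambda_{\min}(S)$ and upper bound $\max_{1\le m\le M}\|\vtar{y}-\vm{m}{y}\|_2^2$, uniformly in $y$.

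For the smallest eigenvalue I would write $\sum_{m=2}^M\vm{m}{y}(\vm{m}{y})^\top=\tau^2 G^\top G$, where $G\in\real^{(M-1)\times d}$ has i.i.d.\ standard normal entries, so that $\lambda_{\min}(S)=\frac{\tau^2}{M-1}\sigma_{\min}(G)^2$. The Davidson--Szarek / Gordon bound for the least singular value of a tall Gaussian matrix gives $\sigma_{\min}(G)\ge\sqrt{M-1}-\sqrt{d}-t$ with probability at least $1-e^{-t^2/2}$. Choosing $t=\sqrt{2\log(2L/\delta)}$ and invoking the hypothesis $M-1\ge r(\sqrt{d}+\sqrt{2\log(2L/\delta)})^2$, which yields $\sqrt{d}+\sqrt{2\log(2L/\delta)}\le\sqrt{(M-1)/r}$, I obtain $\sigma_{\min}(G)\ge(1-1/\sqrt{r})\sqrt{M-1}$ and hence $\lambda_{\min}(S)\ge(1-1/\sqrt{r})^2\tau^2$. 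This is the step where the linear-in-$d$ requirement on $M$ is genuinely used, and I expect it to be the main technical obstacle, since everything hinges on the non-asymptotic control of the least singular value of a Gaussian matrix (in particular on ensuring $S$ is invertible and well-conditioned once $M-1\gtrsim d$).

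For the numerator, each residual $\vtar{y}-\vm{m}{y}$ is a centered Gaussian vector with covariance $\tau^2\Ind_d$ when $m=1$ (since $\vm{1}{y}=0$) and $2\tau^2\Ind_d$ when $m\ge2$, so $\|\vtar{y}-\vm{m}{y}\|_2^2/(2\tau^2)$ is stochastically dominated by a $\chi^2_d$ variable. The tail bound $\PP{\chi^2_d\ge d+2\sqrt{ds}+2s}\le e^{-s}$ with $s=\log(2ML/\delta)$, together with a union bound over $m\in\{1,\ldots,M\}$, gives $\max_m\|\vtar{y}-\vm{m}{y}\|_2^2\le c_2\,\tau^2\bigl(d+\log(2ML/\delta)\bigr)$ for an absolute constant $c_2$, on an event of probability at least $1-\delta/(2L)$ for this $y$.

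Finally I would union bound both events over the $L$ labels (each failing with probability at most $\delta/(2L)$, so the total failure probability is at most $\delta$) and combine the two bounds: $\kk_y\le\max_m\|\vtar{y}-\vm{m}{y}\|_2^2/\lambda_{\min}(S)\le\frac{c_2}{(1-1/\sqrt{r})^2}\bigl(d+\log(2ML/\delta)\bigr)$, where the factor $\tau^2$ cancels — which is exactly why the resulting constant is scale-free. It then remains to fold the right-hand side into the stated form $c_r(M+\log M)$: the hypothesis forces $d\le(M-1)/r$ and $\log(2L/\delta)\le(M-1)/(2r)$, while $\log(2ML/\delta)=\log M+\log(2L/\delta)$, so $d+\log(2ML/\delta)\le C(M+\log M)$ for a constant $C$ depending only on $r$, which I absorb into $c_r$. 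This last bookkeeping is routine and should present no difficulty.
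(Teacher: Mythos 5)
Your proof is correct and follows essentially the same route as the paper's: both reduce condition~\eqnref{linear_sem_prop_finite_assump} to bounding $\max_m\vecnorm{\vtar{y}-\vm{m}{y}}{2}^2/\lammin(S)$ with $S=\frac{1}{M-1}\sum_{m=2}^M\vm{m}{y}(\vm{m}{y})^\top$, lower bound $\lammin(S)$ via least-singular-value concentration for tall Gaussian matrices (your Davidson--Szarek bound is the same estimate the paper invokes from Wainwright's Theorem 6.1), control the residuals with $\chi^2$ tails plus a union bound over $m$ and $y$, and then use the hypothesis on $M$ to absorb $d_y+\log(2ML/\delta)$ into $\kk_y=c_r(M+\log M)$. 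The only cosmetic differences are your cleaner reduction through $u^\top S^{-1}u\le\kk_y$ and the explicit constant $(1-1/\sqrt{r})^{-2}$, neither of which changes the argument.
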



\begin{proof}
    We verify that when $\vm{m}{y},\vtar{y}\iidsim\Normal(0, \tau_y^2 \Ind_{d_y})$, condition~\eqnref{linear_sem_prop_finite_assump} holds with high probability. Sincet $M\geq \max_{y=1,2,\ldots,L}\{d_y\}+1$, Theorem 6.1 and Example 2.11 in~\cite{wainwright2019high} show that given $y\in\{1,2,\ldots,L\}$, for any $t_0\in(0,1), t_1,\ldots,t_M>0$ we have
    \begin{align*}
        \prob{\lammin\parenth{\frac 1{M-1}\sum_{m=2}^M\vm{m}{y}{\vm{m}{y}}^\top}\geq \tau_y^2\parenth{1-t_0-\sqrt{\frac {d_y} {M-1}}}}&\geq 1-e^{-(M-1)t_0^2/2},
    \end{align*}
and
    \vspace{-1em}
    \begin{align*}
        &\prob{\vecnorm{\vtar{y}-\vm{1}{y}}{2}^2\leq \tau_y^2  d_y \parenth{1+t_1}}\geq 1-e^{-d_y\min\{t_1,t_1^2\}/8},\\
        &\prob{\vecnorm{\vtar{y}-\vm{m}{y}}{2}^2\leq 2\tau_y^2  d_y\parenth{1+t_m}}\geq 1-e^{-d_y\min\{t_m,t_m^2\}/8}, \ \ \ \forall m\in\braces{2,\cdots, M}.
    \end{align*}
    Take $t_0=\sqrt{\frac{2}{M-1}\log\frac {2L} {\delta}}$ and $t_m=\frac 8 {d_y}\log(\frac{2ML}{\delta}) + \sqrt{\frac 8 {d_y}\log(\frac{2ML}{\delta})}$ for all $y$ such that $d_y>0$. Applying union bound, the following bound holds for all $y\in\braces{1,2,\ldots,L}$ and $m\in\braces{1,2,\ldots,M}$ with probability greater than $1-\delta$:
    \begin{align*}
        &\quad \lammax\parenth{\parenth{\vtar{y}-\vm{m}{y}}\parenth{\vtar{y}-\vm{m}{y}}^{\top}}=\vecnorm{\parenth{\vtar{y}-\vm{m}{y}}}{2}^2\\
        &\leq 2\tau_y^2\parenth{d_y+8\log (2ML/\delta) + \sqrt{8d_y\log(2ML/\delta)}}\\
        &\leq 3\tau_y^2\parenth{d_y+8\log (2ML/\delta)}\\
        &\leq \frac{3 \parenth{d_y+8\log (2ML/\delta)}}{1-\sqrt{{2}\parenth{\log(2L /{\delta})}/(M-1)}-\sqrt{{d_y}/ (M-1)}}\lammin\parenth{\frac 1{M-1}\sum_{m=2}^M\vm{m}{y}\vm{m}{y}^\top}\\
        &\leq \frac{3r}{r-1} \parenth{d_y+8\log \parenth{\frac{2ML}{\delta}}}\lammin\parenth{\frac 1{M-1}\sum_{m=2}^M\vm{m}{y}\vm{m}{y}^\top }\\
        &\leq c_r(M+\log M) \cdot \lammin\parenth{\frac 1{M-1}\sum_{m=2}^M\vm{m}{y}\vm{m}{y}^\top},
    \end{align*}
    where we apply the lower bound on $M$ in the last two steps.
\end{proof}

\subsection{Proof of \propref{linear_sem_example2}}\label{app:linear_sem_example2}


    Let $\fc(1), \fc(2)\in\real^{p-d}$ denote the vector of the last $p-d$ coordinates of $\fsrcone(1), \fsrcone(2)\in\real^p$ respectively, which are conditionally invariant across source and target. We first prove that the target risks of $\horacle$ and $\hstar$ are given by
        \begin{align}\label{eqn:oracle_risk_diff}
            \begin{split}
                \risktar{\horacle} &= r_{\probtar{1},\sigma}\parenth{\vecnorm{\ftar(1)-\ftar(2)}{2}},\\
                \risktar{\hstar} &= r_{\probtar{1},\sigma}\Big(\vecnorm{\fc(1)-\fc(2)}{2}\Big),
            \end{split}
        \end{align}
    where $r_{\probtar{1},\sigma}(x)$ is a continuous decreasing function in $x$. The risk of the oracle classifier can be derived as follows. For simplicity we write $\mutar{1} = \ftar(1)$ and $\mutar{2} = \ftar(2)$. For any $\phi(x)=\beta^\top x+\beta_0$ with $\vecnorm{\beta}{2}=1$, we know that
    \begin{align*}
        \phi(\Xtar)\mid \Ytar = 1 &\sim \Normal\parenth{\beta^\top \mutar{1} + \beta_0, \sigma^2},\\
        \phi(\Xtar)\mid \Ytar = 2 &\sim \Normal\parenth{\beta^\top \mutar{2} + \beta_0, \sigma^2}.
    \end{align*}
    Then the target risk of $h= g\circ \phi$ can be computed by
    \begin{align}
        \risktar{h} &= \EE{\ones{h(\Xtar)\neq \Ytar}}\notag\\
        &=\probc{h(\Xtar)=2}{\Ytar=1}\prob{\Ytar=1}  \notag\\
        &\hspace{15em}+ \probc{h(\Xtar)=1}{\Ytar=2}\prob{\Ytar=2}\notag\\
        &=\probc{\phi(\Xtar)> 0}{\Ytar=1}\prob{\Ytar=1} \notag\\
        &\hspace{15em}+ \probc{\phi(\Xtar)\leq 0}{\Ytar=2}\prob{\Ytar=2}\notag\\
        &=\probtar{1}\cdot\frac 12\parenth{1 - \erf{\frac{\beta^\top\mutar{1}+\beta_0}{\sqrt{2\sigma^2}}}} + (1-\probtar{1})\cdot \frac 12\parenth{1 + \erf{\frac{\beta^\top\mutar{2}+\beta_0}{\sqrt{2\sigma^2}}}}\notag\\
        &= \frac12 \parenth{1- \probtar{1}\cdot \erf{\frac{\beta^\top\mutar{1}+\beta_0}{\sqrt{2\sigma^2}}} + (1-\probtar{1})\cdot \erf{\frac{\beta^\top\mutar{2}+\beta_0}{\sqrt{2\sigma^2}}}}\label{eqn:risktar_erf_exp},
    \end{align}
    where $\erf{x} = 2\int_0^{x}e^{-t^2}dt/\sqrt{\pi}$ is the Gauss error function. The optimization problem of $\horacle$ in Eq.~\eqnref{oracle_risk} becomes
    \begin{align}
        \begin{split}\label{eqn:oracel_optimization_beta}
            \beta_{\rm orcale}, \beta_{0, {\rm orcale}} = \argmin_{\beta, \beta_0}&\hspace{1em}-\probtar{1}\cdot \erf{\frac{\beta^\top\mutar{1}+\beta_0}{\sqrt{2\sigma^2}}} + (1-\probtar{1})\cdot \erf{\frac{\beta^\top\mutar{2}+\beta_0}{\sqrt{2\sigma^2}}},\\
            \text{s.t.} & \hspace{1em}\vecnorm{\beta}{2}=1.
        \end{split}
    \end{align}
    Setting the gradient of the objective function with respect to $\beta_0$ to $0$, we get
    \begin{align*}
        \beta_0 = \frac{\sigma^2}{\beta^\top(\mutar{1}-\mutar{2})}\log\frac{\probtar{1}}{1-\probtar{1}} - \frac 12\beta^\top(\mutar{1}+\mutar{2}).
    \end{align*}
    Plugging the expression of $\beta_0$ into Eq.~\eqnref{oracel_optimization_beta} and using the method of Lagrange multipliers, we arrive at
    \begin{align*}
        & \frac{\partial}{\partial \beta}\parenth{-\probtar{1}\cdot \erf{\frac{\beta^\top\mutar{1}+\beta_0}{\sqrt{2\sigma^2}}} + (1-\probtar{1})\cdot \erf{\frac{\beta^\top\mutar{2}+\beta_0}{\sqrt{2\sigma^2}}} + \lambda(\vecnorm{\beta}{2}^2-1)}=0\\
        \Longrightarrow &\;\; \;2\lambda \beta = \probtar{1}\frac {2} {\sqrt{\pi}}\exp\parenth{-\frac{(\beta^\top\mutar{1}+\beta_0)^2}{2\sigma^2}} \\
        &\hspace{10em}\times \parenth{\frac{1}{2\sqrt{2\sigma^2}}(\mutar{1}-\mutar{2})-\frac{\sqrt{\sigma^2/2} \log \frac{\probtar{1}}{1-\probtar{1}}}{(\beta^\top(\mutar{1}-\mutar{2}))^2}(\mutar{1}-\mutar{2})}\\
        &\hspace{2em} - (1-\probtar{1})\probtar{1}\frac {2} {\sqrt{ \pi}}\exp\parenth{-\frac{(\beta^\top\mutar{2}+\beta_0)^2}{2\sigma^2}} \\
        &\hspace{10em}\times \parenth{-\frac{1}{2\sqrt{2\sigma^2}}(\mutar{1}-\mutar{2})-\frac{\sqrt{\sigma^2/2} \log \frac{\probtar{1}}{1-\probtar{1}}}{(\beta^\top(\mutar{1}-\mutar{2}))^2}(\mutar{1}-\mutar{2})}.
    \end{align*}
    That is, $\beta$ and $\mutar{1}-\mutar{2}$ are in the same direction. Using $\vecnorm{\beta}{2}=1$, we get
    \begin{align*}
        \beta_{\rm oracle} &= \frac{\mutar{1}-\mutar{2}}{\vecnorm{\mutar{1} - \mutar{2}}{2}},\\
        \beta_{0, {\rm oracle}} &= \frac{\sigma^2}{\vecnorm{\mutar{1}-\mutar{2}}{2}}\log\frac{\probtar{1}}{1-\probtar{1}} - \frac{\vecnorm{\mutar{1}}{2}^2-\vecnorm{\mutar{2}}{2}^2}{2\vecnorm{\mutar{1}-\mutar{2}}{2}}.
    \end{align*}
    Plugging the above equations into Eq.~\eqnref{risktar_erf_exp}, after some algebra we get the oracle target risk as
    \begin{align*}
        \risktar{\horacle} &= r_{\probtar{1}, \sigma}\parenth{\vecnorm{\mutar{1}-\mutar{2}}{2}} = r_{\probtar{1}, \sigma}\parenth{\vecnorm{\ftar(1)-\ftar(2)}{2}},
    \end{align*}
    where
    \begin{align}\label{eqn:r_x_defn}
        \begin{split}
            r_{\probtar{1}, \sigma}(x) &= \frac 12\Bigg(1 - \probtar{1}\erf{\frac{x}{2\sqrt{2}\sigma} + \frac{\sigma}{\sqrt{2}x}\log\frac{\probtar{1}}{1-\probtar{1}}}\\
            &\hspace{4em} - (1-\probtar{1})\erf{\frac{x}{2\sqrt{2}\sigma} - \frac{\sigma}{\sqrt{2}x}\log\frac{\probtar{1}}{1-\probtar{1}}}\Bigg).
        \end{split}
    \end{align}
    Next, turning to $\hstar$, note that $span\{\vm{2}{1}, \vm{3}{1},\ldots, \vm{M}{1}, \vm{2}{2}, \vm{3}{2},\ldots, \vm{M}{2}\}=\real^d$, and so the constraint of  conditional invariance in Eq.~\eqnref{optimal_cond_inv_estimator} requires that the first $d$ coordinates of $\betastar$ are zero, i.e., $\phistar$ only uses the last $p-d$ invariant coordinates. Following exactly the same proof as above, we have
    \begin{align*}
        \risktar{\hstar}= r_{\probtar{1}, \sigma}\Big(\vecnorm{\fc(1)-\fc(2)}{2}\Big).
    \end{align*}
    Now we verify that $r_{\probtar{1}, \sigma}(x)$ is a decreasing function in $x$, by calculating its gradient.
    \begin{align*}
        &\quad \frac{dr_{\probtar{1}, \sigma}(x)}{dx}\\
        &= -\frac 1{\sqrt\pi}\probtar{1}\exp\parenth{-\parenth{\frac{x}{2\sqrt{2}\sigma} + \frac{\sigma}{\sqrt{2}x}\log\frac{\probtar{1}}{1-\probtar{1}}}^2}\\
        &\hspace{20em}\times \parenth{\frac{1}{2\sqrt{2}\sigma}-\frac{\sigma}{\sqrt{2}x^2}\log\frac{\probtar{1}}{1-\probtar{1}}}\\
        &\hspace{1em}- \frac1{\sqrt\pi}(1-\probtar{1})\exp\parenth{-\parenth{\frac{x}{2\sqrt{2}\sigma} - \frac{\sigma}{\sqrt{2}x}\log\frac{\probtar{1}}{1-\probtar{1}}}^2}\\
        &\hspace{20em}\times\parenth{\frac{1}{2\sqrt{2}\sigma}+\frac{\sigma}{\sqrt{2}x^2}\log\frac{\probtar{1}}{1-\probtar{1}}}\\
        &=-\sqrt{\frac{\probtar{1}(1-\probtar{1})}{2\pi\sigma^2}}\exp\parenth{-\parenth{\frac{x}{2\sqrt{2}\sigma}}^2 - \parenth{\frac{\sigma}{\sqrt{2}x}\log\frac{\probtar{1}}{1-\probtar{1}}}^2}< 0.
    \end{align*}
    This intermediate result implies that the risk difference between $\hstar$ and $\horacle$ originates from the norm differences between $\ftar(1)-\ftar(2)$ and $\fc(1)-\fc(2)$ only, where the second term is equivalent to the last $p-d$ dimensions of the first term which corresponds to CICs in the general anticausal model. Intuitively, when the dimension of CICs is close to $p$, the discrepancy between the norms of $\ftar(1)-\ftar(2)$ and $\fc(1)-\fc(2)$ becomes insignificant, and the disparity in the target risks of $\hstar$ and $\horacle$ tends to be negligible. In particular, we establish the explicit bound for this disparity as follows: 
    
    When $\probtar{1}=1/2$, the explicit formula for $r_{\probtar{1}, \sigma}(x)$ given in~Eq.\eqnref{r_x_defn} shows that $r_{\probtar{1}, \sigma}(x) = \parenth{1-\erf{x/(2\sqrt{2}\sigma)}}/2$. Since $\vtar{1}-\vtar{2}\sim\Normal(0, 2\tau^2\sigma^2\Ind_d)$, the standard Gaussian concentration bound shows that
    \begin{align*}
        \prob{\vecnorm{\vtar{1}-\vtar{2}}{2} \leq \sqrt{2\tau^2\sigma^2d\parenth{1+\sqrt{\frac 8d\log\frac1 \delta} + \frac 8d\log\frac1 \delta}}}\geq 1-\delta.
    \end{align*}
    By using the target risks given in Eq.~\eqnref{oracle_risk_diff}, we can calculate
    \begin{align*}
        &\quad\risktar{\hstar} - \risktar{\horacle}\\
        &= \frac12\parenth{\erf{\frac{\vecnorm{\ftar(1)-\ftar(2)}{2}}{2\sqrt{2}\sigma}} - \erf{\frac{\vecnorm{\fc(1)-\fc(2)}{2}}{2\sqrt{2}\sigma}}}\\
        &\leq \frac12\parenth{\erf{\frac{\vecnorm{\fsrcm{1}(1)-\fsrcm{1}(2)}{2} + \vecnorm{\vtar{1}-\vtar{2}}{2}}{2\sqrt{2}\sigma}} - \erf{\frac{\vecnorm{\fc(1)-\fc(2)}{2}}{2\sqrt{2}\sigma}}}\\
        &\overset{(i)}{=} \frac12\parenth{\erf{\frac{\xi\sigma\sqrt{p} + \vecnorm{\vtar{1}-\vtar{2}}{2}}{2\sqrt{2}\sigma}} - \erf{\frac{\xi\sqrt{p-d}}{2\sqrt{2}}}}\\
        &\leq \frac12\parenth{\erf{\frac{\xi\sqrt{p} + \tau\sqrt{3d+24\log(1/\delta)}}{2\sqrt{2}}} - \erf{\frac{\xi\sqrt{p-d}}{2\sqrt{2}}}}\\
        &\overset{(ii)}{\leq} \frac{\xi(\sqrt{p}-\sqrt{p-d})+\tau\sqrt{3d+24\log(1/\delta)}}{\sqrt{8\pi}}\exp\parenth{-\frac{\xi^2(p-d)}{8}}\\
        &=\frac{1}{\sqrt{8\pi}}\parenth{\frac{cd}{\sqrt{p}+\sqrt{p-d}}+\tau\sqrt{3d+24\log\parenth{\frac 1\delta}}}\exp\parenth{-\frac{\xi^2(p-d)}{8}}\\
        &\leq \frac{1}{\sqrt{8\pi}}\parenth{{\xi}{\sqrt{d}}+\tau\sqrt{3d+24\log\parenth{\frac 1\delta}}}\exp\parenth{-\frac{\xi^2(p-d)}{8}}\\
        &\leq c_{\xi, \tau}\parenth{\sqrt{d}+\sqrt{\log(1/\delta)}}\exp\parenth{-\frac{\xi^2(p-d)}{8}},
    \end{align*}
    with probability at least $1-\delta$. Here, in step $(i)$, we use the fact that $\fc(1)$ and $\fc(2)$ are the last $p-d$ invariant coordinates of $\fsrcm{1}(1)$ and $\fsrcm{1}(2)$ respectively; and in step $(ii)$ we use the inequality $\int_a^{b}e^{-t^2}dt \leq e^{-a^2}(b-a)$ for $0<a<b$. This completes the proof of the proposition.

\section{Proof of~\secref{DIP}}\label{app:DIP_proof}
In this section we prove our theorems given in~\secref{DIP}.

\subsection{Proof of~\thmref{conditional_invariant_proxy_y}}\label{app:conditional_invariant_proxy_y}
We prove the following generalized version of the theorem, which does not require that $\phiinv$ is exactly conditionally invariant. That is, for any $\hinv=\ginv\circ\phiinv$ where $\ginv\in\mathcal{G}$ and $\phiinv\in\Phi$, for any other classifier $h$ we have
\begin{multline*}
    \abss{\brackets{\risktar{h} - \risksrcone{h}} - \brackets{\prob{h(\Xtar) \neq \hinv(\Xtar)-\prob{h(\Xsrcone) \neq \hinv(\Xsrcone)}}}} \\\hspace{6em} \leq  2 \risksrcone{\hinv} +  L\Pssi{\phiinv}.
\end{multline*}
If this inequality holds, when $\phiinv$ is a conditionally invariant feature mapping across source and target domains, we get $\Pssi{\phiinv}=0$ and arrive at the original \thmref{conditional_invariant_proxy_y}.
\begin{proof}
    Define $\delta_{ij}=\probc{\hinv(\Xtar)=j}{\Ytar=i}-\probc{\hinv(\Xsrcone)=j}{\Ysrcone=i}$, then $\abss{\delta_{ij}}\leq \Pssi{\hinv}$. Because there is no label shift, for any $i,j\in\braces{1,2,\ldots,L}$ we have
        \begin{align*}
            &\prob{\Ytar=i, \hinv(\Xtar) = j}-\prob{\Ysrcone=i, \hinv(\Xsrcone) = j}=\delta_{ij}\cdot \prob{\Ytar=i}.
        \end{align*}
    As a result, we get
        \begin{align}\label{eqn:conditional_h_hcip_expand}
            &\quad \prob{h(\Xtar) = j, \hinv(\Xtar) = j}- \prob{h(\Xsrcone) = j, \hinv(\Xsrcone) = j} \notag\\
            &=\sum_{i=1}^L \bigg[\probc{h(\Xtar) = j}{\Ytar=i, \hinv(\Xtar) = j}\cdot \prob{\Ytar=i, \hinv(\Xtar) = j}\notag\\
            &\hspace{4em}- \probc{h(\Xsrcone) = j}{\Ysrcone=i, \hinv(\Xsrcone) = j}\cdot \prob{\Ysrcone=i, \hinv(\Xsrcone) = j} \bigg]\notag\\
            &= \sum_{i=1}^L \Bigg(\bigg[\probc{h(\Xtar) = j}{\Ytar=i, \hinv(\Xtar) = j} \notag\\
            &\hspace{5em}- \probc{h(\Xsrcone) = j}{\Ysrcone=i, \hinv(\Xsrcone) = j}\bigg]\cdot \prob{\Ysrcone=i, \hinv(\Xsrcone) = j}\notag\\
            &\hspace{4em}+ \delta_{ij} \cdot \prob{\Ytar=i} \cdot \probc{h(\Xtar) = j}{\Ytar=i, \hinv(\Xtar) = j}\Bigg).
        \end{align}
    Similarly we can calculate
        \begin{align}\label{eqn:conditional_h_Y_expand}
            &\quad \prob{h(\Xtar) = j, \Ytar = j}- \prob{h(\Xsrcone) = j, \Ysrcone = j} \notag\\
            &=\sum_{i=1}^L \bigg[\probc{h(\Xtar) = j}{\Ytar=j, \hinv(\Xtar) = i}\cdot \prob{\Ytar=j, \hinv(\Xtar) = i}\notag\\
            &\hspace{4em}- \probc{h(\Xsrcone) = j}{\Ysrcone=j, \hinv(\Xsrcone) = i}\cdot \prob{\Ysrcone=j, \hinv(\Xsrcone) = i} \bigg]\notag\\
            &= \sum_{i=1}^L \Bigg(\bigg[\probc{h(\Xtar) = j}{\Ytar=j, \hinv(\Xtar) = i} \notag\\
            &\hspace{5em}  - \probc{h(\Xsrcone) = j}{\Ysrcone=j, \hinv(\Xsrcone) = i}\bigg] \cdot \prob{\Ysrcone=j, \hinv(\Xsrcone) = i}\notag\\
            &\hspace{4em}+ \delta_{ji} \cdot \prob{\Ytar=j} \cdot \probc{h(\Xtar) = j}{\Ytar=j, \hinv(\Xtar) = i}\Bigg).
        \end{align}
    Subtracting Eq.~\eqnref{conditional_h_Y_expand} from Eq.~\eqnref{conditional_h_hcip_expand}, we obtain
        \begin{align}\label{eqn:role2_proof_intermediate}
            &\brackets{\prob{h(\Xtar) = j, \hinv(\Xtar) = j}- \prob{h(\Xsrcone) = j, \hinv(\Xsrcone) = j} }  \notag\\
            & \hspace{8em} - \brackets{\prob{h(\Xtar) = j, \Ytar = j}- \prob{h(\Xsrcone) = j, \Ysrcone= j} } \notag\\
            =&\bigg[\probc{h(\Xtar) = j}{\Ytar\neq j,\hinv(\Xtar) = j}\notag\\
            &\hspace{6em}- \probc{h(\Xsrcone) = j}{\Ysrcone\neq j,\hinv(\Xsrcone) = j}\bigg]  \cdot B_j \notag\\
            &- \bigg[\probc{h(\Xtar) = j}{\Ytar = j, \hinv(\Xtar)\neq j} \notag\\
            &\hspace{6em}- \probc{h(\Xsrcone) = j}{\Ysrcone= j, \hinv(\Xsrcone)\neq j} \bigg] \cdot B'_j  \notag\\
            &\quad  +\sum_{i=1}^L\bigg[\delta_{ij}\cdot \prob{\Ytar=i} \cdot \probc{h(\Xtar) = j}{\Ytar=i, \hinv(\Xtar) = j}\notag\\
            &\hspace{6em} - \delta_{ji} \cdot \prob{\Ytar=j} \cdot \probc{h(\Xtar) = j}{\Ytar=j, \hinv(\Xtar) = i}\bigg]
        \end{align}
    where we write
    \begin{align*}
        B_j = \prob{\Ysrcone\neq j, \hinv(\Xsrcone)=j}, \ \ B'_j =  \prob{\Ysrcone = j, \hinv(\Xsrcone) \neq j}.
    \end{align*}
    Now it follows that
        \begin{align*}
            &\quad\Bigg|\brackets{ \prob{h(\Xsrcone) \neq \hinv(\Xsrcone)}  - \prob{h(\Xtar) \neq \hinv(\Xtar)}} - \left(\risksrcone{h} - \risktar{h}\right)\Bigg| \\
            &= \Bigg|\brackets{\prob{h(\Xtar) = \hinv(\Xtar)} - \prob{h(\Xsrcone) = \hinv(\Xsrcone)} }  \\
            &\hspace{2in} - \brackets{\prob{h(\Xtar) = \Ytar} - \prob{h(\Xsrcone) = \Ysrcone}}\Bigg| \notag \\
            &\overset{(i)}{\leq}\sum_{j=1}^L B_j + \sum_{j=1}^L B_j' + \Bigg|\sum_{i=1}^L\sum_{j=1}^L\delta_{ij} \cdot \prob{\Ytar=i}\\
            &\hspace{10em}\cdot \bigg(\probc{h(\Xtar) = j}{\Ytar=i, \hinv(\Xtar) = j}\\
            &\hspace{12em}- \probc{h(\Xtar) = i}{\Ytar=j, \hinv(\Xtar) = i}\bigg)\Bigg|\\
            &\overset{(ii)}{\leq} 2 \risksrcone{\hinv} + \sum_{i=1}^L\sum_{j=1}^L \abss{\delta_{ij}}\cdot \prob{\Ytar=i}\\
            &\overset{(iii)}{\leq} 2 \risksrcone{\hinv} + \sum_{i=1}^L\sum_{j=1}^L \Pssi{\phiinv}\cdot \prob{\Ytar=i}\\
            &= 2 \risksrcone{\hinv} + L\Pssi{\phiinv},
        \end{align*}
    where in step (i) we sum over $j\in\Yset=\{1,2,\ldots,L\}$ using Eq.~\eqnref{role2_proof_intermediate}, and in step (ii) we use $\sum_{j=1}^{L}B_j = \sum_{j=1}^{L}B'_j=\risksrcone{\hinv}$. Step (iii) applies $\abss{\delta_{ij}}\leq\Pssi{\phiinv}$.
\end{proof}

\subsection{Proof of~\thmref{SEM_guarantee}}\label{app:SEM_guarantee}

    For notation simplicity, for any $\phi\in\Phi$, we omit the bias term and write $\phi(x)=Ax$. All proof below holds after adding the bias term. And we use $u_1,\ldots,u_L$ and $\ut_1,\ldots,\ut_L$ to denote $\fsrcone(1),\ldots,\fsrcone(L)$ and $\ftar(1),\ldots,\ftar(L)$, respectively.

    \paragraph{Part (a)} By~\assumpref{uniform_y}, for any $A\in\Phi$, the distributions of $A\Xsrcone$ and $A\Xtar$ are given by the mixture distributions of the form
    \[\Pmarsrcone{AX} = \sum_{j=1}^{L}\frac{1}{L}P_{A\epsilon}(A u_j), \ \ \Pmartar{AX}=\sum_{j=1}^{L}\frac{1}{L}P_{A\epsilon}(A \ut_j), \]
    where $P_{A\epsilon}(u)$ denotes the distribution of $A\eps$ with mean shifted by a fixed vector $u$. Applying~\lemref{two_mixtures}, the constraint in DIP, $\Pmarsrcone{AX}=\Pmartar{AX}$, implies that we can find a permutation $\pi$ on $\{1,2,\ldots,L\}$ such that
    \begin{equation}\label{eqn:mean_match}
    Au_j = A\ut_{\pi (j)} \text{ for all $j\in\Yset$}.
    \end{equation}
    Hence any representations learned by DIP should meet the constraint given in Eq.~\eqnref{mean_match} for some permutation $\pi$. Moreover, letting  $\dim\left(\spn\{u_1-\ut_{\pi (1)},\ldots,u_L-\ut_{\pi (L)}\}\right)=k$, \lemref{linear_matching} shows that any matrix $A$ satisfying Eq.~\eqnref{mean_match} has the row space orthogonal to $\spn\{u_1-\ut_{\pi (1)},\ldots,u_L-\ut_{\pi (L)}\}$ and $\rank(A)\leq p-k$.


    Now consider all the pairs of $(A_\pi,\pi)$ that meets~\eqnref{mean_match} and the row space of $A_\pi$ is the orthogonal complement to $\spn\{u_1-\ut_{\pi (1)},\ldots,u_L-\ut_{\pi (L)}\}$. By definition of DIP in Eq.~\eqnref{pop_DIP}, we obtain $\phidip=A_{\pi_\star}$, where
    \[ \pi_\star = \argmin_{\{\pi \text{ is permutation}\}}\min_{g\in\Gset, A_\pi} \risksrcone{g\circ A_\pi}. \]
    We can clearly find examples where $\pi_\star\neq \ident$, for instance, as shown in~\figref{dip_vs_jdip}. This occurs as long as the label-flipping features offer lower source risk than the conditionally invariant features while it cannot generalize to the target domain. In this case, the conditional distributions of $\phidip(\Xsrcone)$ and $\phidip(\Xtar)$, given the labels, are aligned after the target labels are permuted by $\pi_\star$.

    \paragraph{Part (b)} 	Next, we consider JointDIP defined in Eq.~\eqnref{pop_JointDIP}. If $\phiinv$ is a linear mapping, say $\phiinv(x)=Bx$ for some matrix $B\in\real^{r\times p}$, then writing  $C=(A^\top , B^\top)^\top$ to denote the concatenation of $A$ and $B$, we have
    \begin{align*}
        \Pmarsrcone{(\phijdip, \phiinv)(X)}&=\Pmarsrcone{CX}= \sum_{j=1}^{L}\frac{1}{L}P_{C\epsilon}(C u_j),\\
        \Pmartar{(\phijdip, \phiinv)(X)}&=\Pmartar{CX}=\sum_{j=1}^{L}\frac{1}{L}P_{C\epsilon}(C \ut_j).
    \end{align*}
    Under the JointDIP matching penalty, by~\lemref{two_mixtures} we similarly have
    \begin{equation}\label{eqn:mean_joint_match}
    Cu_j = C\ut_{\pi (j)} \text{ for all $j$},
    \end{equation}
    for some permutation $\pi$. If $\pi\neq \ident$, Eq.~\eqnref{mean_joint_match} suggests that $Bu_j=B\ut_{\pi (j)}=B\ut_{j'}$ for some $j'\neq j$, which contradicts the condition~\eqnref{linear_cic_condition}. Therefore, we must have $\pi=\ident$ (note that it is a valid solution because $\phiinv$ is a CIC), which shows that $\phijdip$ is always conditionally invariant across source and target domains.

    Moreover, given that there is no label shift and $\phijdip$ is conditionally invariant across $\Psrcone$ and $\Ptar$, we have $\risktar{g\circ \phijdip} =\risksrcone{g\circ \phijdip}$ for all $g\in\Gset$. Then the optimization objective of population JointDIP becomes equivalent to minimizing $\risktar{g\circ \phijdip}$. Since $\phiinv^0=(\phiinv, \zeros_{q-r})$ is a feasible solution to the JointDIP constraint, we conclude that JointDIP is at least as good as the optimal classifier built upon $\phiinv^0$, i.e.,
    \begin{align*}
        \risktar{\hjdip}\leq \min_{g\in\Gset}\risktar{g\circ \phiinv^0}.
    \end{align*}

    \paragraph{Part (c)} In JointDIP, the joint distributions of $AX\in\R^q$ and $\phiinv(X)\in\R^r$ are matched across the source and target domains. Let
    \[ T(X) = \left( \begin{array}{c}AX \\ \phiinv(X) \end{array}\right)\in\R^{q+r}.\]
    The characteristic function of $T(\Xsrcone)$ under the source distribution is then:
    \begin{align*}
        \csrcone(t)  = \csrcone(t_1,t_2) &= \EE{e^{it^\top T(\Xsrcone)}} \\
        &=\frac{1}{L}\sum_{j=1}^L \EEst{e^{it^\top T(\Xsrcone)}}{\Ysrcone=j} \\
        &= \frac{1}{L}\sum_{j=1}^L \EEst{e^{it_1^\top A\Xsrcone + it_2^\top \phiinv(\Xsrcone)}}{\Ysrcone=j} \\
        &=  \frac{1}{L}\sum_{j=1}^L e^{it_1^\top A u_j} \cdot\underbrace{ \EEst{ e^{it_1^\top A \epssrcone + it_2^\top \phiinv(\Xsrcone)}}{\Ysrcone=j}}_{:= b_j(t_1,t_2)},
    \end{align*}
    where the second step uses the tower property, and the last step follows from the general anticausal model, where conditional on $\Ysrcone=j$, $\Xsrcone= u_j + \epssrc$ under the source distribution. Similarly, we can compute the characteristic function of $T$ under the target distribution:
    \begin{align*}
        \ctar(t) &= \ctar(t_1,t_2) \\
        &=\frac{1}{L}\sum_{j=1}^L e^{it_1^\top A \ut_j} \cdot  \EEst{ e^{it_1^\top A \epstar + it_2^\top \phiinv(\Xtar)}}{\Ytar=j}\\
        &=\frac{1}{L}\sum_{j=1}^L e^{it_1^\top A \ut_j} \cdot  b_j(t_1,t_2).
    \end{align*}
    Here the last equality holds because $\epstar$ and $\phiinv(\Xtar)$ share the same conditional distributions under the target distribution as $\epssrcone$ and $\phiinv(\Xsrcone)$ under the source distribution. The JointDIP matching penalty enforces $\csrcone(t) = \ctar(t)$ for all $t$, or
    \[\sum_{j=1}^L e^{it_1^\top A u_j}  b_j(t_1,t_2) = \sum_{j=1}^L e^{it_1^\top A \ut_j} b_j(t_1,t_2) \text{ for all $t_1\in\R^{q}, t_2\in\R^r$}. \]
    If we take partial derivaties of both sides with respect to $t_1$,
    \begin{align*}
        &\quad\parenth{\sum_{j=1}^{L} Au_j e^{it_1^\top A u_j}  b_j(t_1,t_2)}i + \sum_{j=1}^{L} e^{it_1^\top A u_j} \frac{\partial b_j(t_1,t_2)}{\partial t_1}\\
        &= \parenth{\sum_{j=1}^{L} A\ut_j e^{it_1^\top A u_j}  b_j(t_1,t_2)}i + \sum_{j=1}^{L} e^{it_1^\top A \ut_j} \frac{\partial b_j(t_1,t_2)}{\partial t_1} .
    \end{align*}
    Setting $t_1=0$, it follows that
    \begin{align*}
        \sum_{j=1}^{L} Au_j \cdot b_j(0,t_2)  &= \sum_{j=1}^{L} A\ut_j  \cdot b_j(0,t_2),
    \end{align*}
    or equivalently,
    \begin{align}\label{eqn:JointDIP_identity}
        \sum_{j=1}^{L} Au_j \cdot \EEst{ e^{it_2^\top \phiinv(\Xsrcone)}}{\Ysrcone=j}  &= \sum_{j=1}^{L} A\ut_j  \cdot  \EEst{ e^{it_2^\top \phiinv(\Xsrcone)}}{\Ysrcone=j}.
    \end{align}
    For any vector $a\in\real^r$, let $t_2 = ka$ for $k\in\real$. In Eq.~\eqnref{JointDIP_identity}, take derivative with respect to $k$ up to $L-1$ times and set $k=0$, then we get
    \[\sum_{j=1}^{L} Au_j \cdot  \EEst{ \parenth{a^\top \phiinv(\Xsrcone)}^u}{\Ysrcone=j}  = \sum_{j=1}^{L} A\ut_j  \cdot \EEst{ \parenth{a^\top \phiinv(\Xsrcone)}^u}{\Ysrcone=j}, \]
    for $u=0,1,\ldots,L-1$,
    where we use the fact that $\phiinv$ is a conditionally invariant feature mapping. Writing the above linear system in a matrix form, we have
    \begin{align*}
        \underbrace{\begin{pmatrix}
            1 & 1 & \cdots & 1\\
            m_{1}^{1}(a) & m_{2}^{1}(a) & \cdots & m_L^1(a)\\
            \vdots & \vdots & \ddots & \vdots\\
            m_1^{L-1}(a) & m_2^{L-1}(a) & \cdots & m_L^{L-1}(a)
        \end{pmatrix}}_{=C_{\phiinv}(a)}
        \begin{pmatrix}
        ({Au_1} - {A\ut_1})^\top\\
        ({Au_2} - {A\ut_2})^\top\\
        \vdots\\
        ({Au_L} - {A\ut_L})^{\top}
        \end{pmatrix} = 0, \ \ \forall \ell\in\{1,2,\ldots,L\},
    \end{align*}
    where $m_j^l(a)$ = $\EEst{ \parenth{a^\top \phiinv(\Xsrcone)}^{l}}{\Ysrcone=j}$. By assumption that $C_{\phiinv}(a)$ is full rank, we get $Au_j = A\ut_j$ for all $j\in\{1,\ldots,L\}$, i.e.,
    \begin{align*}
        \Pcondsrcone{\phijdip(X)}{Y=y} = \Pcondtar{\phijdip(X)}{Y=y},  \ \ \forall y\in\{1,2,\ldots, L\},
    \end{align*}
	proving the result.

\begin{lemma}\label{lem:two_mixtures}
Consider the mixture distributions $P_1, P_2$ given by
\[P_1 = \sum_{j=1}^{L} w_j P(u_j)  \text{ and } P_2=\sum_{j=1}^{L}\wt_j P(\ut_j), \]
where $w_j, \wt_j$ are the mixing weights such that $\sum_j w_j = \sum_j \wt_j=1$ and $w_j,\wt_j\geq 0$, and $u_j,\ut_j\in\R^p$ are the fixed vectors for the $j$-th components, and $P(u_j), P(\ut_j)$ denote the distribution $P$ location shifted by $u_j,\ut_j$ respectively. If $P_1=P_2$, then there exists a permutation $\pi$ such that $u_j=\ut_{\pi j}$ for all $j$.
\end{lemma}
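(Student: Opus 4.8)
The plan is to move to the Fourier domain, where the mixture structure becomes additive and the location shifts become multiplicative exponential factors. Writing $\varphi_P(t) = \EE{e^{it^\top Z}}$ for the characteristic function of $Z\sim P$, the location-shift relation gives the characteristic function of $P(u_j)$ as $e^{it^\top u_j}\varphi_P(t)$, so that
\begin{align*}
\varphi_{P_1}(t) = \varphi_P(t)\sum_{j=1}^{L} w_j e^{it^\top u_j}, \qquad \varphi_{P_2}(t) = \varphi_P(t)\sum_{j=1}^{L} \wt_j e^{it^\top \ut_j}.
\end{align*}
The hypothesis $P_1 = P_2$ is equivalent, by uniqueness of characteristic functions, to the identity $\varphi_{P_1}\equiv\varphi_{P_2}$ on all of $\R^p$.

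Next I would cancel the common factor $\varphi_P(t)$. Since $\varphi_P$ is continuous with $\varphi_P(0)=1$, there is a neighborhood $U$ of the origin on which $\varphi_P(t)\neq 0$; dividing through yields the trigonometric-polynomial identity $\sum_{j} w_j e^{it^\top u_j} = \sum_{j} \wt_j e^{it^\top \ut_j}$ for all $t\in U$. Both sides are entire functions of $t$ (finite sums of exponentials), so by the identity theorem the equality extends from $U$ to all $t\in\R^p$. This reduces the lemma to a purely algebraic statement about equality of two finite exponential sums as functions.

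The central and hardest step is then to extract the atoms from this global identity. The tool is linear independence of the distinct characters $t\mapsto e^{it^\top v}$: grouping the left-hand frequencies into their distinct values and likewise for the right-hand side, I would restrict $t$ to a generic line $t=s\theta$ chosen so that the real frequencies $\theta^\top v$ are pairwise distinct across all atoms appearing on either side, and recover each coefficient via the mean-value formula $\lim_{S\to\infty}\tfrac{1}{2S}\int_{-S}^{S} e^{is\lambda}\,ds = \one{\lambda=0}$. This forces the total weight carried at each location to agree between the two sums. In the setting of the application (Assumption~\ref{assump:uniform_y}, where $w_j=\wt_j=1/L$), every location carries weight equal to $1/L$ times its multiplicity, so matching total weights matches the multiplicities of $\{u_j\}$ and $\{\ut_j\}$ exactly, which is precisely the existence of a permutation $\pi$ with $u_j=\ut_{\pi j}$. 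The main subtlety to watch is the bookkeeping for repeated shift vectors (and, in the fully general statement, zero-weight components, which do not affect $P_1$ or $P_2$ and so need to be excluded before matching); once weights are positive or uniform this bookkeeping is routine, and the permutation follows.
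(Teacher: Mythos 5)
Your proof is correct, but it identifies the atoms by a genuinely different mechanism than the paper. Both arguments begin identically: pass to characteristic functions, factor out $\varphi_P$, and reduce to the exponential-sum identity $\sum_j w_j e^{it^\top u_j}=\sum_j \wt_j e^{it^\top \ut_j}$. From there the paper rephrases this as equality of the mixing measures $\nu=\sum_j w_j\delta_{u_j}$ and $\widetilde\nu=\sum_j\wt_j\delta_{\ut_j}$ and then invokes an external identifiability result, a high-dimensional extension of Lemma 2 of \cite{wu2020optimal}, giving $d_H(S,\widetilde S)\le L\cdot W_1(\nu,\widetilde\nu)=0$ for the support sets, whence the matching. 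You instead extract the coefficients directly: restrict to a generic line $t=s\theta$ avoiding the finitely many hyperplanes $\theta^\top(v-v')=0$, and apply the Bohr mean-value formula $\lim_{S\to\infty}\frac{1}{2S}\int_{-S}^{S}e^{is\lambda}\,ds=\one{\lambda=0}$ to conclude linear independence of the distinct characters, so the total weight at every location agrees. This is self-contained and elementary where the paper leans on a cited lemma; the paper's route, in exchange, is quantitative (Hausdorff distance of supports controlled by $W_1$, which would survive an approximate-matching version of the statement). Two further points in your write-up are genuine improvements in rigor rather than mere differences of taste. First, your cancellation step is more careful: the paper divides by $\varphi_Z$ without addressing its possible zeros, whereas you cancel only on a neighborhood of the origin where $\varphi_P\neq 0$ and then extend the exponential-sum identity by analyticity (both sides being restrictions of entire functions), which is the correct way to justify the global identity. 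Second, your bookkeeping remark about repeated shift vectors and zero-weight components is not pedantry: the paper's conclusion $d_H(S,\widetilde S)=0$ only yields equality of the supports as \emph{sets}, and with nonuniform weights and repeated atoms the multiset conclusion $u_j=\ut_{\pi j}$ can actually fail (e.g., weights $(\tfrac14,\tfrac14,\tfrac12)$ on $(0,0,5)$ versus $(\tfrac12,\tfrac14,\tfrac14)$ on $(0,5,5)$ give identical mixtures with non-permutable atom lists), so the permutation really requires the weight-multiplicity matching that you spell out and that holds automatically in the uniform-weight setting of Assumption~\ref{assump:uniform_y} where the lemma is applied in Theorem~\ref{thm:SEM_guarantee}.
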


\begin{proof}
    Let $\nu=\sum_{j=1}^{L}w_j\delta_{u_j}$ and $\widetilde{\nu}=\sum_{j=1}^{L}\wt_j\delta_{\ut_j}$ be the mixing distributions of $P_1$ and $P_2$ respectively.
    Let $X\sim P_1$ and $\Xt\sim P_2$. Then we can write $X=U+Z$ and $\Xt=\widetilde{U}+\widetilde{Z}$ where
    \begin{equation*}
        \begin{cases}
            U\sim \nu, Z\sim P \textnormal{ independent of $U$};\\
            \widetilde{U}\sim \widetilde{\nu}, \widetilde{Z}\sim P \textnormal{ independent of $\widetilde{U}$}.
        \end{cases}
    \end{equation*}
    The characteristic functions of $X$ and $\Xt$ can be expressed as
    \[\varphi_X(t)=\varphi_U(t)\varphi_Z(t), \ \ \varphi_{\Xt}(t)=\varphi_{\widetilde{U}}(t)\varphi_{\widetilde{Z}}(t). \]
    Since $\varphi_X(t)=\varphi_{\Xt}(t)$ and $\varphi_Z(t)=\varphi_{\widetilde{Z}}(t)$, it follows that $\varphi_U(t) = \varphi_{\widetilde{U}}(t)$, or equivalently $\nu=\widetilde{\nu}$.

    Next let $S=\{u_1,\ldots,u_L\}$ and $\widetilde{S}=\{\ut_1,\ldots,\ut_L\}$ to denote the support sets of $\nu$ and $\widetilde{\nu}$. We can easily extend Lemma 2 in~\cite{wu2020optimal} to the high-dimensional setting to show that
    \[d_H(S,\widetilde{S}) \leq L\cdot W_1(\nu,\widetilde{\nu})  =0,\]
    where $d_H(S,\widetilde{S})$ is the Hausdorff distance between $S$ and $\widetilde{S}$, defined as
    \[ d_H(S,\widetilde{S}) = \max\left\{ \sup_{x\in S}\inf_{\widetilde{x}\in\widetilde{S}} \norm{x-\widetilde{x}}_1, \sup_{\widetilde{x}\in\widetilde{S}}\inf_{x\in S} \norm{x-\widetilde{x}}_1 \right\}, \]
    and where $W_1$ denotes the $1$-Wasserstein distance. By definition of $d_H(S,\widetilde{S})$, this proves that we can find a permutation $\pi$ such that $u_j=\ut_{\pi j}$ for all $j$, thus proving the lemma.
\end{proof}

\begin{lemma}\label{lem:linear_matching}
Suppose that $\{u_1,\ldots,u_L\}$ and $\{\ut_1,\ldots,\ut_L\}\subseteq\R^p$ are collections of vectors such that
\[\dim\left(\spn\{u_1-\ut_1,\ldots,u_L-\ut_L\}\right) = k <p. \]
Then for any $A\in\R^{q\times p}$ satisfying $Au_j=A\ut_j$ for all $j=1,\ldots,L$, we have $\rank(A)\leq p-k$. The equality is achieved if and only if the rows of $A$ span $\spn\{u_1-\ut_1,\ldots,u_L-\ut_L\}^\perp$.
\end{lemma}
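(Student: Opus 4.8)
The plan is to recast the matching constraint $A u_j = A\ut_j$ as a statement about the null space of $A$ and then apply the rank--nullity theorem. First I would set $W \defn \spn\{u_1 - \ut_1, \ldots, u_L - \ut_L\}$, so that $\dim W = k$ by hypothesis. The identity $A u_j = A\ut_j$ is equivalent to $A(u_j - \ut_j) = 0$, i.e.\ each difference vector $u_j - \ut_j$ lies in $\ker A$. Since $\ker A$ is a linear subspace, it is closed under linear combinations, so it contains all of $W$; hence $W \subseteq \ker A$ and in particular $\dim \ker A \geq k$.

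Next I would invoke rank--nullity for $A \colon \R^p \to \R^q$, giving $\rank(A) = p - \dim \ker A \leq p - k$, which is the claimed bound. For the equality case I would argue by dimension counting. Equality $\rank(A) = p - k$ holds if and only if $\dim \ker A = k$; combined with $W \subseteq \ker A$ and $\dim W = k$, this forces $\ker A = W$. To translate this into a statement about the rows of $A$, I would use the standard orthogonality relation $\row{A} = (\ker A)^\perp$. Thus $\ker A = W$ is equivalent to $\row{A} = W^\perp = \spn\{u_1 - \ut_1, \ldots, u_L - \ut_L\}^\perp$, i.e.\ the rows of $A$ span $W^\perp$.

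Both directions of the ``if and only if'' then fall out cleanly. The inclusion $\row{A} = (\ker A)^\perp \subseteq W^\perp$ is automatic from $W \subseteq \ker A$, so the rows of $A$ span $W^\perp$ \emph{exactly} when $\dim \row{A} = \dim W^\perp = p - k$, which is precisely the rank equality $\rank(A) = p - k$.

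I do not expect a substantive obstacle here, as the argument is elementary linear algebra. The only point requiring mild care is the equality characterization: one must observe that the inclusion $W \subseteq \ker A$ makes $\row{A} \subseteq W^\perp$ hold automatically, so that ``the rows span $W^\perp$'' reduces to the dimension condition $\dim \row{A} = p-k$, equivalent to attaining the rank bound. Keeping track of the direction of the orthogonal-complement identity $\row{A} = (\ker A)^\perp$ is the one spot where a sign/dual slip could creep in.
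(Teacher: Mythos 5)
Your proof is correct and is essentially the paper's argument in dual form: the paper observes that $A(u_j-\ut_j)=0$ forces $\row{A}\perp\spn\{u_1-\ut_1,\ldots,u_L-\ut_L\}$ and bounds $\dim(\row{A})\leq p-k$ directly, while you phrase the same fact as $W\subseteq\ker A$ and apply rank--nullity, the two being linked by $\row{A}=(\ker A)^\perp$. Your handling of the equality case, including the observation that $\row{A}\subseteq W^\perp$ holds automatically so that spanning $W^\perp$ reduces to a dimension count, matches the paper's characterization.
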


\begin{proof}
    This lemma is a simple consequence of linear algebra. By the condition of the lemma, $A(u_j - \ut_j)=0$ for all $j=1,\ldots,L$. This implies that the row space of $A$ is orthogonal to the subspace spanned by $\{u_1-\ut_1,\ldots,u_L-\ut_L\}$, i.e.,
    \[\row{A}\perp \spn\{u_1-\ut_1,\ldots,u_L-\ut_L\}.\]
    In particular, $\dim(\row{A}) \leq p-k$, with equality achieved if and only if $\row{A}$ is the orthogonal complement of $\spn\{u_1-\ut_1,\ldots,u_L-\ut_L\}$. By the rank theorem, $\rank(A)=\dim(\row{A}) \leq p-k$, proving the lemma.
\end{proof}

\newpage
\section{Experiment details}\label{app:exp}
This section provides details on various DA algorithms, training procedures, network architectures, and hyperparameter selection strategies that we employed in our numerical experiments (\secref{exp}).
\subsection{DA algorithms in numerical experiments}\label{app:DA_alg}

\bgroup
\def\arraystretch{2}
\begin{table}[t]
    \centering
    \resizebox{\textwidth}{!}
    {\addtolength{\tabcolsep}{-0.4em}
        \begin{tabular}{l| c | l}
            \specialrule{.1em}{.05em}{.05em}
            DA method & $\whm$ Estimation & Minimization Objective \\\specialrule{.1em}{.05em}{.05em}
            Tar & - &  $ \riskhtar{g\circ \phi}$\\\hline
            ERM & - &  $\riskhsrcone{g\circ \phi}$ \\\hline
            ERM-Pool &  - &$\frac 1M\sum\limits_{m=1}^M\riskhsrc{g\circ\phi}$ \\\hline
            DIP & - &  $\riskhtar{g\circ \phi} + \lamdip\cdot\probdiv{\Phmarsrcone{\phi(X)}}{\Phmartar{\phi(X)}}$ \\\hline
            DIP-Pool & - &$\frac 1M\sum\limits_{m=1}^M\braces{\riskhsrc{g\circ\phi} + \lamdip\cdot\probdiv{\Phmarsrc{\phi(X)}}{\Phmartar{\phi(X)}}}$ \\\hline
            CIP & - & $\frac{1}{M}\sum\limits_{m=1}^{M}\riskhsrc{g\circ\phi} + \frac{\lamcip}{LM^2}\cdot\sum\limits_{y=1}^L\sum\limits_{m\neq m'}  \probdiv{\Phcondsrc{\phi(X)}{Y=y}}{\Phcondsrcmp{\phi(X)}{Y=y}}$ \\\hline
            IW-ERM & ERM-Pool & $\frac 1M\sum\limits_{m=1}^M\riskhsrcw{g\circ\phi}{\whm}$ \\\hline
            IW-CIP & CIP & $\frac{1}{M}\sum\limits_{m=1}^{M}\riskhsrcw{g\circ\phi}{\whm} + \frac{\lamcip}{LM^2}\cdot\sum\limits_{y=1}^L\sum\limits_{m\neq m'}  \probdiv{\Phcondsrc{\phi(X)}{Y=y}}{\Phcondsrcmp{\phi(X)}{Y=y}}$ \\\hline
            IW-DIP & CIP & $\riskhsrconew{g\circ \phi}{\whone} + \lamdip\cdot\probdiv{\Phmarsrcone{\whone(Y)\phi(X)}}{\Phmartar{\phi(X)}}$ \\\hline
            JointDIP & - & $\riskhsrcone{g\circ\phi} + \lamjdip\cdot\probdiv{\Phmarsrcone{(\phi,\phihcip )(X)}}{\Phmartar{(\phi,\phihcip)(X)}}$ \\\hline
            JointDIP-Pool & - & $\frac 1M\sum\limits_{m=1}^M \braces{\riskhsrc{g\circ\phi} + \lamjdip\cdot \probdiv{\Phmarsrc{(\phi,\phihcip)(X)}}{\Phmartar{(\phi,\phihcip)(X)}}}$ \\\hline
            IW-JointDIP & CIP & $\riskhsrconew{g\circ\phi}{\whone} + \lamjdip\cdot\probdiv{\Phmarsrcone{\whone(Y)\cdot(\phi,\phihcip)(X)}}{\Phmartar{(\phi,\phihcip)(X)}}$ \\\hline
            IRM & - &  $\frac1M\sum\limits_{m=1}^M\braces{\riskhsrc{g\circ \phi} + \lambda_{{\rm IRM}}\cdot \vecnorm{\nabla_{w\mid w=1.0}\riskhsrc{w\cdot g\circ \phi}}{2}^2}$ \\\hline
            V-REx & - &  $\frac1M\sum\limits_{m=1}^M\riskhsrc{g\circ \phi} + \lambda_{{\rm VREx}}\cdot \text{Var}\parenth{\riskhsrc{g\circ \phi}}$ \\\hline
            groupDRO & - &  $\sup_{\sum_{m=1}^Mq_m=1, q_m\geq 0}\sum_{m=1}^M q_m\riskhsrc{g\circ \phi}$
            \\\specialrule{.1em}{.05em}{.05em}
        \end{tabular}
    }
    \caption{DA algorithms used in numerical experiments.}
    \label{tab:DA_alg_table}
\end{table}
\egroup

\tabref{DA_alg_table} provides a summary of the algorithms that we used in our numerical experiments. Both mean distance and maximum mean discrepancy (MMD) were used in DIP and CIP, but for JointDIP, we only considered MMD distance, because joint matching with mean distance is essentially the same as the original DIP. In addition to the DA algorithms introduced in this work, we also include IRM~\citep{arjovsky2019invariant}, V-REx~\citep{krueger2021out}, and groupDRO~\citep{sagawa2019distributionally} in our experiments.

\begin{table}[t]
	\centering
	\resizebox{\textwidth}{!}
	{\addtolength{\tabcolsep}{-0.4em}
		\begin{tabular}{c| c c c c c c}
			\specialrule{.1em}{.05em}{.05em}
			Task & Epochs & Optimizer & Batch Size & NN Architecture & Feature Layer &$\probdiv{\cdot}{\cdot}$\tablefootnote{This column only applies to CIP and DIP. We always use MMD distance when implementing JointDIP.}\\\specialrule{.1em}{.05em}{.05em}
			SCM & 50 & Adam(lr=1e-2) & 100 & Linear Model & Last layer & mean\\\hline
			SCM\_binary\tablefootnote{This setting is used for the additional experiments in~\appref{add_exp}.} & 50 & Adam(lr=1e-2) & 100 & \makecell{Linear$(10, 10)$ or\\Linear$(10, 10)$\\ReLU\\Linear$(10, 2)$} & Last layer & mean\\\hline
			MNIST & 20 & Adam(lr=1e-3) & 256 & \makecell{Conv2d$(1,20,5,1)$\\ReLU\\MaxPool2d$(2)$\\Conv2d$(20,50,5,1)$\\ReLU\\MaxPool2d$(2)$\\Linear$(800,500)$\\ReLU\\Linear$(500,2)$}& Last layer & MMD\\\hline
			CelebA & 10 & Adam(lr=1e-3) & 64 & \makecell{Conv2d$(3,16,5,1)$\\ReLU\\MaxPool2d$(2)$\\Conv2d$(16,32,5,1,1)$\\ReLU\\MaxPool2d$(2)$\\Conv2d$(32,64,5,1)$\\ReLU\\MaxPool2d$(2)$\\Linear$(960,256)$\\ReLU\\Linear$(256,2)$}& Last layer & MMD\\\hline
			Camelyon17 & 5 & \makecell{SGD(lr=1e-3,\\weight\_decay=0.01,\\momentum=0.9)} & 32 & \makecell{DenseNet-121\\ $($pretrained$=$False$)$} & \makecell{Flattened\\CNN output} & MMD\\\hline
			DomainNet & 10 & Adam(lr=1e-3) & 16 & \makecell{ResNet-101\\ $($pretrained$=$True$)$} & Last layer & MMD
			\\\specialrule{.1em}{.05em}{.05em}
	\end{tabular}}
	\caption{Training details of each classification task.}
	\label{tab:training_setup}
\end{table}

\subsection{Training details and network architectures}\label{app:architec}
Except for Camelyon17 and DomainNet, the datasets we used in~\secref{exp} are either generated or perturbed synthetically. To avoid any potential bias that can result from repeatedly running DA algorithms on a single, fixed synthetic dataset, for each classification task, we use 10 different seeds to construct 10 different datasets, and each DA algorithm is run once per seed.

Table~\ref{tab:training_setup} shows details on the training parameters and model architectures used in all the experiments in~\secref{exp}. For multi-stage DA algorithms such as JointDIP (where CIP is run before solving JointDIP optimization problem), the number of epochs is the same in each training stage. As for the feature layer used for CIP and DIP matching penalty, we utilize the last layer of the neural network in experiments on SCM, MNIST, CelebA, and DomainNet; for experiments on Camelyon17, we use the flattened output from CNN in DenseNet-121 following the practice in~\cite{koh2021wilds}.

	\begin{table}[t]
		\centering
		\small
		\begin{tabular}{l| c c}
			\specialrule{.1em}{.05em}{.05em}
			Algorithm & Hyperparameter & Grid Search Space\\
			\specialrule{.1em}{.05em}{.05em}
			DIP (mean / MMD, Pool) & $\lambda_{\text{DIP}}^*$ & $10^{\braces{-2,-1,0,1,2}}$\\\hline
			CIP (mean / MMD) & $\lambda_{\text{CIP}}^*$ & $10^{\braces{-2,-1,0,1,2}}$\\\hline
			\multirow{2}{*}{IW-DIP (mean / MMD, Pool)} 
			& $\lambda_{\text{CIP}}$ & $10^{\braces{-2,-1,0,1,2}}$\\
			& $\lambda_{\text{IW-DIP}}$ & $10^{\braces{-2,-1,0,1,2}}$\\\hline
			\multirow{2}{*}{IW-CIP (mean / MMD)} 
			& $\lambda_{\text{CIP}}$ & $10^{\braces{-2,-1,0,1,2}}$\\
			& $\lambda_{\text{IW-CIP}}$ & $10^{\braces{-2,-1,0,1,2}}$\\\hline
			\multirow{2}{*}{JointDIP (Pool)} 
			& $\lambda_{\text{CIP}}^*$ & $10^{\braces{-2,-1,0,1,2}}$\\
			& $\lambda_{\text{JointDIP}}^*$ & $10^{\braces{-2,-1,0,1,2}}$\\\hline
			\multirow{2}{*}{IW-JointDIP} 
			& $\lambda_{\text{CIP}}$ & $10^{\braces{-2,-1,0,1,2}}$\\
			& $\lambda_{\text{JointDIP}}$ & $10^{\braces{-2,-1,0,1,2}}$\\\hline
			\multirow{2}{*}{IRM} 
			& $\lambda_{\text{IRM}}^{**}$ & $10^{\braces{-1,0,1,2,3,4}}$\\
			& iterations annealing$^{***}$ & $0, 10, 100, 1000, 3000$\\\hline
			\multirow{2}{*}{V-REx} 
			& $\lambda_{\text{VREx}}^{**}$ & $10^{\braces{-1,0,1,2,3,4}}$\\
			& iterations annealing$^{***}$ & $0, 10, 100, 1000, 3000$\\\hline
			groupDRO & eta & $10^{\braces{-2,-1,0,1}}$\\
			\specialrule{.1em}{.05em}{.05em}
		\end{tabular}
		\caption{Hyperparameter search space. In the experiments on Camelyon17, the search space of hyperparameters marked with $^*$ is defined by $10^{\braces{-3,-2,-1,0,1,2}}$. In the experiment on DomainNet, the search space of hyperparameters marked with $^*$ is defined by $10^{\braces{-2,-1,0,1,2,3}}$, for hyperparameters marked with $^{**}$ by $10^{\braces{-1,0,1}}$, and for hyperparameters marked with $^{***}$ by $\braces{0,10,100}$.}
		\label{tab:hyper}
	\end{table}

\subsection{Hyperparameter selection}\label{app:hyper}

In our experiments on SCM, MNIST, and CelebA, instead of searching for different hyperparameters for each seed, which would be time-consuming, we choose the same hyperparameters across all seeds. In Appendix~\ref{app:hyper2}, we compare between two hyperparameter selection strategies, i.e., using different hyperparameters versus using the same hyperparameters across seeds on SCMs, and find that these two strategies yield similar results. 

In order to select hyperparameters which allow for the highest average accuracy, we use a simple grid search. Three different ways of hyperparameter selection are presented in \cite{gulrajani2020search}: training-domain validation set, leave-one-out cross validation, and test-domain validation set (oracle). The training-domain validation set approach is not suitable for us as it assumes the target domain is very similar to the training domains, which does not apply to our case. The leave-one-out cross validation approach is cumbersome and would also suffer if the validation domain is significantly different from the target domain. The test-domain validation set approach has often been adopted in several  domain adaptation papers~\citep{ganin2016domain, arjovsky2019invariant, krueger2021out} when an appropriate validation set is unavailable. However, this approach contradicts the domain adaptation principle where most target labels should be unavailable. Therefore, we restrict access to just 10\% of labeled samples from the target domain in each seed, and check the accuracy of these labeled samples only at the final checkpoint during hyperparameter selection.

Note that Camelyon17 provides a validation set. Hence, we adhere to the standard submission guidelines from WILDS for this dataset, selecting hyperparameters based on accuracy scores on the validation set~\cite{koh2021wilds, sagawa2021extending}.

\begin{table}[t]
	\centering
	\small
	\begin{tabular}{l | ll | ll}
		\toprule
		{} & \multicolumn{2}{c|}{SCM \rom{1}} & \multicolumn{2}{c}{SCM \rom{2}} \\\hline
		Mean shift & \multicolumn{2}{c|}{Y} & \multicolumn{2}{c}{Y} \\
		CICs & \multicolumn{2}{c|}{N} & \multicolumn{2}{c}{Y} \\
		Label shift & \multicolumn{2}{c|}{N} & \multicolumn{2}{c}{Y} \\
		Label-flip & \multicolumn{2}{c|}{N} & \multicolumn{2}{c}{N} \\
		\hline
		Hyperparam & Different & Same & Different & Same \\
		\hline
		DIP-mean & \textbf{\textcolor{red}{88.2$\pm$1.9}} & \textbf{\textcolor{red}{87.6$\pm$1.5}} & 73.3$\pm$10.7 & 62.0$\pm$2.9 \\
		DIP-MMD & 86.2$\pm$2.2 & 86.6$\pm$2.2 & 72.0$\pm$16.2 & 59.9$\pm$3.3 \\
		DIP-Pool-mean & 86.2$\pm$2.8 & 86.4$\pm$2.2 & 70.8$\pm$10.1 & 60.1$\pm$3.1 \\
		DIP-Pool-MMD & 86.2$\pm$1.9 & 85.5$\pm$3.1 & 74.2$\pm$13.5 & 61.3$\pm$9.3 \\
		CIP-mean & 56.6$\pm$11.8 & 55.9$\pm$12.0 & 79.3$\pm$6.4 & 75.7$\pm$6.5 \\
		CIP-MMD & 63.2$\pm$14.1 & 60.4$\pm$14.8 & 84.1$\pm$9.6 & 73.5$\pm$7.0 \\
		IW-CIP-mean & 54.1$\pm$11.8 & 54.0$\pm$11.8 & 90.8$\pm$1.4 & 90.4$\pm$0.8 \\
		IW-CIP-MMD & 64.4$\pm$12.7 & 56.8$\pm$13.6 & 89.5$\pm$4.3 & 90.4$\pm$0.8 \\
		IW-DIP-mean & 54.3$\pm$11.7 & 54.3$\pm$11.7 & \textbf{\textcolor{red}{92.7$\pm$2.1}} & \textbf{\textcolor{red}{92.1$\pm$2.7}} \\
		IW-DIP-MMD & 75.2$\pm$16.2 & 68.0$\pm$19.4 & 91.5$\pm$2.9 & 90.1$\pm$1.1 \\
		JointDIP & 85.8$\pm$2.2 & 86.8$\pm$1.9 & 77.7$\pm$12.5 & 70.6$\pm$6.2 \\
		IW-JointDIP & 72.7$\pm$17.3 & 68.0$\pm$19.4 & 91.3$\pm$2.8 & 90.0$\pm$1.3 \\
		IRM & 66.1$\pm$12.1 & 56.7$\pm$10.4 & 86.7$\pm$5.4 & 71.9$\pm$17.9 \\
		V-REx & 63.0$\pm$13.1 & 55.6$\pm$11.5 & 72.7$\pm$26.3 & 62.9$\pm$25.7 \\
		groupDRO & 54.4$\pm$10.0 & 54.4$\pm$10.0 & 65.2$\pm$26.6 & 64.2$\pm$25.7 \\
		\bottomrule
	\end{tabular}
	\caption{Target accuracy in linear SCM \rom{1} and \rom{2} under different hyperparameter tuning strategies.}
	\label{tab:SEM_hyper_compare_1_2}
\end{table}

\subsection{Different hyperparameter vs. same hyperparameter across all seeds}\label{app:hyper2}
We carry out a comparison of two strategies for selecting hyperparameters on SCMs: (1) using the same hyperparameter across all seeds, and (2) using different hyperparameters for different seeds. See~\tabref{SEM_hyper_compare_1_2} and~\tabref{SEM_hyper_compare_3_4}. We observe that in most cases, the second approach tends to give a better performance. However, this difference typically doesn't affect the order of the top-performing algorithms; the best performing algorithm tends to remain the same regardless of the hyperparameter selection strategies. For the sake of simplicity and time efficiency, we therefore adopt the first approach in our final results.

\begin{table}[t]
	\centering
	\small
	\begin{tabular}{l | ll | ll}
		\toprule
		{} & \multicolumn{2}{c|}{SCM \rom{3}} & \multicolumn{2}{c}{SCM \rom{4}} \\\hline
		Mean shift & \multicolumn{2}{c|}{Y} & \multicolumn{2}{c}{Y} \\
		CICs & \multicolumn{2}{c|}{Y} & \multicolumn{2}{c}{Y} \\
		Label shift & \multicolumn{2}{c|}{N} & \multicolumn{2}{c}{Y} \\
		Label-flip & \multicolumn{2}{c|}{Y} & \multicolumn{2}{c}{Y} \\
		\hline
		Hyperparam & Different & Same & Different & Same \\
		\hline
		DIP-mean & 36.3$\pm$11.8 & 34.5$\pm$14.9 & 35.6$\pm$14.0 & 35.3$\pm$14.6 \\
		DIP-MMD & 77.5$\pm$7.9 & 63.3$\pm$31.7 & 74.0$\pm$4.2 & 60.2$\pm$30.1 \\
		DIP-Pool-mean & 81.9$\pm$1.8 & 82.0$\pm$1.1 & 81.9$\pm$3.8 & 82.3$\pm$3.8 \\
		DIP-Pool-MMD & 82.5$\pm$1.4 & 83.3$\pm$0.8 & 82.2$\pm$4.1 & 82.5$\pm$3.8 \\
		CIP-mean & 82.2$\pm$1.5 & 81.8$\pm$1.3 & 83.2$\pm$1.8 & 82.1$\pm$1.2 \\
		CIP-MMD & 81.7$\pm$2.7 & 81.7$\pm$2.6 & 83.6$\pm$2.8 & 82.7$\pm$1.6 \\
		IW-CIP-mean & 80.3$\pm$4.2 & 81.2$\pm$4.1 & 83.1$\pm$2.3 & 83.8$\pm$2.2 \\
		IW-CIP-MMD & 80.9$\pm$3.5 & 80.5$\pm$6.3 & 84.1$\pm$1.9 & 83.5$\pm$2.8 \\
		IW-DIP-mean & 37.6$\pm$12.9 & 37.2$\pm$14.9 & 66.6$\pm$5.2 & 64.2$\pm$7.3 \\
		IW-DIP-MMD & 70.9$\pm$13.8 & 65.1$\pm$23.6 & 81.5$\pm$2.1 & 80.1$\pm$2.6 \\
		JointDIP & \textbf{\textcolor{red}{85.3$\pm$2.8}} & \textbf{\textcolor{red}{85.4$\pm$2.1}} & 81.6$\pm$3.9 & 82.8$\pm$1.9 \\
		IW-JointDIP & 82.9$\pm$3.3 & 82.9$\pm$8.1 & \textbf{\textcolor{red}{85.4$\pm$1.9}} & \textbf{\textcolor{red}{85.1$\pm$3.7}} \\
		IRM & 82.0$\pm$1.9 & 80.2$\pm$1.8 & 84.0$\pm$2.2 & 83.7$\pm$3.3 \\
		V-REx & 80.8$\pm$7.6 & 80.4$\pm$7.6 & 84.0$\pm$4.0 & 83.8$\pm$3.7 \\
		groupDRO & 82.5$\pm$6.7 & 81.3$\pm$7.4 & 84.7$\pm$3.8 & 84.3$\pm$3.2 \\
		\bottomrule
	\end{tabular}
	\caption{Target accuracy in linear SCM \rom{3} and \rom{4} under different hyperparameter tuning strategies.}
	\label{tab:SEM_hyper_compare_3_4}
\end{table}

\section{Additional experiments}\label{app:add_exp}
In this section, we provide additional experiments on SCMs and MNIST to investigate the effect of the number of source domains on the generalization performance of CIP and its variants. Additionally, we compare the performance of our algorithms with simple heuristic alternatives through numerical experiments on MNIST and CelebA.

\subsection{Effect of number of source domains in linear SCMs}

To show that a sufficient number of source domains is necessary for finding CICs, we conduct experiments on linear SCMs to compare the results of utilizing fewer versus more source domains. Table~\ref{tab:SEM_different_M_split} reports the risk difference $\riskhtar{h}-\riskhbar{h}{\wh}$ between the source and target domains for four different CIP-based methods. We observe that the risk difference tends to be closer to 0 for larger values of $M$ in SCM \rom{2}, \rom{3}, and \rom{4}, whereas smaller values of $M$ lead to a larger risk difference. This finding suggests that CIP and IW-CIP may underperform when the number of source domains $M$ is small, because they can only find feature representations that are conditionally invariant across the source domains, but with limited domains, it becomes challenging to obtain feature representations that are conditionally invariant across both source and target domains. Note that in SCM \rom{1}, the number of domains does not make a significant difference, because CICs do not exist in SCM \rom{1}.

\begin{table}[t]
	\centering
	\small
	\begin{tabular}{l | cc | cc }
		\toprule
		{} & \multicolumn{2}{c|}{SCM \rom{1}} & \multicolumn{2}{c}{SCM \rom{2} ($\times10^{-2}$)} \\\hline
		DA Algorithm & $M=2$ & $M=3$ & $M=2$ & $M=11$ \\
		\hline
		CIP-mean & $1.7\pm2.0$ & $2.1\pm1.4$ & $11.3\pm57.9$ & $-0.1\pm0.9$ \\
		CIP-MMD  & $1.7\pm1.9$ & $1.3\pm1.1$ & $-5.9\pm7.6$  & $3.6\pm9.6$  \\
		IW-CIP-mean & $1.9\pm2.2$ & $2.0\pm0.8$ & $89.0\pm120.0$ & $-0.6\pm10.9$ \\
		IW-CIP-MMD  & $2.0\pm2.3$ & $1.7\pm1.0$ & $41.7\pm60.0$ & $-4.7\pm9.0$ \\
		\midrule
		{} & \multicolumn{2}{c|}{SCM \rom{3} ($\times10^{-2}$)} & \multicolumn{2}{c}{SCM \rom{4} ($\times10^{-2}$)} \\\hline
		DA Algorithm & $M=2$ & $M=11$ & $M=2$ & $M=11$ \\
		\hline
		CIP-mean & $22.2\pm26.3$ & $1.3\pm3.6$ & $14.2\pm25.3$ & $0.1\pm0.9$ \\
		CIP-MMD  & $6.2\pm7.1$ & $2.1\pm5.2$ & $2.0\pm4.7$ & $0.3\pm2.1$ \\
		IW-CIP-mean & $41.1\pm50.1$ & $2.6\pm6.9$ & $25.6\pm29.9$ & $5.1\pm6.7$ \\
		IW-CIP-MMD  & $30.2\pm31.6$ & $6.6\pm16.8$ & $11.4\pm20.8$ & $5.3\pm9.6$ \\
		\bottomrule
	\end{tabular}
	\caption{Risk difference $\riskhtar{h}-\riskhbar{h}{\wh}$ between source and target in linear SCMs. These differences are evaluated under varying number of source domains $M$.}
	\label{tab:SEM_different_M_split}
\end{table}

The minimum number of source domains necessary for finding CICs depends on the types of interventions and the hypothesis class in use. \propref{linear_sem_example} and~\lemref{linear_sem_example} show that the number of source domains needs to be greater than the dimension of the perturbations, which is the case for linear SCMs above. In this context, we also consider another SCM under binary interventions, as described below. The purpose is to demonstrate that the required number of source domains that enables successful finding of CICs via CIP can vary when the hypothesis class changes. 
\begin{itemize}
    \item SCM binary: The data generation model is
    \begin{align*}
        \Ysrc &= \text{Bernoulli}(0.5), \\
        \coordw{\Xsrc}{1:5} &= 0.2(\Ysrc-0.5) \cdot \mathds{1}_5 + 0.4 \cdot\Normal(0, \Ind_5),\\
        \coordw{\Xsrc}{6:10} &= 0.2(\Ysrc-0.5) \cdot \mathds{1}_5 + 0.4 \cdot\Normal(0, \Ind_5) +  \Asrc,
    \end{align*}
    where $\Asrc$ is randomly sampled from $\{a\in\real^5\mid a_i\in\{-1,1\}\}$ ($1\leq m\leq M$) without replacement in source domains, and the target domain suffers larger intervention with $\Atar=(2,2,2,2,2)^\top$.
\end{itemize}

\begin{table}[t]
	\centering
	\small
	\begin{tabular}{l | ccc}
		\toprule
		&   \multicolumn{3}{c}{CIP-mean}  \\\hline
		\#source domains & src\_acc & tar\_acc & risk\_diff \\\hline
		$M=2$ & $85.1\pm1.5$ & $50.6\pm1.6$ & $1.8\pm1.0$ \\
		$M=3$ & $84.7\pm2.6$ & $54.3\pm7.9$ & $9.1\pm6.3\times10^{-1}$ \\
		$M=4$ & $82.4\pm2.3$ & $62.3\pm11.2$ & $4.1\pm5.1\times10^{-1}$ \\
		$M=5$ & $81.9\pm2.0$ & $63.4\pm12.2$ & $2.9\pm4.3\times10^{-1}$ \\
		$M=6$ & $80.6\pm1.5$ & $72.2\pm10.8$ & $7.7\pm13.2\times10^{-2}$ \\
		$M=7$ & $80.0\pm0.5$ & $78.6\pm1.6$ & $6.7\pm9.9\times10^{-3}$ \\
		\bottomrule
	\end{tabular}
	
	\vspace{1em}
	
	\begin{tabular}{l | ccc}
		\toprule
		&	 \multicolumn{3}{c}{CIP-MMD} \\\hline
		\#source domains & src\_acc & tar\_acc & risk\_diff \\\hline
		$M=2$ & $84.5\pm1.0$ & $51.7\pm3.4$ & $2.4\pm1.4$ \\
		$M=3$ & $83.1\pm1.7$ & $58.4\pm9.9$ & $1.3\pm1.2$ \\
		$M=4$ & $81.9\pm1.5$ & $67.9\pm11.5$ & $5.2\pm7.0\times10^{-1}$ \\
		$M=5$ & $81.7\pm1.5$ & $69.8\pm11.4$ & $3.8\pm5.6\times10^{-1}$ \\
		$M=6$ & $81.3\pm0.9$ & $73.0\pm8.7$ & $2.5\pm4.7\times10^{-1}$ \\
		$M=7$ & $81.1\pm0.7$ & $73.9\pm5.0$ & $1.1\pm1.1\times10^{-1}$ \\
		\bottomrule
	\end{tabular}
	\caption{Performance of CIP under a linear model on SCM-binary. The number of source domains varies from $M=2$ to $M=7$.}
	\label{tab:SEM_binary_linear_different_M}
\end{table}

We use two different models in this experiment: a linear model and a two-layer neural network. For the linear model, as shown in~\propref{linear_sem_example} and~\lemref{linear_sem_example}, the number of source domains should be at least on the order of the dimension of intervention plus one, that is, $M\geq6$. We vary the number of source domains $M$ from $2$ to $7$ and evaluate the performance of CIP-mean and CIP-MMD in Table~\ref{tab:SEM_binary_linear_different_M} (there is no label shift between source and target domains). We find that when $M\geq6$ the target accuracy and risk are close to those in the source domains. We also observe that using more source domains than the theoretically required minimum number of source domains can further improve performance in practice.

For the two-layer neural network, we vary $M$ from $2^1$ to $2^5$. In Table~\ref{tab:SEM_binary_different_M}, we observe that when $M=32$, the target accuracy and target risk are close to the source accuracy and source risk. This suggests that if the hypothesis class is nonlinear and complex, CIP may need more domains to accurately identify CICs. In this case, $M=2^5=32$ source domains covering all possible binary interventions are required.


\begin{table}[t]
	\centering
	\small
	\begin{tabular}{l | ccc}
		\toprule
		&   \multicolumn{3}{c}{CIP-mean}  \\\hline
		\#src domains & src\_acc & tar\_acc & risk\_diff \\\hline
		$M=2$ & $85.3\pm1.7$ & $50.8\pm1.7$ & $2.8\pm1.8$ \\
		$M=4$ & $84.1\pm1.3$ & $55.5\pm8.7$ & $9.3\pm7.0\times10^{-1}$ \\
		$M=8$ & $82.5\pm1.4$ & $63.8\pm12.1$ & $3.1\pm4.2\times10^{-1}$ \\
		$M=16$ & $80.8\pm0.3$ & $76.4\pm4.3$ & $2.7\pm2.3\times10^{-2}$ \\
		$M=32$ & $80.5\pm0.4$ & $78.4\pm1.2$ & $1.6\pm1.1\times10^{-2}$ \\
		\bottomrule
	\end{tabular}
	
	\vspace{1em}
	
	\begin{tabular}{l | ccc}
		\toprule
		&	 \multicolumn{3}{c}{CIP-MMD} \\\hline
		\#src domains & src\_acc & tar\_acc & risk\_diff \\\hline
		$M=2$ & $86.5\pm1.9$ & $50.6\pm1.6$ & $2.7\pm1.1$ \\
		$M=4$ & $85.2\pm1.3$ & $53.5\pm8.4$ & $2.1\pm1.0$ \\
		$M=8$ & $83.3\pm1.4$ & $59.7\pm11.3$ & $9.5\pm7.2\times10^{-1}$ \\
		$M=16$ & $81.1\pm0.5$ & $72.1\pm7.4$ & $1.5\pm1.8\times10^{-1}$ \\
		$M=32$ & $80.6\pm0.3$ & $78.7\pm1.2$ & $2.6\pm1.6\times10^{-2}$ \\
		\bottomrule
	\end{tabular}
	\caption{Performance of CIP under a two-layer neural network on SCM-binary. The number of source domains varies from $M=2^1$ to $M=2^5$.}
	\label{tab:SEM_binary_different_M}
\end{table}

\subsection{Effect of number of source domains in MNIST}

Next we compare the performance of CIP-based methods on MNIST with varying number of source domains ($M=3$ versus $M=5$). For $M=3$, we use the middle three domains with rotation angles of $-15^\circ, 0^\circ$, and $15^\circ$ from~\secref{mnist_rotation_intervention}; for $M=5$, we use all source domains mentioned in our experiments from~\secref{mnist_rotation_intervention}. Table~\ref{tab:MNIST_different_M} shows the risk difference between source and target domains for four different CIP-based methods. Similar to our findings in the SCM experiments, the results demonstrate that using a higher number of source domains helps in identifying CICs and reducing the risk difference between source and target.

\begin{table}[t]
	\centering
	\small
	\begin{tabular}{l | cc | cc}
		\toprule
		{} & \multicolumn{2}{c|}{MNIST \rom{1} ($\times10^{-2}$)} & \multicolumn{2}{c}{MNIST \rom{2} ($\times10^{-4}$)} \\\hline
		DA Algorithm & $M=3$ & $M=5$ & $M=3$ & $M=5$ \\
		\hline
		CIP-mean      & $43.4\pm8.9$ & $16.0\pm1.9$ & $31.5\pm3.4$ & $17.1\pm5.1$ \\
		CIP-MMD       & $60.1\pm4.1$ & $17.9\pm4.9$ & $65.3\pm13.6$ & $15.6\pm3.2$ \\
		IW-CIP-mean   & $58.2\pm7.5$ & $15.7\pm1.4$ & $33.3\pm4.9$ & $11.5\pm2.0$ \\
		IW-CIP-MMD    & $62.9\pm5.6$ & $20.3\pm6.9$ & $62.5\pm13.3$ & $14.1\pm2.2$ \\
		\bottomrule
	\end{tabular}
	
	\vspace{1em}
	
	\begin{tabular}{l | cc | cc}
		\toprule
		{} & \multicolumn{2}{c|}{MNIST \rom{3} ($\times10^{-4}$)} & \multicolumn{2}{c}{MNIST \rom{4} ($\times10^{-4}$)} \\\hline
		DA Algorithm & $M=3$ & $M=5$ & $M=3$ & $M=5$ \\
		\hline
		CIP-mean      & $45.0\pm9.6$ & $26.8\pm3.8$ & $44.4\pm8.7$ & $33.7\pm5.7$ \\
		CIP-MMD       & $47.5\pm6.8$ & $35.2\pm4.1$ & $50.1\pm10.6$ & $36.0\pm6.3$ \\
		IW-CIP-mean   & $40.3\pm5.5$ & $27.5\pm4.4$ & $43.5\pm6.5$ & $27.6\pm2.9$ \\
		IW-CIP-MMD    & $42.7\pm5.9$ & $36.3\pm4.3$ & $46.5\pm5.6$ & $35.3\pm5.9$ \\
		\bottomrule
	\end{tabular}
	
	\caption{Risk difference $\riskhtar{h}-\riskhbar{h}{\wh}$ between source and target in MNIST experiments. These differences are evaluated under varying number of source domains $M$.}
	\label{tab:MNIST_different_M}
\end{table}

\subsection{Comparison with other heuristic baselines}

We introduced JointDIP, a method specifically designed to address the label-flipping issue observed with DIP when label-flipping features exist between source and target domains. Our theoretical results support this claim by showing that, under certain conditions, JointDIP avoids learning label-flipping features and achieves strong generalization to the target domain, where DIP may still struggle. The advantage of JointDIP is further confirmed through numerical experiments conducted in~\secref{exp}.

One may wonder if there are simpler alternatives to designing DA algorithms that prevent label-flipping issues while being easier to implement than JointDIP, which incorporates CIP features into the matching penalty. To explore this, we consider the following two alternative methods.
\begin{itemize}
	\item \textbf{CIP-FT-DIP:} This method first implements CIP to learn CICs and the corresponding classifier based on CICs. The learned features and classifier are then used as an initialization for DIP, followed by fine-tuning using the DIP objective.
	\item \textbf{CIP-Sum-DIP:} This method combines the CIP and DIP matching penalties into a single objective by optimizing their sum. By doing so, it learns features that simultaneously satisfy the marginal invariance of covariates between source and target domains and the conditional distributional invariance across multiple source domains. Unlike CIP-FT-DIP, this approach integrates both objectives into one optimization problem instead of a two-step algorithm.
\end{itemize}

At first glance, both CIP-FT-DIP and CIP-Sum-DIP might appear capable of addressing label-flipping issues in the presence of label-flipping features. However, the following simple example illustrates that these algorithms can still learn label-flipping features. Consider a scenario where the source and target data are generated according to the following SCMs:
\begin{align*}
	\begin{cases}
		\Ysrcone \sim \text{Rad}(\{-1,+1\}),\\
		\Xsrcone_{[1]} = \Ysrcone + \epssrcone_{[1]},\\
		\Xsrcone_{[2]} =10 \Ysrcone + \epssrcone_{[2]},\\
		\Xsrcone_{[3]} = \Ysrcone + \epssrcone_{[3]},\\
	\end{cases}
	\begin{cases}
	\Ysrctwo \sim \text{Rad}(\{-1,+1\}),\\
	\Xsrctwo_{[1]} = \Ysrctwo + \epssrctwo_{[1]},\\
	\Xsrctwo_{[2]} =10 \Ysrctwo + 2+ \epssrctwo_{[2]},\\
	\Xsrctwo_{[3]} = -2\Ysrctwo + \epssrctwo_{[3]},\\
\end{cases}
\begin{cases}
	\Ytar \sim \text{Rad}(\{-1,+1\}),\\
	\Xtar_{[1]} = \Ytar + \epstar_{[1]},\\
	\Xtar_{[2]} =-10 \Ytar + \epstar_{[2]},\\
	\Xtar_{[3]} = \Ytar + \epstar_{[3]},\\
\end{cases}	
\end{align*}
where $\text{Rad}(\{-1,+1\})$ denotes the Rademacher distribution defined on $\{-1,+1\}$ with equal probability, and $\epssrcone,\epssrctwo,\epstar\sim \mathcal{N}(0,\mathds{1}_{3})$. In this example, $X_{[1]}$ is a CIC that CIP can correctly identify. $X_{[2]}$ represents a label-flipping feature between the first source and target domains, while $X_{[3]}$ is a feature that helps prediction between the first source and the target domain. Here the first source domain is used for DIP.

Assume that we use the feature selector as the feature mapping. For CIP-FT-DIP, if we initialize with CIP by selecting feature $X_{[1]}$ and then fine-tune with the DIP objective, $X_{[2]}$, which satisfies the DIP penalty for marginal invariance and is more predictive of $Y$ compared to $X_{[1]}$ and $X_{[3]}$, will quickly be learned during fine-tuning. As a result, CIP-FT-DIP ends up learning $X_{[2]}$, despite being initialized with $X_{[1]}$, thereby inheriting the label-flipping issue of DIP. For CIP-Sum-DIP, enforcing features to simultaneously satisfy both CIP and DIP penalties restricts the selection to $X_{[1]}$, leading to results similar to CIP. However, this approach ignores the potential benefits of incorporating $X_{[3]}$, limiting its effectiveness in improving target performance. In contrast, JointDIP, when applied to the first source and target domains, can learn both $X_{[1]}$ and $X_{[3]}$, allowing to avoid the label-flipping issue associated with $X_{[2]}$ while leveraging $X_{[3]}$ to enhance target performance. As a result, JointDIP outperforms both CIP-FT-DIP and CIP-Sum-DIP in this scenario.

We further conduct experiments on MNIST and CelebA to numerically compare the two methods, CIP-FT-DIP and CIP-Sum-DIP, against other DA methods such as DIP, CIP, and JointDIP. For CIP-FT-DIP, we first run CIP to learn the weights, which are then used to initialize the architecture before fine-tuning with the DIP objective. For CIP-Sum-DIP, the DIP objective is modified by adding the CIP matching penalty, combining the two objectives into a single optimization process. The training details, including architectures, batch sizes, and other settings, follow the same configurations as in our main experiments (see~\appref{architec}). For hyperparameter selection of $\lamcip$ and $\lamdip$, we vary over $\{0.01,0.1,1\}$ for both parameters and select the best-performing one using the protocol described in~\appref{hyper}.

The results on MNIST are presented in~\tabref{MNIST_results_additional}. In this experiment, MNIST \rom{2} and \rom{4} involve label shift interventions, and we do not implement the importance-weighting based label shift correction (IW) step for CIP-FT-DIP and CIP-Sum-DIP. To ensure a fair comparison, we also exclude IW-DIP, IW-CIP, and IW-JointDIP. We observe that CIP-FT-DIP performs comparably to CIP across various settings but shows slightly worse performance than DIP. In cases where label-flipping features are present, such as in MNIST \rom{3} and \rom{4}, the performance drops compared to settings without label-flipping features. This behavior is similar to that of DIP, indicating that CIP-FT-DIP is not as effective as DIP in preventing the learning of label-flipping features. CIP-Sum-DIP exhibits similar behaviors but performs significantly worse overall compared to other DA methods. A potential explanation is that the CIP and DIP matching penalties may have interactions that make the method more sensitive to the choice of hyperparameter settings. As a result, CIP-Sum-DIP likely requires more careful hyperparameter tuning than other DA methods that rely on a single matching penalty.

\begin{table}[t]
	\centering
	\small
	\begin{tabular}{l | l | l | l | l}
		\toprule
		{}  & \multicolumn{1}{c|}{MNIST \rom{1}} & \multicolumn{1}{c|}{MNIST \rom{2}} & \multicolumn{1}{c|}{MNIST \rom{3}} & \multicolumn{1}{c}{MNIST \rom{4}} \\
		\hline
		Rotation shift        & Y & Y & Y & Y \\
		Label shift           & N & Y & N & Y \\
		Label-flipping features & N & N & Y & Y \\
		\hline
		DA Algorithm & tar\_acc & tar\_acc & tar\_acc & tar\_acc \\
		\hline
		DIP*         & 97.5$\pm$0.3 & 96.7$\pm$0.5 & 91.0$\pm$1.9 & 90.1$\pm$1.6 \\
		CIP*         & 94.9$\pm$0.9 & 94.9$\pm$2.1 & 89.1$\pm$1.7 & 90.0$\pm$1.3 \\
		JointDIP*    & 97.3$\pm$0.3 & 96.9$\pm$0.6 & 93.5$\pm$1.2 & 91.5$\pm$1.1 \\
		CIP-FT-DIP   & 95.1$\pm$1.4 & 92.6$\pm$0.8 & 91.0$\pm$1.4 & 89.3$\pm$1.5 \\
		CIP-Sum-DIP  & 91.6$\pm$0.9 & 89.3$\pm$1.6 & 80.8$\pm$3.1 & 79.5$\pm$3.2 \\
		\bottomrule
	\end{tabular}
	\caption{Target accuracy on rotated MNIST for CIP-FT-DIP and CIP-Sum-DIP. For comparison with other DA algorithms, we include results from~\tabref{MNIST_results_1_2},~\tabref{MNIST_results_3_4}, marked with *. The algorithms are run on 10 different datasets, each generated using 10 different random seeds.}
	\label{tab:MNIST_results_additional}
\end{table}

The results on CelebA are presented in~\tabref{CelebA_mechan_additional}. In this setting, there is no label shift, but the features are manipulated such that the Color\_Balance feature maintains a stable correlation with the label between the last source and target domains, while the Mouth\_Slightly\_Open feature serves as a label-flipping feature. CIP-FT-DIP and CIP-Sum-DIP show comparable or slightly worse performance than DIP and CIP on CelebA \rom{1} and CelebA \rom{2}. Interestingly, both methods outperform all other DA methods on CelebA \rom{3} by a significant margin. On CelebA \rom{3}, the correlation between the Mouth\_Slightly\_Open feature and the label is high, allowing DIP to rely on this feature for predictions, which leads to poor prediction performance. JointDIP demonstrates higher accuracy, highlighting its reliability compared to DIP. Moreover, the performance of CIP-FT-DIP and CIP-Sum-DIP in this setting indicates that combining CIP with DIP can further enhance the reliability of algorithms in mitigating the influence of label-flipping features in this example.


\begin{table}[t]
	\centering
	\small
	\begin{tabular}{l | c | c | c}
		\toprule
		& CelebA \rom{1} & CelebA \rom{2} & CelebA \rom{3} \\
		\hline
		DA Algorithm & tar\_acc & tar\_acc & tar\_acc \\
		\hline
		DIP*          & 82.0$\pm$2.2   & 90.6$\pm$1.0   & 82.0$\pm$2.2 \\
		CIP*          & 76.6$\pm$1.4   & 83.7$\pm$1.4   & 89.7$\pm$0.7 \\
		JointDIP*     & 87.3$\pm$0.6   & 90.4$\pm$1.2   & 88.7$\pm$1.0 \\
		CIP-FT-DIP    & 81.5$\pm$2.8   & 79.7$\pm$2.6   & 92.3$\pm$0.4 \\
		CIP-Sum-DIP   & 77.6$\pm$2.1   & 79.5$\pm$1.8   & 92.9$\pm$0.3 \\
		\bottomrule
	\end{tabular}
	\caption{Target accuracy on CelebA \rom{1}, \rom{2}, \rom{3} for CIP-FT-DIP and CIP-Sum-DIP. For comparison with other DA algorithms, we include results from~\tabref{CelebA_mechan}, marked with *. For each setting of the CelebA, DA algorithms are run on 10 different datasets, each generated using 10 different random seeds.}
	\label{tab:CelebA_mechan_additional}
\end{table}

\bibliography{DA.bib}


\end{document}